\newtheorem{theorem}{\textsc{Theorem}}[chapter]
\newtheorem{lemma}{\textsc{Lemma}}[chapter]
\DeclareMathAlphabet\mathbfcal{OMS}{cmsy}{b}{n}
\DeclareMathOperator*{\argmax}{arg\,max}
\newcommand{\cmark}{\ding{51}}%
\newcommand{\xmark}{\ding{55}}%
\newcommand{\R}{\mathbb{R}}
\newcommand{\N}{\mathbb{N}}
\newcommand{\hist}{\mathcal{H}}
\newcommand{\F}{\mathcal{F}}
\newcommand{\filter}{\F}
\newcommand{\E}{\mathbb{E}}
\newcommand{\Prob}{\mathbb{P}}
\newcommand{\q}{\mathbb{Q}}
\newcommand{\prob}{\Prob}
\newcommand{\var}{\text{Var}}
\newcommand{\cov}{\text{Cov}}
\newcommand{\ind}{\mathbbm{1}}
\newcommand{\sep}{\!\;|\;\!}
\newcommand{\hit}{\text{hit}}
\newcommand{\cadlag}{càdlàg }
\newcommand{\eff}{\text{eff}}
\newcommand{\bigO}{\mathcal{O}}
\newcommand{\query}[1]{$\text{Q}{#1}$}
\newcommand{\V}{\mathcal{X}}
\newcommand{\vocab}{\mathcal{M}}
\newcommand{\obs}{\mathcal{O}}
\newcommand{\cen}{\mathcal{C}}
\newcommand{\mc}[1]{\multicolumn{1}{c}{#1}}
\small \hspace{1em} $*$ \textit{denotes shared first-authorship}}
\begin{document}

\preliminarypages

\chapter{Introduction}

\noindent

In real-world settings, data rarely arises instantaneously and independently. 
Instead, systems and individuals typically evolve over time, generating data that is ordered sequentially. 
While this order may be disregarded in specific contexts, understanding its significance often leads to richer insights. 
Ignoring the temporal evolution of a population's characteristics, for example, might simplify analysis but risks overlooking valuable information.

More significantly, sequential data allows for future prediction, a capability far more impactful than static analysis. 
This applies to diverse domains, ranging from individual health outcomes \citep{clark2003survival,van2011dynamic,haider2020effective}, shopping behavior \citep{rendle2010factorizing,guidotti2018personalized,shou2023concurrent}, and technology interactions \citep{quadrana2018sequence,app_dataset} to large-scale phenomena like climate patterns \citep{besse2000autoregressive,ise2019forecasting} and financial market dynamics \citep{kijima2002stochastic,rolski2009stochastic,sezer2020financial}. 
In all of these cases, modeling outcomes from a sequential perspective can unlock significant predictive power.

Probabilistic approaches address this by capturing the joint distribution of the entire sequence, e.g., $\prob(X_1, X_2, \dots, X_N)$ for the random sequence $X_1, X_2, \dots, X_N$. 
This framework encompasses diverse modeling techniques, regardless of whether time is treated continuously \citep{capasso2021introduction} or discretely \citep{rajarshi2014statistical}, or whether measurements are sparse \citep{hamilton2020time} or dense \citep{daley2003introduction}.\footnote{By sparse, we are referencing scenarios with individual events that occur at varying points in time. An example is the history of earthquakes, both when and where they occur, in a given geographical area. In contrast to this, dense information is often in reference to a system with values of interest defined at any given time and are regularly measured. Temperature measurements over time are a classic example of this.}
By far, the most common way of modeling the joint distribution of sequences, especially when focusing on future prediction, is to use an \textit{autoregressive} approach. This involves decomposing the joint distribution and directly model the distributions of the next values in a sequence given the prior values.\footnote{Autoregressive models originally were defined in the time-series literature as a method of \textit{regressing} future values of a sequence based on past values (hence the ``auto'' in the name); however, in recent years this term has generalized past time-series into a general framework as described in this work \citep{hamilton2020time,murphy2022probabilistic,pmlr-v32-gregor14}.}
For example, an autoregressive model factorizes the joint distribution $\prob(X_1, \dots, X_N)$ as $\prod_{i=1}^N \prob(X_i \sep X_1, \dots, X_{i-1})$, where each conditional probability $\prob(X_i \sep X_1, \dots, X_{i-1})$ for all $i \in \mathbb{N}$ becomes the target for modeling. 

Autoregressive models, by capturing the joint distribution of random sequences, hold insights into various future possibilities beyond just the next single value.
Instead of merely predicting the immediate successor for a sequence, we may be interested in the timing of a specific value's next appearance or the likelihood of one value occurring before another.
This work focuses on answering such \textit{probabilistic queries} using autoregressive models. 
Quantifying these probabilities often involves marginalizing over intermediate elements in a sequence, the specific ones depending on the query itself.
For instance, understanding the distribution of $X_{10} \sep X_1$ requires marginalizing out all potential realizations and combinations of $X_2, X_3, \dots, X_9$ to follow $X_1$ prior to $X_{10}$.

Efficient computation of probabilistic queries and general marginalization have been active research areas in machine learning and AI, dating back to exact inference methods in multivariate graphical models \citep{pearl1988probabilistic,koller2009probabilistic}.
Traditionally, two types of queries dominated the research: \textit{conditional probability queries} and \textit{assignment queries}. This dissertation focuses on the former.

Conditional probability queries aim to estimate the distribution of a specific subset of variables ($X$) conditioned on observed values for another subset ($Y=y$). 
Marginalization over the remaining variables ($Z$) is required to achieve this. 
Graphical models facilitate these computations by leveraging assumed conditional independence relationships between $X, Y,$ and $Z$ (e.g., see \citet{koster2002marginalizing}).

For models with sparse Markov dependence structures, efficient inference algorithms exploit this structure, especially for sequential models where recursive computation is advantageous \citep{koller2009probabilistic,bilmes2010dynamic}.
However, neural network-based autoregressive models inherently violate the Markov assumption due to their hidden states encoding the entire sequence of previous events.
Consequently, assignment and conditional probability queries become computationally intractable (NP-hard), rendering techniques like dynamic programming (which are effective in Markov models) inapplicable in this more general setting \citep{chen2018recurrent}.

Despite analytical solutions existing for specific model settings and parameterizations, the recent success of neural network-based autoregressive models across diverse domains necessitates efficient and tractable methods for extracting probabilistic information from them.
This dissertation proposes novel techniques that achieve this goal with minimal assumptions about the underlying prediction model, ensuring broad applicability without requiring additional training or learning.

\section{Motivating Use Cases}
Before detailing the contributions of this dissertation, we will first examine contexts where autoregressive models have proven successful and highlight their potential benefit from extracting longer-range distributional information about future trajectories.

\paragraph{Contextual Queries}
Understanding customer behavior through population-level statistics, such as consumption patterns, is crucial for business decisions.  
Large user behavior databases allow companies to estimate general purchase tendencies, like the likelihood of users buying item A before item B or the frequency of users purchasing item C in a specific time frame.
However, predicting such tendencies for individual customers based on their unique past actions becomes challenging. 
With limited data points for each user, direct statistical computations become unreliable.

This motivates the use of autoregressive models trained on the entire customer dataset. 
These models learn to predict a user's next action based on their historical sequence of events. 
Once trained, they can be used to answer contextual queries specific to each user, conditioned on their unique history. 
This is achieved by marginalizing over all possible behavior sequences leading to the query instance (e.g., all possible sequences where a user purchases item A before B).

Chapters 3, 4, and 6 explore various techniques for tractably achieving this in various modeling settings.

\paragraph{Forecasting}
Long range predictions, or \textit{forecasting}, is a crucial task in various domains, including web server load management \citep{santana2011power,hu2019applying}, anticipating economic trends in business and finance \citep{franses1998time,elliott2008economic}, predicting long-range climate trends \citep{barnett1995monte,shukla1998predictability,troccoli2010seasonal}, and more \citep{abraham2009statistical,petropoulos2022forecasting}.
In web server load management for example, accurate forecasts are essential to anticipate usage spikes and proactively scale resources, preventing downtime and ensuring optimal system performance.
Traditionally, autoregressive time series models are employed for such purposes.
In load management, these models utilize past load data, potentially combined with aggregate activity information, to predict the immediate next load value.
While some approaches offer limited multi-step ahead forecasts, they are often inflexible and do not directly address the critical concern of server overload.

An alternative approach leverages the existing autoregressive model with a predefined server usage threshold.
By marginalizing over potential future trajectories exceeding this threshold, we can extract the distribution of when such overload is predicted to occur.
This distribution then can inform proactive resource allocation decisions, determining if, when, and by how much resources should be scaled to accommodate anticipated usage.
Techniques for addressing this use case are developed in Chapters 3, 4, and especially 6 for a variety of different settings.

\paragraph{Missing Data}
Marginalization of sequences need not encompass only future trajectories; it can also be applied to internal segments within existing sequences.
This capability proves particularly valuable when dealing with unobserved or censored portions of data.

Consider the previously discussed load forecasting scenario. 
Real-world systems are prone to intermittent outages and equipment failures, which may affect data collection services that record pertinent information for server load forecasting. 
However, halting production during such events is often impractical. 
In such situations, marginalizing over the missing past information becomes crucial to accurately account for its potential impact on future predictions.

Another relevant application arises in medical settings.
Imagine training a model to capture the dynamics between events of interest and various medical measurements. 
This model might then be deployed in hospitals lacking the full set of equipment used for data collection during training, resulting in missing information at inference time. 
Rather than retraining the model entirely, marginalization offers a more practical and cost-effective solution to address the missing information.
By marginalizing, the model can leverage the available data for accurate inference, maintaining its predictive and analytical capabilities.
Methods to account for missing information are the sole focus of Chapter 5.

\section{Contributions and Dissertation Outline}
This dissertation presents novel and efficient approximation techniques for marginalization in sequential models, applicable to various classes of models without requiring particular parameterizations or model-specific modifications.
These techniques rely solely on access to and sampling from the next-step conditional distributions of a pre-trained autoregressive model.

Chapter 2 provides essential background information and establishes the notation used throughout the dissertation.

Chapter 3 focuses on probabilistic query estimation for discrete-time sequential models with a finite vocabulary. Key contributions include:
\begin{itemize}
\item Formalization of all possible queries in this setting and an accompanying notation system for their representation.
\item Analytical solutions for query estimation in $n^\text{th}$-order Markov models.
\item A sequential importance sampling approximation technique using a novel query-specific proposal distribution.
\item Computable lower-bounds for query probabilities using beam search combined with query-derived heuristics.
\item A hybrid method combining importance sampling and beam search for improved resource efficiency.
\item Systematic experiments on four real-world datasets, investigating the efficiency of the proposed algorithm compared to simpler baselines.
\end{itemize}

Chapter 4 extends the methods to sparse continuous-time sequential models where events occur at random times with varying time gaps. The contributions include:
\begin{itemize}
\item Extension of the class of discrete-time queries to the continuous-time setting, encompassing queries like hitting times of specific events.
\item Adaptation of the importance sampling approach with a novel proposal marked temporal point process.
\item Theoretical guarantee of improved sample efficiency compared to naive counting-based methods.
\item Empirical demonstration of significant efficiency gains, often exceeding three orders of magnitude in variance reduction for certain queries, across three different real-world datasets.
\end{itemize}

Chapter 5 addresses marginalization of missing or censored information in the same modeling setting as Chapter 4. The contributions include:
\begin{itemize}
\item A novel principled approach for marginalizing over missing information in arbitrary temporal point processes by characterizing the intensity of observed events.
\item Tractable estimation procedure for this observed intensity through using importance sampling in conjunction with the proposal process from Chapter 4 and justified via the point process superposition property.
\item Analysis of bias and variance of derived estimators, exploration of resource-saving variants, and empirical validation in real-world settings.
\end{itemize}

Chapter 6 further generalizes the results to dense continuous-time models and expands the considered class of queries. The contributions include:
\begin{itemize}
\item Characterization of a query class for stochastic jump processes, including a new kind of random times.
\item Extension of importance sampling estimation techniques to this more general class of processes.
\item Development of explicit estimators for joint distributions of the proposed random times.
\item Empirical demonstrations of improved computation efficiency of proposed techniques for a variety of different queries on stochastic jump processes. 
\end{itemize}

Chapter 7 concludes the dissertation with potential future research directions and final thoughts.

\chapter{Background}

\noindent

This chapter lays the foundation for the subsequent chapters by providing essential background information in three key areas: probability theory, methods for approximating expected values, and probabilistic autoregressive modeling of sequential data (both classical and modern approaches). 
Additionally, we introduce the notation employed throughout the dissertation. 
We assume the reader possesses an undergraduate-level understanding of linear algebra and machine learning, as well as familiarity with graduate-level statistics, particularly measure-theoretic probability. 
For readers seeking a more comprehensive treatment of these topics, we recommend \citet{macdonald2010linear} for linear algebra, \citet{murphy2022probabilistic} and \citet{murphy2023probabilistic} for machine learning, \citet{casella2021statistical} statistical theory, and  \citet{klenke2013probability} for probability theory.

\section{Notation}\label{sec:notation}

This dissertation employs a consistent notation system to enhance clarity and readability. 
For random variables, capital letters denote the variable itself (e.g., $X$), while corresponding lowercase letters represent specific values (e.g., $x$). 
Matrices, vectors, and scalars lack a predefined formatting style; their dimensions and properties will be explicitly declared upon introduction (e.g., vector $x\in\R^d$ or matrix-valued random variable $X\in\R^{n\times m}$). 
Greek letters serve as constants or functions, with their specific meaning clarified upon first use. 
Notably, $\theta$ and $\Theta$ consistently represent parameter values and the parameter space of a model, respectively. 
Sets are denoted by uppercase script letters (e.g., $\mathcal{S}$).

Central to this work are sequences of random variables and their realized values. 
We differentiate them from individual random variables by using bold uppercase letters (e.g., $\mathbf{X}$).
These sequences represent ordered collections of random variables, defined as $\mathbf{X} := (X_i)_{i\in\mathcal{I}} \equiv \{X_i \sep i \in \mathcal{I}\}$ where $\mathcal{I} \subseteq \R$ is the \emph{indexing set}.
Subsets of the indexing set define specific portions of the sequence (e.g., $\mathbf{X}_{<t} \equiv \{X_i \sep i \in \mathcal{I} \text{ and } i < t\}$, $\mathbf{X}_{[a,b)} \equiv (X_i)_{i \in \mathcal{I} \cap [a, b)}$ for continuous $\mathcal{I}$, and $\mathbf{X}_{n:m} \equiv \{X_i \sep i =n, n+1, \dots, m$\} for discrete $\mathcal{I}$).
Similarly, realized sequences are denoted by bold lowercase letters (e.g., $\mathbf{x}$).

It is important to note that these random sequences are also known as \textit{stochastic processes}.
This term and \textit{sequential model} may be used interchangeably depending on the context.

\section{Relevant Probability Theory}

\subsubsection{Probability and Random Variables}  %

Within this work, we operate within the framework of a probability space $(\Omega, \mathcal{F}, \prob)$. This space consists of three key elements:
\begin{itemize}
\item the \textit{sample space} $\Sigma$, a set encompassing all possible outcomes of a random experiment or system,
\item the \textit{event space} $\mathcal{F}$, a $\sigma$-algebra comprising all collections of outcomes or events considered, and
\item the \textit{probability measure} $\prob:\mathcal{F}\rightarrow[0,1]$, a mapping of events in $\mathcal{F}$ to their corresponding probabilities ranging from 0 to 1.
\end{itemize}

While a probability space fully captures the random behavior of a system, the sample space itself can be too abstract for practical inference.
To bridge this gap, we utilize \textit{random variables}.
A random variable, denoted by $X:\Omega \rightarrow \mathcal{X}$, is a measurable mapping from abstract sample outcomes, $\omega \in \Omega$, to more concrete and interpretable values within the measurable space $\mathcal{X}$.
For instance, consider the outcome of rolling a six-sided die. Here, $\mathcal{X}=\{1, 2, 3, 4, 5, 6\}$ represents the possible values a roll can produce, while $X(\omega)$ assigns the specific number rolled for a particular sample outcome $\omega \in \Omega$.

This work employs probability statements of the following form:
\begin{align}
    \prob(X \in A) := \prob(X^{-1}(A)) := \prob(\{\omega \in \Omega \sep X(\omega) \in A\}) \label{eq:prob_gen}
\end{align}
where $X^{-1}(A) \in \mathcal{F}$.
For a random variable $X$, we denote the \textit{cumulative distribution function} (CDF) as $F_X(x) := \prob(X \leq x)$ and $p_X$ as either the \textit{probability density function} (PDF), $p_X(x) := \prob(X \in [x, x+dx))$, if $X$ is continuous or the \textit{probability mass function} (PMF), $p_X(x) := \prob(X=x)$, if $X$ is discrete.\footnote{We refer to a random variable as \textit{continuous} when its CDF $F_X(x)$ is continuous with respect to $x$. Likewise, it is discrete when the CDF is a step function (with either a finite or countably infinite amount of steps). It is possible for a random variable to be neither, e.g., a mixture distribution between a continuous distribution and a point mass.}
While the CDF exists for any random variable, the PDF/PMF may not.

To facilitate calculation, we often relate general probability statements to known measures through \textit{expectation}.
Recall the \textit{expected value} of $h(X)$ is:
\begin{align}
    \E^\prob\left[h(X)\right] & := \int_{\Omega} h(X(\omega))d\prob(\omega) \equiv \int_\Omega h(X)d\prob \label{eq:ev_gen} \\
    & = \int_\mathcal{X} h(x)dF_X(x) \\
    & = \begin{cases} \int_{\mathcal{X}} h(x)p_X(x)dx & \text{ if } X \text{ is continuous} \\
    \sum_{x\in\mathcal{X}} h(x)p_X(x) & \text{ if } X \text{ is discrete}
    \end{cases} \label{eq:ev_cases}
\end{align}
where $h$ is some measurable function of $X$. 
Equations \ref{eq:ev_cases} simplify the general integral in \cref{eq:ev_gen} using the Lebesgue measure for continuous $X$ or the counting measure for discrete $X$ as the reference measure.
We often need to find the expected value of an expression involving multiple random variables while ``marginalizing out'' one of them. This is defined as:
\begin{align}
    \E^\prob_{X}\left[h(X, Y)\right] & := \int_{\mathcal{X}} h(x, Y)dF_X(x)
\end{align}
where $Y$ is another random variable under the same probability space as $X$ and 
$h$ is a measurable function of both $X$ and $Y$.\footnote{More often, we marginalize out a variable while using its conditional distribution with respect to the other variables, i.e., $X \sep Y$, which will be more precisely defined in \cref{eq:cond_prob}.} 
This operation effectively transforms $Y$ into a new random variable based on its relationship with $X$.

The expectation definition of a probability statement provides a valuable interpretation:
\begin{align}
    \prob(X \in A) := \E^\prob_X\left[\ind(X \in A)\right] \label{eq:ind_prob}
\end{align}
where $\ind$ denotes the indicator function, which in this case returns $1$ if $X \in A$ and 0 otherwise. 
Intuitively, this relationship reveals one perspective of the underlying meaning of $\prob(X \in A)$. 
It expresses the probability as the expected value of the indicator function, representing the proportion of times $X$ falls within set $A$ over an infinitely large number of random draws.\footnote{It is worth noting that this interpretation of probability is specifically a \textit{frequentist} one. This is the perspective adopted for this dissertation as it complies with all the methods derived; however, there do exist alternative interpretations as well, e.g., Bayesian probability theory. For a more philosophical discussion on various interpretations, please refer to \citet{sep-probability-interpret}.}

\subsubsection{Conditioning and Organization of Information}

This work frequently encounters the need to condition on existing information before assessing the probability of future events. 
In the context of sequential modeling, this often involves conditioning on a portion of a sequence to improve predictions for future occurrences.
Formally, a \textit{conditional expectation} is itself a random variable, where the randomness stems from the information used for conditioning.

$\sigma$-algebras are commonly interpreted as representations of potential information, based on how finely they distinguish individual events within them.
Consequently, the dynamics of this random variable are governed by a sub-$\sigma$-algebra, denoted $\mathcal{F}'\subseteq\mathcal{F}$, which captures the relevant events via:
\begin{align}
\int_A \E^\prob\left[h(X) \sep \mathcal{F}'\right] d\prob = \int_A h(X) d\prob \text{ for all } A \in \mathcal{F}'.
\end{align}
While the equation might seem complex, it essentially highlights how $\mathcal{F}'$ determines the level of aggregation for $h(X)$.
To better understand this concept, consider two extreme edge-cases:
\begin{itemize}
\item Conditioning on no information, where $\mathcal{F}':=\{\emptyset, \Omega\}$, enforces that $\E^\prob\left[h(X) \sep \mathcal{F}'\right] = \E^\prob\left[h(X)\right]$, or rather the mean value of $h(X)$.\footnote{Technically, $\E^\prob\left[h(X) \sep \{\emptyset, \Omega\}\right]$ \textit{is} a random variable, but it is a degenerate one with 100\% of its mass on a single value: $\E^\prob\left[h(X)\right]$.} 
\item Conditioning on all relevant information, where $\mathcal{F}':=\sigma(X)$ which encompasses all events generated by $X$, yields the function itself with $\E^\prob\left[h(X)\sep \mathcal{F}'\right]=h(X)$. This makes sense, as knowing the exact value of $X$ reveals the value of $h(X)$.
\end{itemize}
These examples illustrate how the choice of the sub-$\sigma$-algebra determines the level of detail we consider when calculating the conditional expectation. It's akin to focusing on specific aspects of the available information to make more targeted predictions.

While mathematically rigorous, conditioning on arbitrary $\sigma$-algebras can be conceptually challenging. 
Our primary interest lies in conditioning on random variables, as it offers a more intuitive understanding. 
However, it's essential to remember that, ultimately, conditioning always occurs on a specific $\sigma$-algebra.
Consider the conditional probability $\prob(X \in A \sep Y)$, often written for convenience.
This concise notation actually translates to the more precise statement:
\begin{align}
\prob(X \in A \sep Y) & \equiv \prob(X \in A \sep \sigma(Y)) \\
& = \E^\prob_X\left[\ind(X \in A) \sep \sigma(Y)\right] \text{ by \cref{eq:ind_prob}} \label{eq:cond_prob}.
\end{align}
This highlights that the conditional probability is equivalent to the conditional expectation of the indicator function for event $X\in A$, given the $\sigma$-algebra generated by $Y$.

A common application of conditional expectations is through the \textit{law of iterated expectations}, also known as the \textit{tower rule} in stochastic process literature \citep{casella2021statistical}. 
This law helps us compute marginal expectations using readily available conditional representations.
For any two sub-$\sigma$-algebras $\mathcal{F}'' \subseteq \mathcal{F}' \subseteq \mathcal{F}$, the law is defined as:
\begin{align}
\E^\prob\left[h(X) \sep \mathcal{F}'' \right] \equiv \E^\prob\left[\E^\prob\left[h(X) \sep \mathcal{F}'\right] \sep \mathcal{F}''\right] \text{ almost surely}.
\end{align}
The most common application involves $\mathcal{F}'':=\{\emptyset, \Sigma\}$, resulting in the expansion $\E^\prob\left[h(X)\right] = \E^\prob\left[\E^\prob\left[h(X) \sep \mathcal{F}'\right]\right]$.

While the provided definition of conditional expectations is valuable, it's not necessarily conducive to direct computation.
For practical applications, we typically require knowledge of the conditional distribution of $X$ given $Y$, represented by either the conditional CDF $F_{X \sep Y}(x \sep y)$ or the conditional PDF/PMF $p_{X \sep Y}(x \sep y)$.
These allow for the following tractable formulas:
\begin{align}
\prob(X \in A \sep Y) & = \int_A dF_{X\sep Y}(x \sep Y) \\
& = \begin{cases}
    \int_A p_{X\sep Y}(x \sep Y) dx & \text{ if } X \text{ is continuous} \\
    \sum_{x \in A} p_{X \sep Y}(x \sep Y) & \text{ if } X \text{ is discrete} 
    \end{cases}
\end{align}
Due to this relation to conditional distributions, we will often denote these as expectations via $\E^\prob_{X \sep Y}\left[\ind(X \in A)\right]$.

When working with sequences of random variables, we often deal with varying information levels depending on the subsequence length.
For example, $(X_1, X_2, X_3)$ clearly offers more information than $(X_1, X_2)$.
This progression is formally captured by \textit{filtrations}, which are a collection of nested sub-$\sigma$-algebras: $(\filter_t)_{t\in\mathcal{I}}$ where $\filter_t \subseteq \filter_{t'} \subseteq \filter$ for $t < t'$.
Intuitively, $\filter_t$ represents the current information available at time $t$, and in our context is commonly equivalent to $\sigma(\textbf{X}_{\leq t})$.
If $A \in \filter_t$, then the event $A$ can be determined to have occurred (or not) at or before time $t$ within the system.

While filtrations are essential for conditioning on past information, their notation can be cumbersome.
Depending on the context, we might opt for clearer notation by directly writing the conditional with the relevant subsequence, e.g., $\prob( \cdot \sep X_1, X_2, X_3)$ instead of $\prob(\cdot \sep \filter_3):= \prob(\cdot \sep \sigma(X_1, X_2, X_3))$.
This will be further discussed in \cref{sec:seq_models} for each specific type of sequential model considered.

\subsubsection{Limiting Theorems}

This work heavily relies on several fundamental theorems from statistics and probability theory. 
To enhance clarity, we restate them below and provide interpretations of their meanings and applications.

\begin{theorem} \citep{casella2021statistical}
Let $X, X_1, X_2, \dots \overset{iid}{\sim} F_X$ where $\mu=\E^\prob\left[X\right]$ exists and is finite, and denote the sample mean as $\overline{X}_n := \frac{1}{n}\sum_{i=1}^n X_i$. By the \textbf{strong law of large numbers} it follows that $\prob\left(\lim_{n\rightarrow\infty} \overline{X}_n = \mu\right) = 1$, or rather that $\overline{X}_n \rightarrow \mu$ almost surely when $n\rightarrow \infty$.
\end{theorem}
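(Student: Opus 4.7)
The plan is to follow a classical truncation-and-subsequence argument (Etemadi's approach), which avoids requiring any higher moment than the first. First I would reduce to the case $X_k \geq 0$ by decomposing $X = X^+ - X^-$ with $X^+ := \max(X, 0)$, $X^- := \max(-X, 0)$ and noting that both parts are nonnegative with finite mean. Assuming $X_k \geq 0$ henceforth, introduce the truncation $Y_k := X_k \ind(X_k \leq k)$ and partial sums $S_n := \sum_{k=1}^n X_k$, $T_n := \sum_{k=1}^n Y_k$. A Fubini-type identity gives $\sum_{k=1}^\infty \prob(X_k \neq Y_k) = \sum_{k=1}^\infty \prob(X > k) \leq \E^\prob[X] = \mu < \infty$, so by the Borel--Cantelli lemma only finitely many indices have $X_k \neq Y_k$ almost surely, and hence $(S_n - T_n)/n \to 0$ a.s. Since $\E^\prob[Y_k] \to \mu$ by monotone convergence, the problem reduces to showing $(T_n - \E^\prob[T_n])/n \to 0$ almost surely.

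The core step is establishing this convergence along a geometrically growing subsequence. Fixing $\alpha > 1$ and letting $k_n := \lfloor \alpha^n \rfloor$, independence gives $\var(T_{k_n}) = \sum_{k=1}^{k_n}\var(Y_k)$, so Chebyshev's inequality yields, for each $\epsilon > 0$,
\begin{align*}
\sum_{n=1}^\infty \prob\!\left(\left|\frac{T_{k_n} - \E^\prob[T_{k_n}]}{k_n}\right| > \epsilon\right) \;\leq\; \frac{1}{\epsilon^2} \sum_{k=1}^\infty \var(Y_k) \sum_{n:\, k_n \geq k} \frac{1}{k_n^2}.
\end{align*}
The inner sum is bounded by a constant times $1/k^2$ by geometric summation, and the resulting double sum $\sum_k \var(Y_k)/k^2$ is controlled by a constant times $\E^\prob[X]$ after swapping the order of integration in $\var(Y_k) \leq \E^\prob[Y_k^2] = \int_0^k 2x\,\prob(X > x)\,dx$. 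A second application of Borel--Cantelli then forces $T_{k_n}/k_n \to \mu$ almost surely.

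The main obstacle, and the step that most crucially exploits nonnegativity, is filling in the gaps between subsequence indices. For $k_n \leq m < k_{n+1}$, monotonicity of $T_m$ in $m$ yields the sandwich
\begin{align*}
\frac{k_n}{k_{n+1}} \cdot \frac{T_{k_n}}{k_n} \;\leq\; \frac{T_m}{m} \;\leq\; \frac{k_{n+1}}{k_n} \cdot \frac{T_{k_{n+1}}}{k_{n+1}},
\end{align*}
and since $k_{n+1}/k_n \to \alpha$, taking $\liminf$ and $\limsup$ gives $\mu/\alpha \leq \liminf_m T_m/m \leq \limsup_m T_m/m \leq \alpha\mu$ on a set of probability one. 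Intersecting the corresponding events over a countable sequence $\alpha_j \downarrow 1$ yields $T_n/n \to \mu$ almost surely; undoing the truncation and the $X^\pm$ decomposition finishes the argument. The delicate balance is choosing the truncation level $k$: it must grow fast enough that $\E^\prob[Y_k] \to \mu$ yet slowly enough that $\sum_k \var(Y_k)/k^2$ converges, and the linear choice $\ind(X_k \leq k)$ is precisely what makes both hold under only a first-moment assumption.
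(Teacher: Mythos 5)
The paper does not prove this theorem; it is stated as a cited background result (from \citet{casella2021statistical}) in Chapter~2, so there is no paper proof to compare yours against. That said, your proof is a correct and essentially complete sketch of Etemadi's truncation-and-geometric-subsequence argument: the reduction to $X \geq 0$, the truncation $Y_k = X_k\ind(X_k \leq k)$ with the Borel--Cantelli step giving $(S_n - T_n)/n \to 0$, the Chebyshev bound along $k_n = \lfloor \alpha^n \rfloor$ with the Fubini swap to control $\sum_k \var(Y_k)/k^2$ by $\E^\prob[X]$, and the monotonicity sandwich followed by letting $\alpha \downarrow 1$ are all sound. One small remark worth making explicit: after the second Borel--Cantelli application you conclude $(T_{k_n}-\E^\prob[T_{k_n}])/k_n \to 0$, and you need a Cesàro step (which you gesture at via $\E^\prob[Y_k]\to\mu$) to pass to $T_{k_n}/k_n \to \mu$. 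As a bonus, the argument you chose needs only pairwise independence in the variance identity, so it actually establishes a slightly stronger statement than the i.i.d.\ version quoted in the paper.
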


The strong law of large numbers establishes the consistency of the sample mean as an estimator for the population mean $\mu$.
This concept becomes instrumental when discussing techniques for approximating expectations.

When developing estimators and bounds, it's crucial to determine when we can interchange the order of operations involving limits and integrals. 
The following two theorems justify swapping an expectation with a limit and swapping two nested expectations, respectively.  

\begin{theorem} \citep{klenke2013probability}
    Let $X_1, X_2, \dots$ be random variables that converge in probability to the random variable $X$ as $n\rightarrow\infty$. The \textbf{dominated convergence theorem} states that should there exist some random variable $Y$ where $\E^\prob\left[|Y|\right] < \infty$ and for all $n \in \mathbb{N}$ it holds that $|X_n| \leq Y$, then $\lim_{n\rightarrow\infty}\E^\prob\left[X_n\right] = \E^\prob\left[\lim_{n\rightarrow\infty} X_n\right] = \E^\prob\left[X\right]$.
\end{theorem}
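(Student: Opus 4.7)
The plan is to prove this via the classical two-step route: first handle the easier case of almost-sure convergence using Fatou's lemma, then upgrade from convergence in probability by a subsequence argument.

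The preliminary step is to transfer the domination bound to the limit. Since $X_n \to X$ in probability, there exists a subsequence $(X_{n_k})$ with $X_{n_k}\to X$ almost surely; passing to the limit in $|X_{n_k}|\leq Y$ gives $|X|\leq Y$ almost surely. In particular $\E^\prob[|X|]<\infty$, and both $Y+X_n$ and $Y-X_n$ are almost surely nonnegative and integrable, which is the setup needed to invoke Fatou's lemma.

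Assume temporarily that the stronger hypothesis $X_n \to X$ almost surely holds. Applying Fatou's lemma to the nonnegative sequences $(Y+X_n)$ and $(Y-X_n)$ and cancelling the finite quantity $\E^\prob[Y]$ yields
\begin{align}
\E^\prob[X] \leq \liminf_{n\to\infty}\E^\prob[X_n] \quad\text{and}\quad \limsup_{n\to\infty}\E^\prob[X_n] \leq \E^\prob[X],
\end{align}
which together force $\E^\prob[X_n] \to \E^\prob[X]$. This is the standard almost-sure version.

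To recover the theorem as stated, I would invoke the subsequence principle: a deterministic sequence $(a_n)\subseteq\R$ converges to $a$ if and only if every subsequence has a further subsequence converging to $a$. Given any subsequence $(X_{n_k})$, convergence in probability yields a further subsequence $(X_{n_{k_j}})$ with $X_{n_{k_j}}\to X$ almost surely; the argument above applied along this further subsequence gives $\E^\prob[X_{n_{k_j}}]\to\E^\prob[X]$. Applying the subsequence principle to the deterministic sequence $(\E^\prob[X_n])$ then delivers $\E^\prob[X_n]\to\E^\prob[X]$, as claimed. The main obstacle here is not any calculation but the careful separation of modes of convergence: Fatou's lemma demands pointwise behavior of the integrands, so the real work lies in bridging the gap between the hypothesis of convergence in probability and the almost-sure convergence that Fatou requires, without losing the domination bound along the way.
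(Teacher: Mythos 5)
Your proof is correct and complete: the domination transfer via an almost-surely convergent subsequence, the two Fatou applications to $Y\pm X_n$, and the upgrade from almost-sure convergence to convergence in probability via the subsequence principle for real sequences together constitute the standard textbook argument for this form of the dominated convergence theorem. Note, however, that the paper does not prove this statement at all---it is quoted as a background result with a citation to \citet{klenke2013probability}---so there is no in-paper proof to compare against; your argument matches the one found in standard references such as the one cited.
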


\begin{theorem} \citep{klenke2013probability}
Let $(\Omega_1, \mathcal{F}_1, \prob_1)$ and $(\Omega_2, \mathcal{F}_2, \prob_2)$ be probability spaces with $(\Omega, \mathcal{F}, \prob):=(\Omega_1\times\Omega_2, \mathcal{F}_1\times\mathcal{F}_2, \prob_1 \times \prob_2)$ be the product probability space between the two. Let $X:\Omega_1\times\Omega_2 \rightarrow [0,\infty)$ be a non-negative random variable defined on the product probability space. If $\int_{\Omega_1} X(\omega_1, \omega_2)d\prob_1(\omega)$ is measurable with respect to $\omega_2$ and $\int_{\Omega_2} X(\omega_1, \omega_2) d\prob_2(\omega)$, then by \textbf{Fubini's theorem} it follows that $\int_{\Omega_1}\int_{\Omega_2} X d\prob_2(\omega_2) d\prob_1(\omega_1) = \int_{\Omega_2}\int_{\Omega_1} X d\prob_1(\omega_1) d\prob_2(\omega_2) = \int_{\Omega_1\times\Omega_2} X d\prob(\omega_1, \omega_2)$.
\end{theorem}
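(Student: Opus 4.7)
The plan is to follow the standard measure-theoretic build-up: verify the identity on a $\pi$-system of measurable rectangles, extend to the full product $\sigma$-algebra via a monotone class argument, and then lift from indicator functions to general non-negative measurable functions via linearity and the monotone convergence theorem (which is applicable throughout since $X \geq 0$).

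First, I would establish both equalities for the indicator of a measurable rectangle, $X = \ind(\omega \in A_1 \times A_2)$ with $A_1 \in \mathcal{F}_1$ and $A_2 \in \mathcal{F}_2$. A direct computation gives $\int_{\Omega_1} X(\omega_1, \omega_2)\, d\prob_1(\omega_1) = \ind(\omega_2 \in A_2)\, \prob_1(A_1)$, which is plainly $\mathcal{F}_2$-measurable, and the outer integral evaluates to $\prob_1(A_1)\prob_2(A_2)$. By the definition of the product measure this also equals $\prob(A_1 \times A_2) = \int_{\Omega} X\, d\prob$, and the argument is symmetric in the two factors.

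Next, I would define $\mathcal{D} := \{E \in \mathcal{F}_1 \times \mathcal{F}_2 : \text{the identity holds for } \ind(\omega \in E)\}$ and argue that $\mathcal{D}$ is a $\lambda$-system containing the $\pi$-system of measurable rectangles. Closure under proper differences uses finiteness of the probability measures so that subtraction of the three integrals is well defined, while closure under countable increasing unions follows from applying the monotone convergence theorem to each of the inner, outer, and joint integrals. Dynkin's $\pi$-$\lambda$ theorem then extends the identity to all of $\mathcal{F}_1 \times \mathcal{F}_2$. Linearity immediately lifts it to non-negative simple functions, and for a general non-negative $X$ I would choose an increasing sequence of non-negative simple functions $X_n \uparrow X$ and apply the monotone convergence theorem three times — once inside $\int_{\Omega_1}$, once inside $\int_{\Omega_2}$, and once on the joint integral — to pass to the limit on all three sides of the desired chain of equalities simultaneously.

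The main obstacle will be carefully tracking measurability of the inner integrals at each stage of the approximation. Specifically, after extending to simple functions I need to argue that $\omega_2 \mapsto \int_{\Omega_1} X_n(\omega_1, \omega_2)\, d\prob_1(\omega_1)$ is $\mathcal{F}_2$-measurable for every $n$, and that this measurability is preserved in the monotone limit as $n \to \infty$. The hypothesis already posits the measurability of the inner integrals for $X$ itself, but the intermediate $X_n$ require their own treatment, handled by noting that pointwise monotone limits of $\mathcal{F}_2$-measurable functions remain $\mathcal{F}_2$-measurable; the symmetric statement for the other iterated integral proceeds identically. Everything else reduces to standard bookkeeping with the monotone convergence theorem.
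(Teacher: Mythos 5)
Your proof is correct and is the canonical textbook argument for Tonelli's theorem (the non-negative case of Fubini). Note, however, that the dissertation does not supply its own proof here: the result is stated purely as background material and attributed directly to Klenke's text, so there is no in-paper argument to compare against. Your route — verify on indicators of measurable rectangles, extend to the full product $\sigma$-algebra by Dynkin's $\pi$-$\lambda$ theorem, lift to non-negative simple functions by linearity, and pass to general non-negative measurable functions by three applications of monotone convergence — is exactly what one would find in the cited reference.

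One small point worth sharpening. As written, your class $\mathcal{D}$ is defined as those $E$ for which ``the identity holds for $\ind(\omega\in E)$,'' but the iterated integrals in that identity are not even well posed unless the section integrals $\omega_2 \mapsto \int_{\Omega_1}\ind(\omega\in E)\,d\prob_1$ and $\omega_1 \mapsto \int_{\Omega_2}\ind(\omega\in E)\,d\prob_2$ are measurable. The standard fix is to fold those measurability conditions into the definition of $\mathcal{D}$ itself; one then checks that $\mathcal{D}$ is still a $\lambda$-system (the proper-difference step uses finiteness of $\prob_1,\prob_2$, as you observe, and the increasing-union step uses that monotone pointwise limits of measurable functions are measurable). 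With this fix, measurability of the inner integrals for the simple approximants $X_n$ falls out automatically rather than requiring the separate treatment you flag at the end. Also worth noting: the statement in the paper presents measurability of the section integrals of $X$ as a \emph{hypothesis}, whereas in Tonelli's theorem it is a \emph{conclusion}; your proof correctly establishes it from scratch, which is the stronger and more standard form of the result.
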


\section{Expectation Approximation Techniques}\label{sec:imp_sampling_notes}

This work focuses on estimating CDF values for various distributions. 
To achieve this, we leverage \textit{Monte Carlo methods}, a class of sampling-based techniques.

A fundamental principle of the Monte Carlo method involves estimating the expected value
$\mu:=\E^\prob_X\left[h(X)\right]$ 
using a computable sample mean.
This requires drawing $n$ i.i.d. samples $X_1, \dots, X_n \sim F_X$ from the target distribution and calculating the same mean of the function $h(X)$ at each sample: $\hat{\mu} := \frac{1}{n}\sum_{i=1}^n h(X_i)$.
Due to the linearity of the expectation operator, the estimator possesses the attractive property of being unbiased:
\begin{align}
    \E^\prob\left[\hat{\mu}\right] & = \frac{1}{n} \sum_{i=1}^n \E^\prob_{X_i}\left[h(X_i)\right] \\
    & = \frac{1}{n} \sum_{i=1}^n \E^\prob_X\left[h(X)\right] \\
    & = \mu.
\end{align}
Furthermore, by virtue of the strong law of large numbers, this estimator is also consistent, meaning it converges to the true value $\mu$ as the sample size $n$ tends to infinity.

The variance of the estimator depends on the variance of the transformed random variable $h(X)$ and the number of samples used:
\begin{align}
    \var^\prob\left[\hat{\mu}\right] & = \frac{1}{n^2}\sum_{i=1}^n \var^\prob_{X_i}\left[h(X_i)\right] \\
    & = \frac{1}{n^2}\sum_{i=1}^n \var^\prob_{X}\left[h(X)\right] \\
    & = \frac{1}{n} \var^\prob_X\left[h(X)\right].
\end{align}
The direct variance of $h(X)$, $\sigma^2 := \var^\prob_X\left[h(X)\right]$, can be approximated as a Monte Carlo estimate itself, reusing the same samples for $\hat{\mu}$: $\hat{\sigma}^2 := \frac{1}{n-1}\sum_{i=1}^n (X_i - \hat{\mu})^2$. 
The normalization by $n-1$ instead of $n$ is crucial to mitigate bias introduced by using the same samples for both estimating $\hat{\mu}$ and its variance. 
However, for large sample sizes, this distinction has negligible practical impact.

When selecting between two unbiased Monte Carlo estimators for the same target value, a critical factor is the computational cost required to achieve low estimator variance. 
Typically, this translates to comparing their variances when using the same number of samples. However, computational complexity per sample also plays a role.
Therefore, a comprehensive evaluation involves assessing both \textit{sample efficiency} and \textit{runtime performance}.

Sample efficiency is quantified by the ratio of variances between two unbiased estimators: $\eff\left[\hat{\mu}_1, \hat{\mu}_2\right] := \var^\prob\left[\hat{\mu}_2\right]/\var^\prob\left[\hat{\mu}_1\right]$ where the number of samples $n$ is the same for both estimators \cite{casella2021statistical}.
Values greater than 1 indicate $\hat{\mu}_1$ is more efficient, meaning fewer samples are needed to achieve the same variance as $\hat{\mu}_2$.
For instance, $\text{eff}=100$ signifies that using one sample for $\hat{\mu}_1$ is as informative as using 100 samples for $\hat{\mu}_2$.
In such cases, significant efficiency gains may outweigh any discrepancies in runtime.

This text focuses on improving the variance per sample, as more samples always improve estimates. 
One technique to achieve this is \textit{importance sampling} \citep{robert1999monte}. 
Its aim is to change the probability measure governing the expectation $\E\left[h(X)\right]$ from $\prob$ to proposal probability measure $\q$ to reduce estimator variance.

Importance sampling first identifies a random variable $L:\Omega \rightarrow [0,\infty]$ with specific properties: $\E^\prob\left[L\right]=1$ and that $L < \infty$ almost surely with respect to $\prob(\cdot \sep h(X) \neq 0)$. 
Informally, the second condition ensures $L$ is finite where $\prob$ either assigns non-zero probability or results in $h(X)=0$. 
Should $h(X)=0$, then $L$ can be any value in the range $[0,\infty]$. 
When these conditions hold, $L$ can be interpreted as the \textit{likelihood ratio} between $\prob$ and $\q$: $L(\omega) = \frac{d\prob}{d\q}\big|_\omega$.\footnote{We assign the value of $\infty$ to $L$ over regions of $\Omega$ with no $\q$-support.} 
This allows changing the measure in the expectation:
\begin{align}
\E^\prob\left[h(X)\right] \equiv \E^\q\left[L \cdot h(X)\right].
\end{align}

Rather than determining $\q$ directly, a change of measure can be induced by altering the distribution of a specific variable, e.g., swapping PDF/PMF functions $p_X$ with \textit{proposal distribution} $q_X$. 
This results in $L$ being a transformation of $X$ yielding:
\begin{align}
\E^\prob_X\left[h(X)\right] & \equiv \E^\q_X\left[L(X)h(X)\right] \\
\text{for } L(x) &:= \frac{\mathcal{L}^\prob(X=x)}{\mathcal{L}^\q(X=x)} \\
& = \frac{p_X(x)}{q_X(x)} \text{ when } p_X \text{ and } q_X \text{ exist}
\end{align}
where $\mathcal{L}^\prob$ and $\mathcal{L}^\q$ are likelihood functions under $\prob$ and $\q$ respectively.\footnote{Likelihood functions in this context can be treated as generalized probability density/mass functions that account for some random variables belonging to mixture distributions. When appropriate, they reduce to the variable's corresponding PDF or PMF.} Under importance sampling, we refer to the integrand $L(X)h(X)$ as the \textit{estimator} taken with respect to $\q$.

Monte Carlo estimates based on importance sampling remain unbiased, but their variance differs due to the changed distribution. 
The natural question arises: what is the optimal choice of $\q$?
It turns out, choosing a $\q$ such that $\mathcal{L}^\q(X)$ is proportional to $|h(X)|\mathcal{L}^\prob(X)$ minimizes the estimator variance.
However, this distribution requires knowledge of the target value $\E^\prob\left[h(X)\right]$, which we aim to estimate in the first place.
Nonetheless, knowing the form of this optimal distribution guides us in designing other effective proposal distributions based on the specific task.
These details will be elaborated on later in the dissertation.

\section{Sequential Models}\label{sec:seq_models}

Sequential models seek to learn and represent the probability distribution of sequential data.
As introduced in \cref{sec:notation}, we denote such random sequences as $\mathbf{X} := (X_i)_{i\in\mathcal{I}} \in \mathcal{X}^\mathcal{I}$, where $\mathcal{I}$ is the index set and each element $X_i$ lies in a common state space $\mathcal{X}$.
This work focuses on a specific class of sequential models known as \textit{autoregressive models}.
These models parameterize the joint distribution of entire sequences $\mathbf{X}$ as a product of individual conditional distributions for each element $X_i$ given its predecessors $\mathbf{X}_{<i}$. 
The specific characteristics of the sequences and the model parameterizations are discussed in detail below.

\subsection{Autoregressive Modeling of Categorical Sequences}

Chapter 3 focuses on the simplest setting of sequential modeling: discrete time. In this setting, the index set $\mathcal{I}$ is the set of natural numbers $\mathbb{N}$.
We specifically handle categorical data, meaning each element $X_i$ in the sequence takes on of $v$ possible values within the state space $\mathcal{X}:=\{1, \dots, v\}$.
This setting is relevant for various tasks, including natural language processing and user behavior analysis.

While filtrations are typically used to represent conditional history in sequential models, they become cumbersome in the discrete time setting.
Instead, we directly condition on past segments of the sequence using notation like $\prob(\cdot \sep \mathbf{X}_{1:n})$ and $\prob(\cdot \sep \mathbf{X}_{<i})$.
Our focus lies on autoregressive sequential models that factorize the joint distribution of sequences as:
\begin{align}
\mathcal{L}^\prob(\mathbf{x}_{1:n}; \theta) := \prob_\theta(\mathbf{X_{1:n}}=\mathbf{x}_{1:n}) := p_\theta(\mathbf{x}_{1:n}) := \prod_{i=1}^{n} p_\theta(x_i \sep \mathbf{x}_{<i}) \label{eq:disc_likelihood}
\end{align}
where $n$ is the length of a given sequence and $p_\theta$ is a function parameterized by $\theta$ which represents the PMF of $X_i$ given its preceding context $\textbf{X}_{<i}$ (i.e., $p_{X_i \sep \textbf{X}_{<i}}$) for all $i \in \mathbb{N}$. 
Specific definitions for $p_\theta$ will be provided later.

Model training typically involves maximizing the likelihood of a dataset $\mathcal{D}:=\{\mathbf{x}_j\}_{j=1}^N$ containing $N$ sequences (possibly of varying lengths). 
This is often achieved by using (stochastic) gradient optimization methods to maximize the log-likelihood:
\begin{align}
    \ell(\theta; \mathcal{D}) := \sum_{\mathbf{x}_{1:n}\in\mathcal{D}} \log \mathcal{L}^\prob(\mathbf{x}_{1:n}; \theta) = \sum_{\mathbf{x}_{1:n}\in\mathcal{D}} \log p_\theta(\mathbf{x}_{1:n}).
\end{align}
Note that maximizing the log-likelihood is equivalent to minimizing the Kullback–Leibler (KL) divergence between the data probability measure that $\mathcal{D}$ was sampled with respect to and the parameterized measure $\prob_\theta$ \citep{murphy2023probabilistic}.

\subsubsection{Example Models}

We list below brief descriptions of two different parameterizations of discrete time, autoregressive models used for categorical sequences.

\paragraph{Markov Models}

A classic model employed for categorical sequences is the $n^\text{th}$ order Markov model \citep{howard2012dynamic}. These models are characterized by exhibiting the \textit{Markov property}. A process that respects this property implies that the future trajectory of the process is solely determined by its current state (or last $n$ states for $n>1$). Formally, this means
\begin{align}
    \prob_\theta(\cdot \sep \textbf{X}_{1:N}) = \prob_\theta(\cdot \sep \textbf{X}_{N-n+1:N})
\end{align}
for all values of $N>0$. The dynamics of the model can be further restricted by assuming the transition probabilities do not depend on the current timestep. Models that respect this are referred to as \textit{time-homogeneous Markov models} \citep{howard2012dynamic}. These are simply parameterized by a transition matrix $P \in \mathbb{R}^{v^n \times v}$ wherein each entry $P_{ij}$ describes the probability of transitioning from some unique state $\textbf{x}_{1:n}^{(i)} \in \mathcal{X}^n$ to $x_{n+1}=j$. 

\paragraph{Recurrent Neural Networks}
The main downsides to Markov models are their fixed context window and inability to share information across different states. This is remedied by more modern neural network approaches, such as the \textit{recurrent neural network}. While there are many variations that exist, the typical model usually complies with the following setup:
\begin{align}
    z_t & := f_\theta(x_{t-1}, z_{t-1}) \\
    p_\theta(x_t \sep \textbf{x}_{<t}) & := \text{Cat}(x_t; \text{softmax}(g_\theta(z_t))) 
\end{align}
for $t=1,2,\dots,$ where the functions $f_\theta:\mathcal{X}\times\R^d\rightarrow\R^d$ and $g_\theta:\R^d\rightarrow\R^{v}$ are parameterized by $\theta$ and $d$ describes the dimensionality of the hidden state $z_t\in\R^d$. Typically, the initial hidden state $z_0$ is set to some fixed value, e.g., $z_0=0$. Here, the hidden state $z_t$ is entrusted with retaining information from $\textbf{x}_{<t}$ that is pertinent not only to the immediate next step of $x_t$, but also for states further in the future. Specific forms of $f_\theta$ and $g_\theta$, collectively referred to as \textit{recurrent cells} in the literature, have been proposed to encourage and more easily learn this behavior during model training. Details about specific variants can be found in \citet{yu2019review}. 

\subsection{Marked Temporal Point Processes}\label{sec:mtpp_notation}

Chapters 4 and 5 focus on continuous-time event sequences with potentially varying intervals between events.
Similar to the discrete time setting, sequences in this space retain the index set $\mathcal{I}:=\mathbb{N}$.
However, the state space expands to represent both the \textit{event time} and additional information associated with each event, often referred to as a \textit{mark}. 
We denote this extended space as $\mathcal{X}:=[0,\infty)\times\mathcal{M}$, where $[0,\infty)$ contains possible timings of events and $\mathcal{M}$ is the \textit{mark space} for additional information. 
Similar to the discrete time case, we often focus on a categorical mark space, $\mathcal{M}:=\{1, \dots, v\}$. This chapter will assume $\mathcal{M}$ is discrete. For more general treatments of the mark space, please consult \citet{daley2003introduction}.

A random sequence $\mathbf{X}:=(X_i)_{i\in\mathcal{I}}$ in this setting consists of pairs $(T_i, M_i)$, where $T_i$ is the occurrence time of the $i^\text{th}$ event and $M_i$ is its associated mark. 
We enforce $T_i < T_j$ for $i < j$ to ensure temporal order and no simultaneous occurrence of events.
Additionally, we assume a finite number of events occur within any finite time interval.

Since events occur in continuous time, history is indexed by time t rather than event count.
The history filtration is defined as $(\hist_t)_{t \geq 0}$, where $\hist_t := \sigma(\{X_i \sep i \in \mathbb{N} \text{ and } T_i \leq t\})$. 
Similarly, $\hist_{t-}$ denotes the history over $[0, t)$ and more generally $\hist_{[a,b]}$ for the $\sigma$-algebra generated by events occurring over the time range $[a,b]$.

The literature uses the same $\hist_t$ notation for both the filtration and the random subset representing a time-based subsequence, e.g., $\{X_i \sep i \in \mathbb{N} \text{ and } T_i \leq t\}$ (which is \textit{not} a $\sigma$-algebra).
While not equivalent, we adopt this practice for consistency, but differentiate them when ambiguity arises. 
Often, random subsets are paired with realizations, $\textbf{h}_{t}:=\{(t_i,m_i) \sep i \in \mathbb{N} \text{ and } t_i \leq t\}$, whereas filtrations are exclusively used for conditioning, such as $\prob(\cdot \sep \hist_t)$.
In either case, $|\hist_t|$ (or $|\textbf{h}_t|$ for realized sequences) denotes the sequence length at time $t$.

Models capturing the dynamics of such sequences are called \textit{marked temporal point processes} (MTPPs).
MTPPs are typically specified by the \textit{marked intensity function} $\lambda_k(t \sep \hist_{t-})$ which describes the expected instantaneous rate of occurrence for events with mark $k$ at a given point in time $t$ given the past history $\hist_{t-}$ \citep{daley2003introduction}. 
This is formally defined by
\begin{align}
\lambda_k&(t \sep \hist_{t-}) := \lim_{\Delta \downarrow 0} \frac{1}{\Delta} \E^\prob\left[\ind(T_i \in [t, t+\Delta), M_i=k) \sep \hist_{t-}\right] \text{ where } i-1=|\hist_{t-}| \label{eq:bg_intensity} \\
& = \lim_{\Delta \downarrow 0} \frac{1}{\Delta} \prob(T_i \in [t, t+\Delta), M_i=k \sep \hist_{t-}) \\
& = \lim_{\Delta \downarrow 0} \frac{1}{\Delta} \prob(T_i \in [t, t+\Delta) \sep \hist_{t-})\prob(M_i=k \sep T_i \in [t, t+\Delta), \hist_{T_{i-1}}) \\
& = \lim_{\Delta \downarrow 0} \frac{1}{\Delta} \prob(T_i \in [t, t+\Delta) \sep T_i \notin (T_{i-1}, t), \hist_{T_{i-1}})\prob(M_i=k \sep T_i \in [t, t+\Delta), \hist_{T_{i-1}}) \\
& = \lim_{\Delta \downarrow 0} \frac{1}{\Delta} \frac{\prob(T_i \in [t, t+\Delta), T_i \notin (T_{i-1}, t) \sep \hist_{T_{i-1}})}{\prob(T_i \notin (T_{i-1}, t) \sep \hist_{T_{i-1}})}\prob(M_i=k \sep T_i \in [t, t+\Delta), \hist_{T_{i-1}}) \\
& = \lim_{\Delta \downarrow 0} \frac{1}{\Delta} \frac{\prob(T_i \in [t, t+\Delta) \sep \hist_{T_{i-1}})}{1 - \prob(T_i \leq t \sep \hist_{T_{i-1}})}\prob(M_i=k \sep T_i \in [t, t+\Delta), \hist_{T_{i-1}}).
\end{align}
Since the intensity is always conditional, we denote it as $\lambda^*_k(t):=\lambda_k(t\sep\hist_{t-})$ for brevity.
When evaluated on a partial sequence $\mathbf{x}_{<i}$, the marked intensity $\lambda^*_k(t)$ can be decomposed into individual distribution functions for $T_i$ and $M_i$:
\begin{align}
\lambda_k^*(t) := \frac{p_{T_i \sep \hist_{T_{i-1}}}(t \sep \mathbf{x}_{<i})}{1 - F_{T_i \sep \hist_{T_{i-1}}}(t \sep \mathbf{x}_{<i})}p_{M_i \sep T_i, \hist_{T_{i-1}}}(k \sep t, \mathbf{x}_{<i}). \label{eq:marked_intensity}
\end{align}
The \textit{total intensity function} $\lambda^*(t)$, also known as the ground intensity, describes the overall expected event occurrence rate regardless of mark.
It is equivalent to the sum of all marked intensities: $\lambda^*(t) := \sum_{k\in\mathcal{M}} \lambda^*_k(t)$. 
The total intensity is further linked to the general event occurrence by noting that the expected value of the \textit{compensator} for a point process at time $t$, or rather the integrated total intensity $\Lambda^*(t) := \int_0^t \lambda^*(s)ds$, is equivalent to the expected number of events to have occurred by time $t$: $\E^\prob\left[\Lambda^*(t)\right] \equiv \E^\prob\left[|\hist_t|\right]$ \citep{daley2003introduction}.

From \cref{eq:marked_intensity}, it follows that $p_{M_i \sep T_i, \hist_{T_{i-1}}}(k \sep t, \mathbf{x}_{<i}) = \lambda_k^*(t) / \lambda^*(t)$.
This is further justified by the \textit{superposition property} of MTPPs \citep{daley2003introduction}. This property states that the superposition of any MTPPs results in another MTPP, with the resulting intensity being the sum of the individual intensities. 
Furthermore, the ratio of individual intensities to total intensity defines the probability an event was produced by a given individual MTPP, assuming it belongs to the superposition. 
Therefore, each marked intensity $\lambda^*_k$ can be seen as describing an individual point process with the total intensity $\lambda^*$ describing their superposition.

MTPPs often directly parameterize and represent the marked intensity function $\lambda^*_k$ for all marks as the primary output and inference target. 
This is convenient from a modeling perspective as the only restriction that must be made for the parameterization is that $\lambda_k^*(t)$ must be non-negative for all values of $t$ and $k$. 
Furthermore, using \cref{eq:marked_intensity}, we can derive:
\begin{align}
    p_{T_i}^*(t) & = \lambda^*(t)\exp\left(-\int_{T_{i-1}}^t \lambda^*(s)ds\right) \\
    \text{ and } F_{T_i}^*(t) & = 1 - \exp\left(-\int_{T_{i-1}}^t \lambda^*(s)ds\right),
\end{align}
for $i \in \mathbb{N}$.\footnote{Similar to the notation for the conditional intensity functions, we use $p_{T_i}^*(t)$, $p_{M_i \sep T_i}^*(k \sep t)$, and $F_{T_i}^*(t)$ as shorthand to refer to $p_{T_i\sep\hist_{T_{i-1}}}(t \sep \mathbf{x}_{<i})$, $p_{M_i\sep T_i, \hist_{T_{i-1}}}(k \sep t, \mathbf{x}_{<i})$, and $F_{T_i\sep\hist_{T_{i-1}}}(t \sep \mathbf{x}_{<i})$ respectively.} 

Let $\textbf{h}_\tau:=(t_i,m_i)_{i=1}^n$ be a sequence of length $n$ observed over the time window $[0, \tau]$. 
The likelihood of this sequence is defined as:
\begin{align}
\mathcal{L}^\prob&(\hist_\tau=\textbf{h}_\tau)  := (1 - F_{T_{n+1}}^*(\tau))\prod_{i=1}^n p_{T_i}^*(t_i)p_{M_i \sep T_i}^*(m_i \sep t_i) \\
& = \exp\left(-\int_{t_n}^\tau \lambda^*(s) ds\right) \prod_{i=1}^n \lambda^*(t_i)\exp\left(-\int_{t_{i-1}}^{t_i} \lambda^*(s) ds\right)\frac{\lambda^*_{k_i}(t_i)}{\lambda^*(t_i)} \\
& = \left[\prod_{i=1}^n \lambda^*_{k_i}(t_i)\right] \exp\left(-\int_0^\tau \lambda^*(s)ds \right)
\end{align}
where $t_0=0$.
Note that we only write the likelihood $\mathcal{L}^\prob$ and not also as $\prob(\hist=\mathbf{h})$, unlike in \cref{eq:disc_likelihood}, because the probability mass of any specific sequence in this space is 0 due to the continuous components.\footnote{Technically speaking, there actually is one sequence that can have a non-zero probability mass assigned to it: the sequence where no events occur over the observable time window. The probability for that sequence is equivalent to $\prob(T_1 > \tau)$.} 
Training MTPPs involves maximizing the likelihood over a dataset $\mathcal{D}$ of sequences with (potentially) varying lengths and observation windows.
Stochastic gradient optimization is often used to maximize the the log-likelihood:
\begin{align}
    \ell(\theta; \mathcal{D}) := \sum_{\textbf{h}_\tau\in\mathcal{D}} \log \mathcal{L}^\prob&(\hist_\tau = \textbf{h}_\tau),
\end{align}
where $\theta$ represents the model parameters (often used to define the marked intensity functions).

While direct sampling of MTPPs is generally not feasible (except for a few parametric forms, such as constant $\lambda^*(t)$), rejection sampling remains a viable option for sampling from any arbitrary intensity-based MTPPs. 
The employed technique, often called the \textit{thinning procedure}, leverages the superposition property of MTPPs in conjunction with forms of MTPPs that can be directly sampled \citep{ogata1981lewis}. 

Let the MTPP with intensity $\lambda$ to sample be $A$. The core idea of the thinning procedure is there exists some MTPP $B$, such that the superposition MTPP $C$ yields a constant total intensity, say of value $c$.
Proposal times are drawn directly from MTPP $C$ until the desired time window $[0,\tau]$ is covered.
Each proposed time is then randomly accepted sequentially with probability $\lambda^*(t) / c$. If accepted, the time of occurrence originated from MTPP $A$ and is added to the sampled history.
Furthermore, the mark is then sampled from $p_{M \sep T}^*$ using the accepted time to condition on.
This procedure requires $c$ to dominate $\lambda^*$ for all times to ensure the intensity for MTPP $B$ is non-negative.
A formal description of this sampling procedure can be found in \cref{alg:mtpp_sampling}.

\begin{algorithm}
\caption{Thinning Procedure for Sampling from a MTPP \citep{ogata1981lewis}}\label{alg:mtpp_sampling}
\begin{algorithmic}
\Require Start Time $t \geq 0$, End Time $\tau > t$, Partial History $\textbf{h}_t$
\Ensure Dominating Rate $c > \lambda^*$
\While{$t < \tau$}
\State $t' \sim t + \text{Exponential}(c)$ \Comment{Sample candidate time}
\State $u \sim \text{Uniform}[0,1]$ \Comment{Sample acceptance rate}
\If{$u < \frac{\lambda^*(t')}{c}$ and $t' < \tau$} \Comment{New event occurs at time $t'$}
    \State $m \sim \text{Cat}\left(\left[\frac{\lambda^*_1(t')}{\lambda^*(t')}, \dots, \frac{\lambda^*_K(t')}{\lambda^*(t')}\right]\right)$ \Comment{Draw associated mark}
    \State $\textbf{h}_{t'} \gets \textbf{h}_{t} \cup \{(t', m)\}$ \Comment{Append to accumulated realized history}
\Else \Comment{Candidate event is rejected}
    \State $\textbf{h}_{t'} \gets \textbf{h}_t$
\EndIf
\State $t \gets t'$ \Comment{Advance forward in time}
\EndWhile
\end{algorithmic}
\end{algorithm}

\subsubsection{Example Models}

We list four different examples of (marked) temporal point processes, commonly used across a variety of different real-world settings.

\paragraph{Poisson Process}
The simplest of temporal point processes is the widely known \textit{Poisson process} \citep{daley2003introduction}. A homogeneous Poisson process is characterized by a counting process $(N_t)_{t\geq 0}$ with constant intensity $\lambda\in[0,\infty)$ which satisfies the following properties:
\begin{itemize}
    \item $N_0=0$ almost surely.
    \item $(N_{t_2} - N_{t_1})$ is independent of $(N_{t_4} - N_{t_3})$ for all $0 \leq t_1 < t_2 \leq t_3 < t_4$.
    \item $N_{t+\Delta} - N_{t} \sim \text{Poisson}(\lambda\Delta)$ for $\Delta > 0$.
\end{itemize}
Likewise, an inhomogeneous Poisson process allows the intensity $\lambda(t)$ to vary with respect to time, \textit{but not the history of events}. It shares the same properties in general as the homogeneous Poisson process, with the exception that $N_{t+\Delta}-N_t \sim \text{Poisson}(\int_{t}^{t+\Delta} \lambda(s)ds)$. Note that one can interpret an inhomogeneous Poisson process, or really \textit{any} temporal point process with conditional intensity $\lambda^*(t)$, as resembling a homogeneous Poisson process with intensity $\lambda^*(t)$ over the small interval of time $[t, t+dt)$. Put differently, every temporal point process resembles a Poisson process locally in time.

\paragraph{Hawkes Process}
Self-exciting temporal point processes, or as they are more commonly referred to as \textit{Hawkes processes}, are designed to model clusters of events where one occurrence encourages additional events to happen \citep{hawkes1971spectra}. While several variations exist, we will present one specific form that allows for cross-excitation between events of differing marks:
\begin{align}
\lambda^*_k(t) := \mu_k(t) + \sum_{(T,M)\in\hist_{t-}} \phi_{M,k}(t-T),
\end{align}
for $k\in\mathcal{M}$ where $\phi_{i,j}$ is a kernel describing the effect that events of type $i$ have on events of type $j$ and $\mu_k$ describes the background expected rate of occurrence for events of type $k$. One common form of this is exponential kernel: $\phi_{i,j}(z)=\alpha_{i,j}\exp(-\beta_{i,j}z)$ with parameters $\alpha_{i,j},\beta_{i,j} > 0$ for $i,j\in\mathcal{M}$.

\paragraph{Self-Correcting Process}
Standing in opposition to Hawkes processes are \textit{self-correcting processes} \citep{isham1979self}. Instead of encouraging clustering of events, this process inhibits it by decrementing the intensity whenever an event occurs, only to have it slowly build back up over time. Typically described for the non-marked setting, we have adapted the typical intensity parameterization here to account for cross-inhibition between events of differing marks:
\begin{align}
\lambda^*_k(t) := \exp\left(\eta_k t - \sum_{(T,M) \in \hist_{t-}} \delta_{M,k}\right)
\end{align}
for $k\in\mathcal{M}$ where parameters $\eta_k > 0$ determine the rate of compounding growth for the intensity and $\delta_{i,j}>0$ describes the inhibition that prior events of type $i$ have on future events of type $j$.

\paragraph{Neural MTPPs}
Recently, many different neural-network based approaches to modeling MTPPs have been proposed. While some choose to model the next event densities \citep{shchur2019intensity}, or the compensator $\Lambda^*(t):=\int_0^t \lambda^*(s)ds$ \citep{NEURIPS2019_39e4973b}, or eschew distributions entirely for an implicit approach \citep{xiao2018learning,lin2022exploring}, the majority parameterize the intensity function directly \citep{du2016recurrent,mei2017neural,jia2019neural,xiao2017modeling,zuo2020transformer,zhang2022temporal}. 

Neural approaches to representing the intensity function can be roughly summarized in the following representation:
\begin{align}
z_i & := f_\theta(z_{i-1}, t_i, m_i) \\
z(t) & := g_\theta(z_i,t) \text{ for } t_i < t \leq t_{i+1} \\
\lambda^*_k(t) & := h_\theta(z(t))
\end{align}
for $t \geq 0$ and $k\in\mathcal{M}$ where the functions $f_\theta:\R^d\times[0,\infty)\times\mathcal{M}\rightarrow\R^d$, $g_\theta:\R^d\times[0,\infty)\rightarrow\R^d$, and $h_\theta:\R^d\rightarrow[0,\infty)$
are parameterized by $\theta$ and $d$ describes the dimensionality of the hidden state $z\in\R^d$. For details on the specifics of these functions for various proposed approaches, refer to \citet{shchur2021neural}.

\subsection{Continuous-Time Stochastic Jump Processes}

Chapter 6 delves into the most general class of sequential models considered in this work: \textit{continuous-time stochastic jump processes}.
These processes, denoted by $X:=(X_t)_{t\in\mathcal{I}}$, take values $X_t$ for each continuous point in time $t \in \mathcal{I} := [0,\infty]$.\footnote{In earlier settings, we commonly represented sequences or portions thereof with bold characters (e.g., $\mathbf{X}_{1:N}:=(X_1, \dots, X_N)$); however, in the stochastic process literature a more functional perspective tends to be preferred where a process $X$ is really a random \textit{function} draw where time $t$ is an argument to the function. As such, when in this setting, we will not bold the process as a whole to comply with those standards.}
They exhibit both continuous variations over time segments and discontinuous jumps at finitely many points within a finite time range.
As such, it is helpful to conceptualize this not as a single process but rather as a pair of processes: one component governs the continuous segments, often described by a stochastic differential equation, while the other controls the jumps, typically modeled by a temporal point process. 

This topic, like the broader literature surrounding it, can be dense and require specific prior knowledge.
While essential terms and concepts will be formally defined, complete understanding might require additional background.
For an exhaustive explanation, we highly recommend consulting \citet{woyczynski2022diffusion}.

\paragraph{Setting}
Let $(\Omega, \filter, \prob)$ be a probability space equipped with a complete filtration $(\filter_t)_{t\geq 0}$ such that $\filter_\infty := \filter$. A stochastic process is defined as a function $X:\Omega \times [0,\infty] \rightarrow \mathcal{X}$ where $\mathcal{X}$ is the domain of the process and the process will be referred to as either $X_t(\omega)$ or $X_t$ for $t\in[0,\infty]$ and $\omega \in \Omega$. 
For our purposes, the process domain $\mathcal{X}$ is assumed to be $\mathbb{R}^d$.
Unless stated %
otherwise, we %
assume that a process $X$ is predictable and adapted to the filtration $(\filter_t)_{t\geq 0}$, meaning $X_t$ is $\filter_{t-}$-measurable. This implies that the natural filtration of a process, $\filter_t^X := \sigma(X_s \sep s \leq t)$, is a subset of $\filter_t$.

\paragraph{Jump Stochastic Differential Equations}
We focus on \cadlag processes, meaning they are right-continuous with well-defined left-limits. 
Formally, $X_t=X_{t+}=\lim_{s\downarrow t} X_s$ for all $t \geq 0$, but not necessarily $X_t=X_{t-}$ with $X_{t-}=\lim_{s\uparrow t} X_{s}$.
\textit{Jumps}, denoted by $\Delta$, are defined as:
\begin{align}
(\Delta X)_t \equiv \Delta X_t := X_t - X_{t-}.
\label{eq:del}
\end{align}
Jump times ($\{t \sep \Delta X_t \neq 0\}$) are assumed to occur randomly and be finite in number over any finite time window.

A general class of stochastic processes satisfying these constraints are \textit{jump stochastic differential equations} (jump SDEs). These processes can be decomposed into two components: continuous segments and discontinuous jumps. 

\paragraph{Brownian Motion}
Before characterizing the continuous segments, we introduce the quintessential continuous-time stochastic process driving them: \textit{Brownian motion}, also known as the \textit{Wiener process} \citep{woyczynski2022diffusion}. Denoted by $W:=(W_t)_{t \geq 0}$,  Brownian motion satisfies four conditions:
\begin{enumerate}
    \item $W_0=0$ almost surely.
    \item $W_t(\omega)$ is almost surely continuous in $t$ for fixed $\omega\in\Omega$.
    \item $W_t$ is independent of $W_{t'}$ for all $t,t'\in[0,\infty)$.
    \item $W_t - W_{t'} \sim \mathcal{N}(0, t-t')$ for all $0 \leq t < t'$.
\end{enumerate}
It should be noted that this describes scalar-valued Brownian motion. This generalizes to $d$-dimensional Brownian motion by having each coordinate be driven by an independent scalar-valued Brownian motion.

While Brownian motion is suitable for describing various systems, including natural phenomena like particle movement in fluids, it can be too limited in expressivity for our purposes.
Similar to how Poisson processes generalize to more complex temporal point processes, Brownian motion can be generalized to more expressive continuous stochastic processes.

\paragraph{Continuous Segments} 
Assuming no jump occurs at time $t$, the continuous change in the process $X$ at time $t$ follows:
\begin{align}
dX_{t} & = \mu(t, X_{t-})dt + \sigma(t, X_{t-})dW_t  \label{eq:cont_def}
\end{align}
for drift $\mu:[0,\infty)\times\mathbb{R}^d\rightarrow\mathbb{R}^d$ and scaling $\sigma:[0,\infty)\times\mathbb{R}^d\rightarrow\mathbb{R}^{d\times d}$ functions. $W$ is a Brownian motion of the same dimensionality as $X$ and is adapted to $(\filter_t)_{t\geq0}$. 
Using left-limits for arguments to $\mu$ and $\sigma$ ensures predictability of the entire process.
Note that this representation can be thought of as the continuous-time equivalent of the reparameterization of the Normal distribution, i.e., if $X \sim \mathcal{N}(\mu, \sigma)$ then $X=\mu+W\sigma$ for $W\sim\mathcal{N}(0,1)$.

If no jumps occur almost surely, then $X$ described by \cref{eq:cont_def} becomes a \textit{diffusion process}, known to possess the Markov property.

\paragraph{Instantaneous Jumps} 
Consider a marked temporal point process over the time interval $[0,\infty)$, with a mark-space $\mathcal{M}$ and marked counting process $N$.\footnote{To be clear, the mark-space $\mathcal{M}$ in this setting could be discrete, continuous, or a combination thereof.} 
The process is characterized by the conditional intensity function:
\begin{align}
\lambda_t(m)dtdm = \prob(N(dt\times dm)=1 \sep \filter_{t-}),
\end{align}
for $m \in \mathcal{M}$.
This definition is similar to \cref{eq:bg_intensity} with two distinct nuances:
\begin{itemize}
\item In the stochastic process literature, $\lambda$ is often considered a stochastic process itself, hence the time $t$ is the subscript rather than the mark $m$.\footnote{Technically speaking, jump SDEs also require $\lambda$ to be described via a differential equation; however, this dissertation lifts this restriction and considers more general, but still predictable and adapted to $(\filter_t)_{t\geq 0}$, intensity functions.} Conditioning on past events is therefore implicit.
\item Here, $\mathcal{M}$ is allowed to be discrete, continuous, or a mixture thereof. Because of this, $dm$ corresponds to an appropriate reference measure over this space.  
\end{itemize}
Should the point process dynamics be independent of the continuous segments, then it is sufficient for the intensity to condition on solely generated MTPP events $\hist_{t-}$ instead of the more complete $\filter_{t-}$.

The MTPP drives both the timing and value of the instantaneous jumps in $X$:
\begin{align}
d\Delta X_t & := \nu(t, X_{t-}, M_{N_t})dN_t
\label{eq:disc_def}
\end{align}
with jump strength $\nu:[0,\infty]\times \mathbb{R}^d \times \mathcal{M} \rightarrow \mathbb{R}^d$ and where $M_{i}$ is the MTPP's $i^\text{th}$ sampled mark. 
Both the continuous diffusion process and the MTPP are assumed predictable and adapted to the same filtration.
A common choice of jump mapping is $\nu(t, X_{t-}, M_{N_t})=M_{N_t}$ with $\mathcal{M}\equiv\mathcal{X}$, i.e., the marks are the jumps.
Under this setting, if $N_t$ is a Poisson process and $M_i$ are i.i.d. for all $i>0$, then $\Delta X$ becomes a compound Poisson process.

\paragraph{Full Model Definition}
From \cref{eq:del}, we have $X_t = \Delta X_t + X_{t-}$ and likewise $dX_t = d\Delta X_t + dX_{t-}$. 
Combining \cref{eq:cont_def} and \cref{eq:disc_def}, the full jump SDE governing our process of interest becomes:
\begin{align}
dX_t & = \mu(t, X_{t-})dt + \sigma(t, X_{t-})dW_t + \nu(t, X_{t-}, M_{N_t})dN_t.
\label{eq:full_model}
\end{align}
Equivalently, we can express the process as:
\begin{align}
X_t & = X_0 + \int_0^t \mu(t, X_{t-})dt + \int_0^t \sigma(t, X_{t-})dW_t + \sum_{i=1}^{N_t} \nu(T_i, X_{t_i-}, M_i).
\label{eq:full_model_abs}
\end{align}
where $T_i$ are the random jump times and $X_0$ is the initial value of the process (often assumed to be a constant value in $\mathcal{X}$).

\chapter{General Probabilistic Query Estimation for Discrete Time Models}

\noindent

One of the major successes in machine learning in recent years has been the development of neural sequence models for categorical sequences, particularly in natural language applications but also in other areas such as automatic code generation and program synthesis \citep{shin2019program,chen2021evaluating}, computer security \citep{brown2018recurrent},  recommender systems \citep{wu2017recurrent}, genomics \citep{shin2021protein,amin2021generative}, and survival analysis \citep{lee2019dynamic}. Many of the  models (although not all) 
rely on autoregressive training and prediction, allowing for the sequential generation of sequence completions in a recursive manner conditioned on sequence history.

A natural question in this context is how to compute answers to probabilistic queries that go beyond   traditional one-step-ahead predictions. Examples of such  queries are ``how likely is event $A$ to occur before event $B$?'' and ``how likely is event $C$ to occur (once or more) within the next $K$ steps of the sequence?'' These types of queries are very natural across a wide variety of application contexts, for example, the probability that an individual will 
finish speaking or writing a sentence within the next $K$ words, or that a user will use one app before another. See \cref{fig:3_flashy_example} for an example.

\begin{figure}
    \centering
    \includegraphics[width=0.95\textwidth]{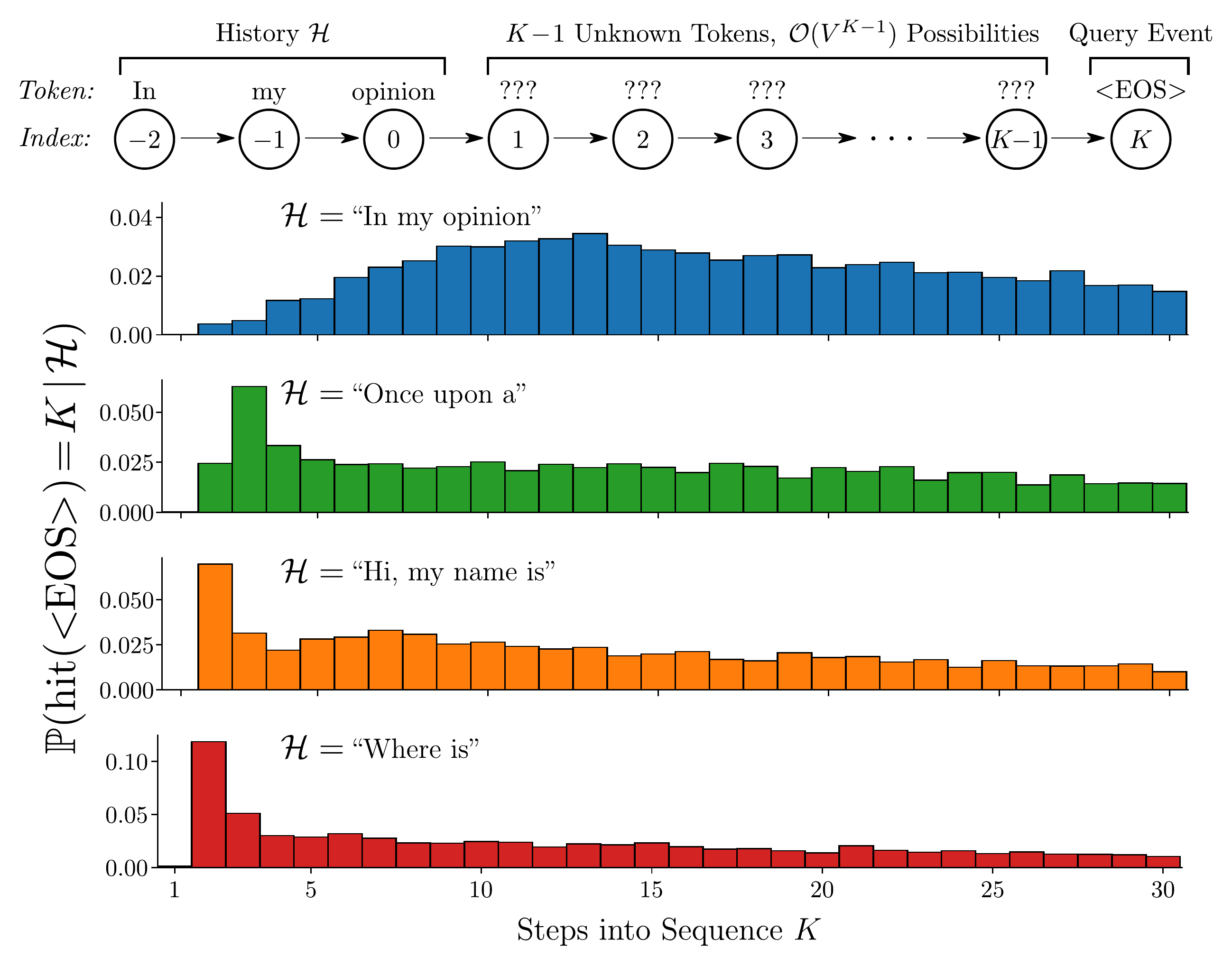}
    \caption{(top) Illustration of a query for the probability of a given sentence ``In my opinion...'' ending in $K$ steps. (bottom) GPT-2 \citep{gpt2-radford} hitting time estimates for sentence ending across 4 prefixes with $V=50,257, K\leq30$. Importance sampling query estimates maintain a 6x reduction in variance relative to naive model sampling for the same computation budget. Open-ended prefixes (top-left) generally possess longer-tailed distributions relative to simple prefixes. Almost no probability mass is found for $K=1$ due to the extremely high likelihood that at least one more token succeeds the prompts prior to ending in order to ensure proper grammar. Additional details provided in \cref{sec:3_methods,sec:3_experiments} and \cref{sec:3_gpt_exp}.}
    
    \label{fig:3_flashy_example}
\end{figure}

In this chapter we develop a general framework for answering such predictive queries in the context of autoregressive (AR) neural sequence models. 
This amounts to computing conditional probabilities of propositional statements about future events, conditioned on the history of the sequence as summarized by the current hidden state representation. 
We focus in particular on how to perform near real-time computation of such queries, motivated by use-cases such as answering human-generated queries and utilizing query estimates within the optimization loop of training a model. 
Somewhat surprisingly, although there has been extensive prior work on multivariate probabilistic querying in areas such as graphical models and database querying, as well as for restricted types of queries for traditional sequence models such as Markov models, querying for neural sequence models appears to be unexplored. 
One possible reason is that the problem is computationally intractable in the general case (as we discuss later in Section \ref{sec:3_queries}), typically scaling as $\bigO \! \left(V^{K-1}\right)$ or worse for predictions $K$-steps ahead, given a sequence model with a $V$-ary alphabet (e.g. compared to $\bigO \! \left(KV^2\right)$ for Markov chains).

\section{Related Work}
\label{sec:3_related}

Research on efficient computation of probabilistic queries has a long history in machine learning and AI, going back to   work on exact inference in multivariate  graphical models  \citep{pearl1988probabilistic,koller2009probabilistic}.
Queries in this context are typically of two types. The first are {\it conditional probability queries}, which are the focus of our attention here: computing probabilities defined for a subset $X$ of variables of interest, conditioned on a second subset $Y=y$ of observed variable values, and marginalizing over the set $Z$ of all other variables. The second type of queries can broadly be referred to as {\it assignment queries}, seeking the most likely (highest conditional probability) assignment of values $x$ for $X$, again conditioned on $Y=y$ and marginalizing over the set $Z$. Assignment queries are also referred to as most probable explanation (MPE) queries, or as maximum a posteriori (MAP) queries when $Z$ is the empty set \citep{koller2007graphical}.
 
For models that can be characterized with sparse  Markov dependence structure,   there is a significant body of work on efficient inference algorithms that can leverage such structure  \citep{koller2009probabilistic},  in particular for sequential models where   recursive computation can be effectively leveraged \citep{bilmes2010dynamic}. However, autoregressive neural sequence models are inherently non-Markov since the real-valued current hidden state is a function of the entire history of the sequence. Each hidden state vector induces a tree containing $V^K$ unique future trajectories with state-dependent probabilities for each path of length $K$. Techniques such as dynamic programming (used effectively in Markov-structured sequence models) are not applicable in this context, and both assignment queries and conditional probability queries are NP-hard in general  \cite{chen2018recurrent}.

For assignment-type queries  there has been considerable work in natural language processing with neural sequence models, particularly for the MAP problem of generating high-quality/high-probability  sequences conditioned on sequence history or other conditioning information. A variety of heuristic decoding methods have been developed and found to be useful in practice, including   beam search \citep{sutskever2014sequence},  best-first search \citep{xu2021massive}, sampling methods \citep{holtzman2019curious}, and hybrid variants \citep{shaham2021you}.  However, for conditional probability queries with neural sequence models (the focus of this chapter and to a large degree this dissertation as a whole), there has been no prior work in general on this problem to our knowledge. While  decoding techniques such as beam search can also be useful in the context of conditional probability queries, as we will see later in Section \ref{sec:3_methods}, such techniques have significant limitations in this context, since by definition they produce lower-bounds on the probabilities of interest and, hence, are biased estimators.

\section{Probabilistic Queries}
\label{sec:3_queries}

\paragraph{Notation}
Let $\textbf{X}_{1:N} := [X_1, X_2, \dots, X_N]$ be a sequence of random variables with arbitrary %
length $N$.\footnote{We will also use the notation $\textbf{X}_{<N}:=[X_1, \dots, X_{N-1}$] depending on which is more convenient.} This can be considered a truncation of the more general stochastic process $\textbf{X}:=(X_i)_{i\in\mathbb{N}}$. Additionally, let $\textbf{x}_{1:N} := [x_1, x_2, \dots, x_N]$ be the respective observed values of the random variables where each $x_i$ takes on values from a fixed vocabulary $\V := \{1, \dots, V\}$ of size $V$. 
Examples of these sequences include sentences where each letter or word is a single value, or streams of discrete events generated by some process or user. 
We will refer to individual variable-value pairs $(X_i, x_i)$ in the sequence as events. Recall that the filtration $(\hist_i)_{i\in\mathbb{N}}$ where $\hist_i:=\sigma(\{X_j \sep j = 1,\dots,i\})$ captures all measurable events up to step $i$ generated by the sequence $\mathbf{X}_{1:i}$ and is the formal way of conditioning on portions of a sequence; however, for clarity we will often opt for conditioning on the sequence directly, e.g., $(-\sep\mathbf{X}_{1:i})$.%

We consider an autoregressive model $p_\theta(x_N \sep \textbf{x}_{<N})$ that is able to condition on realized sequences $\textbf{x}_{<N}$ of arbitrary length $N-1$ and produce the conditional PMF values for the next event in the sequence over all possible vocabulary values, e.g., $p_{X_N \sep \textbf{X}_{1:N-1}}(x_N \sep \textbf{x}_{1:N-1})$ for each $x_N \in \V$. Note that this factorization fully defines the probability measure $\prob_\theta$ over all events generated by the stochastic process $\mathbf{X}$, i.e., $\hist_\infty := \cup_{i\in\mathbb{N}}\hist_i$. This implied probability measure and event space contain measurable events of interest to us; however, the probability values assigned to them are not always readily accessible should they not comply with the autoregressive nature of the model $p_\theta$. We consider the only readily accessible probability statements from the model include the originally mentioned conditional PMF values, $p_{X_N \sep \textbf{X}_{1:N-1}}(x_N \sep \textbf{x}_{1:N-1}):=p_\theta(x_N \sep \textbf{x}_{1:N-1})$, and the joint PMF over an entire sequence $p_{\textbf{X}_{1:N}}(\textbf{x}_{1:N}):=\prod_{i=1}^Np_\theta(x_i \sep \textbf{x}_{<i})$. When appropriate, we will opt for using the model $p_\theta$ instead of the implied measure $\prob_\theta$ when the statement is in a readily accessible form. 

For brevity, the methods presented in the remainder of the chapter consider sequences starting from time step 1 and do not condition on any additional information. That being said, it is often the case that a practitioner will want to contextualize a query by conditioning on the history up to a given point and then ask about potential future trajectories. All of the formulas that will be derived can be adapted to this scenario easily by simply remapping the indexing values such that $\hist_0$ contains all realized conditional information and $X_i$ for $i=1,2,\dots$ will all take place in the future. Then, all probability statements can condition on $\hist_0$ without any other changes to make the methods compliant with this contextual information.\footnote{For example, say we know $\textbf{X}_{1:3}=\textbf{x}_{1:3}$ and are concerned with the event 3 more steps in the future, $X_6$. This results in the query $p_{X_6 \sep \hist_3}(x_6 \sep \textbf{x}_{1:3})$; however, we can alternatively represent this via $p_{X_3' \sep \hist_0'}(x_3' \sep \textbf{x}_{<1}')$ where $X_i'=X_{i+3}$ and similarly for $x'$ and $\hist'$.} We will use $\hist_0$ to reference to this history of fixed events, with the knowledge that it could contain no events in the use case of wanting unconditional queries.

\begin{figure}
    \centering
    \includegraphics[width=0.8\textwidth]{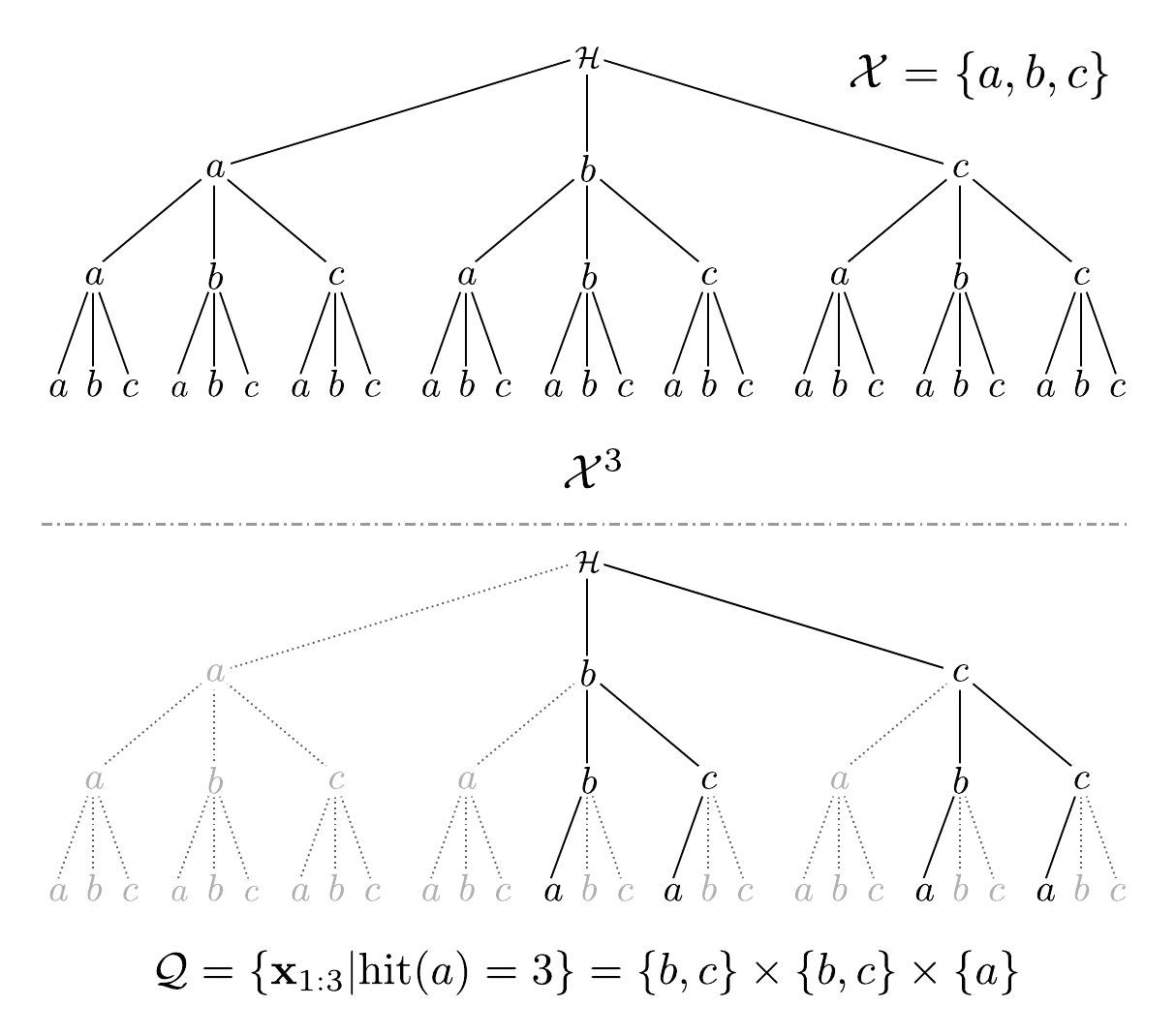}
    \caption{(left) Tree diagram of the complete sequence space for a vocabulary $\V=\{a,b,c\}$ and the corresponding query space $\mathcal{Q}$ (right) for when \emph{the first appearance of $a$} occurs on the third step (i.e., $\hit(a)=3$), defined as the set product of restricted domains listed below the figure.} 
    \label{fig:3_query_example}
\end{figure}

\paragraph{Defining Probabilistic Queries}
Given a specific history of events $\hist_0$, there are a variety of different questions one could ask about the future beyond where the history ends: ($\text{Q}1$) What event is likely to happen next? ($\text{Q}2$) Which events are likely to occur $K>1$ steps from now? ($\text{Q}3$) What is the distribution of when the next instance of $a \in \V$ occurs? ($\text{Q}4$) How likely is it that we will see event $a\in\V$ occur before $b\in\V$? ($\text{Q}5$) 
How likely is it for $a\in\V$ to occur $n$ times in the next $K$ steps?

We define a common framework for such queries  by defining probabilistic queries to be of the form
$\prob_\theta(\textbf{X}_{1:K} \in \mathcal{Q})$
with $\mathcal{Q}\subset \V^K$. This %
can be extended to the infinite setting (i.e., $\prob_\theta(\textbf{X} \in \mathcal{Q})$ where $\mathcal{Q}\subset \V^\infty$). Exact computation of an arbitrary query is straightforward to represent:
\begin{align}
\prob_\theta(\textbf{X}_{1:K} \in \mathcal{Q}) & = \sum_{\textbf{x}_{1:K}\in\mathcal{Q}} p_\theta(\textbf{x}_{1:K})  = \sum_{\textbf{x}_{1:K}\in\mathcal{Q}} \prod_{k=1}^K p_\theta(x_{k} \sep \textbf{x}_{<k}). \label{eq:3_exact_computations_cond}
\end{align}
Depending on number of different sequences in the query space, $|\mathcal{Q}|$, performing this calculation can quickly become intractable, %
motivating lower bounds or approximations (developed in more detail in \cref{sec:3_methods}). In this context it is helpful to impose structure on the query $\mathcal{Q}$ to make subsequent estimation  easier, in particular by
breaking $\mathcal{Q}$ into the following structured partition:
\begin{align}
\mathcal{Q} & = \cup_i \mathcal{Q}^{(i)} \text{ where } \mathcal{Q}^{(i)} \cap \mathcal{Q}^{(j)} = \emptyset \text{ for } i\neq j \label{eq:3_structure_1} \\
\text{and } \mathcal{Q}^{(i)} & = \mathcal{V}^{(i)}_1 \times \mathcal{V}^{(i)}_2 \times \dots \times \mathcal{V}^{(i)}_K \text{ where } \mathcal{V}^{(i)}_k \subseteq \V \text{ for } k=1,\dots,K. \label{eq:3_structure_2}
\end{align}
In words, this means a given query $\mathcal{Q}$ can be broken into a partition of simpler queries $\mathcal{Q}^{(i)}$ which take the form of a set cross product between restricted domains $\mathcal{V}^{(i)}_k$, one domain for each token $X_k$.\footnote{Ideally, the partitioning is chosen to have the smallest number of $\mathcal{Q}^{(i)}$'s needed.} An illustration of an example query set can be seen in \cref{fig:3_query_example}. A natural consequence of this is that:
\begin{align}
\prob_\theta(\textbf{X}_{1:K}\in\mathcal{Q}) & = \sum_i \prob_\theta\left(\textbf{X}_{1:K}\in\mathcal{Q}^{(i)}\right) = \sum_i \prob_\theta\left(\cap_{k=1}^K X_k \in\mathcal{V}^{(i)}_k \right),
\end{align}
which lends itself to more easily estimating each term in the sum. This will be discussed in  \cref{sec:3_methods}.

\begin{table}[]
    \centering
    \begin{tabular}{clll}
    \toprule
    \# & Question & Probabilistic Query & Cost $\left(K\cdot|\mathcal{Q}|\right)$ \\
    \midrule
    \vspace{2pt} 
    \query{1} & Next event? & $\prob_\theta(X_1=x_1)$ & $\bigO (1)$\\
    \vspace{2pt} 
    \query{2} & Event $K$ steps from now? & $\prob_\theta(X_K=x_K)$ & $\bigO \! \left(V^{K-1}\right)$ \\
    \query{3} & Next instance of $a$? & $\prob_\theta(\hit(a)=K)$ & $\bigO \! \left((V-1)^{K-1}\right)$ \\
    \vspace{2pt} 
    \query{4} & Will $a$ happen before $b$? & $\prob_\theta(\hit(a) < \hit(b))$ &  $\bigO \! \left((V-2)^{K}\right)^\dagger$ \\
    \query{5} & How many instances of $a$ in $K$ steps? & $\prob_\theta(N_a(K)=n)$ &  $\bigO \! \left(\binom{K}{n}(V-1)^{K-n}\right)$ \\
    \bottomrule
    \end{tabular}
    \caption{List of example questions, corresponding probabilistic queries, and associated costs of exact computation computation with an autoregressive model. 
    The cost of incorporating a history $\hist$ is assumed to be an additive constant for all queries. %
    While \query{4} extends to infinite time, the cost reported is for computing a lower bound up to $K$ steps. }
    \label{table:3_prob_query_examples}
\end{table}

\paragraph{Queries of Interest}
All of the queries posed  earlier in this section can be represented under the framework detailed in \cref{eq:3_structure_1} and \cref{eq:3_structure_2}, as illustrated in \cref{table:3_prob_query_examples}.

$\textbf{Q}\mathbf{1}$ \quad The query $\prob_\theta(X_1=x_1)$ for $x_1 \in \V$ can be represented with $\mathcal{Q}=\{x_1\}$ and is already naturally in a form that our model can directly estimate due to the autoregressive factorization imposed by the model: $p_{X_1}(x_1):=p_\theta(x_1)$. Furthermore, if we are interested in any potential continuing sequence $\mathbf{X}_{1:K}=\mathbf{x}_{1:K}$ this can easily be computed via $\prob_\theta(\mathbf{X}_{1:K}=\mathbf{x}_{1:K})=\prod_{k=1}^K p_\theta(x_k\sep \mathbf{x}_{<k})$ as discussed earlier.

$\textbf{Q}\mathbf{2}$ \quad The query $\prob_\theta(X_K=x_K)$ for some $x_K \in \V$ and $K>1$ can be represented with $\mathcal{Q}=\V^{K-1}\times \{a\}$. In order to evaluate $\prob_\theta(X_K=x_K)$, the previous terms $\textbf{X}_{<K}$ need to be marginalized out. This is naturally represented as
\begin{align}
\prob_\theta(X_K=x_K) & = \E^\prob_{\mathbf{X}_{<K}}\left[p_\theta(x_K\sep \mathbf{x}_{<K})\right] = \sum_{\mathbf{x}_{<K}\in \V^{K-1}} p_\theta(\mathbf{x}_{1:K}). 
\end{align}
It is helpful to show both the exact summation form as well as the expected value representation as both will be useful in \cref{sec:3_methods}.

$\textbf{Q}\mathbf{3}$ \quad The probability of the next instance of $a \in \V$ occurring at some point in time $K\geq 1$, $\prob_\theta(\hit(a)=K)$ where $\hit(\cdot)$ is the \emph{hitting time}, can be represented as $\mathcal{Q}=(\V\setminus\{a\})^{K-1}\times\{a\}$. Evaluating the distribution of the hitting time with our model can be computed as
\begin{align}
\prob_\theta(\hit(a)=K) & = \prob_\theta(X_K=a, \textbf{X}_{<K} \neq a) \\
& = \sum_{\textbf{x}_{<K}\in (\V\setminus\{a\})^{K-1}} p_\theta(X_K=a, X_{<K}=x_{<K}). \label{eq:3_hit_full}
\end{align}
The value $a\in\V$ can be easily replaced with a set of values $A\subset \V$ in these representations:
\begin{align}
\prob_\theta(\hit(A)=k) & = \prob_\theta(X_K\in A, \textbf{X}_{<K} \in (\V\setminus A)^{K-1}) \\
& = \sum_{\textbf{x}_{<K}\in (\V\setminus A)^{K-1}} \sum_{a\in A} p_\theta(X_K=a, \textbf{X}_{<K}=\textbf{x}_{<K}),
\end{align}
with $\mathcal{Q}=(\V\setminus A)^{K-1}\times A$.

$\textbf{Q}\mathbf{4}$ \quad The probability of $a\in\V$ occurring before $b\in\V$, $\prob_\theta(\hit(a)<\hit(b))$, is represented as $\mathcal{Q}=\cup_{i=1}^\infty\mathcal{Q}^{(i)}$ where $\mathcal{Q}^{(i)}=(\V\setminus\{a,b\})^{i-1}\times\{a\}$. 
To evaluate $\prob_\theta(\hit(a) < \hit(b))$, we must consider the possible instances where this condition is fulfilled. In doing so, we end up evaluating multiple queries similar in form to $\textbf{Q3}$, in the following manner:
\begin{align}
\prob_\theta(\hit(a) < \hit(b) ) & = \sum_{k=1}^\infty \prob_\theta(\hit(a)=k, \hit(b)>k)  \\
& = \sum_{k=1}^\infty \prob_\theta(X_k=a, \textbf{X}_{<k}\in (\V\setminus\{a,b\})^{k-1}) \label{eq:3_hit_comp_sum}
\end{align}
In practice, computing this expression exactly is intractable due to it being an infinite sum. There are two potential approaches one could take to address this. The first of which is to ask a slightly different query:
\begin{align}
    \prob_\theta(\hit(a) < \hit(b)\sep \hit(a) \leq K) & = \frac{\prob_\theta(\hit(a) < \hit(b), \hit(a) \leq K)}{\prob_\theta(\hit(a) \leq K)} \\
    & = \frac{\sum_{k=1}^{K} \prob_\theta(X_k=a, \textbf{X}_{<k}\in (\V\setminus \{a,b\})^{k-1})}{\sum_{k=1}^{K} \prob_\theta(\hit(a)=k)} \\
    & = \frac{\sum_{k=1}^{K} \sum_{\textbf{x}_{<k}\in (\V\setminus \{a,b\})^{k-1}} p_\theta(\textbf{x}_{1:k})}{\sum_{k=1}^{K} \prob_\theta(\hit(a)=k)},
\end{align}
where $x_k=a$ and the denominator can be decomposed as shown in \cref{eq:3_hit_full}.

The other option is to produce a lower bound on this expression by evaluating the sum in \cref{eq:3_hit_comp_sum} for the first $K$ terms. We can achieve error bounds on this estimate by noting that $\prob_\theta(\hit(a) < \hit(b) ) + \prob_\theta(\hit(a) > \hit(b)) = 1$. As such, if we evaluate \cref{eq:3_hit_comp_sum} up to $K$ terms for both $\prob_\theta(\hit(a) < \hit(b))$ and $\prob_\theta(\hit(b) < \hit(a) )$, the difference between the sums will be the maximum error either lower bound can have.

Similar to $\textbf{Q3}$, we can also ask this query with sets $A, B\subset \V$ instead of values $a,b$, so long as $A \cap B = \emptyset$: 
\begin{align}
\prob_\theta(\hit(A) < \hit(B) ) & = \sum_{k=1}^\infty \prob_\theta(\hit(A)=k, \hit(B)>k) \\
& = \sum_{k=1}^\infty \sum_{a\in A} \prob_\theta(X_k=a, \textbf{X}_{<k}\in (\V\setminus(A\cup B))^{k-1}).
\end{align}

$\textbf{Q}\mathbf{5}$ \quad The probability of $a\in\V$ occurring $n$ times in the next $K$ steps, $\prob_\theta(N_a(K)=n)$, is represented as $\mathcal{Q}=\cup_{i=1}^{C(K,n)} \mathcal{Q}^{(i)}$, where  $N_a(K)$ is a random variable for the number of occurrences of events of type $a$ from steps $1$ to $K$ and 
$\mathcal{Q}^{(i)}$'s are defined to cover all unique permutations of orders of products composed of: $\{a\}^n$ and $(\V\setminus\{a\})^{K-n}$.

Evaluating $\prob_\theta(N_a(K)=n)$ also involves decomposing this value of interest into statements involving hitting times. Let $\hit^{(l)}(a)$ be the $l^\text{th}$ hitting time of a specific event of interest $a$. In other words, the time of the $l^\text{th}$ occurrence of $a$. Assuming $n \leq K$:
\begin{align}
&\prob_\theta(N_a(K)=n)\\
& = \sum_{i_1<\dots<i_n\leq K} \!\!\prob_\theta(\hit^{(1)}(a)=i_1, \dots, \hit^{(n)}(a)=i_n, \hit^{(n+1)}(a)>K) \\
& = \sum_{i_1<\dots<i_n\leq K}\!\!\prob_\theta(\{X_j=a \sep j = i_1, \dots, i_n\}, \{X_j \neq a \sep j \in \{1, \dots, K\} \setminus  \{i_1, \dots, i_n\})
\end{align}
For brevity, we will not decompose this further into tractable model calls $p_\theta(x_i \sep \textbf{x}_{<i})$; however, it should be clear that this falls into the typical query decomposition due to term-wise vocabulary restrictions. Additionally, similar to earlier query types this can easily be extended for $A\subset\V$.

\paragraph{Query Complexity}
From \cref{eq:3_exact_computations_cond}, exact computation of a query involves computing $K\cdot|\mathcal{Q}|$ conditional distributions  (e.g., $p_\theta(x_k|\textbf{x}_{<k})$) in an autoregressive manner. 
Under the structured representation, the number of conditional distributions needed is equivalently $\sum_i \prod_{k=1}^K |\mathcal{V}_k^{(i)}|$.
Non-attention based neural sequence models often define $p_\theta(x_k|\textbf{x}_{<k}) := f_\theta(h_k)$ where $h_k = \text{RNN}_\theta(h_{k-1}, x_{k-1})$ is the result of a recurrent neural network (RNN) cell parameterized by $\theta$ that takes as input the previous hidden state $h_{k-1}$ and event $x_{k-1}$. As such, the computation complexity for any individual conditional distribution remains constant with respect to sequence length. We will refer to the complexity of this atomic action as being $\bigO(1)$. Naturally, the actual complexity depends on the model architecture and has a multiplicative scaling on the cost of computing a query. The number of atomic operations needed to exactly compute $Q1$-$Q5$ for this class of models can be found in \cref{table:3_prob_query_examples}. Should $p_\theta$ be an attention-based model (e.g., a transformer \citep{vaswani2017attention}) then the time complexity of computing a single one-step-ahead distribution becomes $\bigO(K)$, further exacerbating the \textbf{exponential growth} of many queries.
Note that some particular parametric forms of $p_\theta$ admit more efficient query computation. The next section will explore one specific setting of this with Markov models.

\section{Querying Markov Models}

Certain models $p_\theta$ allow for computing various queries directly as a function of the parameters and specifics of the query of interest. One particular class that allows for this are \textit{Markov models}. Markov models of the $m^\text{th}$ order are characterized by only depending on the $m$ previous terms in a sequence to determine the distribution for the immediate next step. We will first derive more efficient representations for some of the specifically introduced query classes for a $1^\text{st}$ order homogeneous Markov model, and then will generalize to generic queries with higher order models.

\subsection{Queries for Homogeneous First Order Markov Models}
Let $p_\theta$ be a first order ergodic homogeneous Markov model characterized by a transition matrix $(\pi_{i,j})_{i,j\in\V}$ where $p_\theta(x_k \sep \textbf{x}_{<k}) := \pi_{x_{k-1}, x_{k}}$ for all $k\in\mathbb{N}$ with steady state probabilities $(\pi_i)_{i\in\V}$.\footnote{There are various ways that Markov models handle beginnings of sequences where there are fewer events to condition on than the model's order. A common way of accommodating this is to append the vocabulary $\V$ with a special beginning of sequence value and enforce that $x_i$ takes on this value for $i < 1$. Our derivations in this section will always assume that we are contextualizing our query with enough information to not have to worry about this edge case.} Using basic results from the theory of finite Markov chains, e.g., see \citep{kemeny1983finite}, we analyze below the complexity of three of the previously introduced queries conditioned on the current observation $X_0=x_0$ for $x_0\in\V$.

$\textbf{Q}\mathbf{2}$ \quad The $k^\text{th}$ marginal query, $\prob_\theta(X_k = x_k \sep X_0 = x_0)$, can be computed recursively by computing the conditional distribution $p_{X_1\sep X_0}(x_1 \sep x_0) = \pi_{x_0, x_1}$, then $p_{X_2 \sep X_0}(x_2\sep x_0) = \sum_{x_1\in\V} \pi_{x_1,x_2}\pi_{x_0,x_1}$, and so on, resulting in $k$ matrix multiplications, with complexity $\bigO(kV^2)$.

$\textbf{Q}\mathbf{3}$ \quad The hitting time of $a \in \V$ at time $k$ query, $\prob_\theta(\hit(a)=k \sep X_0=x_0)$, can be computed for all values $k' \leq k$ using $k-1$ matrix multiplications where, at each step $k'=2,\dots, k-1$, all events except $a$ are marginalized over. This results in a complexity of $\bigO(kV^2)$.

In general it is straightforward to show that 
\begin{align}   
\prob_\theta(\hit(a) = k \sep X_0=x_0) = \begin{cases}
    \pi_{x_0,a} & \text{ for } k=1 \\
    \pi_{x_0,\cancel{a}}\pi_{\cancel{a}, a}\pi_{\cancel{a}, \cancel{a}}^{k-2} & \text{ for } k > 1
\end{cases}
\end{align}
where $\pi_{x_0, \cancel{a}}=\sum_{i\neq a} \pi_{x_0,i}$ is the probability of transitioning to a state other than $a$ from $x_0$, $\pi_{\cancel{a}, \cancel{a}}=\sum_{i\neq a} \pi_{\cancel{a}, i}$ is the probability of remaining not at state $a$, and lastly $\pi_{\cancel{a}, a}$ is the probability of transitioning to state $a$ from not $a$. Computation of $\pi_{\cancel{a}, a}$ requires knowledge of the steady-state probabilities of the chain $\pi_a$ and $\pi_{\cancel{a}} = \sum_{i \ne a} \pi_{i}$. Computing the steady-state probabilities requires inversion of a $V \times V$ matrix, with a complexity of $O(V^3)$. This general solution will be faster to compute than the version using matrix multiplication (up to horizon $k$) whenever $k > V$.

$\textbf{Q}\mathbf{4}$ \quad The probability of $a\in\V$ occurring before $b\in\V$, $\prob_\theta(\hit(a) < \hit(b) \sep X_0=x_0)$, can be computed exactly for a $1^\text{st}$ order Markov model. Let $\cancel{ab}$ represent the state of not being $a$ or $b$, i.e., some element of $\V\setminus\{a,b\}$.

Let $c$ be the set of all states (events) in $\V$ except for $a$ and $b$. It is straightforward to show that
\begin{align}
\prob_\theta(\hit(a) < \hit(b) \sep X_0=x_0) & = \pi_{x_0,a} + \pi_{x_0,\cancel{ab}} \frac{ \pi_{\cancel{ab}, a} }{\pi_{\cancel{ab}, a} + \pi_{\cancel{ab}, a}}\\
\text{and } \prob_\theta(\hit(b) < \hit(a) \sep X_0=x_0) & = \pi_{x_0,b} + \pi_{x_0,\cancel{ab}} \frac{ \pi_{\cancel{ab}, b} }{\pi_{\cancel{ab}, a} + \pi_{\cancel{ab}, a}}
\end{align}
where $\pi_{\cancel{ab}, a}$ and $\pi_{\cancel{ab}, b}$ are the probabilities of transitioning to $a$ and $b$, respectively, given that the system is currently in a state that is neither $a$ or $b$. Computing these probabilities again requires knowledge of the steady-state probabilities $\pi_a, \pi_b, \pi_{\cancel{ab}}$, resulting in $O(V^3)$ time complexity.

\subsection{General Queries and Higher Order Markov Models}
For simplicity, we will assume that a query takes the form $\mathcal{Q}=\mathcal{V}_1\times\dots\times\mathcal{V}_K$ and we are interested in $\prob_\theta(X_{1:K}\in\mathcal{Q} \sep \hist_0)$ for a $m^\text{th}$-order Markov model $p_\theta$. This model can be defined with an $(m+1)$-dimensional tensor $\Pi\in\mathbb{R}^{V\times\dots\times V}$ with elements $\pi_{i_1,\dots,i_m,i_{m+1}}$ such that $\sum_{j=1}^V \pi_{i_1,\dots,i_m,j} = 1$ for all $i_1,\dots,i_m\in \V$. Alternatively, $\pi_{i_1,\dots,i_m,i_{m+1}}=\prob_\theta(X_{j+m+1}=i_{m+1} \sep \textbf{X}_{j+1:j+m}=\textbf{i}_{1:m})$ for $j\geq 0$.

To marginalize out $X_{m+1}$ and compute the conditional distribution $\prob_\theta(X_{m+2}=x_{m+2} \sep \textbf{X}_{1:m}=\textbf{x}_{1:m})$  requires the following computation:
\begin{align}
& \prob_\theta(X_{m+2}=x_{m+2} \sep \textbf{X}_{1:m}=\textbf{x}_{1:m}) = \sum_{v\in\V} \prob_\theta(X_{m+2}=a,X_{m+1}=v\sep\textbf{X}_{1:m}=\textbf{x}_{1:m}) \\
& = \sum_{v\in\V} p_\theta(X_{m+2}=x_{m+2}\sep X_{m+1}=v,\textbf{X}_{2:m}=\textbf{x}_{2:m})p_\theta(X_{m+1}=v\sep \textbf{X}_{1:m}=\textbf{x}_{1:m}) \\
& = \sum_{v\in\V} \pi_{x_2,\dots,x_m,v,a}\pi_{x_1,\dots,x_{m-1},x_m,v} 
\end{align}
If we perform this operation over all values of $x_{1:m}$, we can construct a new transition tensor representing $\prob_\theta(X_{m+2}\sep X_{1:m})$ in general. We will denote this new tensor as being equal to $\Pi \otimes \Pi$ where $\left(\Pi \otimes \Pi\right)_{i_1,\dots,i_m,v}=p(X_{j+m+2}=v\sep \textbf{X}_{j+1:j+m}=\textbf{i}_{1:m})$. This operation has a computation complexity of $\mathcal{O}(V^{m+1})$.

This special product can be carried out repeatedly to marginalize out further into the future. For instance, performing this operation on $\Pi$ $(n-1)$-times results in $(\Pi\otimes\dots\otimes\Pi)_{i_1,\dots,i_m,v}=p(X_{j+m+n}=v\sep \textbf{X}_{j+1:j+m}=\textbf{i}_{1:m})$, thus marginalizing out $n-1$ terms: $\textbf{X}_{j+m+1:j+m+n-1}$.

Transitioning into a restricted space $\mathcal{V}$ can be done easily by defining a restricted transition tensor $\Pi_{\mathcal{V}}$ such that $\left(\Pi_{\mathcal{V}}\right)_{i_1,\dots,i_m,v} \propto \pi_{i_1,\dots,i_m,v}$ if $v \in \V$, otherwise $\left(\Pi_{\mathcal{V}}\right)_{i_1,\dots,i_m,v}=0$ for all $i_1,\dots,i_m,v \in \V$. 

With this, we have everything we need to compute $p(X_{1:K}\in\mathcal{Q}|\hist)$. If the last $m$-values of the history $\hist$ are equal to $i_1,\dots,i_m$, then:
\begin{align}
\prob_\theta(\textbf{X}_{1:K}\in\mathcal{Q}\sep \hist_0) &= \prod_{k=1}^K \prob_\theta(X_k\in\mathcal{V}_k\sep \textbf{X}_{<k}\in\mathcal{V}_1\times\dots\times\mathcal{V}_{k-1})\\
&= \prod_{k=1}^K\sum_{v\in\mathcal{V}_k} \prob_\theta(X_k=v|\textbf{X}_{<k}\in\mathcal{V}_1\times\dots\times\mathcal{V}_{k-1})\\
& = \prod_{k=1}^K \sum_{v\in\mathcal{V}_k} \left(\Pi_{\mathcal{V}_1}\otimes\dots\otimes\Pi_{\mathcal{V}_k}\right)_{i_1,\dots,i_m,v}. \label{eq:3_markov_gen}
\end{align}
While this result is derived for homogeneous Markov models, a similar form exists for non-homogeneous variants as well. Assuming that the time-dependent transition matrices are known ahead of time, and denoted as $\pi^{(i)}$ for describing the transition to state $X_i$, it follows then that $\prob_\theta(\textbf{X}_{1:K}\in\mathcal{Q}\sep \hist_0) = \prod_{k=1}^K \sum_{v\in\mathcal{V}_k} \left(\Pi_{\mathcal{V}_1}^{(1)}\otimes\dots\otimes\Pi_{\mathcal{V}_k}^{(k)}\right)_{i_1,\dots,i_m,v}$.

The dominant factor in the computational complexity of \cref{eq:3_markov_gen} is computing all of the special products of $\Pi$.
As such, this has a total computational complexity of $\mathcal{O}((k-1)V^{m+1})$. The key difference here is that once the computation is done for a query of length $k$, then it takes very little additional computation to evaluate similar queries on different contexts $\hist_0$ as the tensor products can be reused. For comparison, recall that for general autoregressive models the complexity of analytically solving queries in the same form is $\mathcal{O}(V^k)$ (assuming the parametric form of the model affords no simplifications).

The next section will discuss various methods for estimating probabilistic queries in the setting where analytically computing the answer is not tractable.

\section{Query Estimation Methods}\label{sec:3_methods}

Since exact query computation can scale exponentially in $K$ for generic autoregressive models $p_\theta$,
it is natural to consider approximation methods. In particular we focus on importance sampling, beam search, and a hybrid of both methods. All methods will be based on a novel proposal distribution, discussed below. 

\subsection{Proposal Distribution}
\label{sec:3_proposal}
For various estimation methods which will be discussed later, it is beneficial to have a tractable proposal distribution $q$ over $\textbf{X}$ that we can sample from with an implied measure $\q$ whose support matches that of the query $\mathcal{Q}$. For importance sampling, we will need this distribution as a proposal distribution. We will also use it as our base model for selecting high-probability sequences in beam search. 
We would like the proposal distribution to resemble our original model while also respecting the query. 
One thought is to have 
$\q(\textbf{X}_{1:K}=\textbf{x}_{1:K}) = \prob_\theta(\textbf{X}_{1:K}=\textbf{x}_{1:K} \sep \textbf{X}_{1:K}\in\mathcal{Q})$; however, computing 
this probability involves normalizing over  $\prob_\theta(\textbf{X}_{1:K}\in\mathcal{Q})$ which is exactly what we are trying to estimate in the first place. 
Instead of restricting the \emph{joint} distribution to the query, we can instead restrict every \emph{conditional} distribution to the query's restricted domain. To see this, we 
first partition $\mathcal{Q}=\cup_i\mathcal{Q}^{(i)}$ and define an autoregressive proposal distribution for each $\mathcal{Q}^{(i)}=\prod_{k=1}^K \mathcal{V}^{(i)}_k$ as follows:
\begin{align}
\q^{(i)}(\textbf{X}_{1:K}=\textbf{x}_{1:K})& := \prod_{k=1}^K q^{(i)}(x_k \sep \textbf{x}_{<k})\\
q^{(i)}(x_k \sep \textbf{x}_{<k}) & := \prob_\theta\left(X_k=x_k\sep \textbf{X}_{<k}=\textbf{x}_{<k},X_k\in\mathcal{V}_k^{(i)}\right) \label{eq:3_proposal_dist} \\    
 & = \frac{p_\theta\left(x_k\sep \textbf{x}_{<k}\right)\ind\left(x_k\in\mathcal{V}_k^{(i)}\right)}{\sum_{v\in\mathcal{V}_k^{(i)}} p_\theta(v \sep \textbf{x}_{<k})}
\end{align}
where $\ind(\cdot)$ is the indicator function. That is, we constrain the outcomes of each conditional probability to the restricted domains $\mathcal{V}_k^{(i)}$ and renormalize them accordingly. To evaluate the proposal distribution's probability, we multiply all conditional probabilities according to the chain rule. 
Since the entire distribution is computed for a single model call $p_\theta(-\sep \textbf{x}_{<k})$, it is possible to both sample a $K$-length sequence and compute its likelihood under $q^{(i)}$ with only $K$ model calls. Thus, we can efficiently sample sequences from a distribution that is both informed by the underlying model $p_\theta$ and that respects the given domain $\mathcal{Q}$. As discussed in the next section, this proposal will be used for importance sampling and for the base distribution on which beam search is conducted.

\subsection{Estimation Techniques}\label{sec:3_estimation_techniques}

\subsubsection{Sampling}
One can naively sample any arbitrary probability value using Monte Carlo samples to estimate $\prob_\theta(\textbf{X}_{1:K}\in\mathcal{Q}) = \E^{\prob_\theta}[\ind(\textbf{X}_{1:K}\in\mathcal{Q})]$; however, this typically will have high variance. This can be substantially improved upon by exclusively drawing sequences from the query space $\mathcal{Q}$ through a proposal distribution using importance sampling.
For brevity, assume that $\mathcal{Q}:=\mathcal{V}_1 \times \cdots \times \mathcal{V}_K$.\footnote{Should the query space be the more general form $\mathcal{Q}:=\cup_i \mathcal{Q}^{(i)}$, then simply apply the methods derived on each individual sub-query, $\prob_\theta(\textbf{X} \in \mathcal{Q}^{(i)})$, then sum together to get the total probabilistic query $\prob_\theta(\textbf{X}\in\mathcal{Q})$.} Here, we derive the importance sampling equivalent form of the probabilistic query utilizing the proposal distribution $q$ for query space $\mathcal{Q}$ with measure $\q$ defined in \cref{eq:3_proposal_dist}:
\begin{align}
\prob_\theta(\textbf{X}_{1:K}\in\mathcal{Q}) & = \E^\q \left[\frac{p_\theta(\textbf{X}_{1:K})}{q(\textbf{X}_{1:K})} \ind(\textbf{X}_{1:K} \in \mathcal{Q})\right] \\
& = \E^\q \left[\prod_{k=1}^K \frac{p_\theta(X_k \sep \textbf{X}_{<k})}{q(X_k \sep \textbf{X}_{<k})}\right] \\
& = \E^\q \left[\prod_{k=1}^K \frac{p_\theta(X_k \sep \textbf{X}_{<k})\sum_{v \in \mathcal{V}_k} p_\theta(v \sep \textbf{X}_{<k})}{p_\theta(X_k \sep \textbf{X}_{<k})\ind(X_k \in \mathcal{V}_k)}\right] \\
& = \E^\q \left[\prod_{k=1}^K \sum_{v \in \mathcal{V}_k} p_\theta(v \sep \textbf{X}_{<k})\right] \\
& \approx \frac{1}{M} \sum_{m=1}^M \prod_{k=1}^K \sum_{v \in \mathcal{V}_k} p_\theta(v \sep \textbf{x}_{<k}^{(i)}),
\end{align}
where the indicators disappeared due to them equaling 1 almost-surely under $\q$ and $\textbf{x}_{1:k-1}^{(i)}$ for $i=1,\dots,M$ are concrete samples drawn \textit{iid} from $q$.\footnote{While sometimes in the literature $p(X)$ is used to designate the abstract notion of the ``distribution of $X$,'' in this context it is used as a literal function of $X$. This should be interpreted in the same way as $f$ is used in $\E[f(X)]$.}
It is worth noting that this estimator could be further improved by augmenting the sampling process to produce samples without replacement from $q$ %
(e.g., \citep{meister2021conditional,pmlr-v97-kool19a,shi2020incremental}); in this chapter we restrict the focus to sampling with replacement.

\subsubsection{Search}
An alternative to estimating a query by sampling is to instead produce a lower bound,
\begin{align}
\prob_\theta(\textbf{X}_{1:K}\in\mathcal{Q}) = \sum_{\textbf{x}_{1:K}\in\mathcal{Q}}p_\theta(\textbf{x}_{1:K}) \geq \sum_{\textbf{x}_{1:K}\in\mathcal{B}}p_\theta(\textbf{x}_{1:K}),
\end{align}
where $\mathcal{B} \subset \mathcal{Q}$.
In many situations, only a small subset of sequences $\textbf{x}_{1:K}$ in $\mathcal{Q}$ have a non-negligible probability of occurring due to the vastness of the total path space $V^K$ for large $V$. As such, it is possible for $|\mathcal{B}| \ll |\mathcal{Q}|$ while still having a minimal gap between the lower bound and the actual query value.

One way to produce a set $\mathcal{B}\subset\mathcal{Q}$ is through beam search \citep{russell2010artificial}.
To ensure that beam search only explores the query space, instead of searching with $p_\theta$, we utilize $q$ for ranking beams. Since beam search is a greedy algorithm and for a given conditional $q(a\sep \textbf{x}_{<k}) \propto p_\theta(a\sep \textbf{x}_{<k})$ for $a\in\mathcal{V}_k$, the rankings will both respect the domain and be otherwise identical to using $p_\theta$ to rank.
Typically, the goal of beam search is to find the most likely completion of a sequence without having to explore the entire space of possible sequences. This is accomplished by greedily selecting the top-$B$ most likely next step continuations, or \emph{beams}, at each step into the future. Rather than finding a few high-likelihood beams, we are more interested in accumulating a significant amount of probability mass and less interested in the specific quantity of beams collected.

Traditional beam search has a fixed beam size $B$; %
however, this is not ideal for accumulating probability mass. %
As an alternative we develop \emph{coverage-based} beam search where at each step in a sequence we restrict the set of beams being considered not to the top-$B$ but rather to the smallest set of beams that collectively exceed a predetermined probability mass $\alpha$,  referred to as the ``coverage''.\footnote{This is similar to the distinction between top-$K$ and top-$p$ / nucleus sampling commonly used for natural language generation \citep{holtzman2019curious}.} More specifically, let $\mathcal{B}_k \subset \{\textbf{x}_{1:k} \sep \textbf{x}_{1:K}\in\mathcal{Q}\}$ be a set containing $|\mathcal{B}_k|$ beams for subsequences of length $k$. For brevity, we will assume that $\mathcal{Q}=\mathcal{V}_1\times \cdots \times \mathcal{V}_K$.\footnote{If $\mathcal{Q}$ requires partitioning into multiple $\mathcal{Q}^{(i)}$'s, we apply beam search to each sub query $p_\theta(\textbf{X}_{1:K}\in\mathcal{Q}^{(i)})$.} $\mathcal{B}_{k+1}$ is a subset of $\mathcal{B}_k \times \mathcal{V}_{k+1}$ and is selected specifically to minimize $|\mathcal{B}_{k+1}|$ such that $\q(\textbf{X}_{1:k+1}\in\mathcal{B}_{k+1}) \geq \alpha$. It can be shown that $\prob_\theta(\textbf{X}_{1:K}\in\mathcal{Q})-\prob_\theta(\textbf{X}_{1:K}\in\mathcal{B}_K) \leq 1-\q(\textbf{X}_{1:K}\in\mathcal{B}_K)$ (and is often significantly less).

\begin{theorem}
For a given set of decoded sequences $\mathcal{B}\subset\mathcal{Q}$ with coverage $\q(\textbf{X}_{1:K}\in\mathcal{B})$, the error between the true probabilistic query value $\prob_\theta(\textbf{X}_{1:K}\in\mathcal{Q})$ and the lower bound $\prob_\theta(\textbf{X}_{1:K}\in\mathcal{B})$ is bounded above by the complement of the coverage: $1-\q(\textbf{\textbf{X}}_{1:K}\in\mathcal{B})$.
\end{theorem}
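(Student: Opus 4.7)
The plan is to reduce the inequality to a pointwise comparison between $p_\theta$ and $q$ on sequences in $\mathcal{Q}$, then sum over the ``uncovered'' portion of the query space.

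First I would rewrite both sides in a more convenient form. Since $\mathcal{B}\subset\mathcal{Q}$, the error on the left is just
\[
\prob_\theta(\textbf{X}_{1:K}\in\mathcal{Q}) - \prob_\theta(\textbf{X}_{1:K}\in\mathcal{B}) = \prob_\theta(\textbf{X}_{1:K}\in\mathcal{Q}\setminus\mathcal{B}).
\]
On the right, because the autoregressive proposal $q$ from \cref{eq:3_proposal_dist} renormalizes every conditional onto the restricted domains $\mathcal{V}_k$, the measure $\q$ is supported entirely on $\mathcal{Q}$, giving $\q(\textbf{X}_{1:K}\in\mathcal{Q})=1$ and hence $1-\q(\textbf{X}_{1:K}\in\mathcal{B}) = \q(\textbf{X}_{1:K}\in\mathcal{Q}\setminus\mathcal{B})$. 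So the claim reduces to showing $\prob_\theta(\textbf{X}_{1:K}\in\mathcal{Q}\setminus\mathcal{B})\leq\q(\textbf{X}_{1:K}\in\mathcal{Q}\setminus\mathcal{B})$.

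Next I would establish the key pointwise bound: for any $\textbf{x}_{1:K}\in\mathcal{Q}$,
\[
q(\textbf{x}_{1:K}) = \prod_{k=1}^K \frac{p_\theta(x_k\sep\textbf{x}_{<k})}{\sum_{v\in\mathcal{V}_k} p_\theta(v\sep\textbf{x}_{<k})} = \frac{p_\theta(\textbf{x}_{1:K})}{\prod_{k=1}^K Z_k(\textbf{x}_{<k})},
\]
where $Z_k(\textbf{x}_{<k}):=\sum_{v\in\mathcal{V}_k} p_\theta(v\sep\textbf{x}_{<k})\in(0,1]$. Since each $Z_k\leq 1$, the denominator is at most $1$, so $q(\textbf{x}_{1:K})\geq p_\theta(\textbf{x}_{1:K})$ for every $\textbf{x}_{1:K}\in\mathcal{Q}$.

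Finally I would sum this pointwise inequality over $\mathcal{Q}\setminus\mathcal{B}$ to obtain $\q(\textbf{X}_{1:K}\in\mathcal{Q}\setminus\mathcal{B})\geq \prob_\theta(\textbf{X}_{1:K}\in\mathcal{Q}\setminus\mathcal{B})$, combine with the rewrites in the first step, and conclude. There is no real obstacle here; the only subtlety is noting that the partition structure of $\mathcal{Q}$ does not change the argument, since the proposal is constructed sub-query by sub-query and beam search is run within each $\mathcal{Q}^{(i)}$, so one can either argue within a single product-form component or observe that the pointwise bound $q\geq p_\theta$ holds on all of $\mathcal{Q}$ regardless.
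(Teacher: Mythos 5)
Your proposal is correct and follows essentially the same route as the paper's proof: the pointwise bound $q(\textbf{x}_{1:K})\geq p_\theta(\textbf{x}_{1:K})$ on $\mathcal{Q}$ obtained from the normalizers $\sum_{v\in\mathcal{V}_k}p_\theta(v\sep\textbf{x}_{<k})\leq 1$, summed over $\mathcal{Q}\setminus\mathcal{B}$ and combined with $\q(\textbf{X}_{1:K}\in\mathcal{Q})=1$. No gaps to flag.
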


\begin{proof}
For brevity, we will assume that $\mathcal{Q}=\mathcal{V}_1\times\dots\times\mathcal{V}_K$ (although this can easily be extended to the general case). 
Since $q(x_k\sep \textbf{x}_{<k})=\frac{p_\theta(x_k\sep \textbf{x}_{<k})\ind(x_k\in\mathcal{V}_k)}{\prob_\theta(X_k\in\mathcal{V}_k \sep \textbf{X}_{<k}=\textbf{x}_{<k})}$ and $p_\theta(\cdot) \leq 1$, it follows that $q(x_k\sep \textbf{x}_{<k}) \geq p_\theta(x_k\sep \textbf{x}_{<k})$. 
This becomes a strict inequality should $\prob_\theta(X_k\in\mathcal{V}_k \sep \textbf{X}_{<k}=\textbf{x}_{<k}) < 1$ (which is often the case should $\mathcal{V}_k \subset \V$). 
Since this holds for arbitrary $k$ and $x_k$, it then follows that $q(\textbf{x}_{1:K}) \geq p_\theta(\textbf{x}_{1:K})$  for any $\textbf{x}_{1:K} \in \mathcal{Q}$.\footnote{This can be seen by comparing their autoregressive factorizations term by term.} 
This inequality becomes strict should any sequence not in the query set have non-zero probability, i.e., $p_\theta(\textbf{x}_{1:K})>0$ for any $\textbf{x}_{1:K}\in\V^K\setminus\mathcal{Q}$. 

The target value that we are estimating can be broken up into the following terms:
\begin{align}
\prob_\theta(\textbf{X}_{1:K}\in\mathcal{Q}) & = \prob_\theta(\textbf{X}_{1:K}\in\mathcal{B}) + \prob_\theta(\textbf{X}_{1:K}\in\mathcal{Q}\setminus\mathcal{B}).
\end{align}
Rearranging these terms yields us the error of our lower bound.
We can easily derive an upper bound on the error for an arbitrary set $\mathcal{B}$:
\begin{align}
 \prob_\theta(\textbf{X}_{1:K}\in\mathcal{Q}) - \prob_\theta(\textbf{X}_{1:K}\in\mathcal{B}) &= \prob_\theta(\textbf{X}_{1:K}\in\mathcal{Q}\setminus\mathcal{B}) \\
 & = \sum_{\textbf{x}_{1:K}\in\mathcal{Q}\setminus\mathcal{B}} p_\theta(\textbf{x}_{1:K}) \\
& \leq \sum_{\textbf{x}_{1:K}\in\mathcal{Q}\setminus\mathcal{B}} q(\textbf{x}_{1:K}) \\
& = \q(\textbf{X}_{1:K}\in\mathcal{Q}\setminus\mathcal{B}) \\
& = \q(\textbf{X}_{1:K}\in\mathcal{Q}) - \q(\textbf{X}_{1:K}\in\mathcal{B}) \\
& = 1-\q(\textbf{X}_{1:K}\in\mathcal{B}) \leq 1-\alpha
\end{align}
where $\alpha$ is the targeted coverage probability used to find $\mathcal{B}$ with coverage-based beam search.\footnote{Note that $\q(\textbf{X}_{1:K}\in\mathcal{Q})=1$ because by design $q(\textbf{x}_{1:K})=0$ for every $\textbf{x}_{1:K}\notin\mathcal{Q}$.}
This inequality becomes strict should any sequence not in the query set have non-zero probability, i.e., $p_\theta(\textbf{x}_{1:K})>0$ for any $\textbf{x}_{1:K}\in\V^K\setminus\mathcal{Q}$.
\end{proof}

There is one slight problem with having $\alpha$ be constant throughout the search. Since we are pruning based on the joint probability of the entire sequence, any further continuations of $\mathcal{B}_k$ will reduce the probability $\q(\textbf{X}_{1:k+1}\in\mathbb{B}_{k+1})$. 
This may lead to situations in which every possible candidate sequence is kept in order to maintain minimal joint probability coverage.
This can be avoided by filtering by $\alpha_k$ where $\alpha_1 > \dots > \alpha_K=\alpha$, e.g., the geometric series $\alpha_k=\alpha^{k/K}$.

\subsubsection{A Hybrid Approach}\label{sec:3_hybrid_details}
Importance sampling produces an unbiased estimate, but can still experience large variance in spite of a good proposal distribution $q$ when $p_\theta$ is a heavy tailed distribution. Conversely, the beam search lower bound can be seen as a biased estimate with zero variance. We can remedy the limitations of both methods by recognizing that since $\prob_\theta(\textbf{X}_{1:K}\in\mathcal{Q}) = \sum_{\textbf{x}_{1:K}\in\mathcal{B}_K}p_\theta(\textbf{x}_{1:K}) +  \sum_{\textbf{x}_{1:K}\in\mathcal{Q}\setminus\mathcal{B}_K}p_\theta(\textbf{x}_{1:K})$, where $\mathcal{B}_K$ is the set of sequences resulting from beam search, we can use importance sampling on the latter summation. The only caveat to this is that the proposal distribution must match the same domain of the summation: $\mathcal{Q}\setminus\mathcal{B}_K$. To keep all of the nice properties of our original proposal distribution described previously, we will use $q_\mathcal{B}(\textbf{x}_{1:K}):=\prod_{k=1}^K q_\mathcal{B}(x_k \sep \textbf{x}_{<k})$ with implied measure $\q_\mathcal{B}$ where $\q_\mathcal{B}(\textbf{X}_{1:K}=\textbf{x}_{1:K}) := \q(\textbf{X}_{1:K}=\textbf{x}_{1:K} \sep \textbf{X}_{1:K}\notin\mathcal{B}_K)$ as our proposal distribution for the hybrid setting. While this does require marginalization over more than just the immediate next step, this can be accomplished through a reuse of computation done during the initial beam search phase. The steps to do so are described below. For a more in depth analysis into the resulting estimator variance for the hybrid approach and how it compares to regular importance sampling, please refer to \cref{sec:3_app_hybrid_variance}.

\paragraph{Viewing $\V^K$ and $p_\theta$ as  Trees}
In the space of all possible sequences of length $K$, $\textbf{X}_{1:K}\in\V^K$, one can represent these sequences as paths in a tree. Each node in this tree represents a single element in a sequence $X_k\in\V$ with depth $k$, with parent and children nodes representing previous and potential future values in the sequence respectively. The root node either represents the very beginning of a sequence, or a concrete history $\hist$ to condition on. See \cref{fig:3_query_example} for an example visualization.

This tree can be augmented into a probabilistic one by defining edges between nodes as the conditional probability of a child node being next in a sequence, conditioned on all ancestors of that child. These probabilities are naturally determined by $p_\theta(x_k\sep\textbf{x}_{<k})$ where $x_k$ is the child node value and $\textbf{x}_{<k}$ are the ancestors' values. 

\paragraph{Building the Tree}
Any subset of $\V^K$ can also be represented as a tree, and in fact will be a sub-tree of the one that represents $\V^K$. As such, there exists a tree that represents $\mathcal{Q}$. Our usual proposal distribution $q$ is a natural source of conditional probabilities for the edges. While none of these trees with their edge weights are fully known ahead of time, we do explore and uncover them through the process of beam search. As such, during the beam search phase of the hybrid method we keep track of any conditional distributions $q(x_k\sep \textbf{x}_{<k})$ that are computed and use them to construct a partial view of the tree for $\mathcal{Q}$. Note that the end result of this process is a tree that will likely have many paths that do not fully reach depth $K$; however, there will be at least $|\mathcal{B}|$ many that do.

For our purposes, it is also useful to keep track of $p_\theta(x_k \sep \textbf{x}_{<k})$ over this restricted set, as well as model byproducts such as hidden states to reduce computation redundancy later. 

\paragraph{Pruning the Tree}
After beam search has completed, we are left with a resulting set of beams $\mathcal{B}$ and a partial tree with weights corresponding to conditional probabilities from $q$. We would now like to alter this tree such that its weights represent $q_\mathcal{B}(X_{1:K}=x_{1:K}):=\q(\textbf{X}_{1:K}=\textbf{x}_{1:K}\sep \textbf{X}_{1:K}\notin\mathcal{B}_K)$. This alteration can be accomplished by adjusting the edge weights in the tree recursively as detailed below. New weight assignments will be denoted by $q_\mathcal{B}$ to differentiate from old weights $q$. The steps to the procedure are defined as follows:
\begin{enumerate}[leftmargin=20pt]
\itemsep0em 
\item At the final depth $K$, assign edge weights $q_\mathcal{B}(x_K\sep \textbf{x}_{<K})=0$ for all $\textbf{x}_{1:K}\in\mathcal{B}_K$. All other edge weights in the final depth will have new weights $q_\mathcal{B}(x_K \sep \textbf{x}_{<K})=q(x_K\sep\textbf{x}_{<K})$. 
\item At the next layer above with depth $k=K-1$, assign edge weights as $q_\mathcal{B}(x_k\sep \textbf{x}_{<k})=q(x_k\sep \textbf{x}_{<k})\sum_{v\in\V}q_\mathcal{B}(v\sep \textbf{x}_{\leq k}=x_{\leq k})$ for all sub-sequences $x_{1:k}\in\mathcal{B}_k$.  
\item Repeat step 2 iteratively for $k=K-2, K-3, \dots, 2, 1$. 
\item Finally, normalize every conditional distribution for every node whose children edges were altered such that they each sum to 1.
\end{enumerate}
After these steps are completed, $q_\mathcal{B}(\textbf{x}_{1:K})=\prod_{k=1}^K q_\mathcal{B}(x_k\sep \textbf{x}_{<k})$. Note that weights related to sequences that were not discovered during beam search, and are thus not in the tree, are not altered and still match the original proposal distribution $q$. As such, to sample sequences from the tree, we start at the root node and sample from each successive conditional distribution until either depth $K$ or a leaf node at depth $k<K$ is reached. In the former scenario, the sampling is complete. In the latter, the remaining values of the sequence are sampled from $q(\cdot \sep \textbf{x}_{\leq k})$ like usual.

\paragraph{Beam Search Heuristic}
Lastly, since our ultimate goal is to sample from the long tail of $p_\theta$, targeting a specific coverage $\alpha$ during beam search is no longer effective since achieving meaningfully large coverage bounds for non-trivial path spaces is generally intractable. Instead, we propose \emph{tail-splitting} beam search to better match our goals. Let $w_k^{(i)}=p_\theta(\textbf{x}^{(i)}_{1:k+1})$ for $\textbf{x}_{1:k+1}^{(i)}\in\mathcal{B}_k\times\mathcal{V}_{k+1}$ such that $w_k^{(i)} \geq w_k^{(j)}$ if $i < j$. In this regime, $\mathcal{B}_{k+1}=\{\textbf{x}^{(i)}_{1:k+1}\}_{i=1}^B$ where $B=\arg\min_{b} \sigma(\textbf{w}_k^{(1:b)})+\sigma(\textbf{w}_k^{(b+1:|\mathcal{B}_k\times\mathcal{V}_{k+1}|)})$ and $\sigma(\textbf{w}_k^{(u:v)})$ is the empirical variance of $w_k^{(i)}$ for $i=u,\dots,v$. This can be seen as performing 2-means clustering on the $w_k$'s and taking the cluster with the higher cumulative probability. 

\subsection{Saving Computation on Multiple Queries}
Should multiple queries need to be performed, such as $p_\theta(\hit(a)=k)$ for multiple values of $k$, then there is potential to be more efficient in computing estimates for all of them. The feasibility of reusing intermediate computations is dependent on the set of queries considered. %
For simplicity, we will consider two base queries $\mathcal{Q}=\mathcal{V}_1\times\dots\times\mathcal{V}_K$ and $\mathcal{Q}'=\mathcal{V}_1'\times\dots\times\mathcal{V}_{K'}'$ where $K < K'$. Due to the autoregressive nature of $p_\theta$, if $\mathcal{V}_i=\mathcal{V}_i'$ for $i=1,\dots,K-1$ then all of the distributions and sequences needed for estimating $\prob_\theta(\textbf{X}_{1:K}\in\mathcal{Q})$ are guaranteed to be intermediate results found when estimating $\prob_\theta(\textbf{X}_{1:K'}\in\mathcal{Q}')$. To be explicit, when estimating the latter query with beam search the intermediate $\mathcal{B}_{K}$ is the same as what would be directly computed for the former query. Likewise, for importance sampling if $\textbf{X}_{1:K'} \sim q$ using \cref{eq:3_proposal_dist} over $\mathcal{Q}'$ then the subsequence $\textbf{X}_{1:K}$ is also distributed under $q$ for query space $\mathcal{Q}$. This does not apply when the sample path domain is further restricted, such as with the hybrid approach, in which case we cannot  directly use intermediate results to compute other queries for ``free.''

\section{Experiments and Results}
\label{sec:3_experiments}

\subsection{Experimental Setting} 
We investigate the quality of estimates of hitting time queries across various datasets, comparing beam search, importance sampling, and the hybrid method. We find that hybrid systematically outperforms both pure search and sampling given a comparable computation budget across queries and datasets. We also investigate  the dependence of all three methods on the model entropy. 

It is worth noting that we focus almost exclusively on hitting time queries in our primary experiments, as more complex queries often decompose into operations over individual hitting times as demonstrated previously in \cref{sec:3_queries}.

\subsubsection{Datasets}
We evaluate our query estimation methods on three user behavior and two language datasets. We provide details on the preparation and utilization of each below. For all datasets, users are associated with anonymous aliases to remove personally identifiable information (PII).

\paragraph{Reviews} \citep{amazon-rev-ni-etal-2019-justifying} contains sequences of 233 million timestamped Amazon product reviews spanning from May 1996 to October 2018, with each product belonging to one of 30 product categories.  We restrict our consideration of this dataset to reviews generated by users with at least 15 product reviews and products with a defined category, retaining 63 million reviews on which the model was trained. The product category of a review is taken to be the event value $X$, and after filtering to the restricted set of users there are $V=29$ possible categories to model. This dataset is publicly available under the Amazon.com Conditions of Use License. 

\paragraph{Mobile Apps}%
\citep{app_dataset} consists of 3.6 million app usage records  from 200 Android users from September 2017 to May 2018, where each event is an interaction of an individual with an application. User behavior spans $V=88$ unique applications, which we use as the vocabulary for events for our experiments. This dataset is released under the Creative Commons License, and all users contain at least 15 mobile app interactions so no data was removed before training. 

\paragraph{MOOCs}%
\citep{kumar2019predicting} is a dataset of sequences of anonymized user interactions with online course materials from a set of massive open online courses (MOOCs). In total, the dataset includes $V=98$ unique types of interactions which we take the be the event values. Data from users with fewer than 15 interaction events are not considered, resulting in a dataset of 72\% of users and 93\% of the events (350,000 interactions) of the original dataset. The MOOCs dataset is available under the MIT License.

 \paragraph{Shakespeare}%
 We also examine character-level language models, using the complete works of William Shakespeare \citep{shakespeare_data}, comprising 125,000 lines of text and $V=67$ unique characters and released under the Project Gutenberg License. The specific characters in a sequence we what we model. 

 \paragraph{WikiText}%
 The WikiText-v2 dataset \citep{wikitext} includes word-level language data from "verified Good" and featured articles of Wikipedia. The tokenization scheme used for GPT-2 enforces a vocabulary size of $V=50257$ for this dataset, each representing a word or word-piece. All sentences are provided in English and the dataset is available under the Creative Commons Attribution-ShareAlike License.

\subsubsection{Base Models}
For all datasets except WikiText (word-level language), we trained a 2-layer long short-term memory (LSTM) network with a dropout rate of 0.3 and the ReLU activation function, e.g., $\text{ReLU}(z):=\max\{0,z\}$ \citep{hochreiter1997long,srivastava2014dropout}.
Each network was trained against cross entropy loss with the Adaptive Moments (Adam) optimizer initialized with a learning rate of 0.001. A constant learning rate decay schedule and 0.01 warm-up iteration percentage was also used. All LSTM models use a hidden state size of 512 except the model for Shakespeare, which uses a smaller hidden state size of 128 which is more appropriate give the size of the dataset. Model checkpoints were collected for all models every 2 epochs, and the checkpoint with the highest validation accuracy was selected to be used for query estimation experiments. All models were trained on NVIDIA GeForce 2080ti GPUs.

For WikiText-v2, we leveraged the GPT-2 \citep{gpt2-radford} medium (350 million parameters) architecture from \href{https://huggingface.co/}{HuggingFace} with pre-trained weights provided by OpenAI \citep{hugging-face-wolf-etal-2020-transformers}. The WikiText-v2 dataset was preprocessed using the tokenization scheme provided by HuggingFace for GPT-2, assigning numeric token indices to work pieces. No finetuning of GPT-2 is conducted.

\subsubsection{Experimental Methods} 
We investigate computation-accuracy trade-offs between 3 estimation methods (beam search, importance sampling, and the hybrid) across all datasets. Query histories $\mathcal{H}$ are defined by randomly sampling $N=1000$ sequences from the test split for all datasets except WikiText, from which we sample only $N=100$ sequences due to computational limitations. For each query history and method, we compute the hitting time query estimate $\prob_\theta(\hit(a)=K)$ over $K=3, \ldots , 11$, with $a$ determined by the $K^{th}$ symbol of the ground truth sequence. 

To ensure an even comparison of query estimators, we fix the computation budget per query in terms of model calls $p_\theta(-\sep \textbf{x}_{<k})$ to be equal across all 3 methods, repeating experiments for different budget magnitudes roughly corresponding to $O(10), O(10^2), O(10^3)$ model calls. This can be controlled directly for all of the methods except the hybrid approach. For this reason, in our experiments we typically use fix the number of samples $S$ for the importance sampling component of the hybrid method, where $S$ is the number of samples used by the hybrid method after conducting tail-splitting beam search. We then use the resulting number of total model calls used by the hybrid method (including both beam search and sampling with $S$ samples) to determine a fixed computation budget (number of model calls to compute $p_\theta(-\sep \textbf{x}_{<k})$) for all other methods. 
Should the hybrid method not be used in an experiment, the budget is set by determining the number of model calls used in drawing $S$ samples for importance sampling.
We intentionally select relatively small computation budgets per query to support systematic large-scale experiments over multiple queries up to relatively large values of $K$. Results for queries with GPT-2 are further restricted because of computational limits and are reported separately below.

To evaluate the accuracy of the estimates for each query and method, we compute the true probability of $K$ using exact computation for small values of $K \leq 4$. For larger values of $K$, we run importance sampling with a large number of samples $S$, where $S$ is adapted per query to ensure the resulting 
unbiased estimate has an empirical variance less than $\epsilon\ll1$ (see \cref{sub:ground_truth_calc}). 
This computationally-expensive %
estimate is then used as a surrogate for ground truth in error calculations.

Coverage-based beam search is not included in our results: we found   that it experiences exponential growth with respect to $K$ and  does not scale efficiently due to its probability coverage guarantees. Additional details are provided in \cref{sub:coverage_bs_ablation}.

\begin{figure}
    \centering
    \includegraphics[width=0.9\textwidth]{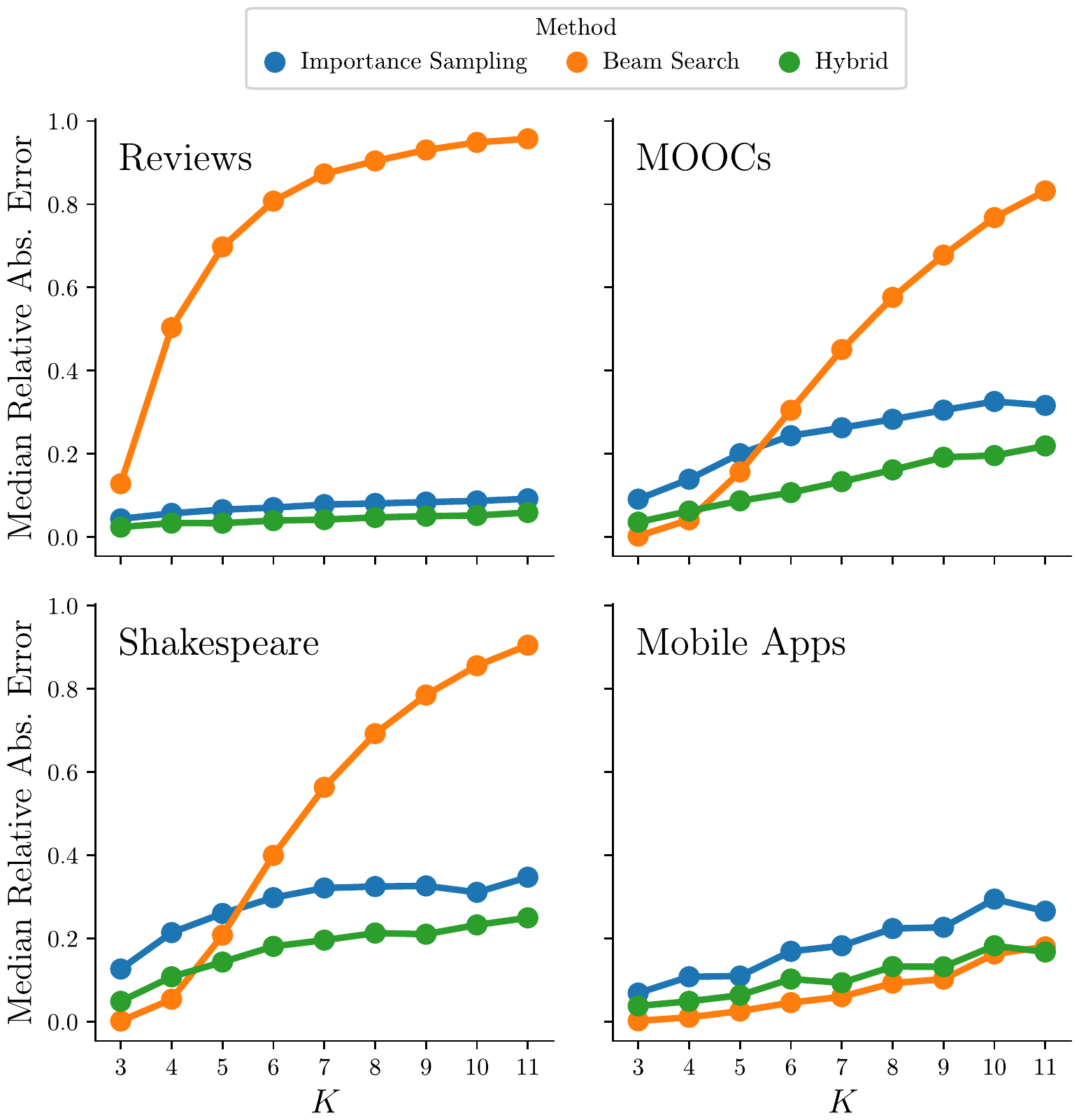}
    \caption{Median relative absolute error (RAE) between estimated probability and (surrogate) ground truth for $p_\theta(\hit(\cdot)=K)$ %
    for importance sampling, beam search, and the hybrid method. As query path space grows with $K$, beam search quickly fails to bound ground truth while sampling remains robust, with the hybrid consistently outperforming all other methods, especially for large values of $K$. Ground truth values used to determine error are exact for $K \leq 4$ and approximated otherwise.}
    \label{fig:3_err_plot}
\end{figure}

\subsection{Results}
 
\subsubsection{Accuracy and Query Horizon}
Using the methodology described above, for each query, we compute the relative absolute error (RAE) $|p - \hat{p} |/p$, where $\hat{p}$ is the estimated query probability generated by a particular method and $p$ is the ground truth probability or the surrogate estimate using importance sampling. For each dataset, for each of the 3 levels of computation budget, for each value of $K$, this yields $N=1000$ errors for the $N$ queries for each method.  

\cref{fig:3_err_plot}  shows the median RAE of the $N$ queries, per method, as a function of $K$, for each dataset, using the medium  computation budget in terms of model calls.
Across the 4 datasets the error  increases systematically as $K$ increases. However,  beam search is significantly less robust than the other methods for 3 of the 4 datasets:  the error rate increases rapidly 
compared to  importance sampling and hybrid. Beam search is also the most variable across datasets relative to other methods. The hybrid method systematically outperforms importance sampling along across all 4 datasets and  for all values of $K$.  In \cref{sec:3_additional_exps}  we provide additional results; for the lowest and highest levels of computational budget, for mean (instead of median) RAEs, and scatter plots for specific values of $K$ with more detailed error information. The qualitative conclusions are consistent across all experiments.

\begin{figure}
    \centering
    \includegraphics[width=0.88\textwidth]{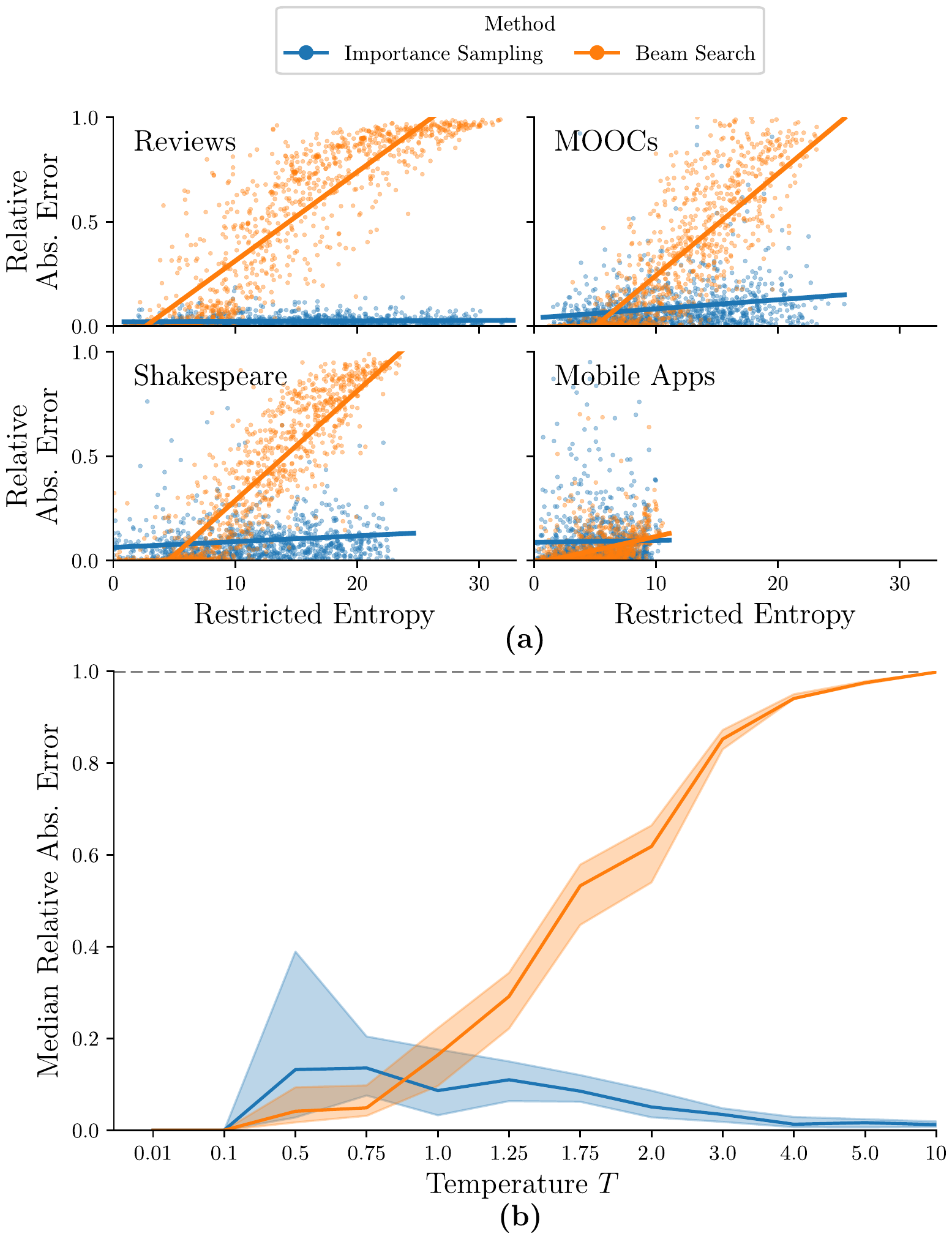}
    \caption{(a) RAE vs restricted entropy per query (with best linear fits), (b) Median RAE versus model temperature $T$ for Mobile App data. All errors computed using the same queries as in \cref{fig:3_err_plot}. Beam search errors correlate highly with model entropy even with the low-entropy Mobile Apps dataset, where increasing temperature $T$ directly induces this failure mode.
    } 
    \label{fig:3_temp_plots}
\end{figure}

\subsubsection{The Effect of Model Entropy}
We conjecture that the entropy of the proposal distribution $q$ conditioned on a given history $H^\q(\textbf{X}_{1:K})=-\E^\q[\log q(\textbf{X}_{1:K})]$, which we refer to as \emph{restricted entropy}, is a major factor in the performance of the estimation methods. 
\cref{fig:3_temp_plots}(a) shows the RAE per query (with a linear fit) as a function of estimated restricted entropy for importance sampling and beam search. The results clearly show that entropy is driving the query error in general and that the performance of beam search is much more sensitive to entropy than sampling. The difference in entropy characteristics across datasets explains the differences in errors we see in \cref{fig:3_err_plot}. In particular, the Mobile Apps dataset is in a much lower entropy regime than the other three datasets.

To further investigate the effect of entropy, we alter each model by applying a temperature $T>0$ to every conditional factor: $p_{\theta,T}(x_k\sep \textbf{x}_{<k}) \propto p_\theta(x_k\sep \textbf{x}_{<k})^{1/T}$, effectively changing the entropy ranges for the models. \cref{fig:3_temp_plots}(b) shows the median RAE, for query $p_{\theta,T}(\hit(\cdot)=4)$, as a function of model temperature for the Mobile Apps data. As predicted from   \cref{fig:3_temp_plots}(a), the increase in $T$, and corresponding increase in entropy, causes beam search's error to converge to 1, while the sampling error goes to 0. As $T$ increases, each individual sequence will approach having $1/|\mathcal{Q}|$ mass, thus needing many more beams to have adequate coverage. Results for other queries and the other three datasets (in \cref{sec:3_additional_exps}) further confirm the fundamental bifurcation of error between search and sampling (that we see in \cref{fig:3_temp_plots}(b))  as a function of entropy.

\subsubsection{Relative Efficiency of Proposal Distribution over Naive Query Estimation}

\begin{figure}
    \centering
    \includegraphics[width=0.8\textwidth]{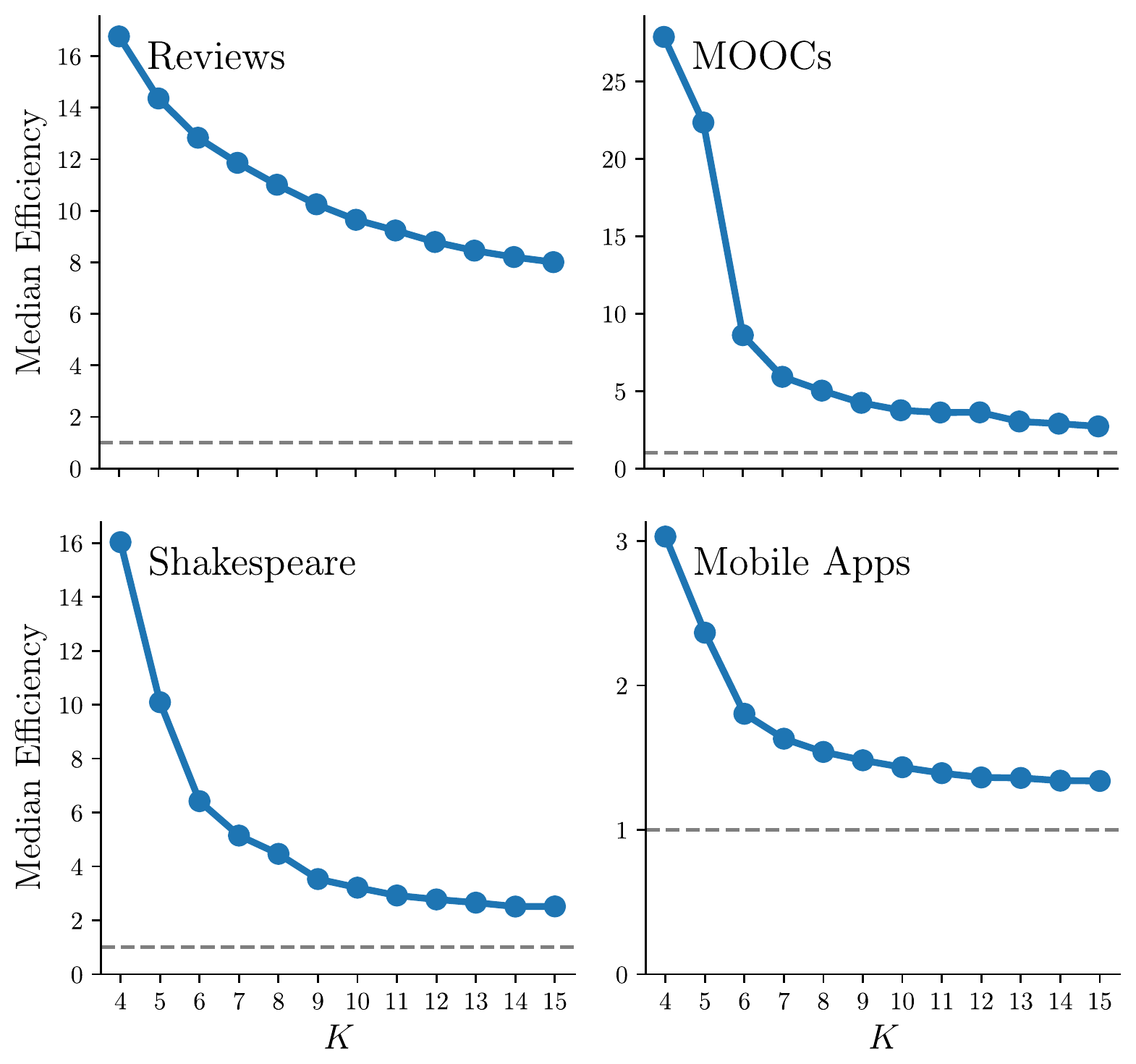}
    \caption{Median relative efficiency (over 1000 query histories and all vocabulary terms) of importance sampling estimation of the $K$-step marginal distributions for each dataset. The gray, dotted line represents 100\% relative efficiency defined by naive query estimation. Relative efficiency is documented for $ 4 \leq K \leq 15$ to highlight the regime where ground truth cannot be tractably computed.}
    \label{fig:3_relative_efficiency_abl}
\end{figure}

We also examine the relative efficiency improvements of our proposal distribution against naive Monte Carlo sampling: 
\begin{align}
p_\theta(\textbf{X}_{1:K} \in \mathcal{Q}) = \mathbb{E}^\prob[\ind(\textbf{X}_{1:K} \in \mathcal{Q})]
\end{align}

Our relative efficiency calculations in \cref{fig:3_relative_efficiency_abl} represent the variance ratio of naive query estimates and estimates from our query proposal distribution. As shown, all datasets witness improvement over naive sampling efficiency and often by a significant margin. We also observe that relative efficiency is largest for shorter query horizons, approaching naive sampling efficiency as $K$ increases.

\subsubsection{Queries with a Large Language Model}

We further explore entropy's effect on query estimation with GPT-2 and the WikiText dataset for $N=100$, $K = 3,4$, across 3 computation budgets. With a vocabulary 500x larger than the other models, GPT-2 allows us to examine queries relevant to NLP applications. 
The resulting high entropy causes beam search to fail  to match surrogate ground truth given the computation budgets (consistent with  earlier experiments), with a median RAE of 82\% (for $K=4$ and a budget of $O(10^3)$). By contrast, importance sampling's median RAE under the same setting is 13\%, \textbf{a 6x reduction}. 
Additional results  are in \cref{sec:3_gpt_exp}.

\section{Conclusion} %
\label{sec:3_discussion}

In this chapter, we formally defined a general class of probabilistic queries for discrete sequence models and derived analytical solutions for computing them for $n^\text{th}$ degree Markov models. Additionally, techniques were developed using importance sampling and beam search to approximate these queries for instances where analytic solutions are not possible or impractical. Extensive experiments were conducted to verify the potential to save on resources when using these methods compared to naive alternative approaches.

The text of this chapter is based on the publication:
\begin{center}
\vspace{-1.5em} \textit{Predictive querying for autoregressive neural sequence models} \citep{boyd2022predictive}.
\end{center}

Primary authorship was shared between the author of this dissertation and Samuel Showalter. Overall conception of the project and some of the writing was shared among all authors. The author of this dissertation determined the problem area and scope for the work, alongside deriving all analytical results. The author also developed the novel proposal distribution and importance sampling approximation technique as well as the initial coverage-based beam search method. Credit for the other heuristics for the beam search approaches, the hybrid method as a whole, and the experimental design are shared between the author and Showalter collectively. Showalter deserves credit for implementing and executing the majority of the experiments with minor oversight from the author.

\chapter{Hitting Time and Related Query Approximation for Marked Temporal Point Processes}

\noindent

Stochastic models for event data evolving in continuous time are typically referred to as temporal point processes. An important class within this general family is {\it marked temporal point processes} (MTPPs),  where each event in time is associated with a random outcome known as a mark. In general, the mark can either be discrete or continuous; in this work we focus on discrete marks. The flexibility of MTPPs has allowed them to be applied to a broad range of applications, including medical diagnosis \citep{islam2017marked, nagpal2021deep, chiang2022hawkes}, epidemic spread models \citep{marmarelis2022metric}, environmental data analysis \citep{brillinger2000}, financial data prediction \citep{zhu2021probabilistic,shi2022state,bacry2012non, hawkes2018hawkes}, communication network modeling \citep{mishra2013anonymity}, user behavior analysis \citep{mishra2016feature, kumar2019predicting, yang2021atpp, hatt2020early}, misinformation spread models \citep{zhang2021vigdet}, and activity prediction \citep{fortino2020exploiting}. 

The foundations for MTPP models have their origins in the statistical literature (e.g., \cite{lewis1972multivariate,daley2003introduction, andersen2012statistical}), with subsequent  development of specific classes of MTPPs such as multivariate self-exciting Hawkes processes \citep{hawkes1971spectra} and multivariate self-correcting processes \citep{zheng1991application}. 
More recently, there has been significant activity in  the development of machine learning methods for MTPPs, with a significant emphasis on approaches
that take advantage of neural representation learning, such as recurrent MTPPs \citep{du2016recurrent},  neural Hawkes processes \citep{mei2017neural}, stochastic variants of deep MTPPs \citep{hong2022deep}, 
scalable deep MTPPs \citep{turkmen2020fastpoint}, as well as general approaches to forecasting with deep MTPP models \citep{deshpande2021long}.
These MTPP modeling frameworks provide a general and flexible setup for making one-step-ahead predictions  such as the timing and/or type of the next event time, conditioned on a partial history of sequence.

In this chapter we look beyond one-step ahead predictions and instead investigate how to efficiently answer queries that involve more complex statements about future events and their timing. Such queries include hitting time queries (``what is the probability that at least one event of type $A$ will occur before time $t$''), queries of the form ``what is the probability that $A$ will occur  before $B$,'' 
as well as computing the marginal distribution of  event types for the $n^{\text{th}}$ next event (irrespective of time). These types of queries are useful across a variety of applications, such as making predictions conditioned on a patient's medical and treatment history, or conditioned on a customer's page view and purchase history.

However, exact computation of such queries is intractable in general except in the case of simple parametric models, such as Poisson processes. For a standard MTPP model to directly answer such queries requires that all intervening events (from current time to the event(s) of interest in the query) are marginalized over. In particular, this involves marginalizing over both the combinatorially-large space of possible  event types as well as the uncountably infinite space of possible event timings. While direct simulation of future trajectories from a model provides one avenue for answering such queries (e.g., see \cite{daley2003introduction}) these ``naive'' methods can be very inefficient (both statistically and computationally), as we will demonstrate later in the chapter. More efficient alternative approaches (to the naive simulation method) appear to be completely unexplored (to our knowledge), for both neural and non-neural MTPP models.

We develop a general query framework based on importance sampling that enables efficient estimates of various types of queries. In our approach, we first transform each query into unified forms and then derive the distribution of interest as functions of type-specific intensities (expected instantaneous rates of occurrence). Our proposed novel marginalization scheme empowers real-time computation of probabilistic queries, with proven higher efficiency compared to naive estimates. 
Furthermore, experiments on three real-world datasets in different domains demonstrate that our proposed estimation method is significantly more efficient than the naive estimate in practice. For example, for hitting time queries with neural Hawkes processes, we show an average magnitude of $10^3$ reduction in estimator variance.

Our approach for answering probabilistic queries is general-purpose in the sense that it can be integrated with any intensity-based black-box MTPP model, either parametric or neural.

\section{Related Work}
A large variety of MTPP models have been developed over recent decades, aimed at modeling sequences of marked event data with varying sorts of behaviors. This behavior has been both explicitly modeled with parametric MTPP models \citep{isham1979self, daley2003introduction}, and implicitly modeled using neural network-based methods \citep{du2016recurrent, bilovs2019uncertainty, shchur2019intensity, enguehard2020neural, zuo2020transformer}. Of particular note in these categories are the self-exciting Hawkes process \citep{hawkes1971spectra, liniger2009multivariate} and the neural Hawkes process \citep{mei2017neural}. The majority of neural MTPP models utilize some form or extension of recurrent neural networks to model conditional intensity functions (or equivalent transformations thereof).
MTPP models have %
been broadly applied to next event prediction across a number of different application areas: seismology \citep{ogata1998space}, finance \citep{bacry2012non, hawkes2018hawkes}, social media behavior \citep{mishra2016feature, rizoiu2017expecting}, and medical outcomes \citep{10.2307/2985181, andersen2012statistical}.\footnote{Survival analysis is a special case of temporal point processes where the event of interest can only occur once.} Neural-based methods have also been successful at additional tasks such as imputing missing data \citep{shchur2019intensity, mei2019imputing, gupta2021learning}, sequential representation learning \citep{shchur2019intensity, NEURIPS2020_f56de5ef}, and long-term forecasting \citep{deshpande2021long}. 

Answering probabilistic queries in some capacity has been previously explored at a model-specific level. 
Primary examples include continuous-time Markov processes \citep{shelton2014tutorial}, continuous-time Bayesian networks \citep{Nodelman+al:UAI02, fan2008sampling}, and Markovian self-exciting processes \citep{10.2307/3212408}. In this prior work, the assumed parametric form of the model allows for analytic forms of specific queries under certain conditions. For instance, the Markovian self-exciting process provides a representation that makes estimating hitting time queries directly tractable. 

However, to the best of our knowledge, apart from the naive sampling approach (e.g., \cite{daley2003introduction}),  there is no existing work on   answering general probabilistic queries  (such as hitting time of a collection of event types) for black-box MTPP models, which is the focus of this chapter.
For discrete-time models, estimating these queries has been investigated in our prior work \citep{boyd2022predictive}, and while there does not exist a direct mapping of those techniques to continuous time, this previous work will serve as a large source of inspiration for what we propose in this chapter.

\section{Preliminaries} \label{sec:4_prelims}
\subsection{Notation for Event Sequences}
Let $T_1, T_2, \dots \in \R_{\geq 0}$ be a sequence of continuous random variables with the constraint that $\forall_i: T_i < T_{i+1}$. These represent the time of occurrence for events of interest. Each event has an associated categorical value, such as a label or a location, that is referred to as a \emph{mark}. An event $X_i \in \V := \R_{\geq 0} \times \vocab$ is jointly represented as (i) a time of occurrence $T_i$ and (ii) an associated mark random variable $M_i\in\vocab$.  
In this work we will focus on the finite discrete setting of a fixed vocabulary for marks: $\vocab=\{1,2,\dots,K\}$, although more generally the mark space $\vocab$ can be defined on a variety of different domains.
\\\\
Sequences of fixed number of $n$ events can be represented via $\textbf{X}_{1:n}$; however, we are often more so interested in the events that span a given range of time. A random sequence that spans the time range $[a,b]\subset\R_{\geq 0}$ will be denoted as
\begin{align}
    \hist_{[a,b]}=\{(T_i,M_i)\sep T_i \in [a,b] \text{ for } i\in\N\}
\end{align}
with similar definitions for $\hist_{(a,b]}$ and $\hist_{[a,b)}$ over time ranges $(a,b]$ and $[a,b)$ respectively.
For simplicity, we will let $\hist_t$ and $\hist_{t-}$ be shorthand for $\hist[0,t]$ and $\hist[0,t)$ such that $X_i \in \hist_{T_i}$ and $X_i \notin \hist_{T_i-}$. The point process literature uses this notation for sequences often interchangeably with filtrations. For all intents in purposes, for this chapter we can interpret $\hist$ to be a random sequence. Please refer back to \cref{sec:mtpp_notation} for more discussion on this topic. 

We will use $\hist^k$ and $\hist^A$ for $k\in \vocab$ and $A \subset \vocab$ to refer to \emph{mark-specific} sequences, i.e., $\hist^k_t = \{(T_i,M_i) \in \hist(t) \sep M_i=k\}$ and $\hist^A_t = \{(T_i,M_i) \in \hist_t \sep M_i \in A\}$.
Realizations of these sequences of events are referred to as \emph{marked temporal point patterns} and represented with lower-case letters that mirror the notation used for random sequences, i.e., $\textbf{h}_t=\{(t_i,m_i)\sep t_i \in [0,t] \text{ for } i\in\N\}$.

\subsection{Marked Temporal Point Processes}
The generative mechanism for these point patterns are generally referred to as \emph{marked temporal point processes} (MTPPs). 
MTPP models fully define a probability measure $\prob$ over the possible events generated by the stochastic process $\mathbf{X}$, i.e., $\mathcal{F}:=\sigma(\mathbf{X})$. 
Because of this, they are capable of modeling the likelihood $\mathcal{L}$ of a given sequence $\textbf{h}_{\tau}$ of $N$ events over a time period of $[0,\tau]$ and are typically constructed in an autoregressive fashion,
\begin{align}
\mathcal{L}(&\hist_{\tau}=\textbf{h}_\tau) \\
& = \prob(T_{N+1} > \tau \sep \hist_{T_N}=\textbf{h}_{T_N})\prod_{i=1}^{N} p_{T_i \sep \hist_{T_{i-1}}}(t_i \sep \textbf{h}_{t_{i-1}}) p_{M_i \sep T_i, \hist_{T_{i-1}}}(m_i \sep t_i, \textbf{h}_{t_{i-1}}) \\
& := \prob(T_{N+1} > \tau \sep \hist_{T_N}=\textbf{h}_{T_N})\prod_{i=1}^{N} p^*_{T_i}(t_i) p^*_{M_i \sep T_i}(m_i \sep t_i) \label{eq:4_likelihood}
\end{align}
where the $*$ notation is used for brevity to indicate conditioning on the relevant preceding events. The distribution for the next event $(T_i,M_i)$ conditioned on the preceding terms is often modeled with the expected instantaneous rate of change for each mark. This is referred to as the \emph{marked intensity function} and is defined formally as
\begin{align}
\lambda_k(t\sep\hist_{t-})dt:=\E^\prob\left[\ind(|\hist^k_{[t,t+dt)}|=1)\sep \hist_{t-}\right]\label{eq:marked_intensity_def}
\end{align}
where $\ind(\cdot)$ is the indicator function and $\E^\prob$ is the expected value with respect to distribution $p$.\footnote{It is worth noting that the expectation defined in \cref{eq:marked_intensity_def} is tractable due to MTPPs being \textit{predictable processes} by construction, meaning that $\hist_t \sep \hist_{t-}$ is measurable.}
For brevity, we typically use the $*$ notation to suppress the conditional: $\lambda^*_k(t):=\lambda_k(t\sep\hist_{t-})$. Note that these functions not only condition on the preceding events, but also on the fact that no events have occurred since the last event up until time $t$, i.e., $\prob(\cdot\sep\hist_{[0,t)}) \neq \prob(\cdot\sep\hist_{[0,T_{i-1}]})$.

The total intensity function, $\lambda^*(t):=\sum_{k\in\vocab}\lambda^*_k(t)$, is sufficient to describe the timing of the next event $T_i$. The distribution of the mark conditioned on the timing of the next event is naturally described as $p^*_{M_i \sep T_i}(k\sep t)\equiv \frac{\lambda^*_k(t)}{\lambda^*(t)}$. We will be assuming that the native output of any model we are working with will produce a vector of marked intensity functions over the mark space $\vocab$ evaluated at time $t$. Any MTPP with a defined set of marked intensity functions can be easily sampled from by utilizing a thinning procedure \citep{ogata1981lewis}, if not directly.

Lastly, the likelihood of a given sequence $\hist$ of length $N$ over an observation window $[0,\tau]$ can be computed in terms of intensity values:
\begin{align}
\mathcal{L}_\prob(\hist_{\tau}=\textbf{h}_\tau)=\left(\prod_{i=1}^{N}\lambda_{m_i}^*(t_i)\right)\exp\left(-\int_0^\tau \lambda^*(s)ds\right).
\end{align}

\section{Querying MTPPs}

We are interested in evaluating probabilistic statements, or rather \emph{queries}, 
on any MTPP model, e.g., a model trained from data.
Furthermore, we are interested in evaluating queries that are conditioned on a partially observed sequence (e.g., ``what is the likelihood that at least one event of type $A$ will occur in the next year %
given a patient's medical history?''). 

Formally, we define a probabilistic query as a probability statement of the form
\begin{align}
\prob(\hist \in \mathcal{Q}) \text{ where } \mathcal{Q} \subset \Omega_{\hist}\equiv \text{Sample Space of } \hist,
\end{align}
where $\prob$ is the implied probability measure defined by a given MTPP model's distribution over future event sequences.\footnote{Similar to the previous chapter, we make a distinction between the model itself, which gives us tractable access to $\lambda_k(t)$, and the probability measure that this implies. Some forms of queries to the measure are equivalently represented directly by the model; however, there are far more queries that are not immediately answerable by direct model output. The latter of which are the kinds of queries we are most interested in.} We refer to $\mathcal{Q}$ as the \emph{query space}. The contents of the query space naturally will vary depending on the query at hand. An example query space and a subset of associated valid sequences can be seen in \cref{fig:4_example_query_space}. It is worth noting that in most contexts, the cardinality of $\mathcal{Q}$ will be uncountably infinite. 

\begin{figure}
    \centering
    \includegraphics[width=0.94\columnwidth]{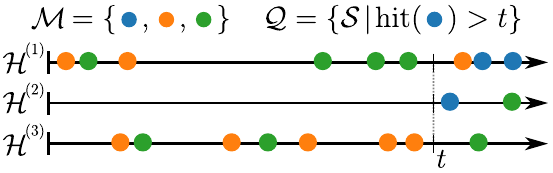}
    \caption{Example query space $\mathcal{Q}$ for the hitting time (first occurrence) of blue marks being greater than some time $t$. Sequences shown, $\hist^{(i)}$, all belong to the query space as they each do not contain a blue event occurring before time $t$.}
    \label{fig:4_example_query_space}
\end{figure}

This section will begin by discussing what probabilistic queries %
are readily available and tractable for a given model. Following this, we will present a novel class of queries, of which include hitting time and marginal mark queries, as well as an importance sampling estimation procedure. Finally, we will  discuss ``A before B'' queries and how to efficiently estimate them under our novel framework. \emph{Without loss of generality, we suppress the notation for conditioning on partially observed sequences and present all derivations and notations for unconditional queries.}

\paragraph{Contextualizing Queries} 
While we would like to be able to answer complex queries, we additionally would like to do so while conditioning on some partial sequence of previously realized events $\hist_t$. In effect, this \emph{contextualizes} whatever query is being asked. An example of this would be asking how likely a patient will experience a heart attack in the next year, conditioned on their personal medical history.

All of the methodology developed in this work allows for this. 
For the sake of brevity and consistent notation, we will be presenting all queries as not conditioning on any prior events; this allows for all queries to take place in the future over time $t\in[0,\infty)$ and all future events to start with index $i=1$. Note that all derived results can easily be extended to conditioning on sequences of realized events, as will be demonstrated empirically in \cref{sec:4_experiments}, simply by shifting the end of the realized window to end at $t=0$ and reset the index of events to end with $i=0$.

\subsection{Directly Tractable Queries}
Due to the model's autoregressive nature, queries about the immediate next event of a sequence are the only types of queries that can be directly evaluated without marginalization. We will now present the two main types for MTPPs.

\paragraph{Marginal Distribution of Next Event Time}
In general, it can be shown that $\lambda^*(t)=\frac{p_{T_i}^*(t)}{1-F_{T_i}^*(t)}$ where $t \in (T_{i-1}, T_i]$, $p_{T_i}^*$ is the probability density function (PDF) of $T_i$ conditioned on $\hist_{T_{i-1}}$, and $F_{T_i}^*$ is the cumulative density function (CDF) of $T_i$ also conditioned on $\hist_{T_{i-1}}$. By recognizing that $\lambda^*(t)=-\frac{d}{dt}\log(1-F_{T_i}^*(t))$, we find that the CDF of the next event timing $T_1$ is
\begin{align}
\prob( T_1 \leq t) := F_{T_1}(t) = 1 - \exp\left(-\int_0^t \lambda^*(s)ds\right).
\end{align}
Differentiating this result with respect to $t$ yields the PDF: $f_{T_1}(t)=\lambda^*(t)\exp\left(-\int_0^t\lambda^*(s)ds\right)$.
Note that we only immediately have access to the analytical form of the first future event timing $T_1$. To achieve the same results for $T_i$ with $i > 1$ in general would require marginalizing over all $i-1$ events, which ranges from being either cumbersome or intractable to do exactly depending on the analytic form of $\lambda$.

\paragraph{Marginal Distribution of Next Mark} 
Let $A\subset\vocab$. It follows then that the probability of the first event having a mark in $A$ is computed as follows:
\begin{align}
\prob( M_1 \in A) & = \int_0^\infty \prob( M_1 \in A \sep  T_1 = t)f_{ T_1}(t)dt \\
& = \int_0^\infty \frac{\lambda_A^*(t)}{\lambda^*(t)}\lambda^*(t)\exp\left(-\int_0^t \lambda^*(s)ds\right)dt \\
& = \int_0^\infty \lambda_A(t)\exp\left(-\int_0^t \lambda^*(s)ds\right)dt,
\end{align}
where $\lambda^*_A(t)=\sum_{k\in A}\lambda^*_k(t)$. Replacing the outer integration bounds of $[0,\infty)$ with $[a,b]$ gives the joint query $\prob( T_1\in[a,b],  M_1\in A)$.

Both of these different queries can potentially be computed analytically if the form of $\lambda^*$ permits, otherwise they can be estimated using approximate integration techniques.

\subsection{Naive Estimation of Queries}
When considering more complex queries, for example those that deal with sequences of events or those far in the future, it becomes necessary to rely on simulating potential trajectories in order to estimate their values. This is due to the fact that exactly representing a probabilistic query in terms of intensity values involves many nested integrals (for each potential interim event), potentially an infinite amount of them depending on the query.

The de facto method for approximating arbitrary probabilistic queries involves generating sequences and computing the relative frequency for which the query condition is met in the sampled sequences \citep{daley2003introduction}. This can be seen as a Monte Carlo estimate with the following formulation:
\begin{align}
\prob(\hist_\tau\in\mathcal{Q}) = \E^{\prob}\left[\ind(\hist_\tau\in\mathcal{Q})\right],
\end{align}
where $\ind(\cdot)$ is the indicator function. We refer to this procedure as ``naive'' estimation because this does not take into account any information about the query when sampling. 

\subsection{General Restricted-Mark Queries} \label{sec:4_gen_mark_queries}
One way to improve upon the naive procedure is to leverage information about the query in a proposal distribution in conjunction with importance sampling.
To do so though, we must first constrain ourselves to a specific class of query being considered. Additionally, this class of interest should take into account different aspects of sampling sequences using MTPPs for our proposal distribution. Namely, these models can easily be forced to \emph{not} sample events of specific types over a period of time (e.g., set $\lambda^*_A(t):=0$ for some time interval). Conversely, it is not immediately obvious how to \emph{encourage} or \emph{force} an event to occur within a specified time range.

As such, a natural class of queries can be seen in which over one or more specified spans of time we restrict what types of events are allowed and not allowed to occur. We term this class as ``general restricted-mark queries.''

We will now more formally define this class of queries. Consider positive real values $\alpha_1,\dots,\alpha_n$ such that $\alpha_i < \alpha_{i+1}$. These values naturally split the timeline $\R_{\geq 0}$ into $n+1$ spans: $[0,\alpha_1], (\alpha_1, \alpha_2], \dots, (\alpha_{n-1},\alpha_{n}], (\alpha_n, \infty)$.  Furthermore, let the subsets $\mathcal{M}_i \subseteq \vocab$ for $i=1,\dots,n$ represent restricted mark spaces for the first $n$ spans. The class of queries is concerned with how likely sequences spanning $[0,\alpha_n]$ respect the restricted mark spaces in each associated interval: \begin{align}
&\prob\left(\cup_{i=1}^n \{\text{No events with types } \mathcal{M}_i \text{ in } t\in(\alpha_{i-1},\alpha_i]\}\right) \nonumber \\
& = \prob\left(\land_{i=1}^n \forall_{( T,  M) \in \hist_{(\alpha_{i-1}, \alpha_i]}}  M \notin \mathcal{M}_i \right) \text{ with } \alpha_0=0. \label{eq:4_framework}
\end{align}
See \cref{fig:4_example_query} for an illustrated example query. 
This is a very flexible class of queries that includes many meaningful individual queries, which will be further discussed in \cref{sec:4_complex_queries}.

\begin{figure}
    \centering
    \includegraphics[width=0.94\columnwidth]{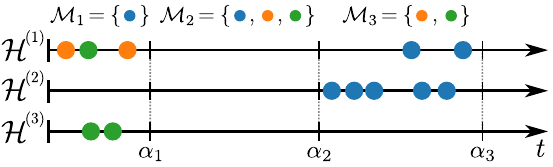}
    \caption{Three potential sequences $\hist^{(i)}$ that satisfies the condition of an example restricted-mark query. The mark space $\mathcal{M}$ in this context is equivalent to that in \cref{fig:4_example_query_space}.}
    \label{fig:4_example_query}
\end{figure}

\paragraph{Importance Sampling and Proposal Distribution}
Let $\q$ be a proposal measure with support over at least the intersection of the support of $\prob$ and the query space $\mathcal{Q}$ (i.e., $\text{supp}(\q) \supseteq \text{supp}(\prob)\cap \mathcal{Q}$). Recall that under the right conditions importance sampling allows us to convert expectations of one measure to another, e.g., $\E^\prob[f(X)]=\E^\q[L(X)f(X)]$ for $L(X)=\frac{\mathcal{L}^\prob(X)}{\mathcal{L}^\q(X)}$ with likelihood functions $\mathcal{L}^\prob$ and $\mathcal{L}^\q$\textemdash{}see \cref{sec:imp_sampling_notes} for more details. It then follows that 
\begin{align}
\E^{\prob}\left[\ind(\hist_\tau\in\mathcal{Q})\right] = \E^{\q}\left[\ind(\hist_\tau\in\mathcal{Q})\frac{\mathcal{L}^\prob(\hist_\tau)}{\mathcal{L}^\q(\hist_\tau)}\right]. \label{eq:4_imp_sampling_general}
\end{align}
It can be shown that the optimal proposal measure (i.e., lowest estimator variance) yields the following likelihood over sequences \citep{robert1999monte}:
\begin{align}
\mathcal{L}^{\q_\text{optimal}}(\hist_\tau) &:= \frac{|\ind(\hist_\tau\in\mathcal{Q})|\mathcal{L}^\prob(\hist_\tau)}{\E^{\prob}[|\ind(\hist_\tau\in\mathcal{Q})|]} \\
& = \mathcal{L}^\prob(\hist_\tau\sep\hist_\tau\in\mathcal{Q}),
\end{align}
however, this is not immediately usable since it involves computing and normalizing over the exact query that we are trying to estimate in the first place. 

The more our actual proposal distribution $q$ resembles $q_\text{optimal}$, the more efficient our estimation procedure will be. 
Since conditioning on future events is difficult for neural autoregressive models, and for that matter most MTPPs in general, we can 
instead only apply immediate ``local'' restrictions on the trajectory such that a sequence will remain within $\mathcal{Q}$. This can be accomplished by letting $\q$ be a measure implied by a MTPP with intensity
\begin{align}
\mu^*_k(t)=\ind(k\notin \mathcal{M}_i)\lambda^*_k(t)
\end{align}
for $k\in\mathcal{M}$ and $t\in(\alpha_{i-1}, \alpha_i]$. Note that this can be seen as the natural extension of the proposal distribution in \cref{sec:3_proposal} to continuous time.
This naturally leads to the likelihood of any sequence generated under $\q$ as being
\begin{align}
\mathcal{L}^\q(\hist_{\tau}) & =  \left(\prod_{i=1}^{N}\mu^*_{M_i}( T_i)\right)\exp\left(-\int_0^\tau \mu^*(s)ds\right) \\
& = \left(\prod_{i=1}^{N}\lambda^*_{ M_i}( T_i)\right)\exp\left(-\sum_{i=1}^n \int_{\alpha_{i-1}}^{\alpha_i} \lambda^*_{\mathcal{M}\setminus \mathcal{M}_i}(s)ds\right)\label{eq:4_simplified_proposal_likelihood}
\end{align}
where $N=|\hist_\tau|$. This proposal distribution was constructed so that every sample generated will always belong to the query space, which allows for the simplified form presented in \cref{eq:4_simplified_proposal_likelihood}.
Applying this to \cref{eq:4_imp_sampling_general} yields
\begin{align}
\prob(\hist_\tau\in\mathcal{Q}) & = \E^{\q}\left[\ind(\hist_\tau\in\mathcal{Q})\frac{\mathcal{L}^\prob(\hist_\tau)}{\mathcal{L}^\q(\hist_\tau)}\right] \\
& = \E^\q\left[\frac{\left(\prod_{i=1}^{N}\mu^*_{M_i}( T_i)\right)\exp\left(-\int_{0}^\tau \mu^*(s)ds\right)}{\left(\prod_{i=1}^{N}\lambda^*_{ M_i}( T_i)\right)\exp\left(-\int_0^\tau \lambda^*(s)ds\right)}\right] \text{ as } \hist_t \in \mathcal{Q} \text{ under } \q\\
& = \E^\q\left[\frac{\left(\prod_{i=1}^{N}\lambda^*_{ M_i}( T_i)\right)\exp\left(-\sum_{i=1}^n \int_{\alpha_{i-1}}^{\alpha_i} \lambda^*_{\mathcal{M}\setminus \mathcal{M}_i}(s)ds\right)}{\left(\prod_{i=1}^{N}\lambda^*_{ M_i}( T_i)\right)\exp\left(-\int_0^\tau \lambda^*(s)ds\right)}\right]\\
& = \E^{\q}\left[\exp\left(-\sum_{i=1}^n \int_{\alpha_{i-1}}^{\alpha_i} \lambda^*_{\mathcal{M}_i}(s)ds\right)\right].\label{eq:4_imp_sampling_specific}
\end{align}
Any query in this class can now be estimated in an unbiased fashion by using Monte Carlo estimation on \cref{eq:4_imp_sampling_specific}.

\paragraph{Estimator Efficiency}
Since both the naive and importance sampled estimators are unbiased, whichever has lower variance can be seen as the more \emph{efficient} estimator. 

Assume that $\mathcal{Q}$ belongs to a general restricted-mark query and that $\pi=\prob(\hist_\tau\in\mathcal{Q})$. Let
\begin{align}
\hat{\pi}_\text{Naive}(\hist_\tau) & = \ind(\hist_\tau \in \mathcal{Q}), \\
\hat{\pi}_\text{Imp.}(\hist_\tau) & = \exp\left(-\sum_{i=1}^n \int_{\alpha_{i-1}}^{\alpha_i} \lambda_{\mathcal{M}_i}^*(s)ds \right),
\end{align}
where both are unbiased estimators of $\pi$ under $\prob$ and $\q$ respectively.
Note that $\hat{\pi}_\text{Imp.}(\cdot)\in[0,1]$ as $\lambda^*_k(\cdot) \geq 0$. Finally, let relative efficiency of the two estimators be defined as
\begin{align}
\text{eff}(\hat{\pi}_\text{Imp.}, \hat{\pi}_\text{Naive}):=\frac{\var^{\prob}\left[\hat{\pi}_\text{Naive}(\hist_\tau)\right]}{\var^{\q}\left[\hat{\pi}_\text{Imp.}(\hist_\tau)\right]}.
\end{align}

\begin{theorem}
If $\pi \in (0,1)$ and $\lambda^*(t) < \infty$ for all $t \in [0,\tau]$, then $\text{eff}(\hat{\pi}_\text{Imp.}, \hat{\pi}_\text{Naive}) > 1$. In other words, under these conditions $\hat{\pi}_\text{Imp.}$ is \underline{always} more efficient than $\hat{\pi}_\text{Naive}$. \label{thm:eff}
\end{theorem}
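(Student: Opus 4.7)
The plan is to exploit the fact that both estimators are unbiased for $\pi$, so their relative efficiency reduces to a ratio of second moments, and then to use the bounded range of $\hat{\pi}_\text{Imp.}$ to squeeze its variance below that of the Bernoulli $\hat{\pi}_\text{Naive}$.

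First I would note that $\hat{\pi}_\text{Naive}(\hist_\tau)$ is an indicator, hence a Bernoulli random variable under $\prob$ with success probability $\pi$, so $\var^\prob[\hat{\pi}_\text{Naive}] = \pi(1-\pi)$. For the importance estimator, both estimators being unbiased for the same $\pi$ means
\begin{align}
\var^\q[\hat{\pi}_\text{Imp.}(\hist_\tau)] = \E^\q[\hat{\pi}_\text{Imp.}(\hist_\tau)^2] - \pi^2.
\end{align}
Since $\lambda^*_k(s) \geq 0$, each integral in the exponent of $\hat{\pi}_\text{Imp.}$ is non-negative, so $\hat{\pi}_\text{Imp.}(\hist_\tau) \in (0,1]$ pointwise (finiteness of $\lambda^*$ rules out the value $0$). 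Therefore $\hat{\pi}_\text{Imp.}(\hist_\tau)^2 \leq \hat{\pi}_\text{Imp.}(\hist_\tau)$, and taking $\q$-expectations yields $\E^\q[\hat{\pi}_\text{Imp.}^2] \leq \E^\q[\hat{\pi}_\text{Imp.}] = \pi$. Combining gives $\var^\q[\hat{\pi}_\text{Imp.}] \leq \pi - \pi^2 = \var^\prob[\hat{\pi}_\text{Naive}]$, establishing the non-strict direction.

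The main obstacle, and the interesting content of the theorem, is to upgrade this to a strict inequality using the two hypotheses. For strictness we need $\E^\q[\hat{\pi}_\text{Imp.}(1 - \hat{\pi}_\text{Imp.})] > 0$, which fails only when $\hat{\pi}_\text{Imp.}$ is almost surely $\q$-supported on $\{0,1\}$. The finiteness assumption $\lambda^*(t) < \infty$ for $t \in [0,\tau]$ forces every integral $\int_{\alpha_{i-1}}^{\alpha_i}\lambda^*_{\mathcal{M}_i}(s)ds$ to be finite (on any $\q$-sampled trajectory the intensities remain dominated by $\lambda^*$), so $\hat{\pi}_\text{Imp.} > 0$ $\q$-a.s., eliminating the value $0$. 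The remaining case would be $\hat{\pi}_\text{Imp.} \equiv 1$ $\q$-a.s., but then $\pi = \E^\q[\hat{\pi}_\text{Imp.}] = 1$, contradicting $\pi \in (0,1)$. Hence $\hat{\pi}_\text{Imp.}$ lies strictly inside $(0,1)$ on a set of positive $\q$-measure, giving $\E^\q[\hat{\pi}_\text{Imp.}(1-\hat{\pi}_\text{Imp.})] > 0$ and therefore $\var^\q[\hat{\pi}_\text{Imp.}] < \pi(1-\pi)$.

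Dividing through gives $\text{eff}(\hat{\pi}_\text{Imp.}, \hat{\pi}_\text{Naive}) > 1$, as required. The argument is essentially Rao--Blackwell-flavored: the proposal has conditioned on all randomness \emph{except} the decision of whether the sampled trajectory lies in $\mathcal{Q}$, and that conditioning cannot increase variance; the hypotheses $\pi \in (0,1)$ and $\lambda^* < \infty$ exactly rule out the two degenerate boundary cases in which conditioning would be trivial.
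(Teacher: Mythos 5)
Your proof is correct and follows essentially the same route as the paper: both arguments use $\hat{\pi}_\text{Imp.}\in[0,1]$ to bound the second moment by the first, $\E^\q[\hat{\pi}_\text{Imp.}^2]\leq\E^\q[\hat{\pi}_\text{Imp.}]=\pi$, and then rule out the equality case using $\pi\in(0,1)$ together with the finiteness of $\lambda^*$, which forbids the estimator from taking the value $0$. The only (minor) difference is that you handle the equality case directly via the condition $\hat{\pi}_\text{Imp.}^2=\hat{\pi}_\text{Imp.}$ $\q$-a.s., i.e.\ $\hat{\pi}_\text{Imp.}\in\{0,1\}$ a.s., whereas the paper routes this through its separate lemma characterizing $[0,1]$-valued variables with variance $\pi(1-\pi)$ as Bernoulli; your version is slightly more economical but mathematically equivalent.
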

\begin{proof}
Since the naive estimator is unbiased and binary, then it follows that $\hat{\pi}_\text{Naive}(\hist_\tau) \sim \text{Bern}(\pi)$. Thus, $\var^{\prob}\left[\hat{\pi}_\text{Naive}(\hist_\tau)\right]=\pi-\pi^2$.

To approach the variance of the importance sampling estimator, we note that
\begin{align}
\var^{\prob}\left[\hat{\pi}_\text{Imp.}(\hist_\tau)\right] & = \E^{\q}\left[\hat{\pi}^2_\text{Imp.}\right] - \E^{\q}\left[\hat{\pi}_\text{Imp.}\right]^2 \\
& = \E^{\q}\left[\hat{\pi}^2_\text{Imp.}\right] - \pi^2 \\
& \leq \E^{\q}\left[\hat{\pi}_\text{Imp.}\right] - \pi^2 \text{ since } \hat{\pi}_\text{Imp.}\in[0,1] \\
& = \pi-\pi^2  
\end{align}
The equality only holds if $\pi\in\{0,1\}$ or $\hat{\pi}_\text{Imp.} \sim \text{Bern}(\pi)$. The latter condition is due to the fact that for $[0,1]$ bounded random variables with mean $\pi$, if the variance is equal to $\pi - \pi^2$ then this implies it is Bernoulli (see \cref{thm:4_eff_lemma} below). However, when $\pi \in (0,1)$ then unless $\lambda^*(t) = \infty$ for some subset of $[0,\tau]$ it is impossible for $\hat{\pi}_\text{Imp.}(\hist_\tau)$ to equal $0$. Thus, outside of those circumstances the inequality is strict and $\text{eff}(\hat{\pi}_\text{Imp.},\hat{\pi}_\text{Naive}) > 1$.
\end{proof}

\begin{lemma} \label{thm:4_eff_lemma}
If a bounded random variable $X\in[0,1]$, with mean $\pi$ and variance $\pi(1-\pi)$, then $X\sim\text{Bern}(\pi)$.
\end{lemma}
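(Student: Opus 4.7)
The plan is to reduce the hypothesis to the identity $\mathbb{E}[X(1-X)] = 0$ and then exploit non-negativity of $X(1-X)$ on $[0,1]$ to force $X$ to be $\{0,1\}$-valued almost surely. Concretely, the first step is to rewrite the second moment using $\text{Var}(X) = \mathbb{E}[X^2] - \mathbb{E}[X]^2$: given $\mathbb{E}[X] = \pi$ and $\text{Var}(X) = \pi(1-\pi)$, I get $\mathbb{E}[X^2] = \pi(1-\pi) + \pi^2 = \pi$, so $\mathbb{E}[X^2] = \mathbb{E}[X]$, equivalently $\mathbb{E}[X(1-X)] = 0$.

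The second step is the key observation: because $X$ takes values in $[0,1]$, the random variable $X(1-X)$ is non-negative almost surely. A non-negative random variable with zero expectation must be zero almost surely, so $X(1-X) = 0$ almost surely, i.e., $\Pr(X \in \{0,1\}) = 1$. The final step is to identify the parameter: since $X$ is now supported on $\{0,1\}$, we have $\mathbb{E}[X] = \Pr(X=1)$, and combining with $\mathbb{E}[X] = \pi$ gives $\Pr(X=1) = \pi$ and $\Pr(X=0) = 1-\pi$, so $X \sim \text{Bern}(\pi)$.

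There is no real obstacle here; the only thing worth being careful about is the ``almost surely'' qualifier when invoking non-negativity\textemdash{}strictly speaking the conclusion $X \sim \text{Bern}(\pi)$ holds up to $\prob$-null sets, which is exactly the sense in which \cref{thm:eff} uses it (to conclude the inequality $\mathbb{E}^\q[\hat{\pi}^2_\text{Imp.}] \leq \mathbb{E}^\q[\hat{\pi}_\text{Imp.}]$ is strict unless $\hat{\pi}_\text{Imp.}$ is degenerate on $\{0,1\}$). The proof is essentially two lines and does not require any additional structure beyond boundedness and the two given moments.
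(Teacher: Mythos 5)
Your proof is correct and is a slightly cleaner packaging of the same underlying fact. The paper computes $\E[X^2]=\pi$ exactly as you do, then splits the integral $\int_{[0,1]}x^2\,dF_X$ into contributions from $\{0,1\}$ and $(0,1)$ and runs a proof by contradiction: assuming $\prob(X\in(0,1))>0$ forces $\pi<\pi$ via the strict inequality $x^2<x$ on $(0,1)$. You instead observe directly that $\E[X^2]=\E[X]$ rewrites as $\E[X(1-X)]=0$, note $X(1-X)\ge 0$ almost surely since $X\in[0,1]$, and invoke the standard fact that a non-negative random variable with zero mean vanishes almost surely, giving $\prob(X\in\{0,1\})=1$ without any contradiction scaffolding or integral decomposition. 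Both arguments hinge on exactly the same pointwise inequality $x^2\le x$ with equality only at $\{0,1\}$; your version is a touch more economical and avoids the case split, while the paper's version makes the role of the endpoint set $\{0,1\}$ slightly more explicit. The identification of the parameter via $\E[X]=\prob(X=1)=\pi$ is identical in both.
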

\begin{proof}
Let $X$ be a random variable with support $[0,1]$ on the probability space $(\Omega, \mathcal{F}, \prob)$ with known mean $\E^\prob[X]=\pi$ and variance $\var^\prob[X]=\pi(1-\pi)$. It then follows that:
\begin{align}
\var^\prob \left[X\right] & = \E\left[X^2\right] - \E\left[X\right]^2 \\
\implies \pi(1-\pi) & = \E\left[X^2\right] - \pi^2 \\
\implies \pi & = \E\left[X^2\right] \\
 & = \int_{[0,1]} x^2 dF_X(x) \\
 & = \int_{\{0,1\}} x^2dF_X(x) + \int_{(0,1)} x^2 dF_X(x) \\
 & = \prob(X=1)  + \int_{(0,1)} x^2 dF_X(x)
\end{align}
$\int_{(0,1)} x^2 dF_X(x) > 0$ if and only if $\prob(X \in (0,1)) > 0$. If we assume that $\prob(X \in (0,1)) > 0$, then it follows that:
\begin{align}
\pi & = \prob(X=1)  + \int_{(0,1)} x^2 dF_X(x) \\
& < \prob(X=1)  + \int_{(0,1)} x dF_X(x) \\
& = \prob(X=1) + (\pi - \prob(X=1)) \\
& = \pi,
\end{align}
however, $\pi \nless \pi$. Hence, by contradiction $\prob(X \in (0,1)) = 0$ which implies that $\prob(X=1)=\pi$ and $\prob(X=0)=1-\pi$ since $\E^\prob\left[X\right]=\pi$. Thus, it can be concluded that $X \sim \text{Bern}(\pi)$.
\end{proof}

\subsection{Practical Estimation of Complex Queries} \label{sec:4_complex_queries}
We will now apply our findings from \cref{sec:4_gen_mark_queries} to produce estimators for three different complex, probabilistic queries.

\paragraph{Marginal Distribution of Hitting Time}
Let $A\subset \mathcal{M}$ and $A \neq \emptyset$. The first occurrence of an event with type $k\in A$, regardless of events of other types, is referred to as the \emph{hitting time} of $A$ or $\hit(A)$. The probabilistic query of the CDF of the hitting time of $A$ at a specific time $t$ can be seen as a query under the general restricted-mark class:
\begin{align}
\prob(&\hit(A) \leq t) = 1 - \prob(\hit(A) > t) \\
& = 1 - \prob(\{\text{No events of types } A \text{ in } [0,t] \}) \\
& = 1 - \prob(\forall_{( T, M)\in\hist_t} M\notin A) \nonumber \\
& = 1- \E^{\q}\left[\exp\left(- \int_{0}^t \lambda^*_{A}(s)ds\right)\right]. 
\end{align}
Note that this derivation relies on this query being a special case of the general framework outlined in \cref{eq:4_framework} where $n=1$, $\alpha_0=0$, $\alpha_1=t$, and $\mathcal{M}_1=A$. 
Interestingly, the importance sampled result of this query greatly resembles the CDF of the general first event timing: $F_{ T_1}(t)=1-\exp\left(-\int_0^t \lambda^*(s)ds\right)$.\footnote{It is important to remember that in the general case, we must marginalize over possible trajectories for other types of events $A'$ as these can all either potentially influence the intensity of events of type $A$.} Furthermore, should $A=\mathcal{M}$ then we recover $F_{ T_1}(t)$ as the estimator becomes deterministic (due to $\mu^*(t)=0 \implies q(\hist)\propto \ind(\hist=\emptyset)$).

\paragraph{Marginal Distribution of $n^\text{th}$ Mark}
Let $A\subset \mathcal{M}$ and $n\geq 1$. The distribution of the marginal $n^\text{th}$ mark describes how likely it is that the $n^\text{th}$ event has a mark $k\in A$, irrespective of the timing of itself or of any of the $n-1$ events that occurred prior. In contrast to hitting time queries, we do not fix the integration bounds but rather sample them to be the timings of the $ T_{n-1}$ and $ T_n$. In doing so, this query falls under the general mark-restricted framework:
\begin{align}
\prob( M_n\in A) & = \prob(\{\text{No events of types } A' \text{ in } ( T_{n-1}, T_{n}]\}) \\
& = \prob(\forall_{( T, M)\in \hist_{(T_{n-1}, T_n]}} M\notin A') \\
& = \E^{\q}\left[\exp\left(-\int_{ T_{n-1}}^{ T_n}\lambda_{A'}^*(s)ds\right)\right]
\end{align}
where $A'=\mathcal{M}\setminus A$. This can be seen as a special case under \cref{eq:4_framework} where $\alpha_0=0$, $\alpha_i= T_i$ for $i=1,\dots,n$, $\mathcal{M}_1,\dots,\mathcal{M}_{n-1} = \emptyset$, and $\mathcal{M}_n=A'$. Tying the values of the boundaries $\alpha_i$ to the random event times $ T_i$ effectively ensures that each span with a restricted vocabulary $\mathcal{M}_i$ only pertains to the occurrence of one event. In doing so, we actually recover the ability to estimate queries purely concerning the marks, similar to the discrete sequence setting \citep{boyd2022predictive}.

It is worth noting that, interestingly, we can also compute the complement under the same framework as $\prob( M_n\in A) = 1-\E^{\q}\left[\exp\left(-\int_{ T_{n-1}}^{ T_n}\lambda_{A}^*(s)ds\right)\right]$.

\paragraph{``$A$ before $B$'' Queries}
The last class of queries we will discuss are what we refer to as ``$A$ before $B$'' queries. To be precise, we are interested in the probability of an event with some type $k\in A$ occurring before an event with some type $k\in B$ where $A\cap B=\emptyset$ and non-empty $A,B\subset\mathcal{M}$. In math, this is formally represented as $\prob(\hit(A) < \hit(B))$.

Surprisingly, with our previous developments we can actually estimate this query using importance sampling in conjunction with proposal distribution $\q$. For the proposal distribution, let $\mu_k^*(t)=\ind(k \notin A\cup B)\lambda^*_k(t)$. It then can be shown that
\begin{align}
\prob&(\hit(A)<\hit(B)) \\
& = 1-\E^{\q}\left[\int_0^\infty \lambda_B^*(t) \exp\left(-\int_0^t \lambda^*_{A\cup B}(s) ds\right)dt\right]  \\
& = \E^{\q}\left[\int_0^\infty \lambda_A^*(t) \exp\left(-\int_0^t \lambda^*_{A\cup B}(s) ds\right)dt\right] \label{eq:4_a_before_b}
\end{align}
with both expressions being equal due to the complement $1-\prob(\hit(A)>\hit(B))$ also being estimable under this derivation. See \cref{sec:4_a_before_b_derivation} for the derivations of this expression. 

Interestingly, just like the parallels between the hitting time CDF and the first event time CDF, there exist similar comparisons for \cref{eq:4_a_before_b} and the analytical form of the marginal distribution for the first mark $\prob( M_1\in A)=\int_0^\infty \lambda^*_A(t)\exp\left(-\int_0^t \lambda^*(s)\right)dt$. Additionally, should $B=A'$ then the estimator becomes deterministic and we recover the form of $\prob( M_1\in A)$. 

Note that the expectations in \cref{eq:4_a_before_b} are with respect to $\hist_\infty \sim \q$, which is naturally not possible to evaluate; however, since the integrands are non-negative we can compute natural lower and upper bounds by sampling $\hist_\tau\sim\q$ and integrating over $[0,\tau]$ instead of $[0,\infty)$. Lastly, since these bounds utilize the same proposal distribution, we can actually compute both at the same time for a little extra computation. It then follows that a good estimate for $\prob(\hit(A) < \hit(B))$ would be an average of the upper and lower bounds:
\begin{align}
\prob(\hit(A) < \hit(B)) \approx  \frac{1}{2} + \E^{\q}\left[\int_0^\tau \frac{\lambda_A^*(t)-\lambda^*_B(t)}{2} \exp\left(-\int_0^t \lambda^*_{A\cup B}(s) ds\right)dt\right], \label{eq:4_ab_estimator}
\end{align}
where $\tau>0$ can either be set as a constant or could be dynamically determined on a per sequence basis based on some precision threshold.
It should be noted that because $\tau < \infty$ and the integration bounds are truncated, this estimate is no longer unbiased.

\section{Experiments} \label{sec:4_experiments}

We investigate the effectiveness of our novel importance sampling regime in the context of estimating hitting time, ``A before B,'' and marginal mark distribution queries, while conditioning on partially observed sequences. We find that across both synthetic and real settings as well as parametric and neural-network-based models that our importance sampling estimator dramatically reduces variance compared to naive sampling and results in a much lower error on average. Furthermore, we demonstrate that, on average, these gains in performance outweigh any potential increases in computation time.

\paragraph{Ground Truth}
Computation of any arbitrary query $\prob(\hist_\tau\in\mathcal{Q})$ to arbitrary precision is  intractable in the general case. Given this, in our experiments we compute our queries with an unbiased estimator to high precision  using a large amount of computation, with much higher precision than any of the methods and scenarios evaluated for a given experiment. We refer to the result of this high-precision computation as ``ground truth'' below.

\paragraph{Metrics of Interest}
There are two primary metrics with which we judge query estimation procedures: mean relative absolute error and relative efficiency (or variance reduction should one of the estimators be biased). The former is defined as the mean of $|\pi-\hat{\pi}|/\pi$, where $\pi=\prob(\hist_\tau\in\mathcal{Q})$ and $\hat{\pi}$ is some estimator of $\pi$, over different queries (and potentially models). This particular form of error is chosen to offset the fact that $\pi\in[0,1]$, which can lead to naturally closer estimates should $\pi$ be close to $0$ or $1$. The latter metric of interest is the relative efficiency (or variance reduction) of importance sampling compared to naive sampling. This is calculated by dividing the variance of the naive estimator (calculated using ground truth: $\pi(1-\pi)$) by the variance of the importance sampled estimator (calculated empirically). As an example, a value of 5 for this metric indicates that, on average, 5 times as many samples are needed for naive estimation to achieve an estimator variance as low as that of importance sampling.

\subsection{Real-world Experiments}

\paragraph{Datasets}

\begin{table}
\centering
\caption{Real-world Dataset Summary Statistics}
\begin{tabular}{l @{\hspace{0.8cm}} c c c}
\toprule
Dataset &  \# Sequences & $\tau_{max}$ & \# Marks\\
\midrule
MovieLens & 34,935 & 43,000 & 182 \\
MOOC & 6,863 & 715 & 97 \\
Taobao & 17,777 & 192 & 1,000 \\
\bottomrule
\end{tabular}
\label{tab:datasets}
\end{table}

We conduct our real-world experiments on three sequential user-behavior datasets. In all three, a sequence is defined as the records generated by a single individual.
The \textbf{MovieLens 25M} dataset \citep{harper2015movielens} contains records of user-generated movie reviews alongside a rating. Marks represent the categories under which a reviewed movie can be classified as.
The \textbf{MOOC} dataset \citep{kumar2019predicting} is a collection of online user-behaviors for students taking an online course. Marks represent the type interaction a student has performed.
Lastly, the \textbf{Taobao} user behavior dataset \citep{zhu2018learning} contains page-viewing records from users on an e-commerce platform. Marks are defined as the category of the item being viewed, with categories outside of the top 1,000 most frequent being discarded.
All datasets were split into 75\% training, 10\% validation, and 15\% test splits for model fitting and experiments.
Summary statistics for these datasets can be found in \cref{tab:datasets}. All preprocessing details for these three datasets can be found in \cref{sec:4_exp_appendix}.

\paragraph{Models}
All real-world experiments use neural Hawkes models \citep{mei2017neural}, one trained for each dataset. Each model was trained to convergence on the training split with stability/generality ensured via the validation split. All training and model details can be found in \cref{sec:4_exp_appendix}.

\paragraph{Hitting Time Queries:}
For each dataset, we randomly sample 1,000 different sequences $\hist_\tau$. For each sequence, we condition on the first five events, $\hist_{[0,T_5]}$, and evaluate a hitting time query for the remaining future.\footnote{All experiments evaluate necessary integrals with the trapezoidal rule. For more details, see \cref{sec:4_exp_appendix}.} The specific hitting time query asked is $\prob(\hit(k)\leq t\sep\hist_{[0,T_5]})$ where $k:=M_6$ and $t:=10\times T_6$ for $(T_6,M_6)\in\hist(T)$. 

We compared estimating this query with naive sampling and importance sampling using varying amounts of samples: $\{2,4,10,25,50,250,1000\}$. Mean RAE compared to ground truth (estimated using importance sampling with 5000 samples) can be seen in \cref{fig:4_hit_err}. We witness roughly an order of magnitude of improvement in performance for the same amount of samples. Primarily, we attribute this improvement to the fact that naive sampling only collects binary values, whereas our proposed procedure collects much more dense information over the entire span $[T_5, t]$.

We also analyze the relative efficiency of our estimator compared to naive sampling. For each query asked, the efficiency was estimated using 5000 importance samples. The results can be seen in \cref{fig:4_hit_eff}.  
We achieve a dramatic decrease in variance by several orders of magnitude, in the majority of contexts, across all datasets. Interestingly, it appears that the efficiency is correlated with the underlying ground truth value $\pi$. We believe this may be due to the form of the importance sampling estimator: $1-\exp\left(-\int_0^t \lambda_k^*(s)ds\right)$. Since the intensity function is non-negative, it is simple for the model to produce estimates close to 0; however, to producing values close to 1 requires the integral to tend towards infinity. 

\begin{figure}
\centering{\phantomsubcaption\label{fig:4_hit_err}\phantomsubcaption\label{fig:4_hit_eff}}
\includegraphics[width=0.85\columnwidth]{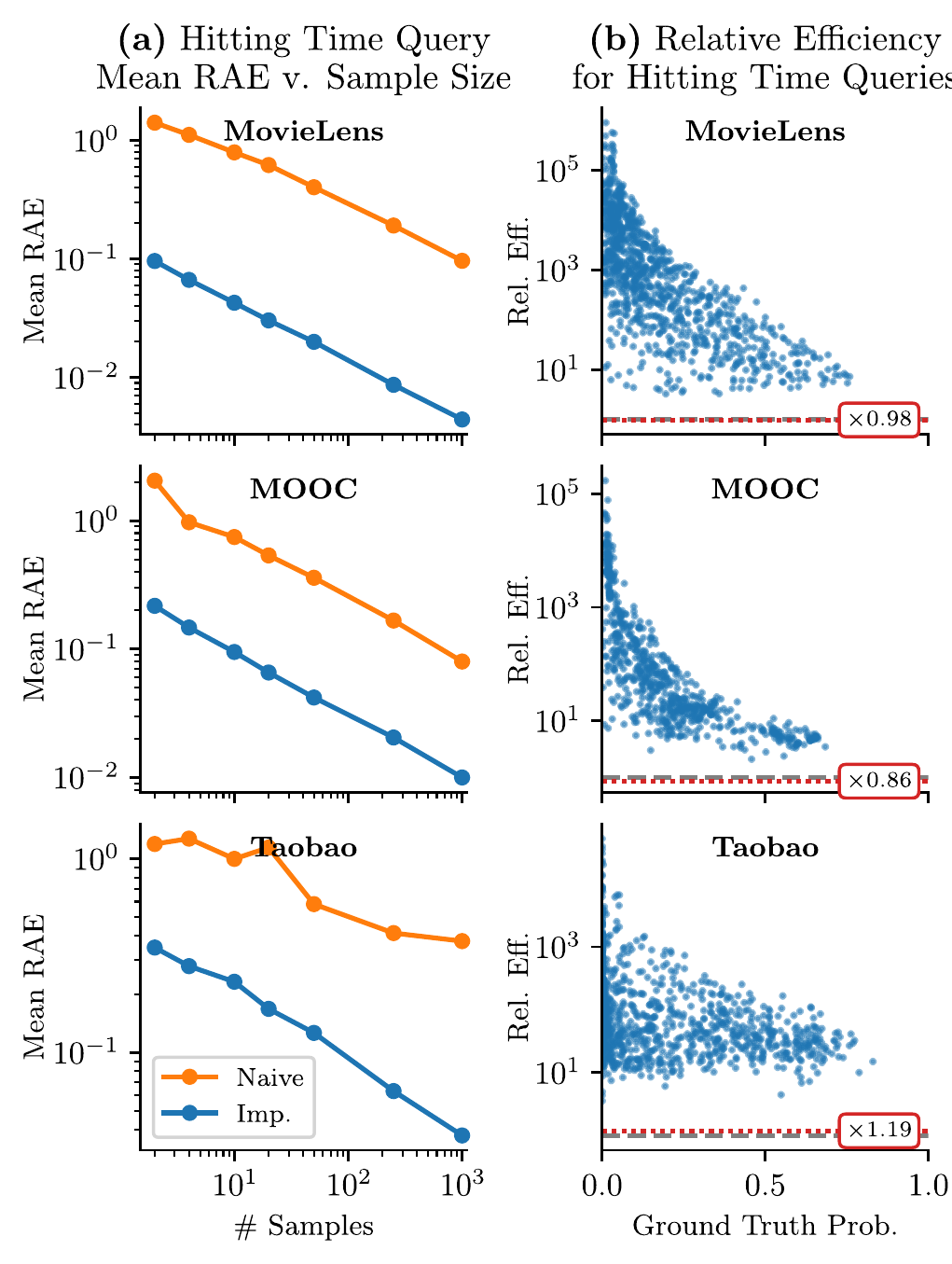}
\caption{Results from 1000 different hitting time queries evaluated on models trained on three different datasets. (a) Average relative absolute error for naive and importance sampling shown in comparison to number of sampled sequences used. (b) Estimated relative efficiency values for importance sampling versus naive sampling plotted against ground truth hitting time query values. Gray dashed lines indicate an efficiency of 1. Red lines with text box show the mean multiplicative increase in computation time for importance sampling.}
    \label{fig:4_hit_plots}
\end{figure}

\paragraph{``A before B'' Queries:}
Similar to the hitting time experiments, for ``A before B'' queries we similarly sample 1000 random test sequences and condition on the first five events $\hist_{[0,T_5]}$. Then, we estimate the query $\prob(\hit(A) < \hit(B)\sep\hist_{[0,T_5]})$ where $A$ and $B$ are randomly chosen to contain one third of the mark space $\vocab$.  

We compared estimating this query with naive sampling and importance sampling using varying amounts of samples: $\{2,4,10,25,50,250\}$. We utilized the truncated importance sampled estimator, \cref{eq:4_ab_estimator}, where $\tau$ is chosen dynamically for each sequence such that a maximum difference of 0.01 is allowed between the upper and lower bounds.
Mean RAE compared to ground truth (estimated using naive sampling with 5,000 samples) can be seen in \cref{fig:4_ab_err}.\footnote{Importance sampling would have been used for ground truth here; however, it is more sound to use an unbiased estimator for ground truth.} Like the hitting time results, we can see roughly an order of magnitude improvement in performance. Some results indicate that the limiting factor is the precision threshold for choosing $\tau$ (e.g., see MovieLens results). We also see a similar variance reduction relative to previous experiments, shown in \cref{fig:4_ab_eff}. Here, the runtime cost is much greater as we have to accumulate an integral over an indefinite amount of time; however, we can see that on average it is still very much ``worth it'' to utilize this framework over naive sampling as evidenced by all of the blue dots above the red line.

\begin{figure}
\centering{\phantomsubcaption\label{fig:4_ab_err}\phantomsubcaption\label{fig:4_ab_eff}}
\includegraphics[width=0.85\columnwidth]{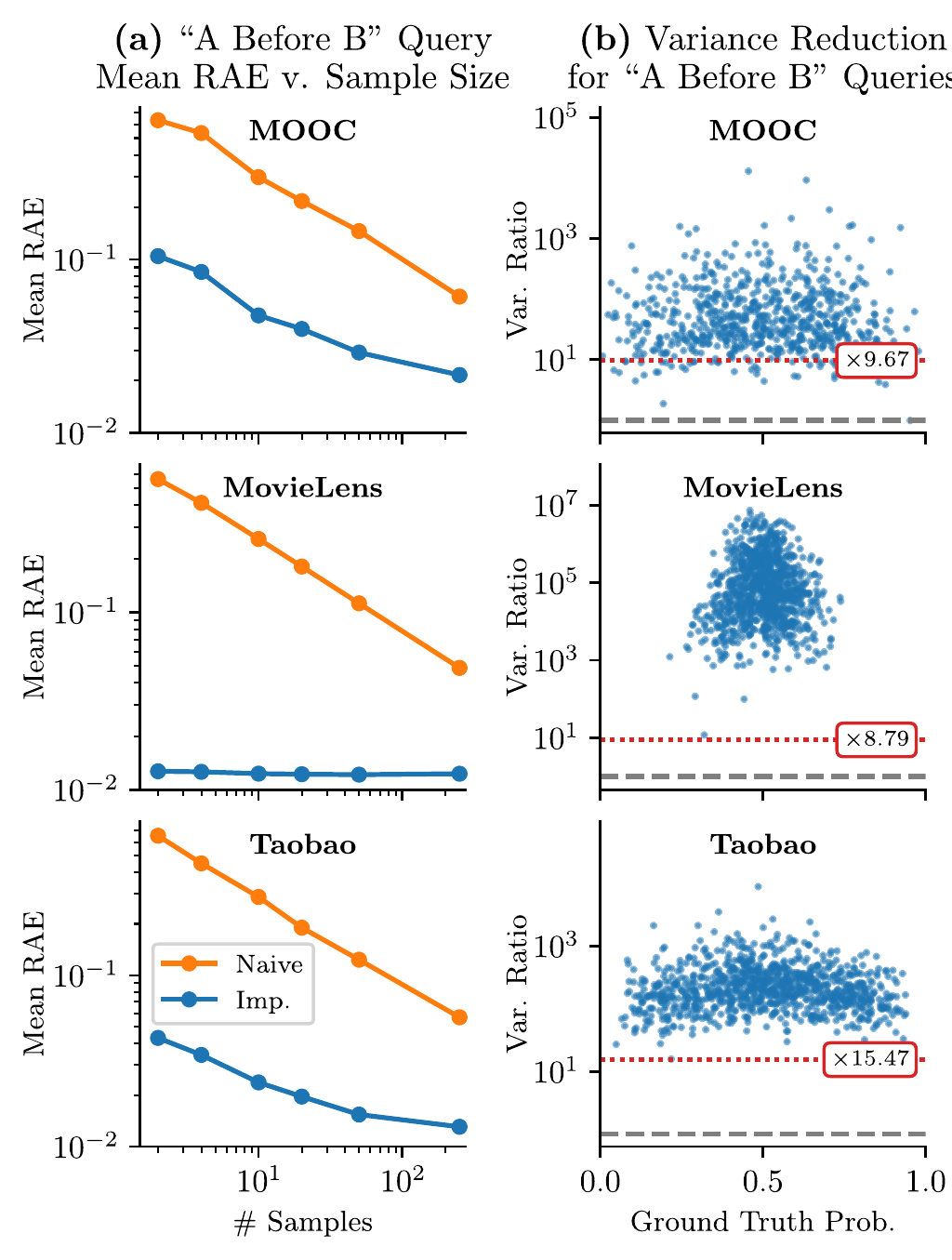}
\caption{Same setup as seen in \cref{fig:4_hit_plots} with the same models and datasets only applied to ``A before B'' queries. Results for (b) are presented as ``variance reduction'' instead of ``relative efficiency'' since our derived estimator for importance sampling is biased due to truncating the integral in \cref{eq:4_ab_estimator}.}
\label{fig:4_ab_plots}
\end{figure}

\paragraph{Marginal Mark Distribution Queries:}
We additionally performed $n^\text{th}$ marginal mark distribution queries in much the same vein as the hitting time queries. Due to space limitations, the majority of the details and results can be found in \cref{sec:4_exp_appendix}. That being said, we found that the resulting relative efficiencies for these queries to be much less than those of the other queries, but still more efficient than naive as \cref{thm:eff} suggests. Across the datasets, the median relative efficiency ranged from 1.9 to 2.7.
We speculate this to be due to the fact that the bounds of integration in the estimator are tied to sampled event times rather than being static values, inducing quite a lot of potential variance in the estimator. 

\subsection{Synthetic Experiments}
\begin{figure}
    \centering
    \includegraphics[width=0.85\columnwidth]{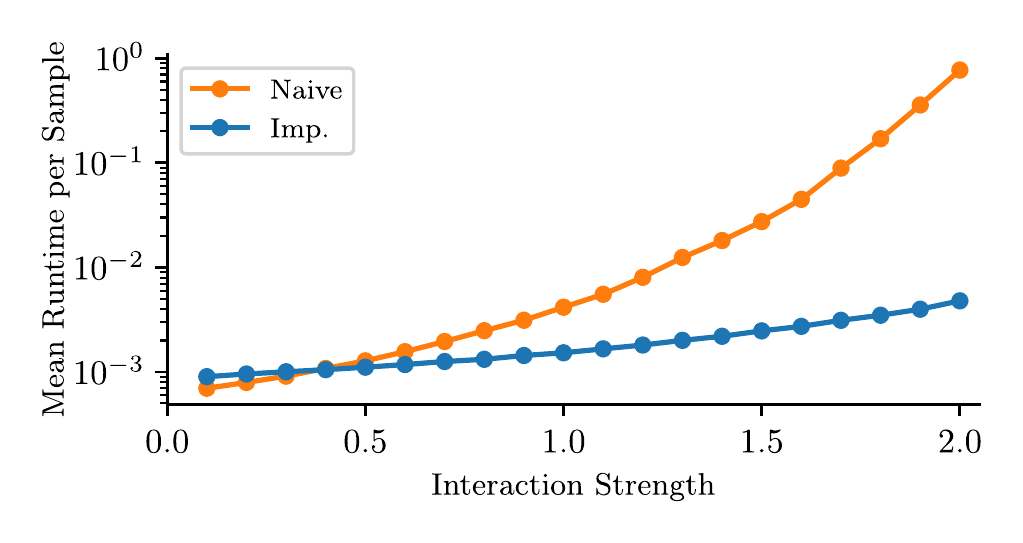}
    \caption{Average wall-clock time taken to generate a \underline{single} sample under naive and importance sampling for hitting time queries with 1,000 randomly instantiated parametric exponential-kernel Hawkes models with different scalings of ``interaction strength'' (i.e., amount of modeled cross-mark interaction).}
    \label{fig:4_interaction_strength}
\end{figure}

For artificial experiments, we wanted to investigate the trends of our estimation procedures for a variety of queries over \emph{many different} models\textemdash{}something that is difficult to do with real-world data as we typically only have access to one model trained on a given dataset.
As such, we primarily focus on randomly instantiated parametric Hawkes processes with both exponential kernels and Gamma decay kernels.
Under this setting, we were able to recreate similar findings in terms of estimation error and efficiency for the types of queries evaluated with real-world data. Refer to \cref{sec:4_exp_appendix} for more details and in depth results.

In addition to these expected results, we also sought to investigate how different aspects of the underlying model affect the estimation procedure. In particular, we measured the average wall-clock time taken to generate samples for naive estimation and importance sampling as a function of how much cross-mark interaction is present. We modulate the \emph{interaction strength} in these generated models by changing the scale of the randomly generated mark-to-mark intensity parameters. The results can be seen in \cref{fig:4_interaction_strength} where we evaluated both estimation procedures on random hitting time queries for 1,000 different generated models with each across a span of interaction strengths. As more cross-mark excitement is encouraged by a model, the runtime it takes to sample a sequence over the same observation window becomes much longer in general; however, importance sampling counters this trend due to zeroing out (potentially several) marked intensities via $\q$, thus barring events from happening. From these results, we can see that our importance sampling procedure is more robust to the underlying dynamics of a model over sampling windows fixed in time.

\section{Conclusion}

In this chapter, we extended the class of queries described in Chapter 3 to the continuous-time setting. Additionally, a similar importance sampling technique was also introduced for marked temporal point processes with relative efficiency guarantees derived. Extensive experiments were conducted that showcase a definitive reduction in estimator variance, often with several orders of magnitude difference compared to naive approaches. 

The text of this chapter is based on the publication:
\begin{center}
\vspace{-1.5em} \textit{Probabilistic Querying of Continuous-Time Event Sequences
} \citep{boyd2022probabilistic}.
\end{center}

The author of this dissertation is the primary author for this publication, and was responsible for all theoretic contributions. Additionally, the author was responsible for implementing and executing all experiments that involved neural MTPPs. Yuxin Chang contributed the implementation and execution of the classical parametric-based MTPPs with oversight from the author. Overall conception of the project and some of the writing was shared among all authors.

\chapter{Marginalization of Marked Temporal Point Processes to Account for Censoring}

\noindent

An important practical aspect of working with real-world event data is that censoring of observations can occur in a number of different ways. 
For example, a common example of right-censoring often occurs in survival analysis (a sub-field of temporal point processes) in which a patient's event of interest is unobserved due to the end of a data collection period. This particular type of censoring is  well-studied and there are well-known methods for accommodating this during training and inference. More recently, there has been work on handling broader categories of censoring for neural MTPP models, for example, censoring where each event has a type-specific probability of being missing \citep{mei2019imputing}.

In this chapter we focus on a different problem, the problem of making predictions when some, or all, marks are censored over (potentially open-ended) intervals of time, i.e., there is partial censoring of a specific subset of marks. We will refer to this type of censoring as {\it mark-censoring}. 
To our knowledge, there has been no prior work that addresses this problem of adapting MTPPs to mark-censored sequences at inference time.
The problem is motivated by the real-world scenario where an MTPP model has been trained on a known set of marks with fully-observed data, but where at prediction time some of the marks (and their associated timing) are no longer observable. 
For example, in medical data analysis, certain types of events that were measured in the training dataset at a particular hospital might no longer be recorded when the model is deployed at a different hospital. %
Or, in system monitoring, all events of a certain type could be censored over a window of time  due to events such as network and power outages, and accommodating such gaps is important for modeling future dynamics once outages are resolved.

Previous work such as \citet{linderman2017bayesian} focuses on special cases of missingness patterns and/or only applies to specific model architectures (as will be discussed in more detail in Section \ref{sec:5_related}). In contrast, our work is able to handle all of the scenarios shown in Figure \ref{fig:5_example_censoring}.
The basis of our approach is a novel marginalization technique that can correct the intensity for the censoring of marks. Our proposed method is {\it model-agnostic} in that it can be applied to any MTPP with a well-defined intensity function. We demonstrate this by employing our method on different types of MTPP models and evaluating predictive performance and simulation behavior under a censored-mark regime.

\begin{figure}
    \centering
    \includegraphics[width=0.95\linewidth]{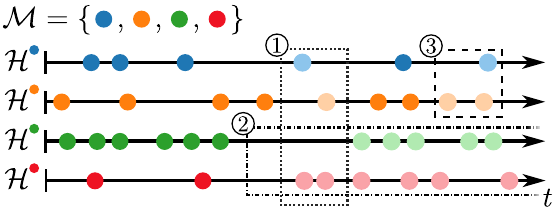}
    \caption{Visualization of an example sequence with four possible marks. $\vocab$ is the vocabulary of possible event types, $\hist^k$ is the history of events with types equal to $k$. Boxes  over  sequences represent different  modes of censoring that could occur during generation: (1) mark-agnostic censoring for a particular interval, (2) censoring of green and red marks over an open interval, and (3) censoring of blue and orange marks over a finite interval. The occurrence of an event or the total count of events during an interval is not known, differentiating our scenarios from the typical ``interval censoring'' in survival analysis or MTPPs.}%
    \label{fig:5_example_censoring}
\end{figure}

\section{Related Work}\label{sec:5_related} 

A broad range of temporal censoring scenarios 
have been studied in the literature, such as asynchronous event times \citep{upadhyay2018deep, trouleau2019learning} and interval-censored point process data \citep{fan2009local, rizoiu2022interval}. 
Here we focus the discussion of related work to MTPPs where the marks are from a fixed vocabulary.
Existing work on missingness %
in this context can broadly be divided into three categories.

The first category considers various incomplete intervals, regardless of event types, and focuses on novel tasks such as imputing missing events and sequential representation learning. For example, \citet{shchur2019intensity} proposed a flow-based mixture model that enables closed-form sampling and handles missing data through imputation. \citet{xu2017learning} assumes that a proportion of each short doubly-censored event sequence is observed, and in turn proposes a sampling-stitching data synthesis method based on parametric Hawkes processes to sample long training sequences that improve predictions. 

The second category considers the scenario in which each individual event, regardless of mark or time of occurrence, has a chance of being censored.
For the Hawkes process, for example, sampling methods were developed to identify latent structure in the data \citep{shelton2018hawkes} or to correct for biased marks that are underrepresented \citep{zhou2021multivariate}. In  neural settings, \citet{gupta2021learning, gupta2022modeling} proposed the use of two MTPPs to model missing events in order to make better predictions. 
\citet{mei2019imputing} proposed bidirectional-LSTM models that are conditioned on future observations to apply particle smoothing to impute unobserved events.

The third category of prior work assumes that events are observed but the mark and/or the exact event time is unknown. For instance, \citet{deutsch2020abc} developed an approximate Bayesian
 algorithm to fit Hawkes processes in the presence of noisy event times, and \citet{calderon2021linking} 
 addressed partially interval-censored Hawkes processes, where the total event counts on the censored intervals are available. For the case of Hawkes models, \citet{linderman2017bayesian} imputed latent marks and developed a sequential Monte Carlo approach for latent Hawkes processes that can also be applied to multiple types of censoring.

In summary, previous approaches to censoring in MTPPs either focus on specific types of missingness mechanisms during training time or focus on one specific type of model such as parametric or neural Hawkes process models. In contrast, our approach considers a broad range of interval- and mark-censoring mechanisms (see \cref{fig:5_example_censoring}) and is model-agnostic in that it can work with any MTPP model with a marked intensity function at prediction time. Furthermore, the results of our method yield a well-defined intensity function of a MTPP that can be used just the same as any other MTPP, meaning various statistics can be computed such as expected next event (time and mark), log likelihood of partially observed sequences, etc.

\section{Mark-Censored Temporal Point Processes}\label{sec:5_censoring}

For relevant preliminary information such as notation and details on marked temporal point processes (MTPPs), please refer to \cref{sec:4_prelims}.

\subsection{Problem Statement} %
Assume that we have access to a trained MTPP with intensity functions $\lambda^*_k(t)$ for $k\in\vocab$ which defines a probability measure $\prob$. We are interested in performing inference on such a model in the presence of censoring. In particular, we are interested in a type of censoring we term \emph{mark-censoring} in which only events of types $k\in\obs\subset\vocab$ are observed, while all events of types $k\in\cen:=\vocab\setminus\obs$ are censored and unobserved. 
In particular, we assume in mark-censoring that we know (a) the time-interval where censoring occurs and (b) which kinds of marks are missing (e.g., knowing the time intervals and colors of marks in the censoring boxes displayed in \cref{fig:5_example_censoring}). 
Below we develop the framework for the case when censoring takes place over all of time (i.e., $t\in[0,\infty)$); however, as we will discuss later in this section, the general approach can be directly applied to a range of more complicated censoring schemes (such as those illustrated in \cref{fig:5_example_censoring}).

\paragraph{On Censoring}
The term ``censoring'' can be quite a loaded concept with regards to statistical models. 
In our work we assume the absence of certain marks over a time interval to correspond to \emph{missing completely at random} (MCAR) \citep{heitjan1996distinguishing}, i.e., we assume that the realized sequence $\hist$ (both observed and unobserved portions) are independent of \emph{why} it is censored in the first place. We leave handling of more informative censoring to future work.

\begin{figure*}
    \centering
    \includegraphics[width=0.99\linewidth]{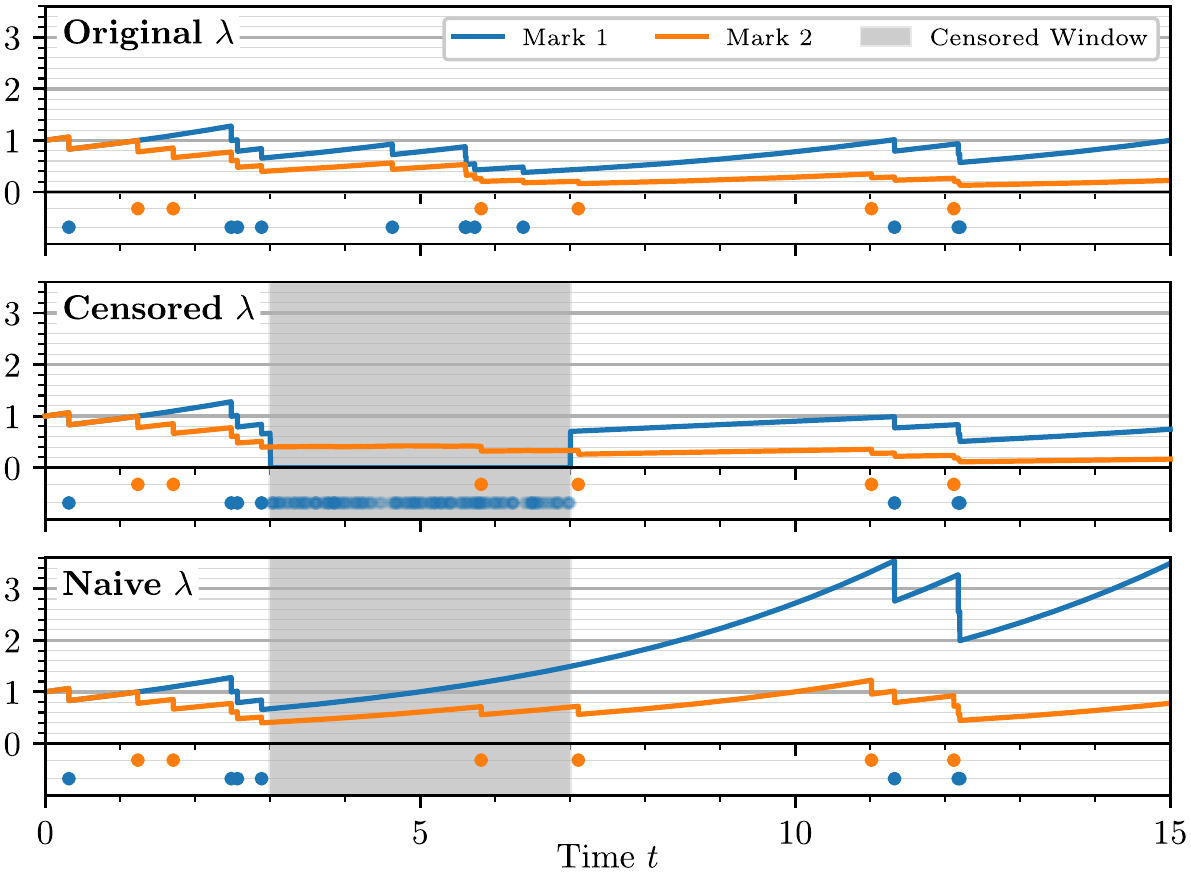}
    \caption{Intensity visualizations (lines) alongside conditioned sequences (dots) for a sequence sampled from a self-correcting point process (top), the same process with blue marks censored from time 3 to 7 (middle), and the naive intensity results for the censored sequence (bottom). The middle sequence displays both the observed sequence as opaque dots and the various censored continuations sampled from the importance distribution as transparent dots. Note that the intensity of the censored mark (blue) after the censoring interval (at time 7) does not necessarily equal the intensity before censoring (at time 3).}
    \label{fig:5_example_intensities}
\end{figure*}

\subsection{Censored Intensity Function} 
Since we have access to the original MTPP, which models the entire distribution for event sequences as a whole, embedded within this model is a well-defined sub-process that represents an MTPP that only observes events of types $k\in\obs$. We refer to this embedded model as a \emph{mark-censored sub-process}. This sub-process can be thought of as the original model with the censored information marginalized out of it. Had this sub-process been our intended model from the beginning, we could have achieved comparable results by censoring the original training data and training a model on what remains. There is one key difference, however, which is that the mark-censored sub-process still allows for conditioning on events of types $\cen$ even if they are censored moving forward in time (e.g., in the case that the censoring interval only started at time $t > 0$ instead of at $t=0$\textemdash{}see case 3 in \cref{fig:5_example_censoring}).

The censored sub-process is a fully-fledged MTPP, and as such it has its own set of marked intensity functions. We will denote these as $\underline{\lambda}^*_k(t)$ for $k \in \obs$ (should $k\in\cen$ then $\underline{\lambda}^*_k(t)=0$). Likewise, the total intensity for a censored sub-process is defined as $\underline{\lambda}^*(t):=\underline{\lambda}^*_\obs(t)$. These will be referred to as the \emph{censored intensity} from here forward. Note that for any MTPP with well-defined intensity functions $\lambda_k^*$, by the point process superposition property it is justified for the censored intensity $\underline{\lambda}_k^*$ to exist for any arbitrary censoring \citep{daley2003introduction}.

\paragraph{High Level Intuition for Censored Intensity}
Later in this section we will present a formal definition of the censored intensity, as well as a tractable estimator for it that solely relies on the original underlying MTPP with likelihood $\mathcal{L}^\prob$ and intensity $\lambda^*_k(t)$ functions for $k\in\vocab$. However, prior to presenting these, we will first give an informal overview to help understand the arguments at a high level.

We start by recognizing that we are interested in obtaining the intensity at time $t$ for a censored point process where we only observe events of types $k\in\obs$ and no events of types $k\in\cen$. To accomplish this, we would prefer to directly marginalize out all possible sequences of $\hist^\cen_t$; however, for most MTPPs this is unobtainable analytically. Instead, we can approximate the censored intensity $\underline{\lambda}^*_k(t)$ for $k\in\obs$ with the original intensity by simply sampling a possible sequence $\tilde{\hist}^\cen_{t-}$ from the original point process:
\begin{align}
\underline{\lambda}^*_k(t) \approx \lambda_k(t\sep \hist^\obs_{t-}, \tilde{\hist}^\cen_{t-}),
\end{align}
where $\tilde{\hist}^\cen_{t-} \sim \prob(\cdot \sep \hist^\obs_{t-})$. Naturally, we cannot directly perform this sampling, so we will do the next best thing and simply sample from the model as usual except that we will prevent any new event with types $k\in\obs$ from occurring (i.e., set $\lambda_k^*(t)=0$ when sampling).

To get a better approximation, this should be done many times with different sampled trajectories: $\tilde{\hist}^{\cen,(i)}_{t-}$ for $i = 1,\dots,n$. One could simply compute a standard average where $\underline{\lambda}_k^*(t)\approx 1/n \sum_{i=1}^n \lambda_k(t\sep \hist^\obs_{t-}, \tilde{\hist}^{\cen,(i)}_{t-})$; however, since we did not sample $\tilde{\hist}^{\cen,(i)}_{t-}$ perfectly from the model without adjustments we must account for the fact that some samples will be more likely under the original model than others.

As such, we can instead perform a weighted average:
\begin{align}
\underline{\lambda}_k^*(t) \approx \frac{\sum_{i=1}^n \lambda_k\left(t\sep \hist^\obs_{t-}, \tilde{\hist}^{\cen,(i)}_{t-}\right)\omega\left(\tilde{\hist}^{\cen,(i)}_{t-}\right)}{\sum_{i=1}^n \omega\left(\tilde{\hist}^{\cen,(i)}_{t-}\right)}
\end{align}
where $\omega(\cdot)$ determines the weight of a sampled trajectory. We define this weight to be the probability of the imposed sampling restriction (i.e., no \emph{new} events of types $k\in\obs$ allowed) being satisfied under the original model. This can be computed for a given sample and is equal to
\begin{align}
\omega(\tilde{\hist}^\cen_{t-}) = \exp\left(-\int_0^t \lambda_\obs(s \sep \hist^\obs_{s-}, \tilde{\hist}^\cen_{s-}ds \right).
\end{align}

As an illustration of this censored intensity $\underline{\lambda}_k^*(t)$,   \cref{fig:5_example_intensities}  shows the original, censored, and naive intensities for an example sequence sampled from a self-correcting process. After the censoring interval (in gray) ends at $t=7$, the censored intensity tracks the original true intensity (top) much more closely than the naive intensity (bottom) does. In this context, naive intensity is referring to the original intensity being computed while treating the partially observed sequence $\hist^\obs$ as if it were the fully observed sequence $\hist$.

The approximation of $\underline{\lambda}_k^*(t)$ is for finite samples and is a ratio estimator \citep{tin1965comparison}. Taking the limit as $n\rightarrow\infty$ converts each summation into an expected value with respect to the proposal distribution, as ratio estimators are consistent. This description matches what will formally be derived below in \cref{eq:5_censored_intensity_result}.

\paragraph{Formal Definition of $\underline{\lambda}$}
Without loss of generality, we will assume that any prior events being conditioned on have been shifted to end at $t=0$ such that $\hist_0$ contains all of the previous events. It can be shown that the censored intensity function for the sub-process is just a specific marginalization of the original intensity function:
\begin{align}
& \underline{\lambda}^*_k(t) := \underline{\lambda}_k(t\sep\hist_0,\hist^\obs_{(0,t)}=\emptyset) \text{ for } k \in \obs \\
& = \lim_{\Delta \downarrow 0} \frac{1}{\Delta} \Prob(\hit(k)\in [t, t+\Delta) \sep \hist_0,\hist^\obs_{(0,t)}=\emptyset) \\
& = \lim_{\Delta \downarrow 0} \frac{1}{\Delta} \E^\prob_{\hist^\cen_{(0,t)}\sep \hist_0, \hist^\obs_{(0,t)}=\emptyset} \left[\Prob(\hit(k)\in [t, t+\Delta) \sep \hist_0,\hist^\obs_{(0,t)}=\emptyset, \hist^\cen_{(0,t)})\right] \\
& = \lim_{\Delta \downarrow 0} \frac{1}{\Delta} \E^\prob_{\hist_{t-}\sep \hist_0, \hist^\obs_{(0,t)}=\emptyset} \left[\Prob( T_i\in [t, t+\Delta),  M_i=k \sep \hist_{t-})\right], \text{ where } |\hist_{t-}| = i-1\\
& = \E^\prob_{\hist_{t-}\sep \hist_0, \hist^\obs_{(0,t)}=\emptyset} \left[ \lim_{\Delta \downarrow 0} \frac{1}{\Delta} \Prob( T_i\in [t, t+\Delta),  M_i=k \sep \hist_{t-})\right] \text{ by DCT} \\
& = \E^\prob_{\hist_{t-}\sep \hist_0, \hist^\obs_{(0,t)}=\emptyset} \left[\lambda^*_k(t)\right] 
\end{align}
where in this context, $\hit(k)$ refers to the first occurrence time of event $k$ after $t=0$, and $\hist_{t-}:=\hist_0\cup\hist^\obs_{(0,t)}\cup\hist^\cen_{(0,t)}$. The Dominated Convergence Theorem (DCT) holds true because we assume that there exists some value $D$ that is greater than $\lambda^*_k(t)$ for any given $t$. Note that this assumption is typically made to sample from arbitrary MTPPs.

\subsection{Tractable Estimation of Censored Intensity}

To approximate the censored intensity function $\underline{\lambda}_k^*(t)$, we need to perform a Monte Carlo estimation on the above derived expected value, $\E^\prob_{\hist_{t-}\sep \hist_0, \hist^\obs_{(0,t)}=\emptyset} \left[\lambda^*_k(t)\right]$. The only issue is that we cannot directly sample $\hist_{t-}\sep \hist_0, \hist^\obs_{(0,t)}=\emptyset$ under $\prob$ due to the autoregressive nature of MTPPs.\footnote{Due to the conditionals, sampling $\hist_{t-}\sep \hist_0, \hist^\obs_{(0,t)}=\emptyset$ is equivalent to sampling $\hist^\cen_{(0,t)}\sep \hist_0, \hist^\obs_{(0,t)}=\emptyset$.}

Consider the proposal measure $\q$ defined by a MTPP with intensity function
\begin{align}
\mu^*_k(t) = \begin{cases} 0 & \text{ if } k \in \obs \text{ and } t \geq 0 \\
\lambda^*_k(t) & \text{ otherwise.} \label{eq:5_proposal_intensity}
\end{cases}
\end{align}
This can essentially be thought of as the original MTPP prior to censoring, and then during sampling it only produces sequences of events that cannot be observed. The likelihood for a sequence under this distribution is computed as follows:
\begin{align}
& \mathcal{L}^\q(\hist_{t-} \sep \hist_0) \label{eq:5_proposal_likelihood} \\
& = \left[\prod_{i=1}^N \mu^*_{ M_i}( T_i)\right]\exp\left(-\int_0^t \mu^*(s)ds \right)  \\
& = \left[\prod_{i=1}^N \lambda^*_{ M_i}( T_i) \ind( M_i \in \cen)\right]\exp\left(-\int_0^t \lambda^*_\cen(s)ds \right)
\end{align}
where $|\hist_{(0,t)}|=|\hist^\cen_{(0,t)}|=N$.
Note that the proposal distribution has the same support as $\prob$ for $\hist_{t-}\sep\hist_0,\hist^\obs_{(0,t)}=\emptyset$.\footnote{It follows that $\E^\q_{\hist_{t-} \sep \hist_0}\left[\ind(\hist^\obs_{(0,t)}=\emptyset)\right]=1$, which becomes useful for subsequent derivations.}

Using importance sampling with this proposal distribution, we can see that the censored intensity becomes tractable:
\begin{align}
\underline{\lambda}^*_k(t) & = \E^\prob_{\hist_{t-}\sep \hist_0, \hist^\obs_{(0,t)}=\emptyset} \left[\lambda^*_k(t)\right] \\
& = \E^\q_{\hist_{t-} \sep \hist_0} \left[\lambda^*_k(t)\frac{\mathcal{L}^\prob(\hist_{t-} \sep \hist_0, \hist^\obs_{(0,t)}=\emptyset)}{\mathcal{L}^\q(\hist_{t-} \sep \hist_0)}\right] \\
& = \E^\q_{\hist_{t-} \sep \hist_0} \left[\lambda^*_k(t)\frac{\ind(\hist^\obs_{(0,t)}=\emptyset)\mathcal{L}^\prob(\hist_{t-} \sep \hist_0)}{\prob(\hist^\obs_{(0,t)}=\emptyset\sep \hist_0)\mathcal{L}^\q(\hist_{t-} \sep \hist_0)}\right] \\
& = \frac{\E^\q_{\hist_{t-} \sep \hist_0} \left[\lambda^*_k(t)\frac{\left[\prod_{i=1}^N \lambda^*_{ M_i}( T_i) \right]\exp\left(-\int_0^t \lambda^*(s)ds \right)}{\left[\prod_{i=1}^N \lambda^*_{ M_i}( T_i) \ind( M_i \in \cen)\right]\exp\left(-\int_0^t \lambda^*_\cen(s)ds \right)}\right]}{\prob(\hist^\obs_{(0,t)}=\emptyset\sep \hist_0)} \\
& = \frac{\E^\q_{\hist_{t-} \sep \hist_0}  \left[\lambda^*_k(t)\exp\left(-\int_0^t \lambda^*_\obs(s)ds \right)\right]}{\prob(\hist^\obs_{(0,t)}=\emptyset\sep \hist_0)}.
\end{align}

Now the expected value can be approximated with easy-to-access Monte Carlo samples. The only immediate problem is evaluating $\prob(\hist^\obs_{(0,t)}=\emptyset\sep \hist_0)$ as this does not have a closed form solution; however, this statement takes the form of a restricted-mark probabilistic query introduced in \cref{sec:4_gen_mark_queries}. 
Conveniently, we can actually utilize the exact same proposal measure $\q$ as specified in \cref{eq:5_proposal_intensity,eq:5_proposal_likelihood} to represent $\prob(\hist^\obs_{(0,t)}=\emptyset\sep \hist_0)$ as a tractable expected value:
\begin{align}
\prob(\hist^\obs_{(0,t)}=\emptyset\sep \hist_0) & = \E^\prob_{\hist_{t-}\sep \hist_0}\left[\ind(\hist^\obs_{(0,t)}=\emptyset)\right] \\
& = \E^\q_{\hist_{t-}\sep \hist_0} \left[\ind(\hist^\obs_{(0,t)}=\emptyset)\frac{\mathcal{L}^\prob(\hist_{t-}\sep \hist_0)}{\mathcal{L}^\q(\hist_{t-} \sep \hist_0})\right] \\
& =\E^\q_{\hist_{t-}\sep \hist_0}\left[\exp\left(-\int_0^t \lambda^*_\obs(s)ds\right)\right].
\end{align}
Thus, the censored intensity can be ultimately represented as a ratio of two expected values:
\begin{align}
\hspace{-0.975em}\implies \underline{\lambda}^*_k(t) & = \frac{\E^\q_{\hist_{t-}\sep \hist_0} \left[\lambda^*_k(t)\exp\left(-\int_0^t \lambda^*_\obs(s)ds \right)\right]}{\E^\q_{\hist_{t-}\sep \hist_0}  \left[\exp\left(-\int_0^t \lambda^*_\obs(s)ds \right)\right]}. \label{eq:5_censored_intensity_result}
\end{align}
In practice, this censored intensity can be approximated using Monte Carlo (MC) estimates for both the numerator and denominator.

It is worth reiterating that this estimator, which accounts for the censoring of marks $\cen$ at inference time, only requires a  trained MTPP along with samples from it. No further training, additional models, or specific architectures are required to properly deal with the censoring.

\paragraph{More Complex Censoring Regimes}
All of the derivations thus far have been focused on having a static set of marks $\cen$ being censored for an indefinite amount of time; however, there are many other types of censoring that can occur for a given MTPP. For example, the censoring could occur over a specific window of time for either some or all marks $\vocab$. This could occur, for instance, in settings where the connection is briefly lost to some or all sensors in a system. Furthermore, censoring could occur multiple times over different windows, and the marks being censored across each window need not be the same from censoring to censoring. See \cref{fig:5_example_censoring} for example censoring scenarios.

We can easily extend our previous results to cover the most general case allowing for censoring over arbitrarily many time windows and arbitrarily different censored marks. To do so, first we will define the censoring schedule. The observed and censored marks, $\obs$ and $\cen$, are no longer static and will potentially change over time. This will be represented via the set-valued functions $\obs(t), \cen(t) \subset \vocab$ for $t \geq 0$. This results in the proposal MTPP now being characterized by the intensity function $\mu_k^*(t)=\lambda^*_k(t)\ind(k\in\cen(t))$. Lastly, the resulting censored intensity estimate also accommodates this dynamic censoring:
\begin{align}
\underline{\lambda}^*_k(t) & = \frac{\E^\q_{\hist_{t-}\sep \hist_0} \left[\lambda^*_k(t)\exp\left(-\int_0^t \lambda^*_{\obs(s)}(s)ds \right)\right]}{\E^\q_{\hist_{t-}\sep \hist_0} \left[\exp\left(-\int_0^t \lambda^*_{\obs(s)}(s)ds \right)\right]}. 
\end{align}
This result is achieved effectively for free as the censored intensity $\underline{\lambda}_k^*(t)$ in the static setting is technically defined individually for any given moment in time $t$, making the swap from $\obs$ to $\obs(t)$ and $\cen$ to $\cen(t)$ for each $t$ well defined.  

\paragraph{More Complex Mark Spaces $\vocab$}
Our setting of interest has the marks being modeled come from some discrete, finite mark space $\vocab:=\{1,\dots,M\}$; however, that does not have to be the case. We can easily extend our method to apply for more complex mark spaces. Consider an arbitrary mark space $\vocab$ which could be finite, continuous, high-dimensional, etc. and let $\nu$ be a reference measure for $\vocab$ (e.g., the Lebesgue measure for $\vocab\equiv\mathbb{R}$). Assume we have a MTPP model with marked intensity function $\lambda^*(t, m)$ for $m \in \vocab$, and that under our framework we know the observed and censored portions of the mark space at any given time, $\obs(t) \subset \vocab$ and $\cen(t):=\vocab\setminus\obs(t)$ respectively. From this, the censored intensity defined in \cref{eq:5_censored_intensity_result} can be readily used by letting $\lambda_{\obs(t)}^*(t):=\int_{\obs(t)}\lambda^*(t,m)d\nu(m)$ which can either be computed analytically or estimated with Monte-Carlo samples. The proposal distribution stays the same as previously defined and samples from it can be achieved easily using either rejection sampling on top of the typical thinning procedure.

\paragraph{Bias and Variance Analysis of Censored Intensity Estimator}
In practice, the numerator and denominator of \cref{eq:5_censored_intensity_result} 
are estimated with Monte-Carlo samples, resulting in the following approximation:
\begin{align}
\underline{\lambda}^*_k(t) & \approx \frac{\frac{1}{M}\sum_{i=1}^M \lambda_k(t \sep \hist^{(i)}_t)\exp\left(-\int_0^t\sum_{k'\in\obs}\lambda_{k'}(s\sep \hist^{(i)}_s)ds\right)}{\frac{1}{M'}\sum_{j=1}^{M'} \exp\left(-\int_0^t\sum_{k'\in\obs}\lambda_{k'}(s\sep \hist^{(j)}_s)ds\right)}
\end{align}
where  $\hist^{(i)}_t, \hist^{(j)}_t \overset{\text{iid}}{\sim} \mathcal{L}^\q$ for $i=1,\dots,M$ and $j=1,\dots,M'$.\footnote{These are complete histories and the notation for indexing different samples should not be confused with the notation for mark-specific histories. To be explicit, $\hist^{(i)}_t \equiv \hist^{\vocab,(i)}_t$.} For simplicity, we typically set $M=M'$. This estimator is what is typically referred to as a ratio estimator, and while it is consistent unfortunately for finite samples it is biased. 

To see in what way this is biased, we will recast this form into a more general format. Consider random variables $X, \{X_i\}_{i=1}^M, \{X_j'\}^{M'}_{j=1} \overset{\text{iid}}{\sim} p_X$ with support $\mathcal{X}$ under measure $\prob$, and functions $f: \mathcal{X} \rightarrow \mathbb{R}^{+,0}$ and $g: \mathcal{X} \rightarrow [0,1]$. We assume the mean and variance of both $f(X)g(X)$ ($\mu_{fg}$ and $\sigma^2_{fg}$ respectively) and $g(X)$ ($\mu_g$ and $\sigma^2_g$) exist and that $\mu_{fg}>0$ and $\mu_g \in (0,1)$. This implies that the quantity of interest $\frac{\mu_{fg}}{\mu_g}:=\frac{\E[f(X)g(X)]}{\E[g(X)]}$ is well defined. We now can investigate the bias of a finite sample ratio estimator through a second-order Taylor series expansion around $\frac{\mu_{fg}}{\mu_g}$:
\begin{align}
\E^\prob& \left[\frac{\frac{1}{M}\sum_{i=1}^M f(X_i)g(X_i)}{\frac{1}{M'}\sum_{j=1}^{M'} g(X_j')}\right]  \approx \frac{\E^\prob\left[\frac{1}{M}\sum_{i=1}^M f(X_i)g(X_i)\right]}{\E^\prob\left[\frac{1}{M'}\sum_{j=1}^{M'} g(X_j')\right]} \nonumber\\
& \quad - \frac{\cov^\prob\left(\frac{1}{M}\sum_{i=1}^M f(X_i)g(X_i), \frac{1}{M'}\sum_{j=1}^{M'} g(X_j')\right)}{\E^\prob\left[\frac{1}{M'}\sum_{j=1}^{M'} g(X_j')\right]^2}\nonumber \\
& \quad + \frac{\var^\prob\left(\frac{1}{M'}\sum_{j=1}^{M'} g(X_j')\right)\E^\prob\left[\frac{1}{M}\sum_{i=1}^M f(X_i)g(X_i)\right]}{\E^\prob\left[\frac{1}{M'}\sum_{j=1}^{M'} g(X_j')\right]^3} \\
& = \frac{\mu_{fg}}{\mu_g} - \frac{\sum_{i=1}^M\sum_{j=1}^{M'}\cov^\prob\left(f(X_i)g(X_i), g(X_j')\right)}{MM'\mu_g^2} + \frac{\var^\prob\left(g(X)\right)\mu_{fg}}{M'\mu_g^3} \\
& = \frac{\mu_{fg}}{\mu_g} + \frac{\sigma^2_g\mu_{fg}}{M'\mu_g^3} \text{ since } X_i \perp X_j' 
\end{align}
Likewise, the variance of the ratio estimator can also be approximated with a second-order Taylor series expansion around $\frac{\mu_{fg}}{\mu_g}$:
\begin{align}
\var^\prob & \left(\frac{\frac{1}{M}\sum_{i=1}^M f(X_i)g(X_i)}{\frac{1}{M'}\sum_{j=1}^{M'} g(X_j')}\right) \approx \frac{\var^\prob\left(\frac{1}{M}\sum_{i=1}^M f(X_i)g(X_i)\right)}{\E^\prob\left[\frac{1}{M'}\sum_{j=1}^{M'} g(X_j')\right]^2} \nonumber\\
& \quad - \frac{2\cov^\prob\left(\frac{1}{M}\sum_{i=1}^M f(X_i)g(X_i), \frac{1}{M'}\sum_{j=1}^{M'} g(X_j')\right)\E^\prob\left[\frac{1}{M}\sum_{i=1}^M f(X_i)g(X_i)\right]}{\E^\prob\left[\frac{1}{M'}\sum_{j=1}^{M'} g(X_j')\right]^3} \nonumber\\
& \quad + \frac{\var^\prob\left(\frac{1}{M'}\sum_{j=1}^{M'} g(X_j')\right)\E^\prob\left[\frac{1}{M}\sum_{i=1}^M f(X_i)g(X_i)\right]^2}{\E^\prob\left[\frac{1}{M'}\sum_{j=1}^{M'} g(X_j')\right]^4} \\
& = \frac{\sigma^2_{fg}}{M\mu_g^2} - \frac{2\mu_{fg}\sum_{i=1}^M\sum_{j=1}^{M'} \cov^\prob\left(f(X_i)g(X_i), g(X_j')\right)}{MM'\mu_g^3} + \frac{\sigma_g^2\mu_{fg}^2}{M'\mu_g^4} \\
& = \frac{\sigma^2_{fg}}{M\mu_g^2} + \frac{\sigma_g^2\mu_{fg}^2}{M'\mu_g^4} \text{ since } X_i \perp X_j'.
\end{align}

It can be tempting to consider reusing samples for both the numerator and the denominator (i.e., $M=M'$ and $X_i=X_i'$ for $i=1,\dots,M$) as this would save in the amount of computations needed for computing the ratio estimate. This would result in the following expected value and variance of the estimator:
\begin{align}
\E^\prob&\left[\frac{\frac{1}{M}\sum_{i=1}^M f(X_i)g(X_i)}{\frac{1}{M}\sum_{j=1}^M g(X_j)}\right] \\
& \approx \frac{\mu_{fg}}{\mu_g} - \frac{\sum_{i=1}^M\sum_{j=1}^{M}\cov^\prob\left(f(X_i)g(X_i), g(X_j)\right)}{M^2\mu_g^2} + \frac{\sigma^2_g\mu_{fg}}{M\mu_g^3} \\
& = \frac{\mu_{fg}}{\mu_g} - \frac{\cov^\prob\left(f(X)g(X), g(X)\right)}{M\mu_g^2} + \frac{\sigma^2_g\mu_{fg}}{M\mu_g^3} \\
\var^\prob & \left(\frac{\frac{1}{M}\sum_{i=1}^M f(X_i)g(X_i)}{\frac{1}{M}\sum_{j=1}^{M} g(X_j)}\right) \\
& \approx  \frac{\sigma^2_{fg}}{M\mu_g^2} - \frac{2\mu_{fg}\sum_{i=1}^M\sum_{j=1}^{M} \cov^\prob\left(f(X_i)g(X_i), g(X_j)\right)}{M^2\mu_g^3} + \frac{\sigma_g^2\mu_{fg}^2}{M\mu_g^4}  \\
& = \frac{\sigma^2_{fg}}{M\mu_g^2} - \frac{2\mu_{fg} \cov^\prob\left(f(X)g(X), g(X)\right)}{M\mu_g^3} + \frac{\sigma_g^2\mu_{fg}^2}{M\mu_g^4}.
\end{align}

Either forms of the expected values of the estimators can be used to help us correct for the bias by simply moving all terms on the right that are not $\frac{\mu_{fg}}{\mu_g}$ to the left. Interestingly, we can see that there is potential for reusing samples to not only save on computation, but to also reduce the variance of the estimator. Should $\cov^\prob(f(X)g(X),g(X)) > 0$, which is often the case in practice, then the variance will be reduced.

\section{Experiments}\label{sec:5_experiments}
We investigate experimentally the impact that mark-censoring has on various MTPP models and the ability of our proposed marginalization method to handle such censoring relative to baseline. Our investigations are carried out across both classical parametric models and neural network-based models on both synthetic and real-world data, respectively. We find, as a whole, that in the presence of mark-censoring, the inference ability of a model (i.e., assigning likelihood to observed sequences) suffers significantly in comparison to properly accounting for the missing data via our method. Not surprisingly, we also find that our method yields larger improvements as the information being censored becomes more influential with respect to the information observed. 

We also investigate the effect that mark-censoring has on next event (time and mark) prediction. We observe in general systematic differences that our mark-censored model has on these predictions, with positive improvements in real-world settings. Lastly, we also perform a sensitivity analysis on the effect of both the number of sequences sampled as well as the resolution used in estimating integrals has on our method. We find that our method is typically fairly robust to these hyperparameters. More details and exact results for both of these experiments can be found in \cref{sec:5_exp_appendix}.

\paragraph{Censoring}
In each of the experiments, we analyze the performance of models using various sequences $\hist_\tau$ of differing lengths $\tau$.
For the synthetic setting, we utilize sequences that have been drawn from the given models. For the real-world data, we use held-out sequences from the dataset that a given model was trained on.

For every sequence being used, we filter out events according to a particular censoring scheme that is selected for each sequence individually to produce $\hist^\obs_\tau$. To ensure that the chosen censoring scheme is relevant for a given sequence, we randomly select a non-empty subset $\cen(t)$ of the unique marks that actually appear in $\hist_\tau$ for $t\in[0,T]$. The proportion of marks to censor, relative to the total number of unique marks in each sequence $\hist_\tau$, which we will refer to as $\gamma$, is varied based on the particular sequence for the experiment being conducted.

It is important to note that since information in $\hist_\tau$ is informing the censoring scheme $\cen$ that we technically no longer have data that is MCAR. As we will see, in spite of violating this assumption the mark-censored model still yields substantial performance gains.

\paragraph{Methods \& Metric of Interest}
For the main set of experiments, we primarily compared two approaches. Both rely on an existing MTPP and are used to calculate the likelihood of a given observed sequence $\hist^\obs_\tau$.\footnote{Previous works are not compared against in these experiments due to them largely having different goals and setups (such as learning from censored data during training time or imputing missing data), as well as typically not having a proper likelihood.}

The first approach is our proposed mark-censored model, using $\underline{\lambda}_k^*$ for $k \in \obs$. Since this is a well-defined MTPP, we can calculate the likelihood of $\hist^\obs_\tau$ using \cref{eq:4_likelihood} in conjunction with the censored intensity. Results for this method will be labeled as ``Censored.'' 
Synthetic and real-world experiments use 128 and 64 MC samples to estimate the censored intensity respectively;
both use 1024 integration points for numerically estimating integrals.

The second approach is a baseline method for comparison, based on a slight adaptation to the original model that takes advantage of knowing what marks are being 
censored for a given sequence. This method uses the original intensity $\lambda^*_k(t)$ for $k\in\obs$ and sets the intensity to be 0 when $k \in \cen$. Results for this method will be labeled as ``Baseline.'' In general, we expect the two methods to be comparable should $\prob(\hist^\cen_\tau=\emptyset\sep\hist^\obs_\tau) \approx 0$ as the two methods would produce similar intensity values.

We do not include results where we evaluate the likelihood of the observed sequence $\hist^\obs_\tau$ as if it were a fully observed, uncensored sequence under the original model. Since intensity values are always non-negative, likelihood values using this approach will \emph{never} be better than the baseline. Because of this, we only compare against the baseline as it effectively captures the original model's inference capabilities while still managing to leverage information about the mark-specific censoring scheme to some degree. Note also that none of the methods discussed earlier in Related Work are used as baselines since none are applicable to mark-censoring and model-agnostic. %

Results are reported as likelihood ratios between the censored method and the baseline method for individual observed sequences. These ratios directly quantify how much more likely the censored method perceives a sequence to be relative to the baseline. Values above 1 are evidence in favor of the censored method, and below 1 for the baseline. It should be noted that the sequences used in these experiments are censored over the entire observation window $[0,\tau]$.

\subsection{Experiments on Synthetic Data}

\begin{figure}
    \centering    \includegraphics[width=0.9\linewidth]{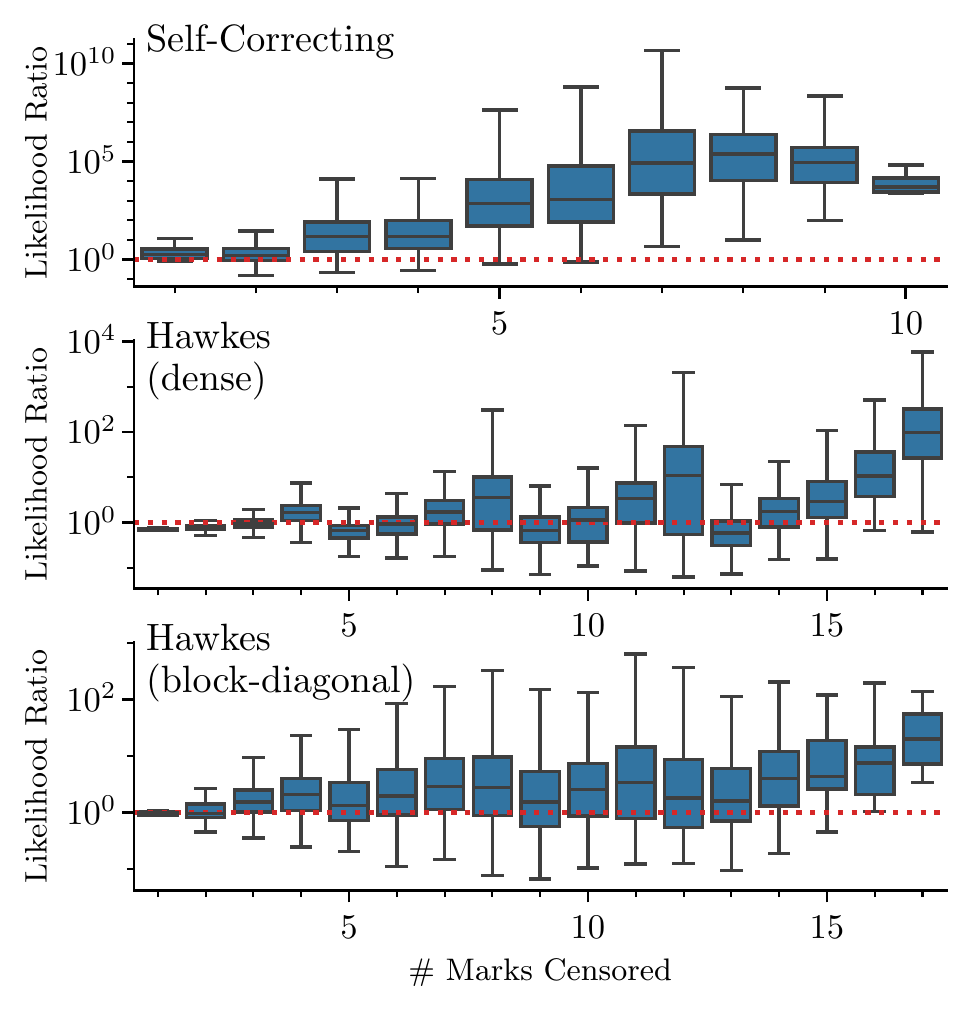}
    \caption{Distributions of likelihood ratios across number of marks censored for the duration of the sequences used for synthetic experiments with self-correcting, Hawkes (dense), and Hawkes (block-diagonal) models. Values greater than 1 indicate higher likelihoods under the mark-censored model.}
    \label{fig:5_synth_log_likelihood}
\end{figure}

\begin{figure}
    \centering    \includegraphics[width=0.9\linewidth]{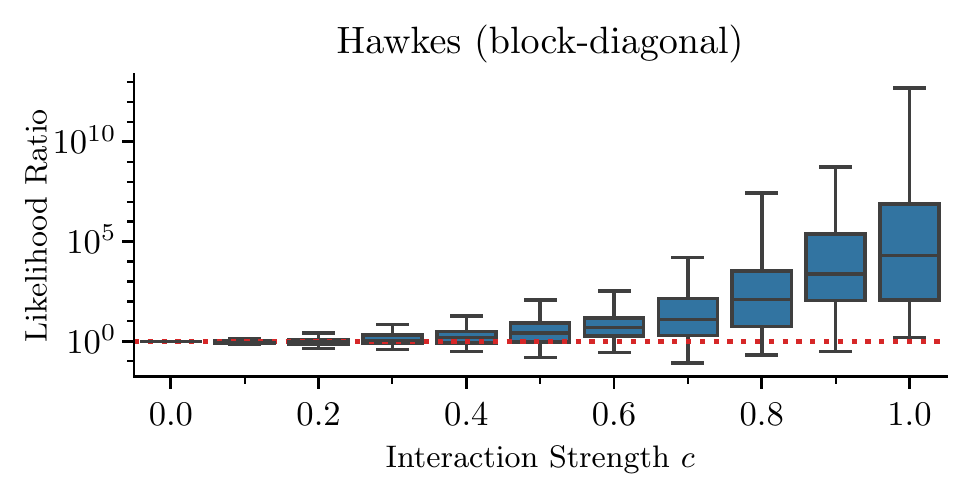}
    \caption{Distributions of likelihood ratios for a block-diagonal Hawkes model with varying interaction strengths applied to off-diagonal $\alpha$ terms. Values greater than 1 indicate higher likelihoods under the mark-censored model compared to the baseline.}
    \label{fig:5_synth_interaction}
\end{figure}

\begin{figure*}
    \centering
    \includegraphics[width=0.9\linewidth]{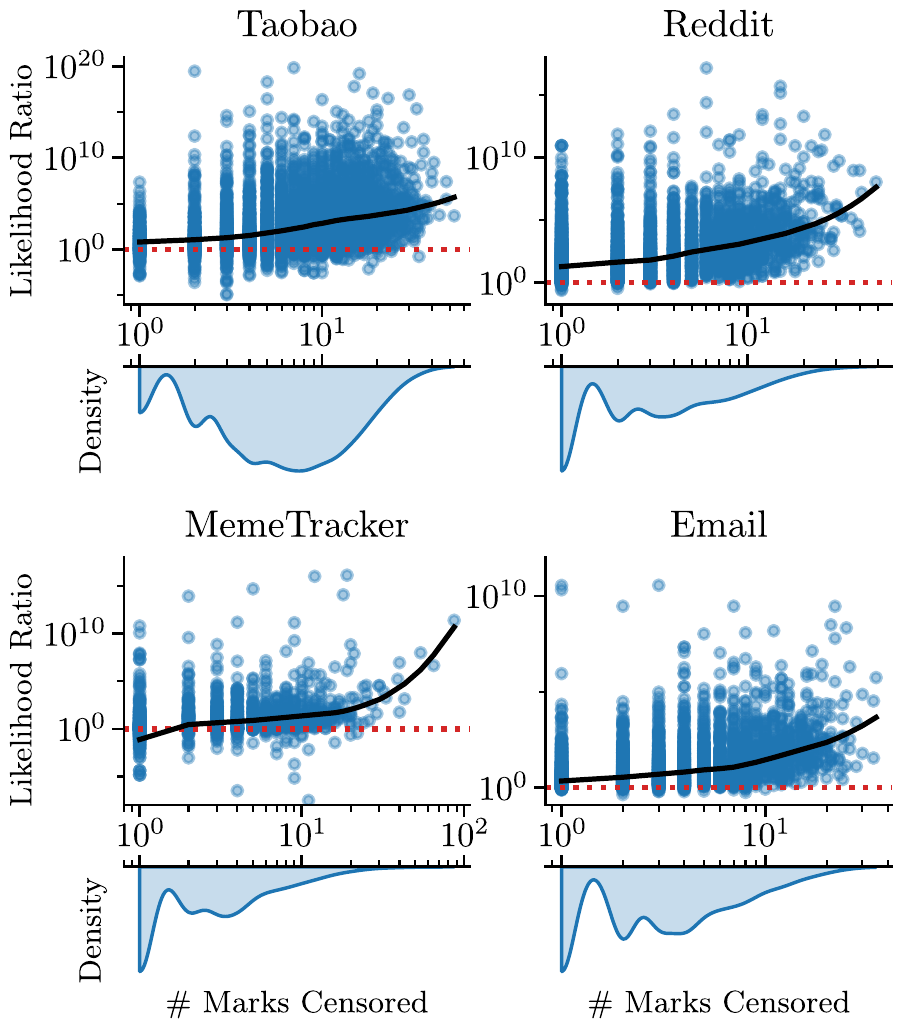}
    \caption{Same setup as \cref{fig:5_synth_log_likelihood} except with results produced on four real-world datasets with trained neural Hawkes models. Note that we display the results with respect to the absolute amount of marks censored rather than the percentage censored as we suspect this has a more direct impact on the likelihood ratios, especially when dealing with sequences that naturally have few unique marks compared to the total mark space $\vocab$\textemdash{}as is typical in real datasets.}
    \label{fig:5_log_likelihood}
\end{figure*}

\paragraph{Models}
We evaluate our method on randomly instantiated parametric MTPPs including Hawkes processes \citep{hawkes1971spectra} and self-correcting processes \citep{isham1979self} (also known as stress release model \citep{zheng1991application}), where the sampled sequences are evaluated on the same model.

For Hawkes processes with exponential kernels, the intensity has the form 
$\lambda_k^*(t) = \mu_k + \sum_{ T, M\in\hist_\tau} \phi_{ M,k}(t -  T)$. 
The kernel can be expressed as $\phi_{i,j}(z) = \alpha_{ij} \exp(-\beta_{ij} z)$, where parameters $\alpha_{ij}, \beta_{ij} > 0$ for $i,j\in\vocab$ specify the excitation effects and decay rates respectively that events of type $i$ have on events of type $j$. We consider two different instantiations of this type of model; both with 20 marks. We refer to the first type   as ``Hawkes (dense)'' with all parameters drawn from the following distributions: $\alpha_{ij}\stackrel{iid}{\sim}\text{Unif}[0.075, 0.2]$, $\beta_{ij}\stackrel{iid}{\sim}\text{Unif}[0.4, 1.2]$, and $\mu_k\stackrel{iid}{\sim} \text{Unif}[0.1,0.5]$. To better emulate realistic settings in which events correlate strongly with other events of certain types, we also consider a sparsely-parameterized version which we refer to as ``Hawkes (block-diagonal)'' \citep{wu2020diagnostics}. This model is instantiated by drawing $\alpha_{ij}\stackrel{iid}{\sim}\text{Unif}[0.3, 0.8]$ when $\lfloor \frac{i-1}{5} \rfloor = \lfloor \frac{j-1}{5} \rfloor$ and $\alpha_{ij}=0$ otherwise.\footnote{Note that different ranges of values were chosen for $\alpha$ between the dense and block-diagonal Hawkes models to normalize the effective rate of events overall. This is done by, in expectation, having the same values for $\sum_{i\in\vocab} \alpha_{ij}$.} This effectively imposes a block-diagonal structure on the matrix $\{\alpha_{ij}\}$, resulting in four subgroups of correlated marks.
Values for $\mu$ and $\beta$ are drawn similarly to the dense model. 

In contrast, self-correcting processes are characterized by a growing intensity that is dampened when an event occurs. This is formally defined by the intensity function $\lambda_k^*(t) = \exp\left(\eta_k t - \sum_{ T,  M \in \hist_\tau} \delta_{ M k}\right),$
 where $\delta_{ M k}>0$ determines the inhibition that past events of type $ M$ have on future events of type $k$. The model used for this class also has 20 marks and is instantiated by drawing weights $\delta_{ij} \stackrel{iid}{\sim} \text{Unif}[0.3, 0.8]$.
Values for $\eta$ are drawn similarly to $\mu$.

\paragraph{Results}
We evaluated the likelihood ratio of 1000 censored sequences on all three models with interaction strength fixed at 0.5 (a scalar that controls the interaction between events of different types) for each value $\gamma\in\{0.2, 0.4, 0.6, 0.8\}$. Each sequence prior to censoring was sampled from each model (self-correcting, Hawkes (dense), and Hawkes (block-diagonal)) over the time window $t\in[0,2]$ and contains at most 200 events. These results are shown in \cref{fig:5_synth_log_likelihood}, where the likelihood ratio of the censored method compared to the baseline is visualized with respect to the number of marks censored. 
We see a systematic improvement in the estimated likelihood when using the mark-censored model. Furthermore, the improvement increases as more information is censored; however, it is clear that the improvement depends on the relationship between events and the underlying model dynamics (i.e., the form of $\lambda$) as noted by the difference in results between models.

To further investigate this, %
for the block-diagonal Hawkes model we artificially modulate the interaction strength between events of different types. To do this, we performed the same likelihood ratio evaluation on 1000 sequences with $\gamma=0.5$ using the same block-diagonal Hawkes model but with $\alpha^\prime_{ij}:=
c\alpha_{ij}$ if  $i\neq j$ and
$\alpha_{ij}$ otherwise for each value of $c\in\{0.1, 0.2, \dots, 1.0\}$. 
This results in 10 different models that have the same diagonal values in $\boldsymbol{\alpha}$ but different scales of off-diagonal values. 
The results  in \cref{fig:5_synth_interaction}  clearly demonstrate that properly 
accommodating mark-censored sequences yields the biggest impact when there is high correlation between observed and censored events.

\subsection{Experiments on Real-World Data}

\paragraph{Models}
Many real-world data involve working with large vocabularies of possible marks, $|\vocab|=K$. Because of this, it can often be more parameter efficient to train a neural network based MTPP rather than a classical parametric one. The model architecture of choice for our experiments is the neural Hawkes process, a continuous-time RNN that takes inspiration from the parametric Hawkes process \citep{mei2017neural}. Details on model hyperparameters, optimizer, training regime, etc. can be found in \cref{sec:5_exp_appendix}.

\paragraph{Datasets}
We evaluate our censoring method on neural Hawkes models that have been trained individually on four different datasets.
The \textbf{Taobao} user behavior dataset \citep{zhu2018learning} contains page-viewing records of different categories ($K=1000$) from users on an e-commerce platform.
The \textbf{Reddit} dataset \citep{baumgartner2020pushshift} contains comments that users have made on various communities ($K=1000$) on the social media website \url{reddit.com}. 
\textbf{MemeTracker} \citep{leskovec2009meme} contains records of what websites ($K=5000$) a common phrase, or meme, was mentioned on over time.
Lastly, the \textbf{Email} \citep{paranjape2017motifs} dataset contains sequences of sender addresses of incoming emails ($K=808$) for each recipient within a research organization. More information on various aspects of these datasets and details of data preprocessing can be found in \cref{sec:5_exp_appendix}. The following results are achieved using models that have been sufficiently trained on their respective datasets.

\paragraph{Results}
We evaluated the likelihood ratio of 1000 held out, censored sequences for each dataset for each value $\gamma\in\{0.2, 0.4, 0.5, 0.6, 0.8\}$. The results are shown in \cref{fig:5_log_likelihood}. Similar to the results in the synthetic experiments, we see a systematic trend towards a large improvement in likelihood over censored sequences across the board. This improvement increases significantly as more marks are censored.

\section{Conclusion}\label{sec:5_conclusion}

In this chapter, we derived a tractable method for marginalizing over missing information with MTPPs which is the first to do so for generic intensity-based MTPPs. Empirical investigations were conducted to confirm the predictive abilities of MTPPs when adjusting for missing information.

The text of this chapter is based on the publication:
\begin{center}
\vspace{-1.5em} \textit{Inference for mark-censored temporal point processes} \citep{pmlr-v216-boyd23a}.
\end{center}

The author of this dissertation is the primary author for this publication, and was responsible for all theoretic contributions. Additionally, the author was responsible for implementing and executing all experiments that involved neural MTPPs. Yuxin Chang contributed the implementation and execution of the classical parametric-based MTPPs with oversight from the author. Overall conception of the project and some of the writing was shared among all authors.

\chapter{Generalized Hitting Times and their Approximated Distributions for Stochastic Jump Processes}

\noindent

Many real-world phenomena exhibit both continuous trends and sudden jumps. 
For example, stock prices often display gradual fluctuations with occasional spikes or dips due to economic events. Similarly, molecular simulations track particle movements and energy states over time with sudden changes exhibited during chemical reactions or phase transitions. 
Climate trends demonstrate comparable patterns, with gradual variations punctuated by natural disasters causing drastic and immediate shifts in measurement readings. 
Stochastic jump processes model these dynamics, combining continuous processes for the "between-jump" periods with marked temporal point processes for the jumps themselves. 
Within these various fields, understanding long-term dynamics is crucial for various forecasting and prediction tasks. 
Traditional hitting times provide valuable insights, but their limitations often oppose a more comprehensive analysis. 

This chapter introduces generalized hitting times, extending their scope and expressiveness for autoregressive models like stochastic jump processes.
Following prior work from earlier chapters on hitting times, we address their inability to capture diverse events like the $n^\text{th}$ passage time or exiting a region of interest. 
We propose novel estimators, including extensions of the importance sampling techniques from previous chapters, to answer various queries involving these generalized hitting times. 
These queries can even involve compositions of hitting times, allowing us to estimate, for example, the likelihood of specific regions being hit in a particular order. 
Notably, we achieve tractable estimation of joint distributions over multiple hitting times.

Building upon the efficient estimation methods developed in this dissertation, this chapter significantly expands the range of information accessible from autoregressive models. 
The framework and results presented here pave the way for deeper exploration of complex dynamics in various real-world applications.
Crucially, as with prior results, all proposed estimators only require access to the predicted next-step distributions and the ability to sample from the process itself, ensuring applicability without introducing additional learning or training burdens.

\section{Related Work}\label{sec:6_related_work}

\paragraph{Stochastic Jump Processes}
Due to their inherent expressivity and flexibility, stochastic jump processes have been found to be useful across a variety of diverse domains. By encompassing both continuous stochastic processes and marked temporal point processes as special cases, they offer a powerful framework for modeling phenomena exhibiting both smooth trends and abrupt jumps \citep{woyczynski2022diffusion}.

Some prominent examples include:
\begin{itemize}
\item \textbf{Finance:} Analyzing stock prices and options leverage jump processes to capture sudden fluctuations and market crashes \citep{kijima2002stochastic,rolski2009stochastic,platen2010numerical,zhu2021probabilistic}.
\item \textbf{Molecular Biology:} Simulating molecular and particle interactions utilize jump processes to model discrete events like chemical reactions and phase transitions \citep{goss1998quantitative,gillespie2007stochastic,szekely2014stochastic,ganguly2015jump}.
\item \textbf{Climate Science:} Predicting climate and weather patterns employs jump processes to represent sudden shifts due to extreme events like storms or heatwaves \citep{majda2008applied,tang2016semi,alinsato2023climate}.
\item \textbf{Network Science:} Forecasting network usage benefits from jump processes to model bursts and congestion \citep{267444,KLEMM2003149}.
\item \textbf{Transportation Engineering:} Modeling traffic and transportation flow often involves jump processes to represent accidents, lane changes, and other discrete events \citep{6682896,doi:10.1080/23249935.2013.769648}.
\item \textbf{Epidemiology:} Understanding epidemic outbreaks utilizes jump processes to capture the discrete nature of transmission events \citep{greenwood2009stochastic,dieu2020asymptotic,albani2024stochastic}.
\end{itemize}

\paragraph{Analysis of Stochastic Processes Hitting Times}
\textit{Hitting times}, also known as first passage times, are the random moments when a process enters a specified region for the first time. 
While some works focus on the moments of hitting times (e.g., \citet{weiss1983order,SCHAL1993131,aspandiiarov1996passage,gitterman2000,szpruch2010comparing}), many tackle the more difficult task of analyzing their distribution across time.

For specific classes of stochastic processes, analytical solutions or bounds exist (e.g., \citet{ricciardi1988,salminen1988first,rogers2000diffusions,byczkowski2013hitting,aurzada2016first,lefebvre2023first}).
However, for many other types of stochastic processes, analytical solutions or bounds are not available. For instance, hitting time densities in diffusion models can be expressed as Volterra integral equations, which lack guaranteed solutions \citep{jaimez1995,gutierrez1997,lipton2020first}.
In stochastic jump processes, analytic solutions are even rarer beyond exceptional cases like single and double exponential jump diffusions \citep{lefebvre2021exact,kou2003first}.

Similar to hitting times, \textit{exit times} capture the first departure from a previously entered region. Their analysis is even more complex due to dependence on the process history.\footnote{One can see this dependence by noting that a process cannot leave a region that it has not entered before.} 
This often necessitates additional assumptions for obtaining analytical or numerical results \citep{gartner1982location,yin2013exit,burch2014exit}.

Numerical approaches become necessary where analytics fail. Existing methods often impose restrictive assumptions: enforcing independence between continuous segments and jumps \citep{herrmann2023exact}, limiting the class of process considered \citep{metwally2002using,ross2010efficient,zhang2017efficiently}, assuming full processes are fully Markov \citep{pollock2013some}, or excluding jumps entirely \citep{sym12111907}.

To our knowledge, no general framework aside from naive Monte Carlo simulation exists for numerically estimating hitting time distributions in stochastic jump processes, nor for exit times (leaving a region after entering it) and other more complex random times. This chapter bridges this gap by proposing sampling-based estimation procedures applicable to a broad range of stochastic jump processes with minimal assumptions made.

\section{Preliminaries \& Setting of Interest}

\subsection{Stochastic Jump Process}

Let $(\Omega, \filter, \prob)$ be a probability space equipped with a complete filtration $(\filter_t)_{t\geq 0}$ such that $\filter_\infty := \filter$. Under this probability space, let $X:\Omega \times [0,\infty] \rightarrow \mathcal{X}$ be a predictable stochastic jump process adapted to $(\filter_t)_{t\geq 0}$ where $\mathcal{X}$ is the domain of the process (typically finite and real-valued, e.g., $\mathbb{R}^d$). The process itself will be referred to as either $X_t(\omega)$ or $X_t$ for $t \in [0,\infty]$ and $\omega\in\Omega$.

We focus on a specific class of stochastic jump processes. These processes exhibit continuous segments described by stochastic differential equations (SDEs) and discrete jumps governed by marked temporal point processes. 
Mathematically, the instantaneous changes are represented by:
\begin{align}
    dX_t := \mu(t, X_{t-})dt + \sigma(t, X_{t-})dW_t + \nu(t, X_{t-}, M_{N_t})dN_t.
\end{align}
Here, $\mu$ denotes the drift function, $\sigma$ the scaling function, and $\nu$ the jump function.
The process $X$ is driven by Brownian motion $W:=(W_t)_{t\geq0}$ with the same dimensionality as the state space $|\mathcal{X}|$. 
Additionally, it is influenced by the counting process $N:=(N_t)_{t\geq 0}$ which generates a mark $M_i\in\mathcal{M}$ at jump time $S_i$ for $i \in \mathbb{N}$.
The counting process corresponds to a marked temporal point process with a marked intensity process $\lambda(m) := (\lambda_t(m))_{t\geq0}$ defined for $m \in \mathcal{M}$ within mark-space $\mathcal{M}$. 
We posit that the intensity process, like $X$, is adapted to the filtration $(\mathcal{F}_t)_{t\geq 0}$.
The process $X$ as a whole can be similarly described via
\begin{align}
X_t = X_0 + \int_0^t \mu(s, X_{s-})dt + \int_0^t \sigma(s, X_{s-})dW_t + \sum_{i=1}^{N_t}\nu(S_i, X_{T_i-}, M_i) 
\end{align}
where $X_0$ is the initial value of the process which we assume to be constant.\footnote{In prior chapters, the random event times modeled by a MTPP were denoted as $T_i$ instead of $S_i$; however, we reserve $T_i$ for random times which are described in the next subsection.}

\subsection{Random Times}
A \textit{random time}, denoted by $T(\omega)$, is a random variable defined on the time domain $[0,\infty]$.
We often encounter random times associated with an existing stochastic process $X$, where $X_T$ carries specific meaning (e.g., the first time the process reaches a specific value).
If the condition defining the random time is never met for a given realization $\omega\in\Omega$ (e.g., the process never reaches the desired value), we set $T=\infty$. 
Importantly, $T$ is always measurable with respect to $\filter$.

\paragraph{Stopping Times}
\textit{Stopping times} are a specific type of random time.
They possess the unique property that, based solely on the process history up to time $t$, we can determine whether the stopping time has occurred (i.e., if the process has ``stopped'').
Mathematically, a random time $ T$ is a stopping time if and only if:
\begin{align}
\{\omega\in\Omega \sep  T(\omega) \leq t\} \text{ is } \filter_t \text{ measurable for all } t \geq 0. 
\end{align}
This implies that knowing the process behavior from time $0$ to time $t$ allows us to determine if the stopping time $T$ falls within the interval $[0, t]$ or not.

\paragraph{Hitting Times}
\textit{Hitting times}, also known as \textit{first passage times}, form a  subclass of stopping times.
They represent the specific times at which a process first reaches a particular state or value.
Formally, for a measurable set $A \subset \mathcal{X}$, the hitting time of $A$ is defined as:
\begin{align}
 T_A(\omega) & = \inf \{t \sep X_t(\omega) \in A\}. \label{eq:hitting_time}
\end{align}
While the stochastic process literature typically defines hitting times as random variables adapted to the same filtration as the underlying process, it is important to note that they can be equivalently considered as functionals of the stochastic process: $T_A = T_A[(X_t)_{t \geq 0}] = \inf \{t \sep X_t \in A\}$.

Some transformations of hitting times are themselves hitting times. For instance, the minimum of countably many hitting times is itself a hitting time:
\begin{align}
\min_i\{ T_{A_i}(\omega)\} & = \inf \{t \sep X_t(\omega) \in A_1 \text{ or } X_t(\omega) \in A_2 \text{ or } ... \} \\
& = \inf \{t \sep X_t(\omega) \in \cup_i A_i \} \\
& = \inf \{t \sep X_t(\omega) \in A\} \\
& =  T_A(\omega)
\end{align}
for $A:=\cup_i A_i$.

\paragraph{Generalized Hitting Times}
While hitting times capture important information for a process, they are often a bit limiting in what events they can capture. That being said, there are many other random times of interest where after the process satisfies a sufficient number of conditions the random times resemble a hitting time in how it is realized.
For instance, consider the time $T$ defined to be the maximum of two hitting times for regions $A$ and $B$ where $A \cap B = \emptyset$. 
This random time captures the time at which the process has visited both regions $A$ and $B$ at least once.
This clearly is not a hitting time; however, after the process has first visited one of the regions, say $A$, then the actions the process need to take to realize $T$ are identical to the actions needed for a typical hitting time of region $B$. 
Put differently and in a slightly more general way, $\max \{T_A, T_B\}$ is a hitting time after conditioning on $\filter_{\min \{T_A, T_B\}}$ with a hitting region of either $A$ or $B$ depending on which has not been visited yet.

This example random time and many others are formalized under a new class of random times we call \textit{generalized hitting times}, which exists as a subclass of stopping times and a superclass of hitting times. 
Taking inspiration from the original definition of the hitting time in \cref{eq:hitting_time}, consider a random time $T$ defined as
\begin{align}
 T(\omega) = \inf \{t \sep X_t(\omega) \in A_t(\omega)\}
\end{align}
where $A:\Omega\times[0,\infty]\rightarrow\sigma(\mathcal{X})$ (or put differently, for $t\in[0,\infty]$ and $\omega\in\Omega$, $A_t(\omega)$ is a measurable subset of $\mathcal{X}$). $A$ can be thought of as a set-valued stochastic process. 

We say that $ T$ is a \textit{generalized hitting time} if and only if $A$ satisfies the following properties:
\begin{enumerate}
\item $A_t(\omega) \subset A_s(\omega)$ for $t \leq s$ with the exception of  and $A_t(\omega) = \emptyset$ for $ T(\omega) < t$.
\item There exist a collection of $K$ random times, $ T_1, \dots,  T_K$, such that:
\begin{enumerate}
\item $ T_k(\omega) <  T(\omega)$ for all $k=1,\dots,K$ and $\omega\in\Omega$.
\item Each $ T_k$ for $k=1,\dots,K$ is either a hitting time or another generalized hitting time.
\item Let $ T_{(k)}(\omega)$ be the $k^\text{th}$ largest time of the collection for a given $\omega\in\Omega$. It is required that $A_t(\omega) \equiv A_{ T_k(\omega)}(\omega)$ for $\omega\in\Omega$ and $ T_{(k)}(\omega) \leq t <  T_{(k+1)}(\omega)$ where $ T_{(K+1)}:=\infty$.
\end{enumerate}
\item Lastly, $A_t(\omega) \neq \emptyset$ for $ T_{(K)}(\omega) \leq t \leq  T(\omega)$ and $\omega \in \Omega$.
\end{enumerate}
More intuitively, a generalized hitting time $T$ can be thought of as a random time for a process where after reaching some other predictable time $T'$, specifically $T_{(K)}$ from 2(c) above, then the conditional $T \sep \filter_{T'}$ becomes a standard hitting time. Put even more plainly, after $t=T'$, the condition that $T$ needs to realize involves the process reaching some hitting region $A_t$.

\cref{table:example_times} illustrates examples of random times that correspond to each of these different classes. Most often, we consider the simpler case of $A_t=\emptyset$ for $t <  T_{(K)}$; however, certain compositions of generalized hitting times are also generalized hitting times, such as the minimum of multiple generalized hitting times.

\begin{table}
\centering
\begin{tabular}{l c c c c} 
 \hline
 Example & Random Time & Stopping Time & Gen. Hitting Time & Hitting Time \\ 
 \hline\hline
$ T=\argmax_t X_t$ & \cmark & \xmark & \xmark & \xmark \\
$ T= T_A+c$ & \cmark & \cmark & \xmark & \xmark \\
$ T= T_A$ & \cmark & \cmark & \cmark & \cmark \\
$ T= T_A-c$ & \cmark & \xmark & \xmark & \xmark \\
$ T=\min_i  T_{A_i}$ & \cmark & \cmark & \cmark & \cmark \\
$ T=\max_i  T_{A_i}$ & \cmark & \cmark & \cmark & \xmark \\
$ T= T_A; T_{A'}< T_A$ & \cmark & \cmark & \cmark & \xmark \\
$ T= T_A; T_{A}<T_{A'}$ & \cmark & \cmark & \cmark & \xmark \\
 \hline
\end{tabular}
\caption{Examples of random times $ T$ and their classifications, with $c>0$, $A,$ $A',$ $A_1,$ $A_2,\dots \subset \mathcal{X}$, and $ T_A$ being the time of reaching state $A$ for the first time by $X$. The ``$;(\cdot)$'' notation implies that unless the $(\cdot)$ is satisfied then $ T(\omega)=\infty$. For instance, the random time $T= T_A; T_{A'}<T_A$ is concerned with the time the process reaches $A$ for the first time so long as it has reached $A'$ prior. Should $A$ be reached before $A'$, then $T=\infty$. Likewise, $T=T_A;T_{A}<T_{A'}$ covers the other scenario of the process reaching $A$ before $A'$ first. It should be noted the minimum of these two random times is simply $T_A$.}
\label{table:example_times}
\end{table}

\paragraph{Additional Extensions} Sometimes there are random times that by themselves cannot be considered generalized hitting times, but they can be converted to one by transforming the process that they correspond to. For example, consider the stopping time $ T(\omega) = \inf \{ t \sep X_t(\omega) \in [t, \infty] \}$.
While this could be considered as a first passage time with a moving boundary, it does not strictly fall under the definition given above in \cref{eq:hitting_time} as the hitting state $A$ is constantly changing. It can, however, be straightforwardly converted to a hitting time (under definition in \cref{eq:hitting_time}) by transforming the associated process:
\begin{align}
 T(\omega) = \inf \{ t \sep X_t(\omega) \in [t, \infty] \} \equiv \inf \{ t \sep X_t(\omega) - t \in [0, \infty]\}.
\end{align}
Similar transformations can be carried out when the random time itself is being shifted:
\begin{align}
 T(\omega) & = \inf \{t \sep X_t(\omega) \in A\} + c  \text{ for some } c \in \mathbb{R} \\
& = \inf \{t + c \sep X_t(\omega) \in A \} \\
& = \inf \{t \sep X_{t-c}(\omega) \in A \}.
\end{align}
In this example, $T$ as defined with $(X_t)_{t\geq 0}$ and $A$ is not a hitting time, but it is equivalent to the hitting time of $A$ with process $(X_{t-c})_{t \geq c}$. Both of these types of transformations hold for both hitting times and generalized hitting times.
\\\\
For all generalized hitting times $T$ considered in this chapter, we assume there are no shifts to the time itself (meaning $c=0$ as used in the example previously) and that the associated set-valued process $A:=(A_t)_{t\geq 0}$ either directly satisfies the previous three conditions outlined or it does so after applying a simple transformation $Y_t:=X_t + f(t)$ with respect to $t$.

\subsection{Objective}
Our goal henceforth is given an arbitrary stochastic jump process $X$ and generalized hitting time $T$, find a pair of stochastic processes $F:=(F_t)_{t\geq 0}$ and $f:=(f_t)_{t\geq 0}$ that affords us unbiased and efficient estimates of the distribution of $T$ via Monte Carlo samples:
\begin{align}
\prob( T \leq t)& = \E^\prob\left[F_t\right] \approx \frac{1}{N} \sum_{i=1}^N F_t^{i} \\
\text{ and } \prob( T \in [t, t+dt)) & = \E^\prob\left[f_t\right]dt  \approx \frac{1}{N} \sum_{i=1}^N f_t^{i}dt
\end{align}
where $F$ and $f$ are adapted to the natural filtration of our process of interest $X$. 
We denote realized samples of these processes as $F_{t}^{i} \overset{\text{iid}}{\sim} \prob(F_{t})$ and $f_t^i \overset{\text{iid}}{\sim} \prob(f_t)$ for $i=1,\dots,N$. 
Given the connection to the cumulative distribution function (CDF) and probability distribution function (PDF) of $ T$, $F$ and $f$ will be referred to as the CDF and PDF processes henceforth.
This connection affords us the perspective that $f$ can be considered the partial derivative of $F$ with respect to time $t$ (i.e., $\frac{\partial F_t(\omega)}{\partial t} = f_t(\omega)$). 
It should be noted that the main goal is to find an appropriate CDF process $F$ as the corresponding PDF process $f$ may not always be well defined, e.g., see \cref{sec:6_naive_estimators}. 
These processes will sometimes be referred to as estimator processes due to their finite sample average being an unbiased estimate of the probability statements of interest, $\prob(T \leq t)$ and $\prob(T \in [t, t+dt))$.

\paragraph{Estimator Efficiency}
Naturally, given enough time and resources any unbiased estimator can reach arbitrary precision due to the law of large numbers; however, there can be a vast disparity in the rate at which different estimators converge. 
For our purposes, we will focus on estimators that converge more quickly and will quantify this comparison between estimators via \emph{relative efficiency}. 
Since we are primarily considering unbiased estimators for $\prob( T \leq t)$, say between CDF processes $F$ and $\underline{F}$, the relative efficiency of the finite sample estimates is the ratio of variances of the processes:
\begin{align}
\text{eff}(F_t, \underline{F}_t) & := \frac{\var^\prob\left(\frac{1}{N}\sum_{i=1}^N \underline{F}_t^i\right)}{\var^\prob\left(\frac{1}{N}\sum_{i=1}^N F_t^i\right)} \\
& = \frac{\var^\prob\left(\underline{F}_t\right)}{\var^\prob\left( F_t\right)}.
\end{align}
If $\text{eff}(F_t, \underline{F}_t) > 1$, the estimator $F_t$ is more efficient (i.e., converges more quickly) than $\underline{F}_t$. 
Note that it need not be the case that the efficiency stays the same value, or even consistently above or below 1, across different values of $t$.

\paragraph{Outline}
First, we investigate simple empirical estimators utilizing indicator functions for these distributions in \cref{sec:6_naive_estimators} and discuss why these are not optimal. 

Next, these indicator-based estimators are extended using the tower rule in conjunction with the fact that stochastic jump processes possess an intensity function in \cref{sec:6_tower_rule_estimators}. The intensity function encodes the infinitesimal likelihood of the process jumping to other values, which when accumulated over time for jumps that lead the process to the hitting region yields a potentially lower variance estimator.

Unfortunately, while this extension has the potential to be more efficient it also can be worse due to being unbounded. To remedy this, we further apply importance sampling on this estimator in \cref{sec:6_imp_sample_estimators} to accumulate the hitting probability over time by actively preventing the process from reaching the hitting region of interest in the first place. In doing so, the importance weights contain the probability of these restricted outcomes and ensure that the estimator remains both unbiased and bounded, leading to improved estimator variance.

Restricting specific outcomes of the process at hand is also advantageous for other scenarios, namely for estimating distributions of different compositions of generalized hitting times. Extensions to the importance sampling estimator are proposed in \cref{sec:6_comp_ght} for both orderings of hitting times and joint distributions.

All of the estimators proposed in this chapter are explored empirically in \cref{sec:6_experiments} for various use cases. The results in this section illustrate in general that for estimator efficiency, the proposed importance sampling estimators are significantly more efficient than the simple indicator-based estimators and more reliable than the tower rule-derived unbounded estimators.

\section{Naive Empirical Estimators}\label{sec:6_naive_estimators}

To the best of our knowledge, computing $\prob( T \leq t)$ for most stochastic processes $X$ and some stopping time $ T$ is typically done either one of two ways. The first is to leverage the specific forms of $ T$ and $X$ in order to analytically determine the distribution of $ T$. This often only works for a narrow class of processes and times and needs to be considered on a case-by-case basis. The second approach is to use what we will refer to as \emph{naive empirical estimators}, which relies on the empirical cumulative distribution function:
\begin{align}
F^\text{NE}_t(\omega) := \ind( T(\omega) \leq t)
\end{align}
for $\omega \in \Omega$ where $\ind$ is the indicator function returning 1 when the argument is true and 0 when false. Note that to evaluate $F^\text{NE}_t$ for a given sample $\omega$, we do not need to know the exact value of $ T(\omega)$ but rather just if by time $t$ the process has stopped or not (i.e., if $ T \leq t$ or $ T > t$). Since $ T$ is a stopping time, by definition this can be determined purely via $(X_{s}(\omega))_{0\leq s\leq t}$. Because of this, we can often more explicitly write what the naive empirical estimator is as a transformation of $X$. Take for example having $ T$ represent the hitting time of reaching the state $A \subset \mathcal{X}$ for the first time. In this scenario, the exact form of the estimator would be
\begin{align}
F^\text{NE}_t := \ind(\exists_{s \in [0,t]} X_s \in A).
\end{align}
In the naive setting, there is not a well-defined corresponding PDF process $f^\text{NE}$\textemdash{}at least not one that is useful for working with empirically. Mathematically, the corresponding $f^\text{NE}$ would be defined as a Dirac-delta process $\delta( T(\omega) - t)$ which for finite samples will return 0 almost surely for any predefined time $t$. 

\paragraph{Estimator Variance}
As mentioned previously, estimator efficiency is the primary means for evaluating different estimators. Since $F^\text{NE}_t$ is unbiased with respect to $\prob( T \leq t)$, we only need to quantify the variance of the process to compare it to other estimators in later sections. We can easily determine the variance of the naive estimator CDF process:
\begin{align}
\var^\prob\left(F^\text{NE}_t\right)  & = \var^\prob(\ind( T \leq t)) \\
& = \prob( T \leq t)-\prob( T \leq t)^2
\end{align}
with the last statement being true due to $F_t^\text{NE}$ being binary-valued and unbiased, i.e., $F_t^\text{NE} \sim \text{Bern}(\prob( T \leq t))$. The variance of the process reaches a maximum value of 0.25 when $\prob( T \leq t)=0.5$ and is minimized to a value of 0 when $\prob( T \leq t) \in \{0, 1\}$. 
We can see that the variance of the estimate itself (with \emph{any} estimator, not just the naive one) scales $\mathcal{O}\left(N^{-1}\right)$:
\begin{align}
\var^\prob\left(\frac{1}{N}\sum_{i=1}^N F_t^i\right) & = \frac{1}{N^2}\sum_{i=1}^N \var^\prob\left(F_t^i\right) \\
& = \frac{1}{N}\var^\prob\left(F_t\right).
\end{align}

\section{Tower Rule Estimators}\label{sec:6_tower_rule_estimators}

The naive estimator CDF process $F^\text{NE}$ is our starting point in this work. While simple, it is an important foundation to build from. This is due to the fact that it produces unbiased estimates of $\prob(T \leq t)$. Through careful manipulation and further derivations of this process we can remain unbiased yet construct estimators with potentially lower variances.

Our first step in order to construct low-variance estimators lies in recognizing that the process $X$ can jump at random times offers a unique characteristic not present in typical continuous stochastic processes. Namely, with the presence of jumps it is no longer guaranteed that the left and right limits of a process will be equivalent, either marginally or conditionally:
\begin{align}
\prob(X_{t-} \neq X_{t}) > 0 \text{ and } \prob(X_{t-}\neq X_{t} \sep \filter_{t-}) > 0 \text{ for all } t > 0 \text{ such that } \lambda_t > 0.
\end{align}
Recall that in our model, when a jump occurs at some time $t$ there is an associated mark $m$ that accompanies this event. Furthermore, we define the resulting change in the process $X_t$ at a jump time via $\Delta X_{t} := \nu(t, X_{t-}, m)$. 
Also recall that for the generalized hitting time $T$ there is a time- and history-dependent hitting region $A_t$ that once reached by the process $X$ at time $t$ indicates $T=t$.
Correspondingly, for every time $t$, we can define a random subset of the mark-space $B_t(\omega)\subseteq \mathcal{M}$ that all elements in it $m\in B_t$ would lead to $X_{t-}$ immediately realizing the hitting time $T$ via $X_{t-}+\nu(t, X_{t-}, m) \in A_t$. 
We denote the total intensity of all jumps at time $t$ that lead to this realization via
\begin{align}
\lambda^T_t(\omega) & := \lambda_t(\omega, B_t(\omega)) \equiv \int_{B_t(\omega)} \lambda_t(\omega, m) dm \text{ for } \omega \in \Omega 
\end{align}
Since $T$ is a predictable random time, this quantity can be interpreted as a valid intensity for $T$'s associated counting process, $N^T_t:=\ind(T\leq t)$, and as such holds the following equivalence:
\begin{align}
\lambda^ T_t dt \equiv \prob( T \in [t, t+dt) \sep \filter_{t-}).
\end{align}
From here, we have all the tools we need to extend our previous estimator; however, we first start by deriving a tractable \emph{density} estimator first:
\begin{align}
\prob( T \in [t, t+dt)) &= \E^\prob\left[\ind( T \in [t, t+dt))\right] \\
& = \E^\prob\left[\ind( T \in [t, t+dt))\ind( T \notin [0, t))\right] \\
& = \E^\prob\left[\E^\prob\left[\ind( T \in [t, t+dt)) \sep \filter_{t-}\right]\ind( T \notin [0, t))\right] \text{ by Tower Rule}\\
& = \E^\prob\left[\prob( T \in [t, t+dt) \sep \filter_{t-})\ind( T \notin [0, t))\right]\\
& = \E^\prob\left[\lambda_t^ T \ind( T \notin [0, t))\right]dt \\
\implies f^\text{TR}_t(\omega)dt & := \lambda_t^ T(\omega) \ind( T(\omega) \notin [0, t))dt \text{ for } \omega \in \Omega.
\end{align}
Due to the use of the tower rule in the derivation, we will refer to this as the tower rule estimator process \citep{casella2021statistical}.
Note that technically, the indicator in this estimator process $f^\text{TR}$ is unnecessary as the hitting time intensity process $\lambda^ T_t$ is equal to 0 for $t >  T$. That being said, we still include the indicator to emphasize this fact. 

From here, we can derive the corresponding tower rule cumulative distribution estimator process:
\begin{align}
\prob( T \leq t) & = \int_0^t \prob( T \in [s, s+ds)) \\
& = \int_0^t \E^\prob\left[f^\text{TR}_s\right]ds \\
& = \int_0^t \E^\prob\left[\lambda_s^ T \ind( T \notin [0,s))\right]ds \\
& = \E^\prob\left[\int_0^t \lambda_s^ T \ind( T \notin [0,s))ds \right] \text{ by Fubini's Theorem}\\
& = \E^\prob\left[\int_0^{t\wedge  T} \lambda_s^ T ds \right] \\
\implies F^\text{TR}_t(\omega) & := \int_0^{t\wedge  T(\omega)} \lambda_s^ T(\omega) ds \text{ for } \omega \in \Omega
\end{align}
where $t \wedge  T$ is the minimum of $t$ and $ T$. The tower rule cumulative distribution estimator process $F^\text{TR}_t$ can be described as the stopped compensator for the generalized hitting time (i.e., $F^\text{TR}_t(\omega) \equiv \Lambda^ T_{t \wedge  T(\omega)}(\omega)$).\footnote{The compensator of a temporal point process is the cumulative intensity: $\Lambda_t(\omega) := \int_0^t \lambda_s(\omega) ds$ \citep{daley2003introduction}. For more details on this, please refer back to Chapter 2.} Put more plainly, prior to the process $X$ entering the hitting region $A_t$ (for $t<T$), the tower rule estimator process accumulates all of the infinitesimal likelihoods ($\lambda^T_t$) of opportunities that the process had to jump into the hitting region. When these estimator processes are averaged over many trajectories, they yeild an unbiased estimate of the true underlying CDF of the generalized hitting time $T$.

\paragraph{Estimator Variance}
As the cumulative distribution $\prob( T \leq t)$ is our primary object of interest, we will limit our analysis to just the associated estimator and not the estimator for the density. 

It should be noted that the form of the tower rule estimator was defined in a way to be the most clear in terms of its properties; however, it is perfectly acceptable to also write it in the following form:
\begin{align}
F_t^\text{TR}(\omega) := \int_0^t \lambda^ T_s(\omega)ds \equiv \Lambda_t^ T(\omega) \text{ for } \omega \in \Omega.
\end{align}
This is true due to the fact that $\lambda^ T_t=0$ for $t >  T$ and that the generalized hitting time is assumed to only be able to occur once (these two facts are actually one and the same). It then follows that
\begin{align}
\var_\prob\left(F_t^\text{TR}\right) & = \var_\prob\left(\Lambda^ T_t\right) \\
& = \E^\prob\left[(\Lambda^ T_t)^2\right] - \E^\prob\left[\Lambda^ T_t\right]^2 \\
& = \E^\prob\left[(\Lambda^ T_t)^2\right] - \E^\prob\left[N^ T_t\right]^2 \text{ by properties of compensators} \\
& = \E^\prob\left[(\Lambda^ T_t)^2\right] - \E^\prob\left[\ind( T \leq t)\right]^2 \\
& = \E^\prob\left[(\Lambda^ T_t)^2\right] - \prob( T \leq t)^2,
\end{align}
where $N^T_t$ is the associated counting process for the generalized hitting time $T$. Comparing this to the variance of the naive estimator $F^\text{NE}_t$, we can see that they differ solely in the first term: $\E^\prob\left[(\Lambda^ T_t)^2\right]$ versus $\prob( T \leq t)$. Depending on the model and hitting time, either term could be dominant; however, it is worth noting that the second moment for the hitting time compensator is unbounded and thus has the potential to result in a significantly worse variance than the naive estimator, i.e., in practice the relative efficiency of the tower estimator to the naive estimator may be less than 1, which is undesirable. Luckily, both estimators utilize the same sampling distribution, so we can efficiently compute both with the same samples and select which one to use post-hoc based on estimates of the sample variance. 

\section{Importance Sampling Estimators}\label{sec:6_imp_sample_estimators}
As mentioned, the potential gains in efficiency when using the tower rule estimator over the naive estimator are dampened due to the former being unbounded compared to the latter being bounded between 0 and 1. Because of this, it is clear that there are samples that are a detriment to the resulting estimator variance, and specifically they are the samples that result in values $F_t^\text{TR} > 1$. This can occur for realizations in which the hitting time was likely to occur earlier (i.e., higher intensity values $\lambda_t^T$) but did not due to random chance.
\\\\
One potential way to eliminate this effect would be to apply importance sampling and limit the range of behaviors the process can exhibit when sampling. In doing so, we can potentially rein in these detrimental occurrences while possibly also achieving a bounded estimator. One might think that a good approach to achieve this should be to enforce that $ T < t$ during sampling; however, this has a number of drawbacks:
\begin{enumerate}
\item Properly enforcing this requires marginalizing over future events, which would lead to inefficient sampling. Put differently, we could no longer simply sample future trajectories autoregressively.
\item Any samples drawn would be done so with respect to a specific point in time $t$, which means that getting a sense of the distribution over a set of values $t \in [a, b]$ would require many independent estimations.
\end{enumerate}
Counterintuitively, we can still employ importance sampling with our original plan in mind and avoid both of these issues\textemdash{}the trick is to \emph{prevent} the hitting time from occurring instead of enforcing it to (in a manner similar to what we did in Chapters 3 and 4 for other processes earlier in this thesis). Preventing an action can be done through immediate intervention, not requiring information of future events. Furthermore, sampling a sequence with an event prevented over times $[0, a]$ is a prerequisite to sampling over $[a, b]$ so the samples can be reused over multiple different values of $t$ for $\prob( T \leq t)$, leading to potentially significant gains in computational efficiency.
\\\\
Let $\q$ be a measure that is a part of a probability space $(\Omega, \filter, \q)$ and is equipped with a complete filtration $\mathbb{F} := (\filter_t)
_{t\geq 0}$. Under this measure exists another process which we will refer to as $\underline{X}:=(\underline{X}_t)_{t\geq0}$ and is defined to belong to the same class of processes as $X$ does under $\prob$:
\begin{align}
\underline{X}_t := \underline{X}_0 + \int_0^t \underline{\mu}(t, \underline{X}_{t-})dt + \int_0^t \underline{\sigma}(t, \underline{X}_{t-})dW_t + \sum_{i=1}^{{N}_t} \underline{\nu}(\underline{S}_i, \underline{X}_{\underline{t}_i-}, \underline{M}_i)
\end{align}
where $\underline{S}_i$ and $\underline{M}_i$ are times and marks generated by an associated marked counting process $\underline{N}_t$ characterized by intensity $\underline{\lambda}_t$. Let $T$ be a generalized hitting time with the same restrictions for $\underline{X}$ under $\q$ as it is for $X$ under $\prob$. Our goal is to appropriately define $\underline{\mu}, \underline{\sigma}, \underline{\nu},$ and $\underline{\lambda}$ such that
\begin{align}
\prob( T \leq t) & = 1 - \E^\prob\left[\ind( T > t)\right] \label{eq:naive_neg_estimator}\\
& = 1 - \E^\q\left[L_t \ind(T > t)\right] \text{ where } \E^\prob\left[L_t\right]=1\label{eq:gen_imp_sampling}\\
& \approx 1-\frac{1}{n} \sum_{i=1}^n  L_t^{(i)} \ind(T^{(i)} > t) \text{ where } (L_t^{(i)}, T^{(i)}) \overset{iid}{\sim} \q
\end{align}
and ideally that $L_t \ind(T > t)$ has low variance under $\q$. $L_t$ is referred to as the likelihood ratio process and can be thought of as $\frac{d\prob}{d\q}((\underline{X}_s)_{0\leq s \leq t})\equiv L_t$. Note that we are now working with the complement of $\prob( T \leq t)$ as we are aiming to enforce that $T$ occurs less often (ideally never almost surely) and thus we prevent some scenarios where the expectant would be equal to 0. Additionally, \cref{eq:gen_imp_sampling} relies implicitly on the fact that there exists a well-defined likelihood ratio $\frac{d\prob}{d\q}(\underline{X})$ to allow for the change of measures in the first place.
\\\\
When our process $X$ progresses forward in time, the hitting condition of the generalized hitting time can be triggered in one of two ways\textemdash{}the process could either (i) jump directly into the hitting region or (ii) it could continuously drift into the region. The next two subsections will focus on designing a valid proposal measure $\q$ such that neither situation is possible. Preliminary experiments indicate that choosing $\underline{\mu}$ and $\underline{\sigma}$ to be something other than the original $\mu$ and $\sigma$ yield an \textit{increase} in estimator variance. We suspect this is due to the underlying distribution of $T$ being shifted too much from the original under $\prob$, leading to poor importance samples. As such, for this work we choose to leave these functions untouched, relegating this direction for future work, and instead focus solely on altering the jump distributions for the process at hand.

\subsection{Altering Jumps}\label{sec:jump_is}
When altering how the model can jump in a trajectory, determining the forms of $\underline{\nu}$ and $\underline{\lambda}$ are immediately pertinent. Our primary goal is to prevent $\underline{X}$ from triggering the hitting condition via jumping. Because of this, we do not need to change the effect that the jump time $\underline{S}_i$ and marks $\underline{M}_i$ have on the process and can instead directly manipulate the underlying intensity. This means that $\underline{\nu} := \nu$. 
\\\\
The Girsanov Theorem states that for a counting process under $\prob$ with intensity $\lambda$ and another under $\q$ with intensity $\underline{\lambda}$, the likelihood ratio $\frac{d\prob}{d\q}$ over $[0, t]$ is
\begin{align}
L_t := \left[\prod_{i=1}^{N_t} \frac{\lambda_{\underline{S}_i}(\underline{M}_i)}{\underline{\lambda}_{\underline{S}_i}(\underline{M}_i)} \right]\exp\left(\int_0^t \underline{\lambda}_s - \lambda_s ds\right)
\end{align}
with respect to $\filter_t$ \citep{privault2022introduction}.
Now we must determine what exactly $\underline{\lambda}$ should be.
\\\\
Earlier, for the base process $X$ we denoted the intensity of the generalized hitting time $ T$ to be $\lambda^T_t$. Naturally, this implies the existence of the corresponding counting process:
\begin{align}
N^T_t(\omega) := \ind( T(\omega) \leq t).    
\end{align}
A well known property of counting processes is that they can be aggregated and/or decomposed via the superposition property. This means that if we denote $N^{\bcancel{ T}}_t$ as the counting process for all jumps that \emph{did not} trigger the generalized hitting time with intensity $\lambda^{\bcancel{ T}}_t$, then it can be shown that:
\begin{align}
N_t(\omega) & \equiv N^T_t(\omega) + N^{\bcancel{ T}}_t(\omega) \\
\text{and } \lambda_t(\omega) & \equiv \lambda^T_t(\omega) + \lambda^{\bcancel{ T}}_t(\omega).
\end{align}
This decomposition is true also for our proposal intensity $\underline{\lambda}_t \equiv \underline{\lambda}^T_t + \underline{\lambda}^{\bcancel{ T}}_t$. To prevent hitting times being triggered via jumps, we zero out the appropriate intensity:
\begin{align}
\underline{\lambda}^T_t & := 0  \\
\text{ and } \underline{\lambda}^{\bcancel{ T}}_t & := \lambda^{\bcancel{ T}}_t \\
\implies \underline{\lambda}_t & := \lambda^{\bcancel{ T}}_t \text{ for all } t \geq 0 
\end{align}
We do not alter $\underline{\lambda}^{\bcancel{ T}}$ from the original $\lambda^{\bcancel{ T}}$ to try and make the proposal process resemble the original as much as possible outside of prohibiting the undesirable jumps. 
\\\\
It should be noted that due to the previously noted decomposition, we really have only performed a change of measure with respect to $N^T$. As such, the resulting likelihood ratio purely pertains to this subprocess:
\begin{align}
L_t & := \left[\prod_{i=1}^{N_t^T} \frac{\lambda_{\underline{t}_i^T}(\underline{m}_i)}{\underline{\lambda}_{\underline{t}_i}^T(\underline{m}_i)} \right]\exp\left(\int_0^t \underline{\lambda}_s^T - \lambda_s^T ds\right) \\
& = \exp\left(-\int_0^t \lambda_s^T ds\right) \text{ since } N^T_t=0 \text{ for all } t\geq 0 \text{ under }\q.
\end{align}
Should the counting process for the hitting time be Poisson (i.e., $\lambda_s^T(\omega)=f(s)$ for $\omega\in\Omega$), then the likelihood ratio derived is exactly equal to $\prob(N^T_t = 0) \equiv \prob( T > t)$, assuming that the process is not able to continuously walk into the hitting region, i.e., $\mu=\sigma=0$.

Using this likelihood ratio results in the following estimator process:
\begin{align}
    \prob(T \leq t) & = \E^\q\left[F^\text{IS}_t\right] \\
    F^\text{IS}_t & := 1 - L_t\ind(T > t) = 1 - \exp\left(-\int_0^t\lambda^T_s ds\right)\ind(T>t). \label{eq:6_is}
\end{align}
This estimator, under $\q$, can be roughly thought of as one minus the probability of the process not experiencing any jumps that result in the process $X$ entering the hitting region $A_t$.

\subsection{Alternative Approach} \label{sec:alt_is_approach}
We can utilize the same proposal distribution and measure to derive a different, unbiased estimator by choosing a different starting point for our derivations. Recall that the Tower Rule PDF estimator process takes on the form:
\begin{align}
    \prob( T \in [t, t+dt)) = \E^\prob \left[f^\text{TR}_t\right] dt = \E^\prob\left[\lambda_t^T \ind( T > t)\right] dt
\end{align}
Coincidentally, just like in \cref{eq:naive_neg_estimator}, $f^\text{TR}_t = 0$ if $ T \leq t$. Because of this, we can apply the same importance sampling change of measure as we did before. This yields the following alternative importance sampling estimators that will be denoted by $\text{IS}'$:
\begin{align}
    \prob( T \in [t, t+dt)) & = \E^\q\left[L_t\lambda^T_t\ind( T > t)\right]dt \\
    & := \E^\q\left[f^{\text{IS}'}_t\right]dt \\
    \prob( T \leq t) & = \int_0^t \prob( T \in [t, t+dt)) \\
    & = \int_0^t \E^\q\left[f^{\text{IS}'}_s\right]ds \\
    & = \E^\q\left[\int_0^t f^{\text{IS}'}_s ds\right] \text{ by Fubini's Theorem} \\
    & = \E^\q\left[\int_0^{t\wedge T} L_s\lambda^T_s ds\right] := \E^\q\left[F^{\text{IS}'}_t\right] \label{eq:6_alt_is}
\end{align}
Curiously, should the proposal measure $\q$ result in trajectories that never reach the hitting state (i.e., $\ind(T > 1) = 1$ almost surely), then it can be shown that $F^{\text{IS}}_t(\omega)=F^{\text{IS}'}_t(\omega)$ for all $t>0$ and $\omega \in \Omega$. 
An example of this occurring is in pure jump processes where $\mu=\sigma=0$ as $\q$ ensures that process will never jump into the hitting region of interest.
Under this condition, it follows then that
\begin{align}
    F_t^{\text{IS}'} & = \int_0^t L_s\lambda^T_s ds \\
    & = \int_0^t \lambda_s^T \exp\left(-\int_0^s \lambda^T_u du\right) ds \\
    & = -\left[\exp\left(-\int_0^s \lambda_u^T du\right)\right]^t_0 \text{ by Fundamental Theorem of Calculus} \\
    & = 1 - \exp\left(-\int_0^t \lambda_s^T ds\right) \\
    & = 1 - L_t \\
    & = F_t^\text{IS}.
\end{align}
In practice, should this situation arise it is recommended to opt for $F_t^\text{IS}$ over $F_t^{\text{IS}'}$ to avoid potential compounding errors from numerically approximating nested integrals.

\section{Compositions of Generalized Hitting Times}\label{sec:6_comp_ght}
The goal of this section is to illustrate that the previous derived estimators also have simple extensions for estimating probabilistic statements involving compositions of multiple generalized hitting times. Some compositions are directly estimable due to the composition itself being a generalized hitting time, e.g., $\min\{ T_1,  T_2\}$ or $\max\{ T_1,  T_2\}$, whereas others require a bit more manipulation. We will focus on two such cases: (i) comparisons or ordering of hitting times $\prob( T_1 <  T_2)$ and (ii) joint distributions $\prob( T_1 < t_1,  T_2 < t_2)$. The latter of which we will also consider the generalization to the joint distribution of $n$ different generalized hitting times. 

\subsection{Hitting Time Ordering}
The probabilistic statement $\prob( T_1 <  T_2)$ is typically intractable for a generic process $X_t$, both analytically and with unbiased estimations, due to the fact that $ T_1$ could potentially take on values all across $\R_{>0}$ and still satisfy the statement $ T_1 <  T_2$. One approach to address this issue is to estimate a slightly different statement: $\prob( T_1 <  T_2 \leq t) := \prob( T_1 <  T_2,  T_2 \leq t),$ where $t \in \R_{>0}$ determines the time to sample the underlying process out to. Naturally, we recover the original query as $t \rightarrow \infty$.
\\\\
Now, while there does exist a naive estimator for this statement, $\E^\prob\left[\ind( T_1 <  T_2 \leq t)\right]$, it is not clear how to manipulate this form directly to further apply importance sampling. From our previous derivations in the prior section, we know that there are two conditions that the integrand needs to satisfy in order to easily apply our strategy for importance sampling. Namely, (i) preventing samples, all or a subset of them, from yielding an integrand of 0 should be immediately actionable and (ii) if there are multiple conditions for integrands equal to 0 they must have an OR structure instead of an AND structure. The former condition simply means that any change we impose must be done instantaneously and not condition on future information. To give an example for the latter condition, say we have two expectations that are equivalent: $\E[f_1(X)]=\E[f_2(X)]$. Furthermore, we assume that $f_1(X(\omega))=0$ if $\omega \in A \cup B$ and $f_2(X(\omega))=0$ if $\omega \in A \cap B$ for $A,B \subset \Omega$. Here, $A$ and $B$ represent actionable conditions that can be immediately prevented. By (ii), it is straightforward to apply our importance sampling approach to $\E[f_1(X)]$; however, it is non-trivial to do the same to $\E[f_2(X)]$.
\\\\
With this in mind, it becomes clear that the integrand $\ind( T_1 <  T_2 \leq t)$ violates condition (i), as imposing $ T_2 \leq t$ is not immediately actionable. In other words, without conditioning on future information, we cannot ensure that $ T_2$ occurs before time $t$ through instantaneous interventions.\footnote{Note that technically from just a sampling point of view, we could ensure this by forcing $ T_2$ to occur at time $t$ if it had not already for a given sequence. This has a number of different consequences though. Namely, it places a point-mass on the distribution of $ T_2$ at time $t$ under the proposal measure, which would make $\q$ and $\prob$ not absolutely continuous. Additionally, even if this property was not violated, implementing this estimator in practice would be inefficient as it is not clear how to parallelize it for multiple values of $t$.} In \cref{sec:6_imp_sample_estimators}, we initially got around this restriction by considering the complement of the initial integrand. For this scenario, that would correspond to $\ind( T_1 >  T_2 \text{ or }  T_2 > t)$. Unfortunately, this integrand violates condition (ii), as it is equal to 0 when $ T_1 <  T_2$ \emph{and} $ T_2 \leq t$.
\\\\
Taking a cue from \cref{sec:alt_is_approach}, applying importance sampling becomes tractable if we first start from a PDF process:
\begin{align}
\prob( T_1 <  T_2 \in [t, t+dt)) & := \prob( T_1< T_2,  T_2 \in [t, t+dt)) \\
& = \E^\prob\left[\ind( T_1 < t,  T_2 \in [t, t+dt))\right] \\
& = \E^\prob\left[\ind( T_1 < t < T_2)\lambda_t^{ T_2}\right]dt \text{ by Tower Rule} \\
& = \E^\q\left[\ind( T_1 < t < T_2)L_t\lambda_t^{ T_2}\right]dt \\
\prob( T_1 <  T_2 \leq t) & = \int_0^t \prob( T_1 <  T_2 \in [s, s+ds)) \\
& = \E^\q\left[\int_0^t \ind( T_1 < s < T_2) L_s \lambda_s^{ T_2} ds\right] \text{ by Fubini's Thm.} \\
& = \E^\q\left[\ind(T_1 < T_2)\int_{ T_1 \wedge t}^{t\wedge T_2} L_s \lambda_s^{ T_2} ds\right], \label{eq:ordered_is}
\end{align}
where the measure $\q$ and $L$ correspond to the proposal model with jumps that avoid the hitting region for $T_2$. This can be easily generalized to multiple orderings of hitting times:
\begin{align}
\prob( T_1 <  T_2 < \dots <  T_K \leq t) & = \E^\q\left[\ind(T_1 < T_2 < \dots < T_K)\int_{ T_{K-1} \wedge t}^{t\wedge T_K} L_s \lambda_s^{ T_K} ds\right]. \label{eq:mult_ordered_is}
\end{align}
where $\q$ and $L$ here correspond to avoiding the following hitting region process:
\begin{align}
    A_t := \begin{cases}
        \cup_{i=k+1}^K A_t^{ T_i} & \text{if } k=1,\dots,K-1 \\
        A_t^{ T_K} & \text{otherwise}
    \end{cases}
\end{align}
where $ T_{k-1} < t \leq  T_{k}$ for $k=1,\dots,K$ and $ T_0=0$. Note that this enforces sampled jumps to respect the ordering of generalized hitting times (with the last, $ T_K$, never occurring) almost surely. This is important as without this enforcement, we could produce a sample with the hitting times out of the desired order, yielding an integrand value of 0 no matter how far into the future we simulate. Practically speaking, for $\prob( T_1 <  T_2 < \dots <  T_K \leq t) > 0$, this would lead to an (unbiased) estimator with higher variance.

\subsection{Joint Hitting Time Distributions}
Consider two generalized hitting times $ T_1$ and $ T_2$. We are interested in the joint CDF of the two:
\begin{align}
\prob( T_1 \leq t_1,  T_2 \leq t_2) = \E^\prob\left[\ind( T_1 \leq t_1,  T_2 \leq t_2)\right].
\end{align}
In general, $t_1,t_2 \in [0,\infty)$; however, we will assume $0 \leq t_1 < t_2 < \infty$ without loss of generality. For our purposes, we are interested in a form of an estimator for the joint CDF that can easily produce multiple estimates for various values of $t_1$ and $t_2$, similar to the earlier estimators in \cref{sec:6_imp_sample_estimators} allowed for multiple values of $t$ for $\prob(T \leq t)$ by simply simulating further in time. Since time is one-dimensional, that means for a given joint CDF estimate it stands to reason that we will only be able to manipulate or restrict one of the hitting times in order to satisfy our desiderata.
\\\\
As a brief tangent, consider the CDF of the maximum of two hitting times: $\prob(\max\{ T_1,  T_2\} \leq t)$. While it is true that the max of generalized hitting times is itself a generalized hitting time, this CDF could also be separately estimated through compositions of orderings:
\begin{align}
\prob(\max\{ T_1,  T_2\} \leq t) & = \prob( T_2 <  T_1 \leq t) + \prob( T_1 <  T_2 \leq t).
\end{align}
A very similar decomposition can be done for the joint CDF of hitting times:
\begin{align}
\prob( T_1 \leq t_1,  T_2 \leq t_2) & = \prob( T_2 <  T_1 \leq t_1) + \prob( T_1 < t_1,  T_1 <  T_2 \leq t_2).
\end{align}
We can then perform importance sampling on each of these terms individually:
\begin{align}
\prob( T_1 < t_1,  T_1 <  T_2 \leq t_2) & = \int_0^{t_2} \prob( T_1 < t_1,  T_1 <  T_2 \in [s, s+ds)) \\
& = \int_0^{t_2} \prob( T_1 < t_1 \wedge s,  T_2 \in [s, s+ds)) \\
& = \int_0^{t_2} \E^\prob\left[\ind( T_1 < t_1 \wedge s,  T_2 \in [s, s+ds))\right] \\
& = \int_0^{t_2} \E^\prob\left[\ind( T_1 < t_1 \wedge s,  T_2 > s)\lambda_s^{ T_2}\right]ds \text{ by Tower Rule}\\
& = \int_0^{t_2} \E^\q\left[\ind( T_1 < t_1 \wedge s, T_2 > s)L_s\lambda_s^{ T_2}\right]ds \label{eq:joint_cdf_is} \\
& =  \E^\q\left[\int_{ T_1 \wedge t_2}^{t_2 \wedge T_2}\ind( T_1 < t_1 )L_s\lambda_s^{ T_2}ds\right] \text{ by Fubini's Theorem} 
\end{align}
where the $\q$ and $L$ in \cref{eq:joint_cdf_is} correspond to the same used in \cref{eq:ordered_is}.
\\\\
Just as before for the ordered composition, the joint CDF estimator can be generalized to $K$ hitting times: $\prob( T_1 \leq t_1, \dots,  T_K \leq t_K)$. Deriving an estimator for this query can be accomplished by considering all possible orderings of $K$ hitting times:
\begin{align}
\prob( T_1 \leq t_1, \dots,  T_K \leq t_K) & = \sum_{\pi \in P_K} \prob(\{ T_i \leq t_i\}_{i=1}^K,  T_{\pi_1} < \dots <  T_{\pi_K}) \\
\prob(\{ T_i \leq t_i\}_{i=1}^K,  T_{\pi_1} < \dots <  T_{\pi_K}) & = \E^\q\left[\int_{ T_{\pi_{K-1}}\wedge t_{\pi_K}}^{t_{\pi_K} \wedge T_{\pi_K}} \ind(\{ T_{\pi_i} \leq t_{\pi_i}\}_{i=1}^{K-1})L_s\lambda_s^{ T_{\pi_K}} ds \right]
\end{align}
where $P_K$ is the set of all permutations of $\{1, \dots, K\}$, $\pi_i$ is the $i^\text{th}$ element of the permutation $\pi \in P_K$, and $\q$ and $L$ is the same used in \cref{eq:mult_ordered_is} for the ordering of $ T_{\pi_1} <  T_{\pi_2} < \dots <  T_{\pi_K}$. Note that to produce an estimate for this joint CDF will require $K!$ individual estimates for each of the potential orderings. One potential trade-off that can be made is to not enforce the specific orderings of hitting times in $\q$ and $L$, and instead just have them prevent the final hitting time $ T_{\pi_K}$ from occurring. This corresponds to the following decomposition:
\begin{align}
\prob( T_1 \leq t_1, \dots,  T_K \leq t_K) & = \sum_{j=1}^K \prob(\{ T_i \leq t_i\}_{i=1}^K, \max_i\{ T_i\} =  T_j) \\
\prob(\{ T_i \leq t_i\}_{i=1}^K, \max_i\{ T_i\} =  T_j) & = \E^\q\left[\int_{\max_{i\neq j} \{ T_i\} \wedge t_j}^{t_j \wedge T_j} \ind(\{ T_{i} \leq t_{i}\}_{i\neq j})L_s\lambda_s^{ T_{j}} ds \right].
\end{align}
This results in $K$ different estimates for the different possible maximum hitting times. Depending on the query of interest, number of hitting times, and process may result in one of these forms being more efficient usage of computing than the other. 

\section{Experiments}\label{sec:6_experiments}

In this section, we present five different example scenarios involving different kinds of stochastic jump processes and generalized hitting time-based queries. 
In particular, we investigate the relative efficiencies of various approaches and demonstrate that overall our proposed importance sampling approach is significantly more efficient than naive estimation and is more reliable than using the tower rule estimators.
While this selection of use cases and results are by no means exhaustive, they do demonstrate the versatility and effectiveness of the proposed techniques.

Experiments using purely point processes with no Brownian motion simulate trajectories exactly using \cref{alg:mtpp_sampling} (either with the original process for naive and tower rule estimation or the proposal MTPP for the importance sampling estimator). For those that do involve Brownian motion, a simple Euler scheme is utilized as outlined in \citet{bruti2007approximation}.\footnote{Applying importance sampling in conjunction with the Euler discretization approach allows for preventing samples from triggering hitting times almost surely under $\q$, unlike in continuous time where only jumps can be prevented. More information on this can be found in \cref{sec:6_discrete_appendix}.} For all scenarios, a discretization time step $\Delta t$ of 0.01 is used for both simulating trajectories with the Euler scheme and numerically approximating integrals for estimators. 

\subsection{First Passage Time}\label{sec:6_hitting_exp}

For this experiment, we utilize the Merton jump diffusion model \citep{matsuda2004introduction}, which is commonly used for modeling stock options. The scalar-valued stochastic jump process is defined by
\begin{align}
dX_t := \left(r-\frac{1}{2}\sigma^2-\lambda k\right)X_{t-}dt + \sigma X_{t-}dW_t + (Y_t-1)X_{t-}dN_t \label{eq:6_exp_0}
\end{align}
where $\log Y_t \sim \mathcal{N}(\mu, \delta^2)$ with $Y_t \perp Y_{t'}$ for $t \neq t'$, $k=\exp\left(\mu+\frac{1}{2}\delta^2\right)-1$, $r$ is treated as the risk-free rate of return for a stock, $\sigma>0$ is interpreted as the stock volatility, and $\lambda > 0$ is the constant intensity for the Poisson counting process $(N_t)_{t\geq 0}$. For experimental results concerning this process, we will assume the following parameter values: $X_0=1, r=0.02, \mu=0, \delta=0.3, \lambda=1,$ and $\sigma=0.2$.

We will investigate the efficiency of computing the time at which the process first crosses over different fixed barriers (or prices) of $cX_0$ for $c \in \{1.25, 1.5, 2, 3, 5, 7, 10\}$: $\prob(\inf \{s \sep X_s \geq cX_0\} \leq t):=\prob(T_c \leq t)$. Note that the queries can be interpreted as the distribution of when a stock will multiply in price by $c$. We evaluate this query for $t \in [0, 5]$ using naive (NE), tower rule (TR), and importance sampling (IS) estimators to compare resulting estimator variances. Each estimate was generated using $10^5$ sampled trajectories.

Results for this experiment can be seen in \cref{fig:6_exp_0}. As a general trend, we note a large reduction in variance, often many orders of magnitude, for both tower rule and importance sampling estimators compared to naive estimation across all times and hitting regions with higher relative efficiencies noted during particularly lower hitting time likelihoods. For example, both estimators achieve the highest relative efficiencies for any barrier $c$  when $t$ is close to 0, which prohibits the process from having any time to reasonably cross over the boundary of interest. For barriers closer to the process' starting point $X_0$, it appears that importance sampling is favored with lower variance, as seen in the bottom plot in \cref{fig:6_exp_0}; however, for the barriers further out and closer to $10$, it is less clear and the comparison between the two is far more noisy.

\begin{figure}
    \centering
    \includegraphics[width=0.95\textwidth]{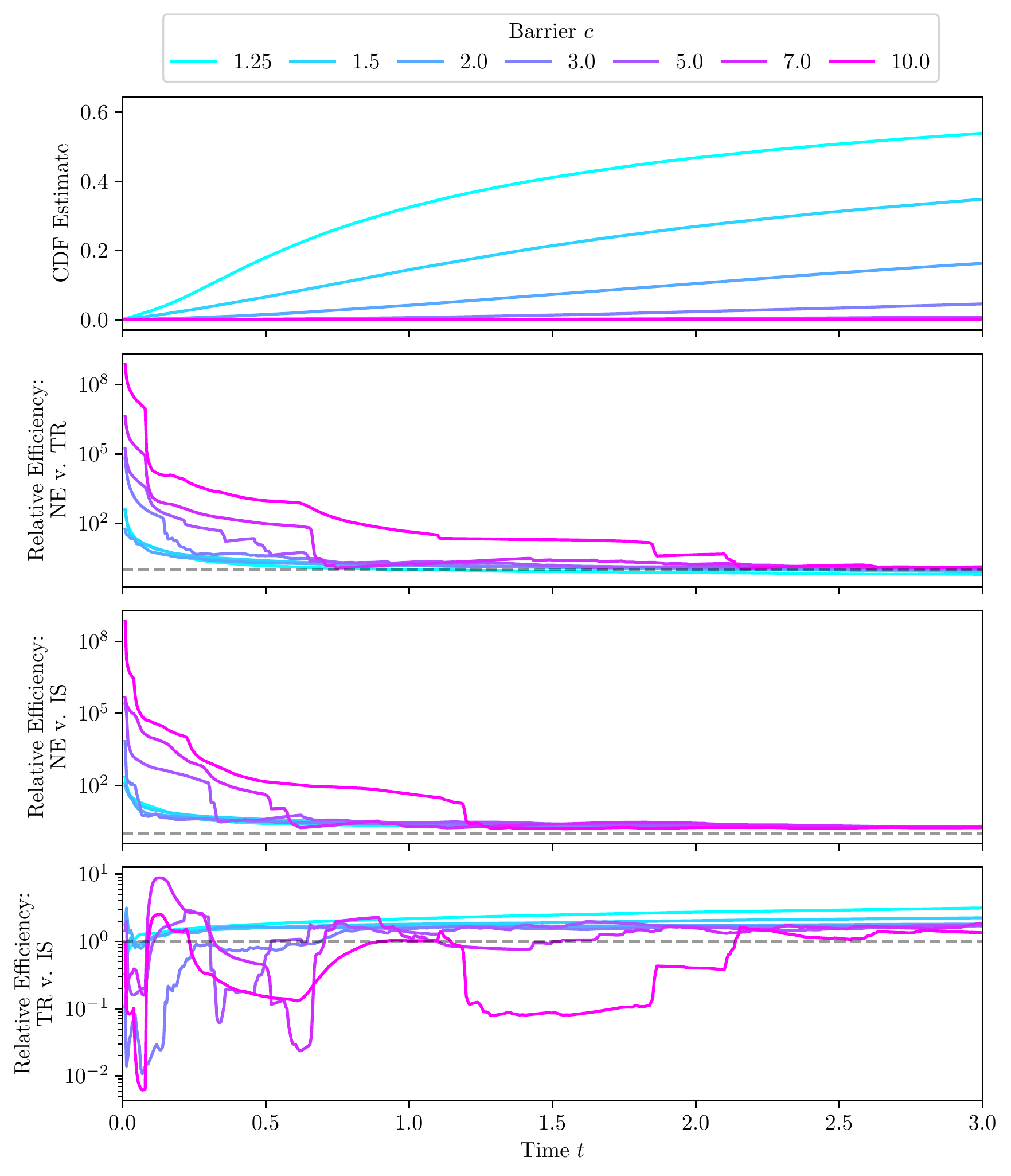}
    \caption{Estimates for the distribution of the hitting time of entering the region $[c, \infty)$ with $X_t$ from \cref{eq:6_exp_0} are shown in the top plot where $c$ corresponds to the different colored lines in the plots ranging from $1.25$ to $10$. Bottom three plots show estimated relative efficiencies comparing naive (NE), tower rule (TR), and importance sampling (IS) estimators, with dashed horizontal lines indicating an  efficiency of 1. For plots labeled ``XX v. YY'', efficiency values greater than 1 favor the ``YY'' method.}
    \label{fig:6_exp_0}
\end{figure}

\subsection{First Exit Time}

Consider the univariate stochastic jump process
\begin{align}
dX_t & := btX_{t-}dt + X_{t-}dW_t - X_{t-}U_tdN_t \label{eq:6_exp_1}
\end{align}
where $X_0=1$, $U_t \sim \text{Uniform}(0,1)$ with $U_t \perp U_{t'}$ for $t\neq t'$, $b\geq 0$ is a constant drift-rate of increase, and $(N_t)_{t\geq 0}$ is the counting process to a self-correcting temporal point process characterized by conditional intensity
\begin{align}
\lambda_t & := \exp\left(t - N_{t-}\right).
\end{align}
This process can be described as a geometric Brownian motion with drift $bt$ and jumps that regress the process closer to 0. 

We are interested not in the time at which the process first crosses over a boundary, but rather the time at which it crosses \textit{back} over. Put more plainly, for a boundary at value $c>1$, then $T_c$ defines the time at which the process returns to the region $[0, c)$ after first entering $[c, \infty)$. We evaluate the query $\prob(T_c \leq t)$ for $t \in [0, 5]$ with $c=3$ using naive, tower rule, and importance sampling estimators for different drift-rates $b$ ranging from 0 to 50. Each estimate was generated using $10^5$ sampled trajectories.

Results are shown in \cref{fig:6_exp_1} with specific rates for $b$ can be seen in the legend. We see that for both extremely low and extremely high values of $b$, the CDF plateaus early on in time. For the former, this is due to the process not being likely to cross into  $[c, \infty)$ in the first place, whereas in the latter the process is unlikely to return to $[0, c)$ after entering$[c, \infty)$ due to the aggressive rate of growth. The resulting estimator efficiencies, for both tower rule and importance sampling, do tend to correlate with this rate of growth with low $b$ values yielding low efficiencies and high $b$ values yielding higher efficiencies. Interestingly, the estimators relative efficiency is highest at both the beginning of time and when the CDF has saturated. This can be seen by noting when both the CDF and the efficiency plots flatten out for a given colored line.

Unlike in the previous experiment, there are instances where the tower rule estimator is less efficient than naive estimation. However, in the worst case scenario for importance sampling it is just as efficient as naive estimation. This is further exemplified in the bottom plot of \cref{fig:6_exp_1} where, after an initial turbulent period for small values of $t$, importance sampling is favored over the tower rule estimator in general. 

\begin{figure}
    \centering
    \includegraphics[width=0.9\textwidth]{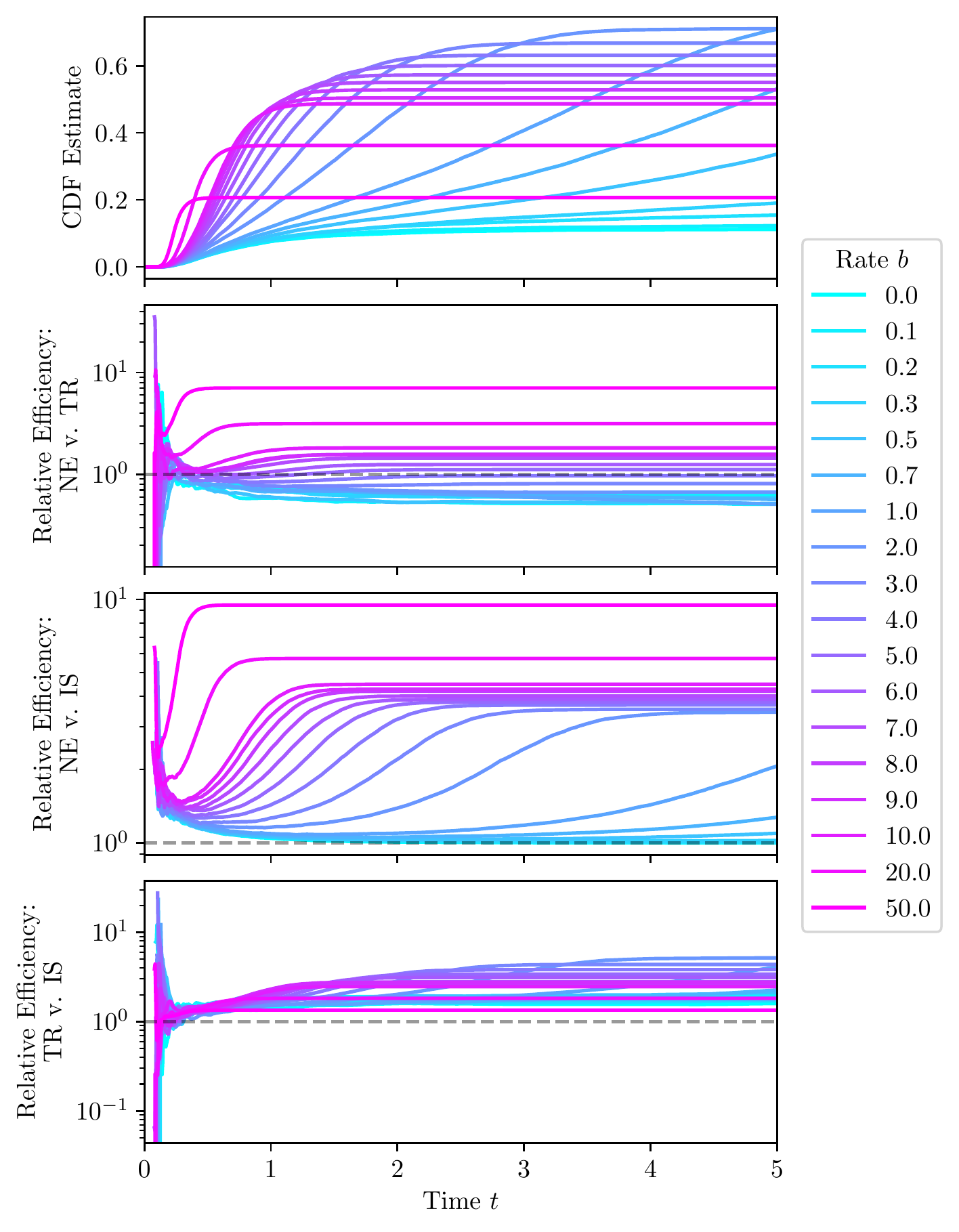}
    \caption{Results shown in the same format as \cref{fig:6_exp_0} for comparing different first exit time distributions of the region $[3, \infty)$ over time $t\in[0,5]$ as they vary across drift-rates $b$ which range from $0$ to $50$ for $X_t$ from \cref{eq:6_exp_1}.}
    \label{fig:6_exp_1}
\end{figure}

\subsection{Cover Time}\label{sec:6_cover_time}

Consider a graph $G=(V,E)$, which is finite and connected, and an accompanying stochastic process $(X_t)_{t\geq 0}$ that governs a random walk over the graph in continuous time. The \textit{cover time} of a graph is the time at which the random walk has visited each vertex in $V$. The cover time of a graph is known to have connections to the \textit{coupon collector problem} \citep{ferrante2014coupon} and \textit{the (double) dixie cup problem} \citep{newman1960double}. Please refer to \citep{lovasz1993random} for a more on these connections and details of cover times in general.

Letting $T_G$ denote the cover time for a graph $G$, this is appropriately decomposed as a maximum over individual hitting times for each vertex $T_v$. Formally, this is defined as
\begin{align}
    T_G & := \max \{T_v \sep v \in V\} \\
    T_v & := \inf \{s \sep X_s = v\} \text{ for } v \in V.
\end{align}
We assume the random walk to be a continuous-time Markov model that is described in the following manner:
\begin{align}
    X_t & := M_{N_t} \label{eq:6_exp_2}\\
    \lambda_t(v) & = \mu p_{X_t,v}
\end{align}
where $\mu > 0$ describes the base rate of a jump occurring and $p_{i,j} \in [0,1]$ describes is probability of transitioning from vertex $i$ to vertex $j$ during a jump. Note that the graph structure is respected by forcing $p_{i,j}=0$ for when the edge $(i,j)$ is not present in the graph; however, for our experiments we will assume that the graph is complete which will allow the process to jump from any vertex to any other vertex (including itself). 

The query $\prob(T_G \leq t)$ is evaluated for a fixed value $t=10$, where the relative efficiency between the naive and importance sampling estimator is measured across 100 different processes, instantiated with parameters sampled according to $p_{i,j}=\frac{\exp(z_{i,j}\tau^{-1})}{\sum_{j'\in V} \exp(z_{i,j'}\tau^{-1})}$ for $z_{i,j}\sim\text{Uniform}[0,1]$ for $i,j\in V$ with temperature $\tau > 0$ and $\mu_v = 1$ for $v \in V$. While using complete graphs with $|V|=6$, we investigate the effect of sparsity in transitioning states by setting temperature $\tau$ to $0.1, 1,$ and $10$. Low temperature values yield sparser transition probabilities and higher temperatures yield more uniform transition distributions. Each estimate was generated using $10^4$ sampled trajectories.

Resulting relative efficiencies for different temperatures $\tau$ across 10 different random instantiations for each can be seen in \cref{fig:6_exp_2}. The relative efficiencies shown start at varying time $t$ values due to there being insufficient sampled support for variance estimates prior to that time $t$.\footnote{Given enough samples, the estimators will typically give non-zero estimates prior to these points in time; however, it is difficult to know ahead of time just how many are necessary for this.}
Similar to the basic hitting time experiment in \cref{sec:6_hitting_exp}, we see several orders of magnitude improvement over naive estimation when using both the tower rule and importance sampling estimators. Sparser transition matrices seem to lead to a bit more unpredictable and chaotic estimator efficiencies, noted by the variety and jaggedness of results for $\tau=0.1$ compared to $\tau=10$. This is most likely due to there being less immediately accessible possibilities for the process to jump from its current state to the last remaining vertex when the transition matrix is sparser. 

In summary,  whether importance sampling or the tower rule approach is superior varies from process to process across all temperatures. However, importance sampling is always better than naive estimation which cannot be said for the tower rule estimator. Because of this, importance sampling is likely to be a good default technique for this setting.

\begin{figure}
    \centering
    \includegraphics[width=0.99\textwidth]{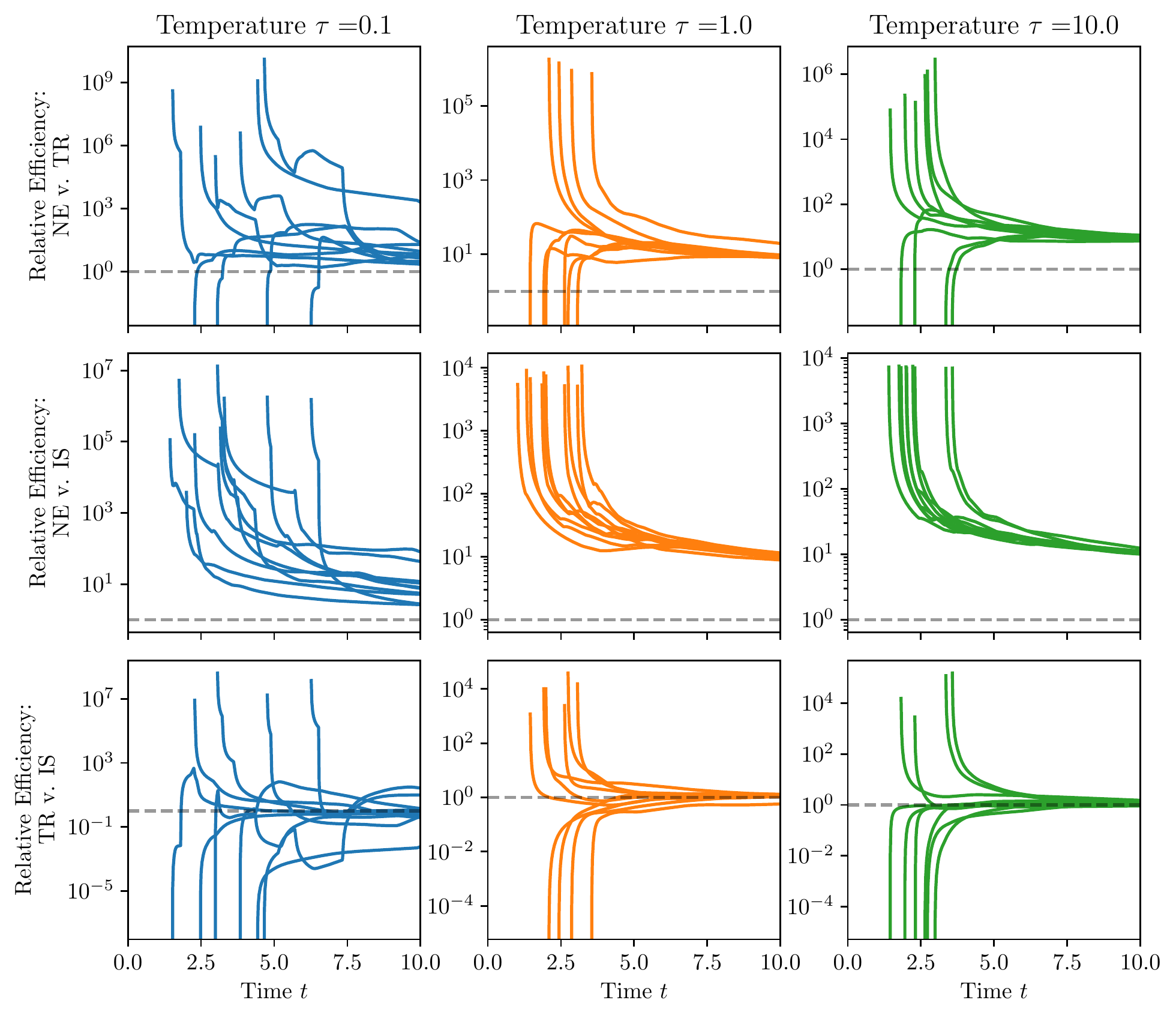}
    \caption{For each temperature $\tau\in\{0.1, 1, 10\}$, ten different processes defined by \cref{eq:6_exp_2} are initialized with random parameters according to details in \cref{sec:6_cover_time}. Relative efficiencies between naive (NE), tower rule (TR), and importance sampling (IS) are shown, similar to \cref{fig:6_exp_0}, for computing the distribution of the cover time $\prob(T_G \leq t)$ for $t \in [0, 10]$. Plotted lines start at varying time $t$ values due to the cover time for that sampled process requiring being so rare that the estimators prior to that time all equal $0$ exactly.}
    \label{fig:6_exp_2}
\end{figure}

\subsection{Joint Hitting Time Distributions}

Let $(X_t)_{t\geq 0}$ be a Hawkes process with 3-dimensional continuous-valued marks that follow the following constraints:
\begin{align}
    X_t & := M_{N_t} \\
    \lambda_t(m) & := \mu \cdot \mathcal{N}_3(m; 0, 1) + \sum_{(S,M) \in \hist_{t-}} \exp\left(-(t-S)\right)\mathcal{N}_3(m; M, (t-S)^2) 
\end{align}
for $m \in \R^3$, $\mathcal{N}_3(\cdot; a, b)$ is the probability density function of a 3-dimensional normal distribution with mean $a$ and covariance matrix $bI$. 
This marked intensity encourages new marks to occur close to previous ones, so long as they happen also close enough in time. Previous events have less influence on the timing and placement of future ones due to the $\exp(-(t-S))$ scaling term and the variance $(t-S)^2$ respectively. Note that this process is a variant of ones commonly used for modeling earthquake and subsequent aftershocks \citep{ogata1998space}.

Consider the following subsets of $\R^3$:
\begin{align}
    R_1 &:= [0.5, \infty) \times [0.5, \infty) \times [0.5, \infty) \\
    R_2 &:= (-\infty, -0.5] \times [0.5, \infty) \times [0.5, \infty) \\
    R_3 &:= (-\infty, -0.5] \times (-\infty, -0.5] \times [0.5, \infty) \\
    R_4 &:= [0.5, \infty) \times (-\infty, -0.5] \times [0.5, \infty) \\
    R_5 &:= [0.5, \infty) \times (-\infty, -0.5] \times (-\infty, -0.5]
\end{align}
and let the associated hitting time of each region be denoted $T_{R_i}$ for $i=1,\dots,5$. We are interested in calculating and comparing the joint CDFs of varying amounts of these hitting times. Estimates are computed for $\prob(\{T_{R_i} \leq t\}_{i=1}^d)$ for $d=1,\dots,5$ and $t\in [0,10]$ using naive and importance sampling estimators.\footnote{In practice, we can actually estimate the joint CDF using different values of $t$ for each individual hitting time, such as $\prob(\{T_{R_i} \leq t_i\}_{i=1}^d)$; however, for these results all $t_i$'s are fixed to the same $t$ for visualization purposes.} As mentioned in \cref{sec:6_comp_ght}, for importance sampling we can either enforce all possible orderings of $\{T_{R_i} \leq t\}_{i=1}^d$ or simply enforce which hitting time occurs last. We investigate the efficiency for both methods, and refer to these estimators as ``ordered importance sampling'' and ``unordered importance sampling'' respectively. Each estimate was generated using $3000$ sampled trajectories.

The resulting CDF estimates and relative efficiencies for these three methods can be seen in \cref{fig:6_exp_3}. It is worth noting that the two importance sampling procedures are identical for $d=1$ and $d=2$, and that $d=1$ is similar to the simple hitting time scenario explored in \cref{sec:6_hitting_exp}. When $d > 1$, the relative efficiencies are only well-defined after some time $t$ has passed. This is due to there being insufficient support to estimate the estimators' variance from the finite set of samples generated over those regions. When they are well-defined, we see similar relative efficiency values across values of $d>1$ and for both importance sampling estimators when compared against naive estimation. While this needs to be confirmed on a case-by-case basis, this does suggest that the simpler and easier to implement unordered importance sampling estimator is sufficient for efficient joint distribution estimation. 

\begin{figure}
    \centering
    \includegraphics[width=0.9\textwidth]{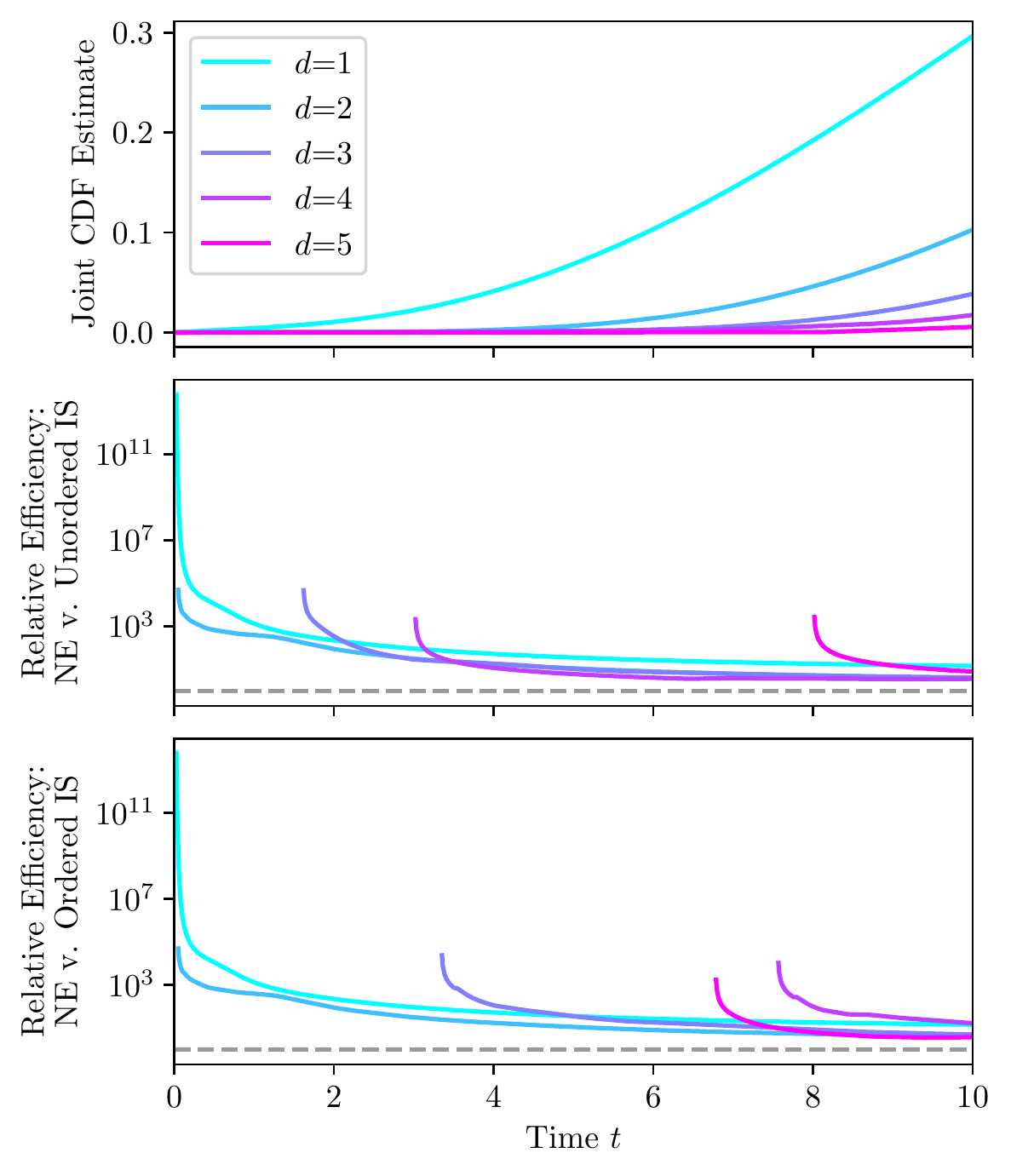}
    \caption{Estimated values and efficiencies for the joint distribution $\prob(\{T_{R_i}\leq t\}_{i=1}^d)$ for $t \in[0,10]$, with different colors for $d=1,\dots,5$, computed with naive estimation (NE) and importance sampling (IS), both ordered and unordered, all shown in a similar format as \cref{fig:6_exp_0}. Efficiency values sometimes start at varying values of time $t$ due to the same reasons as in \cref{fig:6_exp_2}. For that reason, the relative efficiency between ordered and unordered importance sampling is not shown due to a lack of sampled support.}
    \label{fig:6_exp_3}
\end{figure}

\section{Conclusion}\label{sec:6_conclusion}

In this chapter, we generalize the approaches from Chapter 4 to apply to the more general setting of stochastic jump processes. Additionally, estimators are derived for a novel class of random times we termed generalized hitting times. Specific estimators are given for compositions and joint distributions for these times. All of these results are empirically verified in a variety of different use cases and settings, sometimes achieving orders of magnitude reduction in estimator variance compared to naive Monte Carlo estimation.

\chapter{Conclusion}

\noindent

This dissertation explores the potential of extracting probabilistic beliefs from autoregressive sequence models, extending their capabilities beyond immediate next-step predictions. 
As modern machine learning models continue to evolve, fueled by increasing data availability and enhanced computational power, we can build trust in their ability to capture longer-range dependencies. 
This necessitates efficient methods for accessing these models' ``beliefs,'' enabling practitioners to expand their use cases and rely on models beyond their original training objectives.

This work provides a direct solution to this need. 
Our methods empower practitioners to leverage pre-trained autoregressive models and efficiently estimate their beliefs for various probabilistic queries, without modifying the underlying architecture. 
This approach offers significant financial and environmental benefits by utilizing existing models, eliminating the need for additional resource-intensive training for specific queries.

\section{Reiteration of Contributions}

\paragraph{Chapter 3:} We establish a framework for formulating diverse probabilistic queries on discrete sequence models. 
We develop efficient estimation methods, leveraging importance sampling and beam search, to answer these queries.

\paragraph{Chapter 4:} We extend the importance sampling approach to marked temporal point processes (MTPPs), while theoretically guaranteeing improved sampling efficiency. 
Empirical results demonstrate significant variance reduction in query estimators across MTPP models.

\paragraph{Chapter 5:} We utilize the proposal distribution from Chapter 4 to address missing information in MTPPs. 
This offers the first principled approach, independent of specific parameterization, for handling missing data in MTPPs.

\paragraph{Chapter 6:} We further extend our importance sampling methods to handle highly expressive stochastic jump processes. 
This enables us to answer queries on a novel class of random times termed "generalized hitting times." 
Notably, these methods generalize to Chapters 3-5 due to the inherent generality of stochastic jump processes.

By providing efficient and principled methods for extracting long-range probabilistic beliefs, this work opens doors for broader and more informed applications of autoregressive models.

\section{Future Research Directions}
While the presented methods offer significant benefits, they also have limitations inherent in both the underlying models and the approximation procedures. 
This section explores key areas for future research to further expand the impact and applicability of this work.

\paragraph{Long-Range Belief Distillation}
Current methods require repeated sampling from the autoregressive model for each query. 
For frequently asked queries, it might be more computationally efficient to directly model the query instead of estimating it through sampling. 
Knowledge distillation could be used, where the original model acts as a ``teacher'' and a new model directly approximates the query as a ``student'' \citep{gou2021knowledge}. 
This would involve sampling only during distillation, with the student model then providing faster estimates at inference time. 
However, careful consideration is needed for student model parameterization and the targeted query class.

\paragraph{Probabilistic Queries as Training Objectives}
This work focused on efficient extraction rather than predictive performance of the resulting queries. 
Evaluating the quality of predictions compared to real data is a crucial area for future research. 
If the goal is high-quality predictions for a specific query class, incorporating this into training or fine-tuning stands to significantly improve inference performance. 
For instance, instead of solely minimizing the cross-entropy of the next-step distribution, the model could additionally be trained on the hitting time distribution of specific events. 
However, current methods struggle with non-differentiable samples from autoregressive models, posing a challenge for using them as training objectives. 
New differentiable sampling techniques or alternative approaches, potentially using reinforcement learning, are needed to overcome this hurdle.

\paragraph{Querying Higher Order Beliefs}
The current methods are limited in the granularity of the modeling scope. 
For models handling fine-grained details, like individual characters in text or DNA base pairs for genomic sequences, the estimable queries are not very informative. 
For example, knowing when the next A-T pair appears in DNA is not particularly insightful. 
Similarly, for character-level language models, knowing if a period will occur before an exclamation point is not very helpful. 
Instead, we often want to ask questions about higher-order semantics, such as when the mood or topic will shift in text or when cancer-associated genes may appear next in an individual's DNA.
This requires methods that can handle softer constraints and potentially incorporate other latent, semantic information.

Addressing these future research directions will further enhance the capabilities and impact of extracting probabilistic beliefs from autoregressive models, enabling them to answer more complex and informative questions across diverse applications.

\clearpage
\phantomsection

\bibliographystyle{plainnat}
\bibliography{thesis}

\captionsetup[figure]{list=no}
\captionsetup[table]{list=no}

\begin{appendices}
\appendix
\chapter{Supplemental Material for Chapter 3}

\addtocontents{toc}{\protect\setcounter{tocdepth}{0}}

\section{Variance of Estimates from the Hybrid Method}
\label{sec:3_app_hybrid_variance}

We are assuming to be under the hybrid method regime where a collection of sequences $\mathcal{B}\subset\mathcal{Q}$ relevant to answering $\prob_\theta(\textbf{X}_{1:K}\in\mathcal{Q})$ have been deterministically found and are interested in using sampling methods to estimate the remainder $\prob_\theta(\textbf{X}_{1:K}\in\mathcal{Q}\setminus\mathcal{B})$. For brevity, we will assume that $\mathcal{Q}=\mathcal{V}_1\times\dots\times\mathcal{V}_K$. As mentioned in the previous section, we leverage our originally presented proposal distribution $q$ by further restricting the domain to $\mathcal{Q}\setminus\mathcal{B}$ in order to be used in this scenario. This will be represented by
\begin{align}
q_\mathcal{B}(\textbf{x}_{1:K}):\!&=\q(\textbf{X}_{1:K}=\textbf{x}_{1:K}\sep \textbf{X}_{1:K}\notin\mathcal{B}) \\
&=\frac{q(\textbf{x}_{1:K})\ind(\textbf{x}_{1:K}\in\mathcal{Q}\setminus\mathcal{B})}{1-\q(\textbf{X}_{1:K}\in\mathcal{B})}.
\end{align}
Note that an associated autoregressive form $q_\mathcal{B}(x_k\sep \textbf{x}_{<k})$ exists and is well defined; however, the exact definition is a bit unwieldy. Please refer to \cref{sec:3_estimation_techniques} for more details.

Using this proposal distribution, we can easily estimate the remaining probability:
\begin{align}
\prob_\theta(\textbf{X}_{1:K}\in\mathcal{Q}\setminus\mathcal{B}) & = \E_{\textbf{x}_{1:K}}^{\q_\mathcal{B}}\left[\frac{p_\theta(\textbf{X}_{1:K})}{q_\mathcal{B}(\textbf{X}_{1:K})}\right] \\
& \approx \frac{1}{M} \sum_{m=1}^M \frac{p_\theta(\textbf{X}_{1:K}^{(m)})}{q_\mathcal{B}(\textbf{X}_{1:K}^{(m)})} \quad\quad\text{ for } \textbf{X}_{1:K}^{(1)},\dots,\textbf{x}_{1:K}^{(M)}\overset{iid}{\sim} q_\mathcal{B}.
\end{align}
Let $\omega_\mathcal{B}(\textbf{x}_{1:K}):=\frac{p_\theta(\textbf{x}_{1:K})}{q_\mathcal{B}(\textbf{x}_{1:K})}$ and for brevity we will refer to $\prob_\theta(\textbf{X}_{1:K}\in\mathcal{Q}\setminus\mathcal{B})$ as  $\bar{\omega}_\mathcal{B}$.

If we assume $\mathcal{B}'=\mathcal{B}\cup\{\hat{\textbf{x}}_{1:K}\}$ for some $\hat{\textbf{x}}_{1:K}\in\mathcal{Q}\setminus\mathcal{B}$, then it is interesting to determine when exactly there will be a reduction in sampling variance for $\prob_\theta(\textbf{X}_{1:K}\in\mathcal{Q}\setminus\mathcal{B}')$ versus $\prob_\theta(\textbf{X}_{1:K}\in\mathcal{Q}\setminus\mathcal{B})$ as this will give insight into when the hybrid method is successful. 
In other words, we would like to show when the following inequality holds true:
\begin{align}
\Delta_{\text{Var}}:=\var_{\textbf{X}_{1:K}}^{\q_{\mathcal{B}}} \left[\omega_{\mathcal{B}}(\textbf{X}_{1:K})\right] - \var_{\textbf{X}_{1:K}}^{\q_{\mathcal{B}'}} \left[\omega_{\mathcal{B}'}(\textbf{X}_{1:K})\right] \geq 0
\end{align}
If this is true for a given $\hat{\textbf{x}}_{1:K}$, then this finding can be applied recursively for more general (but still possibly restricted) $\mathcal{B}'\supset\mathcal{B}$.
\begin{align}
\var_{\textbf{X}_{1:K}}^{\q_{\mathcal{B}}} \left[\omega_{\mathcal{B}}(\textbf{X}_{1:K})\right] & = \E_{\textbf{X}_{1:K}}^{\q_\mathcal{B}} \left[\left(\omega_\mathcal{B}(\textbf{X}_{1:K}) - \bar{\omega}_\mathcal{B}\right)^2\right] \\
& = \E_{\textbf{X}_{1:K}}^{\q_\mathcal{B}} \left[\omega_\mathcal{B}(\textbf{X}_{1:K})^2\right] - \bar{\omega}_\mathcal{B}^2 \\
& = \sum_{\textbf{x}_{1:K}\in\mathcal{Q}\setminus\mathcal{B}} q_{\mathcal{B}}(\textbf{x}_{1:K})\omega_\mathcal{B}(\textbf{x}_{1:K})^2 - \bar{\omega}_\mathcal{B}^2
\end{align}
It then follows that
\begin{align}
\Delta_\text{Var} & =
\bar{\omega}_{\mathcal{B}'}^2 - \bar{\omega}_\mathcal{B}^2 +
\sum_{\textbf{x}_{1:K}\in\mathcal{Q}\setminus\mathcal{B}}  q_{\mathcal{B}}(\textbf{x}_{1:K})\omega_\mathcal{B}(\textbf{x}_{1:K})^2 - \sum_{\textbf{x}_{1:K}\in\mathcal{Q}\setminus\mathcal{B}'} q_{\mathcal{B}'}(\textbf{x}_{1:K})\omega_{\mathcal{B}'}(\textbf{x}_{1:K})^2 \nonumber \\
& = (\bar{\omega}_{\mathcal{B}'}^2 - \bar{\omega}_\mathcal{B}^2) +
q_\mathcal{B}(\hat{\textbf{x}}_{1:K})\omega_{\mathcal{B}}(\hat{\textbf{x}}_{1:K})^2 + \nonumber \\
& \quad\quad \sum_{\textbf{x}_{1:K}\in\mathcal{Q}\setminus\mathcal{B}'}\left[q_{\mathcal{B}}(\textbf{x}_{1:K})\omega_\mathcal{B}(\textbf{x}_{1:K})^2- q_{\mathcal{B}'}(\textbf{x}_{1:K})\omega_{\mathcal{B}'}(\textbf{x}_{1:K})^2\right] \label{eq:3_three_terms}
\end{align}
We will now analyze each of the three terms in \cref{eq:3_three_terms} to determine when $\Delta_\text{Var}\geq0$. For the first term, it follows that:
\begin{align}
\bar{\omega}_{\mathcal{B}'} + \bar{\omega}_\mathcal{B} & = \prob_\theta(\textbf{X}_{1:K}\in\mathcal{Q}\setminus\mathcal{B}') +  \prob_\theta(\textbf{X}_{1:K}\in\mathcal{Q}\setminus\mathcal{B})\nonumber  \\
& = 2\prob_\theta(\textbf{X}_{1:K}\in\mathcal{Q}\setminus\mathcal{B}') + p_\theta(\hat{\textbf{x}}_{1:K}) \nonumber \\
\bar{\omega}_{\mathcal{B}'} - \bar{\omega}_\mathcal{B} & = \prob_\theta(\textbf{X}_{1:K}\in\mathcal{Q}\setminus\mathcal{B}') - \prob_\theta(\textbf{X}_{1:K}\in\mathcal{Q}\setminus\mathcal{B}) \nonumber \\
& = -p_\theta(\hat{\textbf{x}}_{1:K}) \nonumber \\
\implies \bar{\omega}_{\mathcal{B}'}^2 - \bar{\omega}_\mathcal{B}^2 & = -p_\theta(\hat{\textbf{x}}_{1:K})\left(2\prob_\theta(\textbf{X}_{1:K}\in\mathcal{Q}\setminus\mathcal{B}') - p_\theta(\hat{\textbf{x}}_{1:K})\right) \label{eq:3_neg_terms}
\end{align}
The other two terms in \cref{eq:3_three_terms} must sum to a positive value with as large or larger magnitude to \cref{eq:3_neg_terms} for $\Delta_\text{Var}\geq 0$. Looking at the second term, we see that
\begin{align}
q_\mathcal{B}(\hat{\textbf{x}}_{1:K})\omega_{\mathcal{B}}(\hat{\textbf{x}}_{1:K})^2 & = \frac{p_\theta(\hat{\textbf{x}}_{1:K})^2}{q_\mathcal{B}(\hat{\textbf{x}}_{1:K})} \\
& = p_\theta(\hat{\textbf{x}}_{1:K})(1-\q(\textbf{X}_{1:K}\in\mathcal{B}))\prod_{k=1}^K \sum_{v_k \in \mathcal{V}_k} p_\theta(v_k\sep \hat{\textbf{x}}_{<k}) \geq 0.
\end{align}
This inequality becomes strict should $p_\theta(\hat{\textbf{x}}_{1:K})>0$. We will now look at the final summation in \cref{eq:3_three_terms}:
\begin{align}
& \sum_{\textbf{x}_{1:K}\in\mathcal{Q}\setminus\mathcal{B}'}\left[q_{\mathcal{B}}(\textbf{x}_{1:K})\omega_\mathcal{B}(\textbf{x}_{1:K})^2- q_{\mathcal{B}'}(\textbf{x}_{1:K})\omega_{\mathcal{B}'}(\textbf{x}_{1:K})^2\right] \\
& = \sum_{\textbf{x}_{1:K}\in\mathcal{Q}\setminus\mathcal{B}'}\left(\frac{p_\theta(\textbf{x}_{1:K})^2}{q_{\mathcal{B}}(\textbf{x}_{1:K})} - \frac{p_\theta(\textbf{x}_{1:K})^2}{q_{\mathcal{B}'}(\textbf{x}_{1:K})}\right) \\
& = \sum_{\textbf{x}_{1:K}\in\mathcal{Q}\setminus\mathcal{B}'}p_\theta(\textbf{x}_{1:K})\left(\q(\textbf{X}_{1:K}\in\mathcal{B}') - \q(\textbf{X}_{1:K}\in\mathcal{B})\right)\prod_{k=1}^K \sum_{v_k \in \mathcal{V}_k} p_\theta(v_k \sep \textbf{x}_{<k}) \\
& = q(\hat{\textbf{x}}_{1:K})\sum_{\textbf{x}_{1:K}\in\mathcal{Q}\setminus\mathcal{B}'}p_\theta(\textbf{x}_{1:K})\prod_{k=1}^K \sum_{v_k \in \mathcal{V}_k} p_\theta(v_k \sep \textbf{x}_{<k})\\
& = \frac{p_\theta(\hat{\textbf{x}}_{1:K})}{\prod_{k=1}^K \sum_{v_k \in \mathcal{V}_k} p_\theta(v_k \sep \hat{\textbf{x}}_{<k})}\sum_{\textbf{x}_{1:K}\in\mathcal{Q}\setminus\mathcal{B}'}p_\theta(\textbf{x}_{1:K})\prod_{k=1}^K \sum_{v_k \in \mathcal{V}_k} p_\theta(v_k \sep \textbf{x}_{<k}).
\end{align}
Since all terms in \cref{eq:3_three_terms} have a common factor $p_\theta(\hat{\textbf{x}}_{1:K})$, we can see that $\Delta_\text{Var}\geq 0$ iff the following holds:
\begin{align}
& 2\prob_\theta(\textbf{X}_{1:K}\in\mathcal{Q}\setminus\mathcal{B}') -\frac{1}{\rho(\hat{\textbf{x}}_{1:K})}\sum_{\textbf{x}_{1:K}\in\mathcal{Q}\setminus\mathcal{B}'}p_\theta(\textbf{x}_{1:K})\rho(\textbf{x}_{1:K}) \nonumber \\
& \quad\quad \leq (1-\q(\textbf{X}_{1:K}\in\mathcal{B}))\rho(\hat{\textbf{x}}_{1:K}) + p_\theta(\hat{\textbf{x}}_{1:K}) \label{eq:3_final_ineq}
\end{align}
for $\rho(\textbf{x}_{1:K})=\prod_{k=1}^K \sum_{v_k \in \mathcal{V}_k} p_\theta(v_k \sep \textbf{x}_{<k})$. Should this hold true, then by taking $\mathcal{B}'$ instead of $\mathcal{B}$ during the beam search segment of the hybrid approach would the variance of the sampling subroutine reduce. Generalizing this further, it is not guaranteed that the hybrid estimate will have a lower variance than regular importance sampling; however, our experimental results across a variety of settings (see \cref{fig:3_err_plot}) seem to indicate that the variance is reduced on average.

All of the terms to the left of the inequality in \cref{eq:3_final_ineq} are quantities that would require either expansive computations or estimation in order to know their values. Conversely, all the values to the right of the inequality are readily available as a byproduct of beam search. Incorporating this into decision making for our hybrid method is left for future work.

\section{Determining Ground Truth for Experiments}
\label{sub:ground_truth_calc}\label{sec:3_dataset_model_prep}

Exact computation of ground truth is intractable for queries with large path spaces. We circumvent this issue via the law of large numbers by computing \textbf{surrogate ground truth} query estimates with a large computational budget, leveraging the variance of the query's samples as a convergence criterion. Specifically, our algorithm is conducted as follows. We first specify a \textit{minimum} number of samples $S_{\text{low}}=10000$ to be drawn for the surrogate ground-truth estimate. Once $S_{\text{low}}$ samples have been drawn, we compute the variance of our estimate
$\hat{p}^{*}_\theta(X_{1:K}\in\mathcal{Q}) := \frac{1}{S} \sum_{i=1}^S \frac{p^{*}_\theta(X_{1:K} = x^{(i)}_{1:K})}{q(X_{1:K}=x_{1:K}^{(i)})}$ for $x^{(1)}_{1:K},\dots,x^{(S)}_{1:K} \overset{iid}{\sim} q(X_{1:K})$:
\begin{align}
\widehat{\var}_{q}\left[\hat{p}_\theta^*(X_{1:K}\in\mathcal{Q})\right] & = \frac{1}{S}\sum_{i=1}^S \left(\frac{p^{*}_\theta(X_{1:K} = x^{(i)}_{1:K})}{q(X_{1:K}=x_{1:K}^{(i)})} - \hat{p}^*_\theta(X_{1:K}\in\mathcal{Q})\right)^2
\end{align}

We then evaluate $\widehat{\var}_{q}\left[\hat{p}_\theta^*(X_{1:K}\in\mathcal{Q})\right]$ every 1000 additional samples until either it drops below tolerance $\delta=1e^{-7}$ or $S$ meets our maximum sample budget $S_{\text{high}}=100000$. This procedure is done in all of our experiments in which a method's performance is being compared to a query's ground truth value and exact ground truth cannot be computed due to resource constraints (typically when $K>4$).

\section{Additional Experimental Details and Results}
\label{sec:3_additional_exps}
This section  discusses additional experimental details and results that were not included in section 5 of our main paper due to space constraints. In all experiments, all means and medians reported are with respect to  
$N_Q = 1000$ randomly selected sequence locations/histories/queries per datapoint in each plot, unless stated otherwise. For each randomly selected current location, the event $a$ used in the query for $k$ steps ahead corresponds to the actual observed event $a$ for $k$ steps ahead.

\begin{figure}
    \centering
    \includegraphics[width=1.0\textwidth]{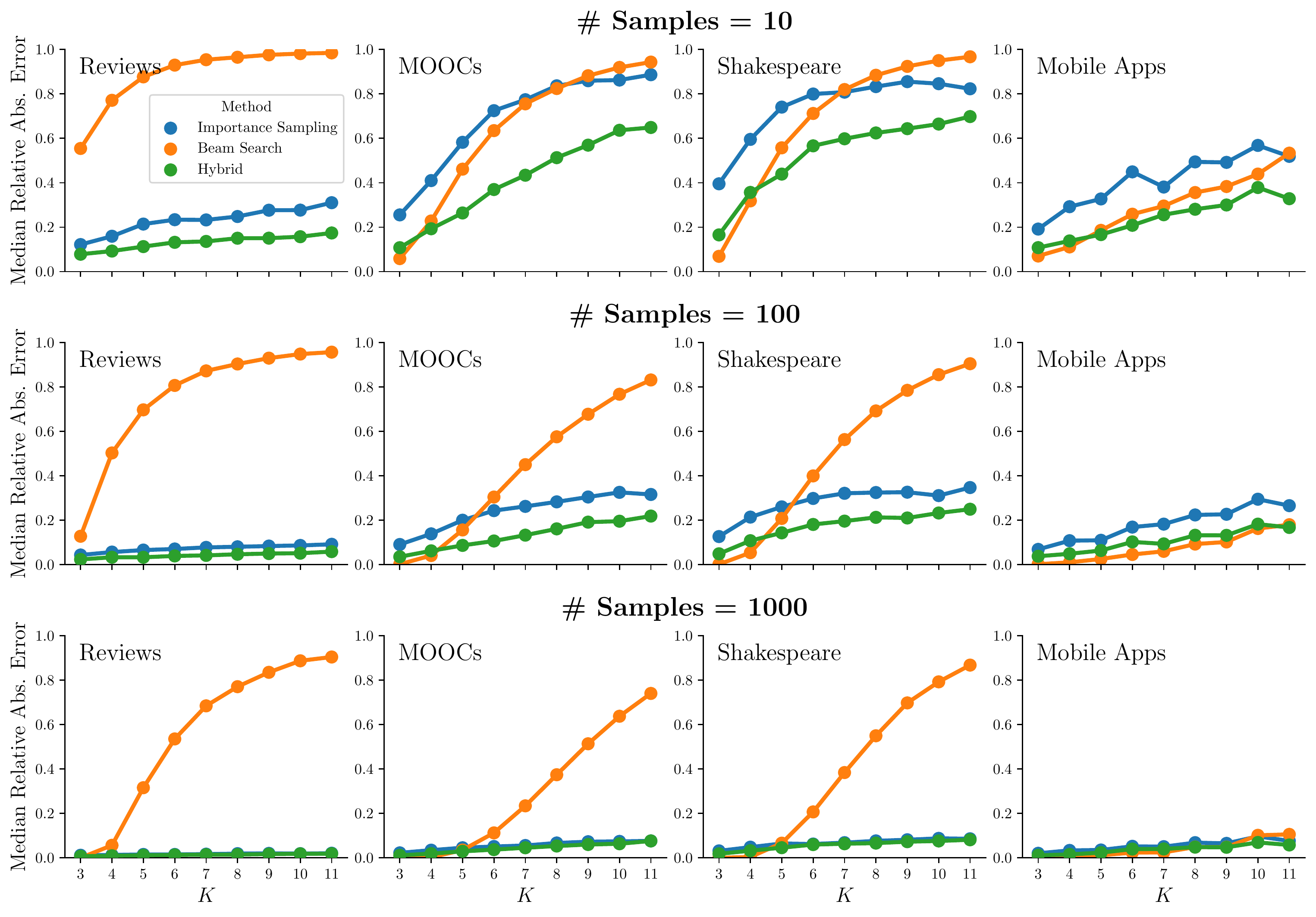}
    \caption{\textbf{Median estimation error versus query horizon $K$}: median relative absolute error (RAE) between estimated probability and (surrogate) ground truth for $\prob_\theta(\hit(\cdot)=K)$, for importance sampling, beam search, and the hybrid method, with varying computation budgets determined by 10, 100, and 1000 samples for the hybrid method. Ground truth values used to determine error in these plots are exact for $K \leq 4$ and approximated otherwise.}. 
    \label{fig:3_med_err_all_samples}
\end{figure}

\begin{figure}
    \centering
    \includegraphics[width=1.0\textwidth]{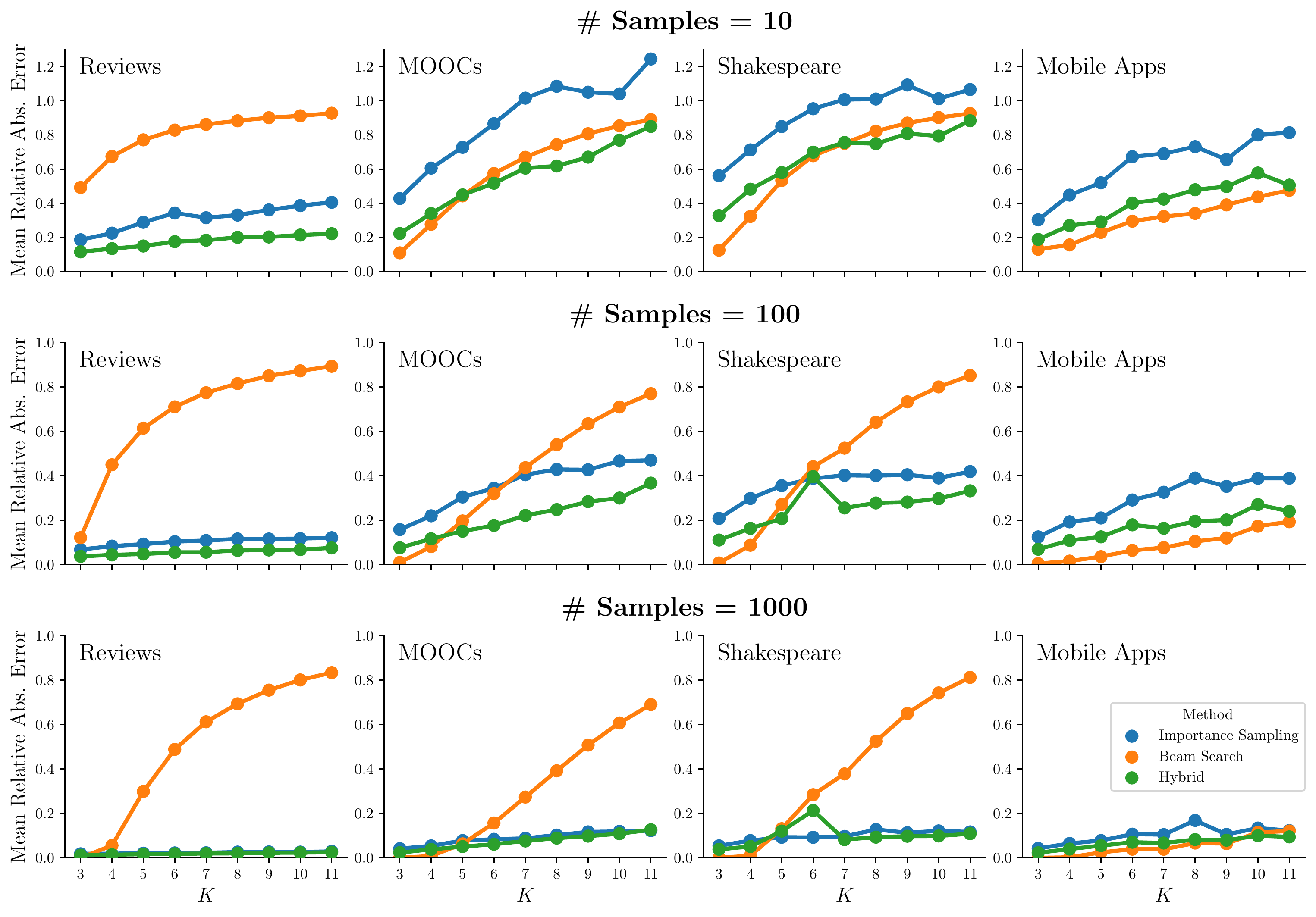}
    \caption{\textbf{Mean estimation error versus query horizon $K$}:   same format as   \cref{fig:3_med_err_all_samples}. Ground truth values used to determine error in these plots are exact for $K \leq 4$ and approximated otherwise.}
    \label{fig:3_mean_err_all_samples}
\end{figure}

\subsection{Query Estimation Error as a Function of Horizon $K$}
 Fig. 3 in the main paper shows the median relative absolute error (RAE) (across $N_Q=1000$ queries) as a function of query horizon $K$ for 4 datasets, comparing beam search, importance sampling, and the hybrid method, with a computation budget fixed   at $S=100$ hybrid samples. Here we provide a number of extensions of these results.

\cref{fig:3_med_err_all_samples} shows the median   RAE, for three levels of computation budget: $S=10, 100, 1000$ and \cref{fig:3_mean_err_all_samples} shows the same results but now reporting mean RAE on the y-axis. While the details differ across different settings, the qualitative conclusions in these Figures agree with those for Fig. 3 in the main paper, namely that beam search is more sensitive (in its error) to both the horizon query $K$ and to individual datasets, compared to both importance sampling and the hybrid method. More granular perspectives of this information for one of the datasets (MOOCs) can be seen in \Crefrange{fig:3_err_scatter_10}{fig:3_err_scatter_1000} in the form of scatter plots of each of the individual query estimates against (surrogate) ground truth for that query. Different budgets are shown in different figures, and the results for $K=3, 5, 7, 9, 11$ are shown in each column.

\begin{figure}
    \centering
    \includegraphics[width=1.0\textwidth]{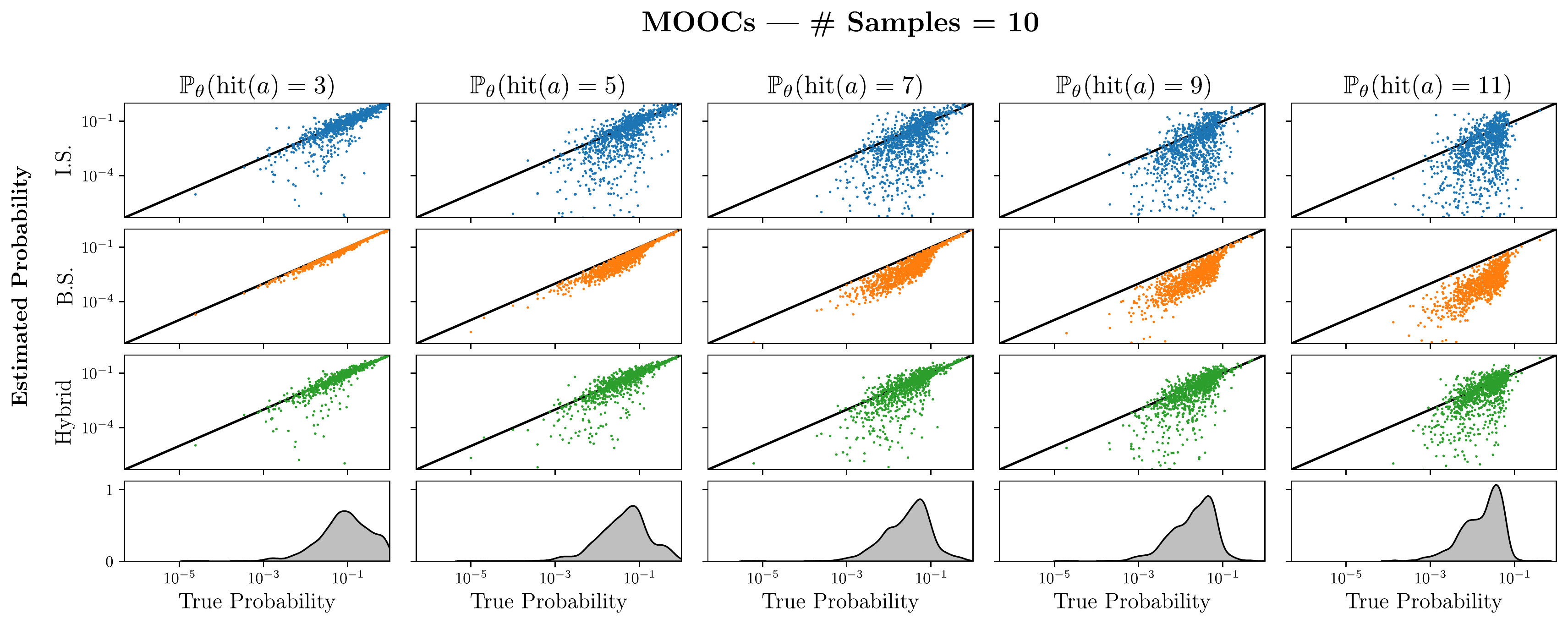}
    \caption{\textbf{Scatterplots of individual query estimates versus (surrogate) ground truth, computation budget of 10 hybrid samples:}  Comparison of importance sampling (I.S.), beam search (B.S.), and the hybrid method for the MOOCs dataset with the budget determined by the hybrid method using 10 samples. The x-axis corresponds to the surrogate ground truth values for a given query result. Density plots at the bottom are for the surrogate ground truth values. Ground truth values used to determine error in these plots are exact for $K \leq 4$ and approximated otherwise.}
    \label{fig:3_err_scatter_10}
\end{figure}

\begin{figure}
    \centering
    \includegraphics[width=1.0\textwidth]{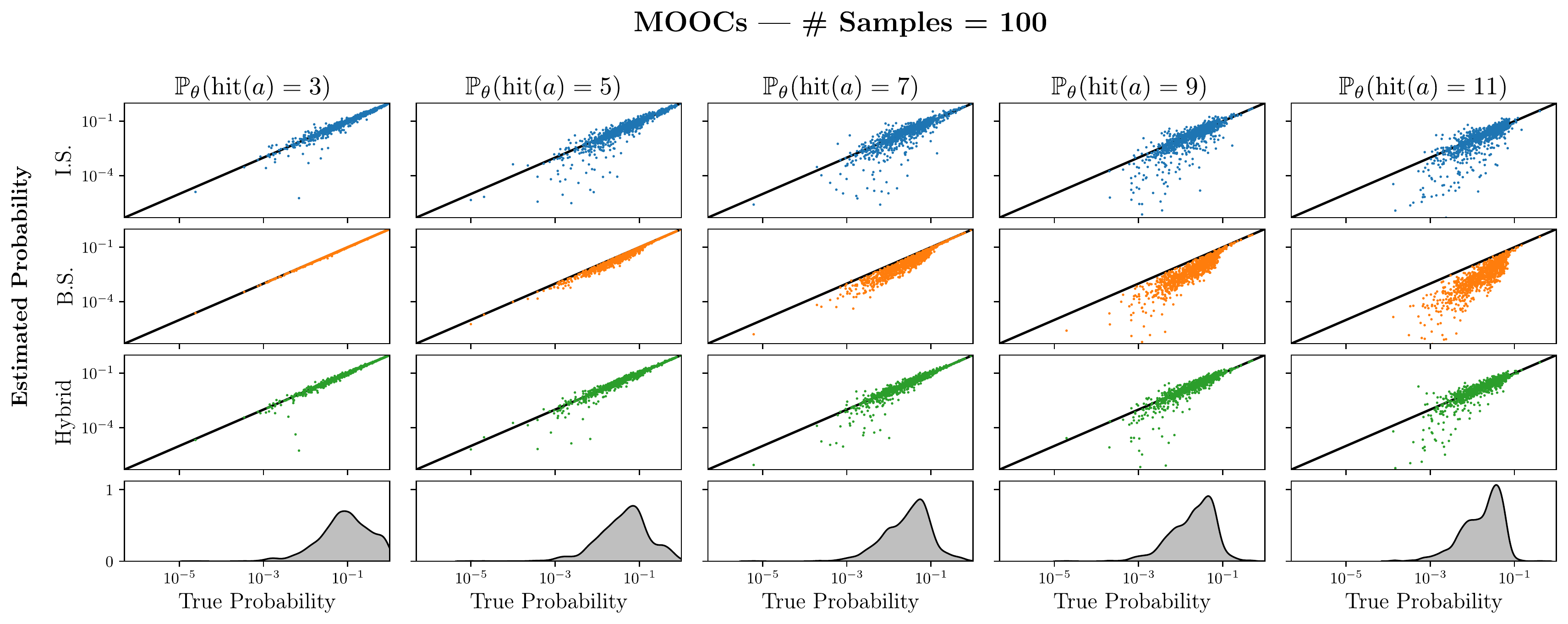}
    \caption{\textbf{Scatterplots of individual query estimates versus (surrogate) ground truth, computation budget of 100 hybrid samples:} Same format as \cref{fig:3_err_scatter_10}.}
    \label{fig:3_err_scatter_100}
\end{figure}
\begin{figure}
    \centering
    \includegraphics[width=1.0\textwidth]{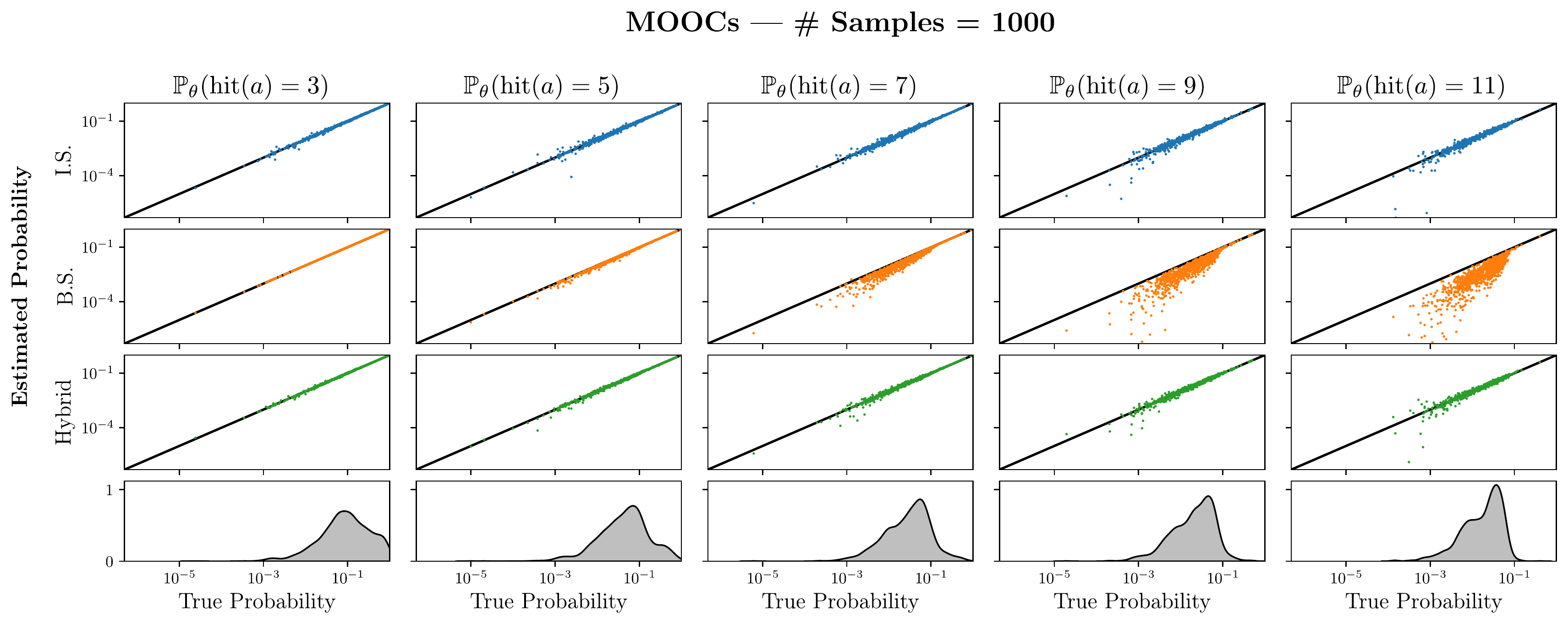}
    \caption{\textbf{Scatterplots of individual query estimates versus (surrogate) ground truth, computation budget of 1000 hybrid samples:} Same format as \cref{fig:3_err_scatter_10}.}
    \label{fig:3_err_scatter_1000}
\end{figure}

\subsection{Query Estimation Error as a Function of Computation Budget}

In addition to identifying optimal query estimation methodologies for a low and fixed computation budget, we also explore the impact of increasing computation budget for query lengths $K=3,7,11$, roughly corresponding to short, medium, and long horizon queries. These experiments are conducted in the same manner as the query estimation experiments with a fixed model budget, but are then repeated for many different budgets derived from $S=10,30,50,100,300,500,1000,3000,5000,10000$ hybrid samples. The intention with these experiments is to observe if any query estimation methods disproportionately benefit from increased computation and exhibit behavior that was not present at lower computation budgets. We also include two additional baselines. The full set of methods explored is listed below.
\begin{enumerate}
  \item Importance sampling (informative proposal distribution $q$ derived from model $p_\theta$)
  \item Beam search 
  \item Hybrid search and sampling 
  \item Monte-Carlo sampling with a uniform proposal distribution
  \item Naive model sampling (direct MC sampling for $\E^{\prob_\theta} \ind(X_{1:K}\in\mathcal{Q})$)
\end{enumerate}

As a clarifying point, naive sampling is conducted by sampling sequences from the model and determining if they fall within the query set $\mathcal{Q}$; the proportion of samples that exist in $\mathcal{Q}$ serves as a naive means of determining the query estimate in question. In addition, Monte-Carlo sampling with a uniform proposal distribution samples sequences in $\mathcal{Q}$ uniformly and the estimates the query probability for that sample. These two methods were not included in the main paper due to their consistently poor performance.

\begin{figure}
    \centering
    \includegraphics[width=1.0\textwidth]{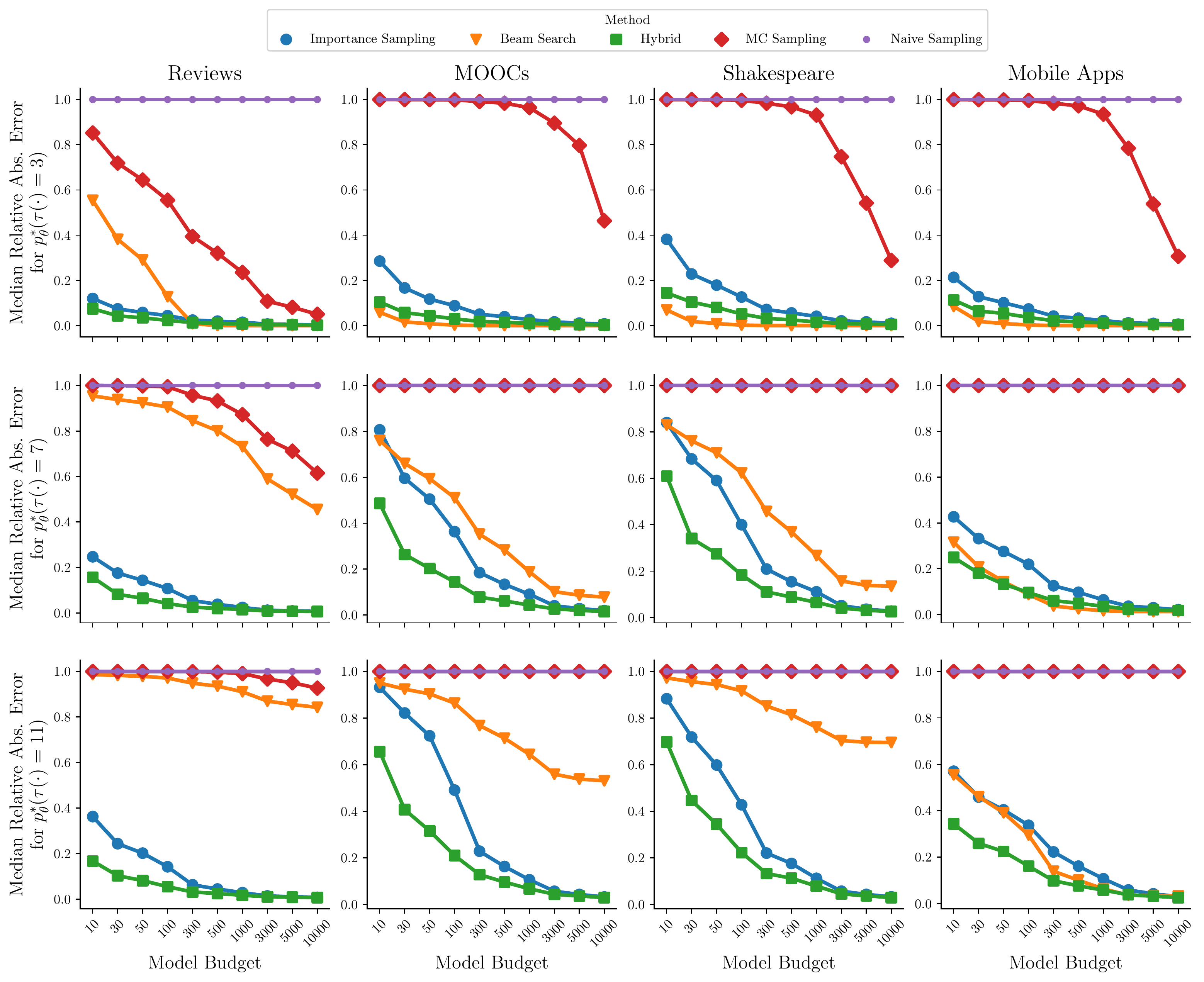}
    \caption{\textbf{Median estimation error versus model budgets:}  Median relative absolute error (RAE) for importance sampling, beam search, the hybrid method, MC sampling, and naive sampling across three different queries, $\prob_\theta(\hit(\cdot)=k)$, for $k=3,7,11$, over all four main datasets, as a function of different model budgets. For cases where the MC sampling results are not visible (e.g., see rightmost plot on the bottom row), the results coincide with the naive sampling results. Ground truth values used to determine error in these plots are exact for $K \leq 4$ and approximated otherwise.}
    \label{fig:3_budget}
\end{figure}

\begin{figure}
    \centering
    \includegraphics[width=1.0\textwidth]{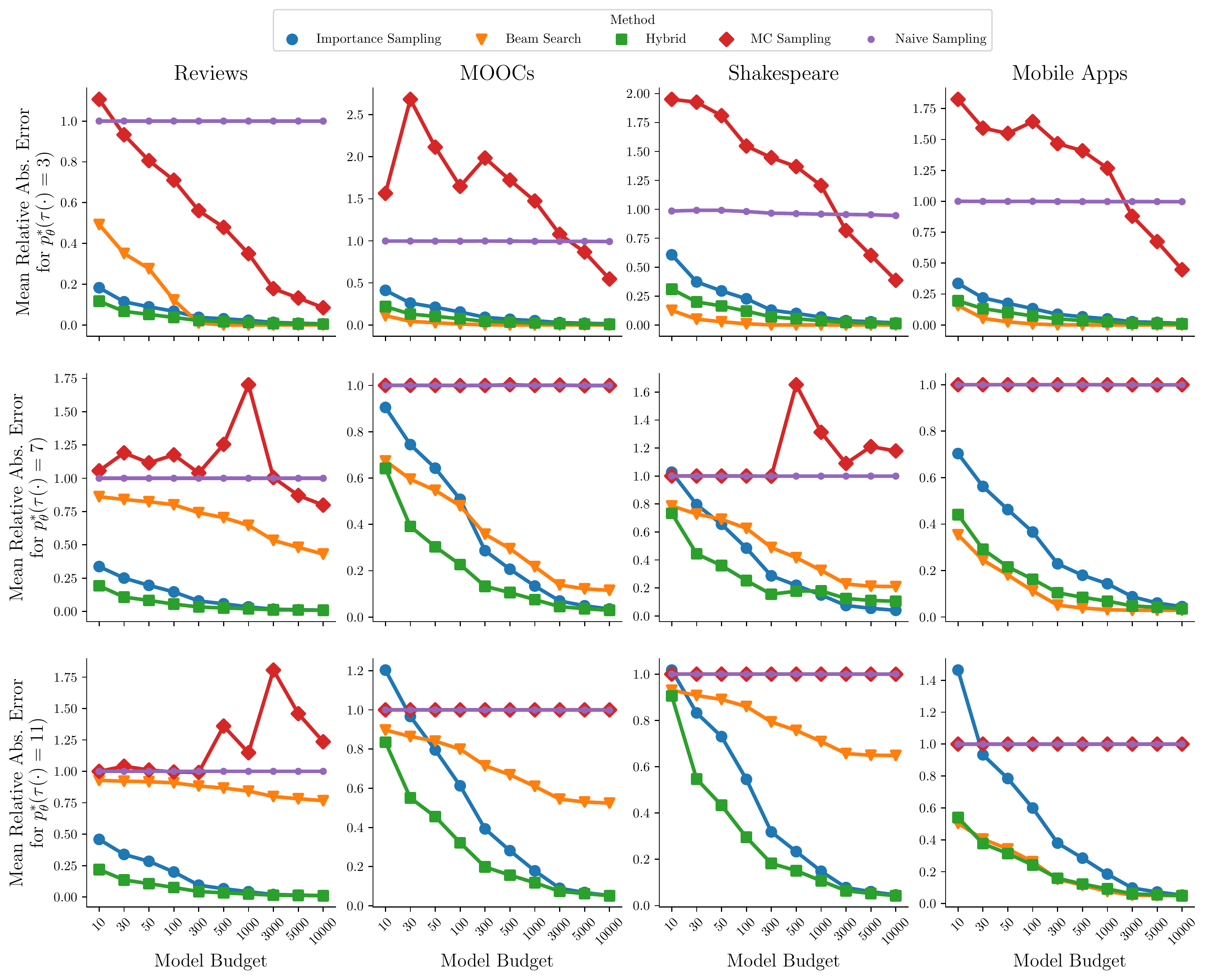}
    \caption{\textbf{Mean estimation error versus model budgets:} Same format as \cref{fig:3_budget}. Ground truth values used to determine error in these plots are exact for $K \leq 4$ and approximated otherwise.}
    \label{fig:3_med_budget}
\end{figure}

\cref{fig:3_budget} (median error) and \cref{fig:3_med_budget} (mean error), show that increasing the computation budget by an order of magnitude roughly corresponds to a three times reduction in RAE for both importance sampling and the hybrid method.  Naive sampling sees almost no benefit from the increased budget regardless of the size of the query path space. Monte Carlo sampling sees some reduction in error from  increased computation budget for some configurations, but also often sees no benefit. 

For the provided budgets, the query estimates resulting from naive model sampling are consistently 0, resulting in an RAE of 1. While naive model sampling can be useful in some contexts in general,  these results indicate that it is not well-suited for estimating query probabilities. This is likely because many of the ground truth probabilities for these queries have values on the order of $10^{-1}$ or smaller. 
For queries that are highly unlikely under the model, the probability of even a single sampled sequence belonging to $\mathcal{Q}$ is very low. 

Monte-Carlo sampling also includes high error estimates, but for a different reason. Since the Monte-Carlo estimate can be decomposed into an expectation over $\prob_\theta(X_{1:K}=x_{1:K}, x_{1:K} \in \mathcal{Q})$ that is then re-scaled by $|\mathcal{Q}| \gg 0$, the scaling term magnifies any error in the expectation dramatically, inducing extremely high variance and the potential to produce query estimates that exceed 1. This high variance can persist even for high computation budgets. By contrast, beam search improves with an increased computational budget, but only as a function of the total path space. The larger the path space and the higher the entropy of the distribution, the worse the beam search estimates are as measured by RAE.

\begin{figure}
    \centering
    \includegraphics[width=\textwidth]{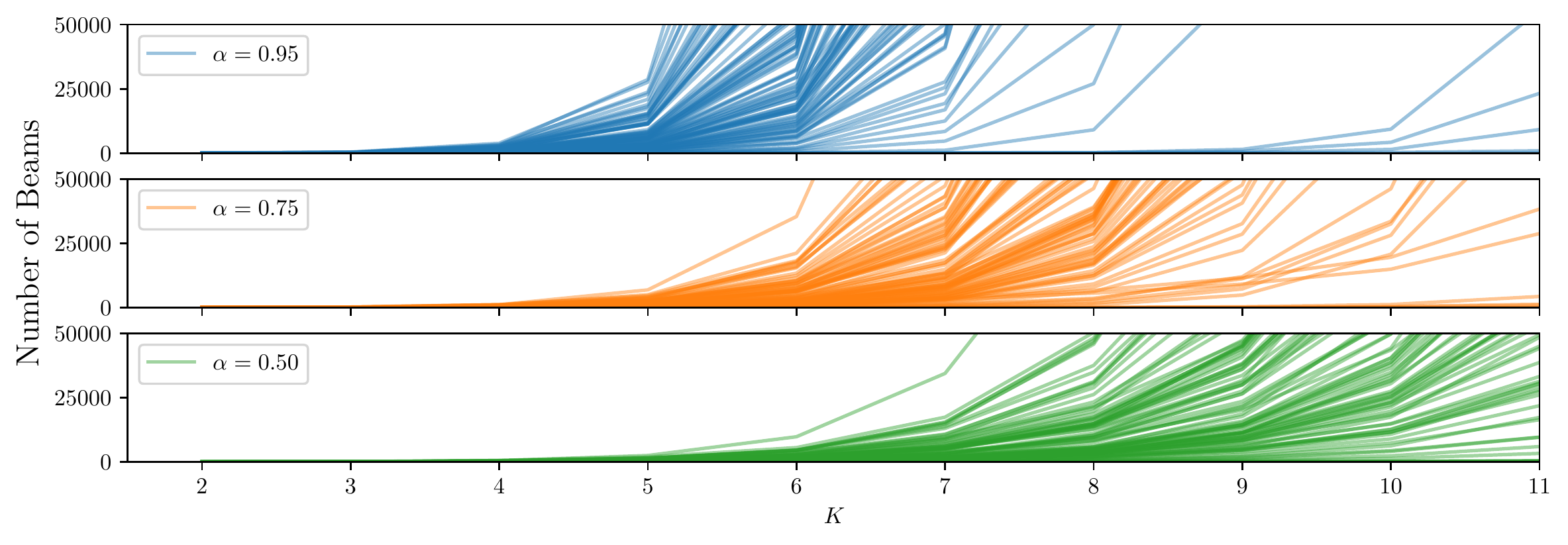}
    \caption{Number of beams as a function of $K$ steps into decoding sequences for coverage-based beam search estimating $\prob_\theta(X_K\in A)$ with $\alpha\in\{0.95, 0.75, 0.5\}$ over a variety of different starting histories and subsets $A \subset \mathcal{X}$ in the Shakespeare dataset.}
    \label{fig:3_coverage}
\end{figure}

\subsection{Coverage-based Beam Search Ablation}%
\label{sub:coverage_bs_ablation}
As described in the main paper, a variant of beam search well-suited to query estimation is \textit{coverage-based beam search}. However, this method was not explored further in our analysis as it could not scale to non-trivial query types and large query path spaces. Depicted in \cref{fig:3_coverage}, the minimum number of beams needed to cover 50\%, 75\%, and 90\% of the query path space increases exponentially with the query length $K$. Though coverage-based beam search comes with desirable coverage guarantees and is a consistent (albeit biased) query estimator, naively applying coverage-based beam search to queries of practical interest is ill-posed and computationally intractable. However, the hybrid method preserves some of beam search's desirable traits while making it suitable for queries of practical interest. See \cref{sec:3_hybrid_details} for more information.

\subsection{Long-horizon Query Estimation}

The empirical results of our experiments tell us much of the general performance of the proposed methods, but we still do not have clarity on the extent to which importance sampling can yield low-error estimates in the limit of large sequence lengths. To that end, we conducted an experiment to directly assess the performance of importance sampling when estimating long horizons. Our primary conclusion was that (i) with reasonable sample sizes the error remains below 25\%, even at $K=100$, and (ii) the error does not increase substantially as $K$ increases beyond $K=5$.

In more detail, a set of 100 history sequences $\mathcal{H}$ of length 5 was collected from each dataset. We then compute ground truth or pseudo ground truth (PGT) for query 2 (probability of event at $K$ without restriction) over sequence lengths $K =[2, 5, 10, 20, 40, \dots, 100]$. Pseudo-ground truth estimates are generated with a tolerance $\delta=1e^{-7}$ and a maximum sampling budget of $250,000$. The same estimates are then computed using importance sampling with sampling budgets $[10, \dots, 10000]$. Similar to the experiments in Section 5, we aggregate and present the results as the median relative absolute error, shown in the tables below. Since we are not focusing on a specific event type  of interest, this median also marginalizes over all event types  in the vocabulary as well as the sampled query sequences. All results are reported as percentages

\begin{table}[!ht]
    \centering
    \begin{tabular}{l|cccccccc}
    \toprule
        \# Samples & 2 & 5 & 10 & 20 & 40 & 60 & 80 & 100 \\
        \midrule
        10 & 31.35 & 40.33 & 44.39 & 43.48 & 45.98 & 45.84 & 49.44 & 50.05 \\ 
        100 & 13.29 & 17.12 & 18.73 & 19.46 & 20.41 & 21.65 & 21.82 & 21.86 \\ 
        1000 & 4.94 & 6.38 & 7.06 & 7.33 & 7.26 & 6.99 & 7.13 & 6.82 \\ 
        10000 & 1.60 & 2.05 & 2.21 & 2.34 & 2.20 & 2.13 & 2.06 & 2.10 \\ 
    \bottomrule
    \end{tabular}
    \caption{Median RAE across query estimations methods (1000 samples) and query horizons $K=2,5,10,20, ..., 100$ and sample budgets $10, 100, 1000, 10000$ for Amazon Reviews.}
    \label{table:3_long_horizon_amazon_reviews}
\end{table}

\begin{table}[!ht]
    \centering
    \begin{tabular}{l|cccccccc}
    \toprule
        \# Samples & 2 & 5 & 10 & 20 & 40 & 60 & 80 & 100 \\ 
        \midrule
        10 & 66.21 & 84.93 & 92.92 & 96.96 & 98.41 & 99.00 & 99.19 & 99.32 \\ 
        100 & 62.36 & 80.94 & 89.79 & 94.00 & 96.16 & 97.05 & 97.38 & 97.66 \\ 
        1000 & 47.96 & 59.14 & 69.10 & 75.45 & 74.79 & 71.34 & 65.12 & 59.75 \\ 
        10000 & 15.21 & 20.51 & 23.38 & 24.00 & 20.66 & 18.78 & 16.48 & 15.55 \\ 
        \bottomrule
    \end{tabular}
    \caption{Median RAE across query estimations methods (1000 samples) and query horizons $K=2,5,10,20, ..., 100$ and sample budgets $10, 100, 1000, 10000$ for Mobile Apps.}
    \label{table:3_long_horizon_apps}
\end{table}

\begin{table}[!ht]
    \centering
    \begin{tabular}{l|cccccccc}
    \toprule
        \# Samples & 2 & 5 & 10 & 20 & 40 & 60 & 80 & 100 \\  
        \midrule
        10 & 79.84 & 79.99 & 83.53 & 85.29 & 83.53 & 84.00 & 85.82 & 85.15 \\ 
        100 & 28.66 & 32.85 & 39.25 & 40.66 & 38.78 & 41.51 & 39.71 & 40.07 \\ 
        1000 & 8.61 & 10.99 & 13.29 & 13.52 & 13.78 & 13.89 & 13.64 & 14.10 \\ 
        10000 & 2.77 & 3.48 & 4.28 & 4.34 & 4.32 & 4.33 & 4.34 & 4.30 \\ 
    \bottomrule
    \end{tabular}
    \caption{Median RAE across query estimations methods (1000 samples) and query horizons $K=2,5,10,20, ..., 100$ and sample budgets $10, 100, 1000, 10000$ for Shakespeare.}
    \label{table:3_long_horizon_shakespeare}
\end{table}

\begin{table}[!ht]
    \centering
    \begin{tabular}{l|cccccccc}
    \toprule
        \# Samples & 2 & 5 & 10 & 20 & 40 & 60 & 80 & 100 \\ 
        \midrule
        10 & 89.62 & 97.30 & 98.73 & 99.19 & 99.08 & 99.06 & 98.91 & 98.87 \\ 
        100 & 63.80 & 83.34 & 84.77 & 79.15 & 66.89 & 62.99 & 60.28 & 61.30 \\ 
        1000 & 30.39 & 43.37 & 36.32 & 26.07 & 19.78 & 18.11 & 17.53 & 17.38 \\ 
        10000 & 11.92 & 15.15 & 11.35 & 7.86 & 5.90 & 5.66 & 5.43 & 5.40 \\
        \bottomrule
    \end{tabular}
    \caption{Median RAE across query estimations methods (1000 samples) and query horizons $K=2,5,10,20, ..., 100$ and sample budgets $10, 100, 1000, 10000$ for MOOCs.}
    \label{table:3_long_horizon_moocs}
\end{table}

In general, we see (not surprisingly) that the increase in sequence length leads to a consistent and non-trivial increase in error for most sampling budgets. In addition, as expected the increase in sampling budget consistently reduces the query estimation error. However, we do witness the interesting phenomenon that the error occasionally \textit{decreases} as the sequence length increases. We conjecture that this may be happening because as the sequence length increases, the relevance of the history context $\mathcal{H}$ decreases and the distribution may regress to a base stationary distribution (as if no history context were provided at all) indicating that the conditional model entropy may be the main driving factor in estimation complexity. This intuition is further supported by the fact that in many datasets, budgets exist where query estimation error first increases but then begins to decrease again. Regardless, as the largest budget of $10,000$ we witness that median RAE remains at or under 25\% in all cases, often significantly so.

\subsection{Query Estimation with Large-Scale Language Models}%
\label{sec:3_gpt_exp}

In order to explore the feasibility of applying our query estimation methods to real-world sequence data, we also analyze a subset of our query estimation methods against GPT-2 and WikiText language data. GPT-2 decomposes English words into $V=50257$ work pieces, a vocabulary over 500 times larger than our other datasets. For this reason, we only conduct experiments using only 100 sequence histories per dataset due to computational limitations. For the same reason, we do not explore the hybrid method and restrict ourselves to analyzing the following query estimators:
\begin{enumerate}
  \item Importance sampling (informative proposal distribution $q$ derived from model $p_\theta$)
  \item Beam search 
  \item Monte-Carlo sampling with a uniform proposal distribution
\end{enumerate}

\begin{table}
\centering
\begin{tabular}{cccccc}
\toprule
K &  Importance Samp. &  Beam Search &  MC Samp. &  Entropy &   Entropy \% \\
\midrule
         3 &                     \textbf{11.41} &              51.25 &                  99.36 &    12.89 &        41.28 \\
        4 &                     \textbf{13.35} &              82.42 &                  99.95 &    19.09 &        40.74 \\
        5 &                      \textbf{13.53} &              93.59 &                  99.99 &    25.23 &        40.38 \\
\bottomrule
\end{tabular}
\caption{\textbf{Median RAE across query estimations methods (1000 samples) and query horizons $K=3,4,5$ for GPT-2 and Wikitext}. Entropy values estimated as the mean (over 100 queries) of the restricted proposal $q$ and entropy \% is the entropy in percentage relative to its potential maximum value $Klog(V)$.}
\label{table:3_wikitext_and_entropy}
\end{table}

Fixed-budget query experiments with GPT-2 are conducted identically to those on the other 4 datasets. We find query estimation error closely mirrors the results we see in datasets with smaller vocabulary sizes, suggesting our findings may generalize well to practical domains. Our analysis is reported in \cref{table:3_wikitext_and_entropy} and includes estimates of the restricted model entropy $H(q)$ for different query lengths $K$, with the entropy increasing much faster than small-vocabulary models, as expected. With that said, there is still much exploration to be done on large-scale sequence models and is a promising avenue for future work.

\begin{figure}
    \centering
    \includegraphics[width=\textwidth]{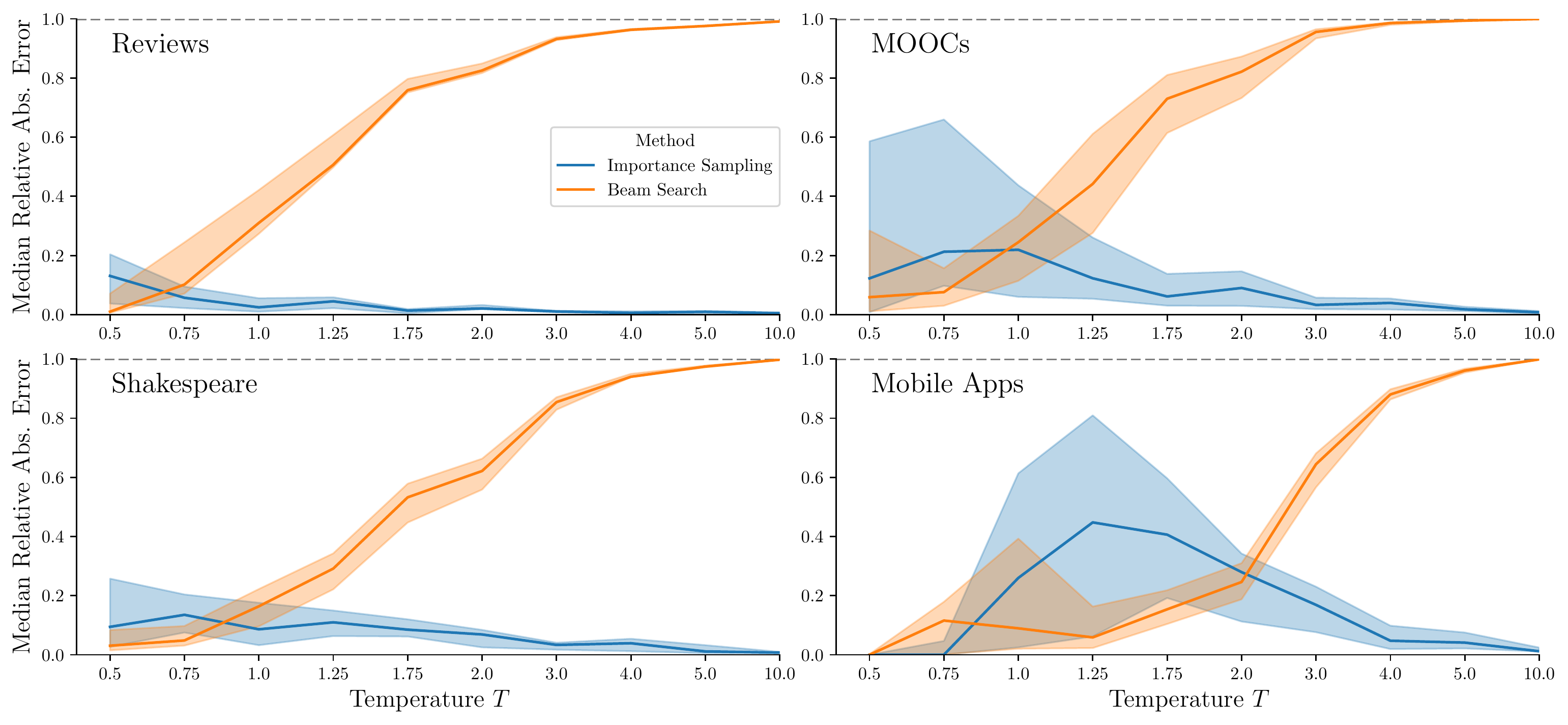}
    \caption{\textbf{Error as a function of entropy:} Median relative absolute error for estimating $\prob_{\theta,T}(\hit(\cdot)=4)$ over all four datasets for a variety of imposed temperatures $T$. Shaded regions indicate interquartile ranges.}
    \label{fig:3_temp_all_datasets}
\end{figure}

\subsection{Entropy Relationship with Query Estimation Error}

Fig. 4(b) in the main paper demonstrated how indirectly controlling the entropy of a given model through an applied temperature affected the performance of beam search and importance sampling. In \cref{fig:3_temp_all_datasets} we can see similar plots for all of four of our main datasets.

\subsection{Investigation of Query 4 (``A" before ``B")}

Most of our analysis was conducted on hitting time queries, and justifiably so as more advanced queries decompose into individual operations on hitting times. This includes Q4, colloquially stated as the probability an item from token set $A$ occurs before an item in token set $B$. More formally:
\begin{align}
\prob_\theta(\hit(A) < \hit(B) ) & = \sum_{k=1}^\infty \prob_\theta(\hit(A)=k, \hit(B)>k) \\
& = \sum_{k=1}^\infty \sum_{a\in A}\prob_\theta(X_k=a, X_{<k}\in (\V\setminus(A\cup B))^{k-1}) 
\label{eq:3_hit_comp_sum_supp}
\end{align}
While this cannot be computed exactly, a lower bound can. The other option is to produce a lower bound on this expression by evaluating the sum in \cref{eq:3_hit_comp_sum_supp} for the first $K$ terms. We can achieve error bounds on this estimate by noting that $\prob_\theta(\hit(A) < \hit(B) ) + \prob_\theta(\hit(A) > \hit(B) ) = 1$. As such, if we evaluate \cref{eq:3_hit_comp_sum_supp} up to $K$ terms for both $\prob_\theta(\hit(A) < \hit(B) )$ and $\prob_\theta(\hit(A) > \hit(B) )$, the difference between the sums will be the maximum error either lower bound can have. This difference will be referred to as \emph{unaccounted probability} and will approach 0 as $K\rightarrow\infty$.

A natural question to ask is what is the minimum value of $K$ sufficient to compute these lower bounds to in order to have negligible unaccounted probability. Though this will surely vary based on the entropy of the model and the specific query in question, we explore this question across all datasets except WikiText and note some general trends.

\begin{figure}
    \centering
    \includegraphics[width=\textwidth]{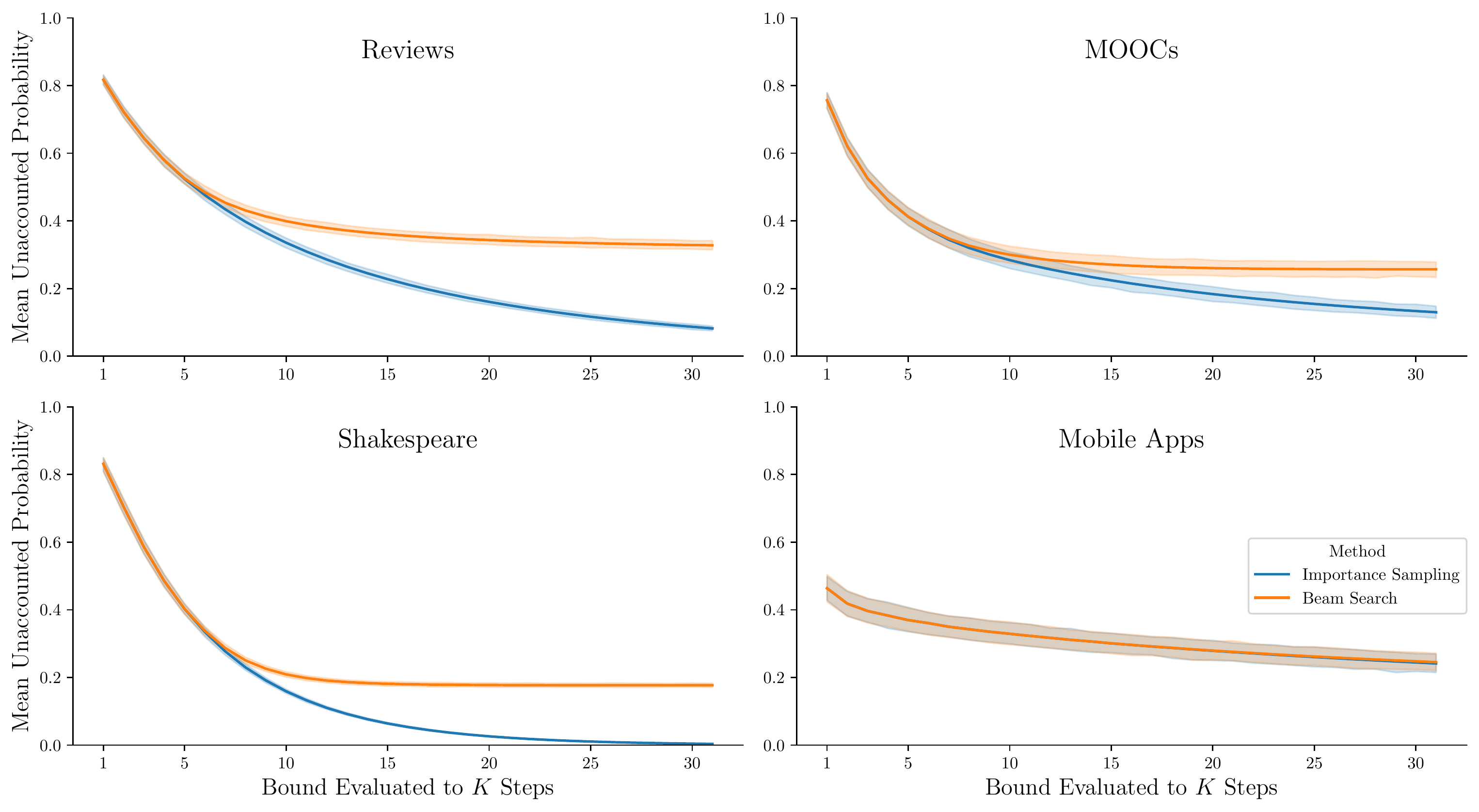}
    \caption{Mean unaccounted probability ($1-(\hat{\prob}_\theta(\hit(A)<\hit(B)) + \hat{\prob}_\theta(\hit(B)<\hit(A)))$) when evaluating the pair of queries for $\hit(A) < \hit(B)$ and $\hit(B) < \hit(A)$ up to $K$ steps into the future over all four main datasets. Shaded regions indicate 99\% confidence intervals and estimates were computed with a fixed model budget based on 1000 samples.}
    \label{fig:3_a_before_b_investigation}
\end{figure}

\cref{fig:3_a_before_b_investigation} plots the unaccounted probability $1-\hat{\prob}_\theta(\hit(A) < \hit(B))-\hat{\prob}_\theta(\hit(A)>\hit(B))$ as a function of query length $K$. We observe that for many datasets, a query horizon of $k=30$ is largely sufficient to reduce the remaining probability to under 10\%. One notable exception is the Mobile Apps dataset, which, due to its lower entropy and high self-transition rate, maintains a much longer query horizon. This discovery implies that a successful partition of probability space with a given Q4 query can be sensitive to the model distribution, but also that approximate partitions are possible for relatively low values of $K$.

\subsection{Linearly Compounding Errors in Complex Query Estimation}

Hitting time queries can often be seen as components of more involved queries, such as “a” before “b” queries $\prob_\theta(\hit(a) < \hit(b))$ or counting-style queries $\prob_\theta(N_a(K)=n)$. These queries can be rewritten as summations of more basic hitting time queries. For instance, $\prob_\theta(\hit(a) < \hit(b)) = \sum_{k=1}^\infty \prob_\theta(\hit(a)=k,  \hit(b)>k)$. In practice, each summand probability is estimated using our proposed techniques so it can be seen that the error compounds additively with respect to the different basic hitting time queries. 

More generally, our framework proposes representing general queries as $\mathcal{Q} = \cup_i \mathcal{Q}_i = \cup_i \prod_j \mathcal{V}_j^{(i)}$ such that the $\mathcal{Q}_i$’s form a minimal partition on $\mathcal{Q}$. With this representation, we can see that for an arbitrary query, the error when estimating will compound additively and scale linearly with respect to the number of different $\mathcal{Q}_i$. Note that the actual values of $\prob_\theta(\textbf{X}\in\mathcal{Q}_i)$ do have an impact on the errors when estimating due to the values $\in[0,1]$. Lastly, as mentioned in the paper we can additionally control this error either by utilizing the coverage-based beam search, or by leveraging the Central Limit Theorem with importance sampling.

\subsection{Qualitative Exploration and Practical Applications}%

In addition to systematic quantitative analysis of query estimators, we also qualitatively explore specific applications of our methods that lend practical insights. First, we consider the question of ``given a partial sentence, predict when the sentence will end". 
Using our query estimation methods, we can not only answer this question with relatively low error but also effectively re-use intermediate computational results. 
This necessarily correlates query estimates over steps $K$, but this confers little negative impact upon the analysis since our sampling methods are unbiased estimators and beam search, though biased, offers a deterministic lower bound. The results of our analysis are seen in Fig. 1 %
in the main paper and align with basic intuition about English sentences, with open ended prefixes possessing a long-tailed end-of-sentence probability distribution relative to more structured and declarative phrases.

\begin{figure}
    \centering
    \includegraphics[width=\textwidth]{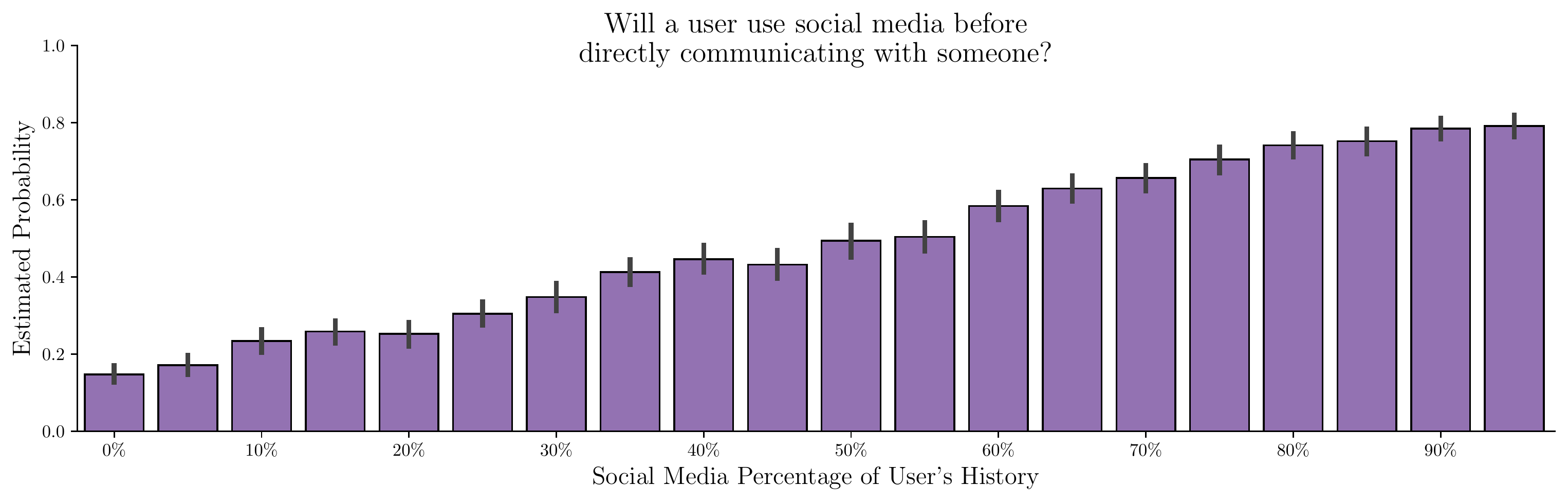}
    \caption{A case study of evaluating how likely it is that a mobile app user will use social media before they directly communicate with someone via text messaging, email, phone call, etc. as a function of how much of their recent history consists of social media usage. This is a $\prob_\theta(\hit(A)<\hit(B))$ type query that has been estimated and averaged over 500 different histories for each social media history percentage. Error bars indicate 90\% confidence intervals.}
    \label{fig:3_flashy_ab}
\end{figure}

A second practical question that can be asked with our query estimation methods utilizes the Mobile Apps dataset: given someone's mobile usage history, predict what will occur first: the individual will go on social media or directly interact with someone via video chat, call, text, or email. This question is a practical application of query 4, $\prob_\theta(\hit(a) < \hit(b))$. Other, equally interesting equivalents of this question include ``will an online shopper purchase something before leaving the website?". Below, we have conducted an experiment where we synthetically generate mobile app behavior histories with specific percentages of social media activity present. By computing these queries over several histories and then averaging the estimates, we obtain the results seen in \cref{fig:3_flashy_ab}. As expected, we see a clear linear trend between a user's social media usage and the likelihood they will return to it before conducting other tasks like directly communication. Such a result, though contrived and purely demonstrative in this setting, could be applied to many practical applications that analyze the characteristics of online user behavior.

\chapter{Supplemental Material for Chapter 4}

\addtocontents{toc}{\protect\setcounter{tocdepth}{0}}

\section{Deriving ``A before B'' Estimator}\label{sec:4_a_before_b_derivation}
Let $A,B\subset\mathcal{M}$ and $A\cap B = \varnothing$. Recall that $\hist^A_{[0,t]}:=\hist^A_t$ is the sequence of events over times $[0,t]$ with the restriction that the marks must all belong to $A$. Finally, let $\q$ describe a proposal distribution with $\mu_k^*(t) = \ind(k \notin A \cup B)\lambda^*_k(t)$. With this in mind, we derive the expected value expression for the ``A before B'' queries:
\begin{align*}
\prob&\left(\hit(A) < \hit(B)\right) = \int_0^\infty \prob\left(\hit(A) < \hit(B), \hit(A) \in [t,t+dt)\right)\\
& = \int_0^\infty \sum_{k\in A}\prob\left(|\hist_{[t,t+dt)}^k|=1, \hist^A_{t-}=\varnothing, \hist^B_{t-}=\varnothing\right) \\
& = \int_0^\infty \sum_{k\in A} \prob\left(|\hist_{[t,t+dt)}^k|=1, \hist^{A\cup B}_{t-}=\varnothing \right) \\
& = \int_0^\infty \sum_{k\in A}\E^{\prob}_{\hist_{t-}}\left[\prob\left(|\hist^k_{[t,t+dt)}|=1, \hist^{A\cup B}_{t-}=\varnothing\sep\hist_{t-}\right)\right] \\
& = \int_0^\infty \sum_{k\in A}
\E^{\prob}_{\hist_{t-}}\left[\prob\left(|\hist^k_{[t,t+dt)}|=1 \sep  \hist^{A\cup B}_{t-}=\varnothing, \hist_{t-}\right)\prob\left(\hist^{A\cup B}_{t-}=\varnothing\sep\hist_{t-}\right)\right] \\
& = \int_0^\infty \sum_{k\in A}
\E^{\prob}_{\hist_{t-}}\left[\prob\left(|\hist^k_{[t,t+dt)}|=1 \sep  \hist_{t-}\right)\ind\left(\hist^{A\cup B}_{t-}=\varnothing\right)\right] \\
& = \int_0^\infty \sum_{k\in A}
\E^{\prob}_{\hist_{t-}}\left[\lambda^*_k(t)\ind\left(\hist^{A\cup B}_{t-}=\varnothing\right)\right]dt \\
& = \int_0^\infty 
\E^{\prob}_{\hist_{t-}}\left[\lambda^*_A(t)\ind\left(\hist^{A\cup B}_{t-}=\varnothing\right)\right]dt \\
& = \int_0^\infty \E^\q_{\hist_{t-}}\left[\lambda^*_A(t)\ind\left(\hist_{A\cup B}(t)=\varnothing\right)\frac{\mathcal{L}^\prob\left(\hist_{t-}\right)}{\mathcal{L}^\q\left(\hist_{t-}\right)}\right] dt \\
& = \int_0^\infty \E^\q_{\hist_{t-}}\left[\lambda^*_A(t)\exp\left(-\int_0^t \lambda^*_{A \cup B}(s) ds\right)\right] dt \\ 
& = \int_0^\infty \E^\q_{\hist_{\infty}}\left[\lambda^*_A(t)\exp\left(-\int_0^t \lambda^*_{A \cup B}(s) ds\right)\right] dt \\ 
& = \E_{\hist_{\infty}}^ \q\left[\int_0^\infty \lambda^*_A(t)\exp\left(-\int_0^t \lambda^*_{A \cup B}(s) ds\right)\right] dt 
\end{align*}
where the last line is justified due to the Dominated Convergence Theorem. The prerequisites for this theorem are satisfied by noting that:
\begin{align*}
\int_0^\infty \lambda^*_{A}(t)\exp&\left(-\int_0^t \lambda^*_{A \cup B}(s) ds\right)dt \leq  \int_0^\infty \lambda^*_{A\cup B}(t)\exp\left(-\int_0^t \lambda^*_{A \cup B}(s) ds\right)dt \\
& = -\int_0^\infty \frac{d}{dt}\exp\left(-\int_0^t \lambda^*_{A \cup B}(s) ds\right)dt \\
& = \exp\left(-\int_0^0 \lambda^*_{A \cup B}(s) ds\right) - \exp\left(-\int_0^\infty \lambda^*_{A \cup B}(s) ds\right) \\
& = 1 - \exp\left(-\int_0^\infty \lambda^*_{A \cup B}(s) ds\right) \\
& \leq 1.
\end{align*}

\section{Further Experimental Details and Results}\label{sec:4_exp_appendix}

\subsection{Dataset Preprocessing}
We evaluate our methods for probabilistic querying on three real-world user-behavior datasets in different application domains that are publicly available. All datasets do not include personally identifiable information, where users are identified by unique integer IDs. For all our experiments, sequences are defined as the event histories of each user, where events have timestamps in seconds. We changed the time resolution from seconds to hours for better interpretability of our query implications. Additionally, we only consider sequences with at least 5 events and at most 200 events. We use 75\% of the sequences for training, 10\% for validation, and 15\% for testing.

\paragraph{MovieLens}
The MovieLens 25M dataset \citep{harper2015movielens} contains 25 million movie ratings by 162,000 users. The movie category (genre) associated with each rating is modeled as marks, and the exact rating value is ignored.\footnote{A single movie in this dataset can possibly have multiple categories associated with it. To accommodate this, if a movie has multiple categories we randomly select a subset of two categories to represent the movie. Note this highlights the benefits of formulating queries as sets of marks instead of just singular marks. To evaluate the hitting time of the next ``comedy'' movie reviewed, then we would need to evaluate the hitting time of the set of all pairs of categories where one element is the comedy genre. This is essentially describing marginalizing over a hierarchical structure for the marks.} For each sequence, the start and the end time are defined as the first and the last event time of each user respectively, because the time span for different users ranges from seconds to years. The first event is discarded in the sequence of history and is only used to indicate $t=0$. For consistent dynamics across the dataset, we filter the data to only contain reviews at or after the year 2015. This leaves 34,935 remaining sequences, each from a unique user.

\paragraph{MOOC}
The MOOC user action dataset \citep{kumar2019predicting} represents user activities on a massive open online course (MOOC) platform. It consists of 411,749 course activities in 97 different types modeled as marks for 7,047 users, out of which 4,066 users dropped out after an activity. Timestamps are standardized to start from timestamp 0. We use the last event time for drop-out users as the end of their sequences, and the maximum timestamp for the other users. 

\paragraph{Taobao}
The Taobao user behavior dataset \citep{zhu2018learning} was originally intended for recommendations for online shopping, which includes four behaviors: page viewing, purchasing, adding items to the chart, and to wishlist. We focus on page viewing of users as events, and model the item category as the event mark, which has marketing implications such as click through rate of recommending some types of items. Due to the large scale of the dataset, we use a subset of 2,000,000 events on 8 consecutive calendar days inclusive (November 25th, 2017 - December 2nd, 2017), as well as the most frequent 1,000 marks (item categories) to demonstrate query answering. All user sequences have the same length.

\subsection{Modeling Details}
For each of the real-world datasets, a neural Hawkes process model \citep{mei2017neural} was trained with a batch size of 128, a learning rate of 0.001, a linear warm-up learning rate schedule over the first 1\% of training iterations, a max allowed gradient norm of $10^4$ for training stability, and the Adam stochastic gradient optimization algorithm \citep{DBLP:journals/corr/KingmaB14} with default hyperparameters. Specific datasets had specific model hyperparameters due to differences in the amount of data and total possible marks. The details for these can be found in \cref{tab:4_model_details}. All models were trained for a fixed amount of epochs; however, each one was confirmed to have converged based on average held-out validation log-likelihood.

\begin{table}
    \centering
    \caption{Model Hyperparameters for Real-World Datasets}
    \begin{tabular}{lccc}
    \toprule
    Hyperparameter & MovieLens & MOOC & Taobao \\
    \midrule
    \# Training Epochs & 100 & 100 & 300 \\
    Mark Embedding Size & 32 & 32 & 64 \\
    Recurrent Hidden State Size & 64 & 64 & 128 \\
    \bottomrule
    \end{tabular}
    \label{tab:4_model_details}
\end{table}

\subsection{Integration Approximation}
For the real-world experiments, many integrals need to be evaluated in order to produce estimates for various queries. Since we use essentially black-box MTPP models, we do not have access to an analytical form for integration. As such, we must estimate every integral at play.

To do this, we utilize the trapezoidal rule. For reference, this involves estimating integrals with the following summation:
\begin{align*}
\int_a^b f(x)dx \approx \sum_{i=1}^N \left(f(x_i)+f(x_{i-1})\right)\frac{x_i-x_{i-1}}{2}
\end{align*}
where the points $x_{i-1} < x_i$ span the interval $[a,b]$ with $x_0=a$ and $x_N=b$. For hitting time queries and marginal mark queries, we utilize $N=1000$ integration points with equal spacing. It is likely that we could get by with much less for these queries, however, for the sake of high precision for experimental results we utilized a large amount of sample points.

For the ``A before B'' queries, we found that the resolution at which the estimator is evaluated at is of much more importance than the other queries. As such, for this query we estimate integrals in an online fashion during the sampling procedure for each proposal distribution sample sequence in conjunction with a very high proposal dominating rate (see \citet{ogata1981lewis} for details). This allowed for a much more efficient procedure (in both computation and memory consumption) compared to integrating results after sampling. 

\subsection{Marginal Mark Query Experiments}
Similar to the hitting time experiments, for the marginal mark queries we similarly sample 1000 random test sequences and condition on the first five events $\hist_{T_5}$. Then, we estimate the query $\prob(M_{8}\in A\sep\hist_{\tau_5})$ where $A$ is a randomly selected subset of all of the unique marks that appear in the entire sequence $\hist$. This is done to ensure that $A$ contains relevant marks for the given sequence.  

\begin{figure}    \centering{\phantomsubcaption\label{fig:4_marg_mark_err}\phantomsubcaption\label{fig:4_marg_mark_eff}}
    \includegraphics[width=0.78\textwidth]{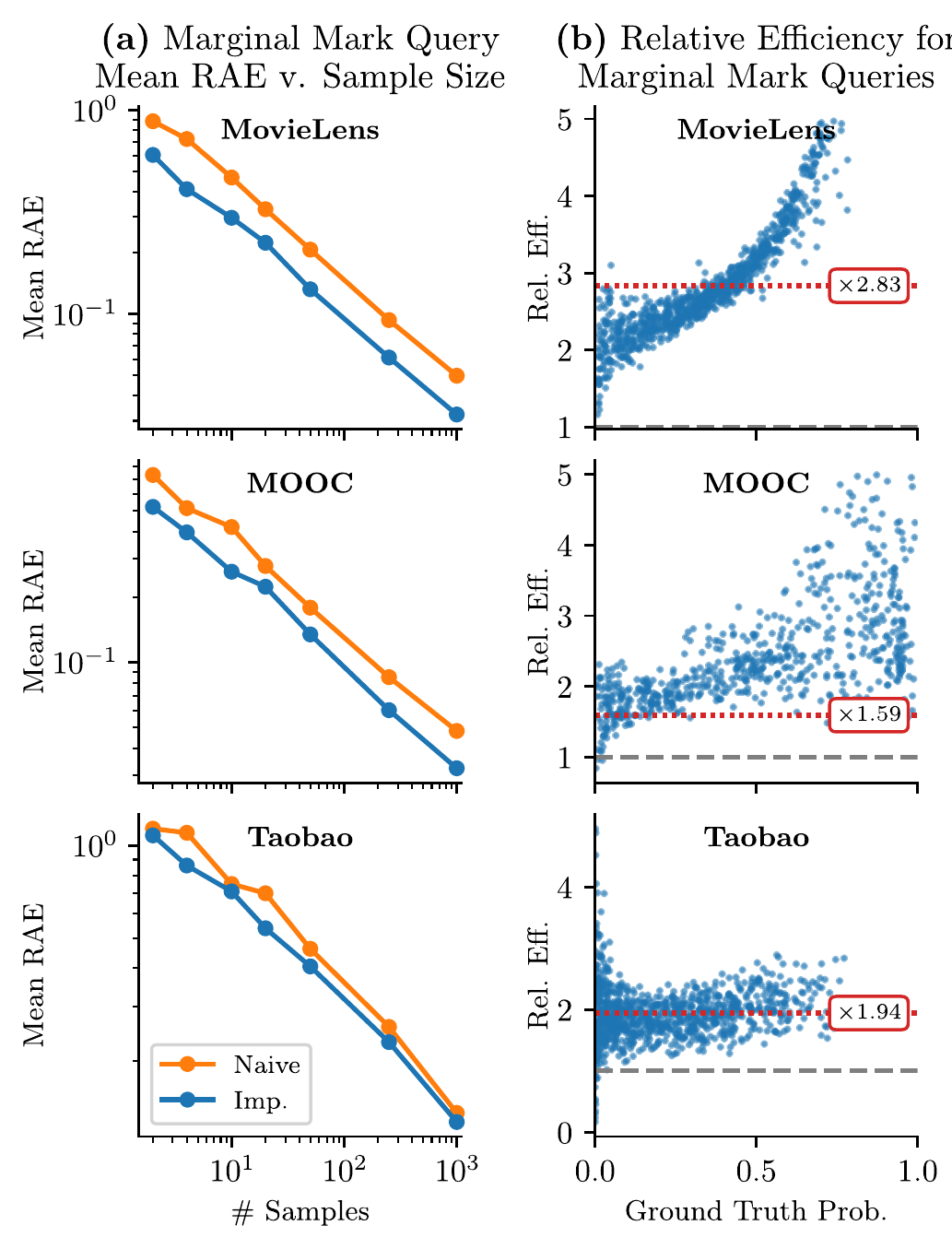}
    \caption{Results from 1,000 different marginal mark queries evaluated on models trained on three different datasets. (a) Average relative absolute error for naive and importance sampling shown in comparison to number of sampled sequences used. (b) Estimated relative efficiency values for importance sampling compared to naive sampling plotted against ground truth marginal mark query values. Gray dashed lines indicate an efficiency of 1. Red lines with associated text box indicate the average multiplicative increase in computation time for importance sampling.}
    \label{fig:4_marg_mark_plots}
\end{figure}

We compared estimating this query with naive sampling and importance sampling using varying amounts of samples: $\{2,4,10,25,50,250,1000\}$. Mean RAE compared to ground truth (estimated using importance sampling with 5,000 samples) can be seen in \cref{fig:4_marg_mark_err}. We witness roughly $1.5$ to $3$ times improvement in performance for the same amount of samples. 
Similar to hitting time query results, we attribute this improvement to the fact that naive sampling only collects binary values, whereas our proposed procedure collects much more dense information over the entire span from $T_5$ to $T_8\sim q$.

We also analyze the relative efficiency of our estimator compared to naive sampling. For each query asked, the efficiency was estimated using 5,000 importance samples. The results can be seen in \cref{fig:4_marg_mark_eff}.  
We achieve a decent decrease in variance, in the majority of contexts, across all datasets. Like the hitting time query results, we also note a pretty strong correlation between underlying ground truth values and the relative efficiency of this estimator. 

Notably, these results do not appear to be as drastic as the hitting time query results. We believe this is due to the fact that the estimator's bounds of integration are sampled from the proposal distribution to be between $\tau_{N-1}$ and $\tau_N$ for each sequence (whereas the bounds for the hitting time query $p(\hit(k)\leq t)$ is always the span of $[0,t]$). This added variability seems to dampen the impact of the integration in the first place.

\subsection{Synthetic Data Experiments}
We also perform experiments on hitting time queries and ``A before B'' queries using self-exciting parametric Hawkes processes \citep{hawkes1971spectra}. The intensity for Hawkes processes with exponential kernels has the explicit form:
\begin{align}
    \lambda_k^*(t) &= \mu_k + \sum_{m=1}^K \int_0^t \phi_{m k}(t - u) d N_m(u) \nonumber \\
    &= \mu_k + \sum_{m=1}^K \sum_{T_{m, i} < t} \phi_{m k}(t - T_{m, i}) \label{eq:4_mutual_density2},
\end{align}
where $T_{m, i}$ refers to the time when the $i^\text{th}$ event of type $m$ occurs, $\phi(\boldsymbol{x}) = \boldsymbol{\alpha} e^{-\boldsymbol{\beta} \boldsymbol{x}}$ with $\boldsymbol{\alpha}, \boldsymbol{\beta} > \boldsymbol{0}$, and Equation \ref{eq:4_mutual_density2} can be expressed in matrix form. The first term $\boldsymbol{\mu}$ is referred to as the \textit{base intensity} or \textit{background intensity} in literature. Each event instantaneously increases the intensity by the corresponding value of $\alpha$ and its influence decays exponentially with $\beta$ and over time. 
Under this parametric form, the integrals for query estimates can be computed in closed forms. 

We also conduct both experiments on hitting time and ``A before B'' queries using Hawkes processes with Gamma kernels. The Gamma kernel has the form of $\phi(\boldsymbol{x}) = \boldsymbol{x} e^{- \boldsymbol{x}}$, and the corresponding Hawkes processes do not have closed-form solutions to these queries.

We evaluate our methods on (i) hitting time queries $\prob(\hit(k)\leq t)$
and (ii) ``A before B'' queries $\prob(\hit(A) < \hit(B))$. All results are averaged over 1,000 different randomly initiated parametric self-exciting Hawkes models that are not feasible for real-world datasets. These random models have different total amounts of marks ranging from $K=3$ to $K=10$, and have different inter-event effects as well as exponential rates of decay. We use 10 integration points for hitting time queries and 1,000 integration points for ``A before B'' queries. \footnote{For the ``A before B'' queries, using 1,000 integration points after sampled provided sufficient precision and we did not need to employ the online integration approach used with the real-world experiments. This is most likely attributable to the well-behaved dynamics exhibited by the parametric Hawkes intensity. This is also why we used a reduced amount of integration points for the synthetic hitting time queries as well compared to the real-world experiments.} 

For each hitting time query, we fix $t=1$ and $k=0$, because the model is randomly generated.
For the ``A before B'' queries, like the real-world experiments we let them be randomly sampled subsets of the vocabulary such $|A|=|B|\approx K/3$.
We evaluate the hitting time queries using varying amounts of samples: $\{2,4,10,25,50,250,1000\}$. For ``A before B'' queries, we only use $\{2,4,10,25,50,250\}$ number of samples because the query estimates take longer. Ground truth probabilities are calculated using 5,000 samples with importance sampling for hitting time queries and with naive method for ``A before B'' queries respectively. 

The plots in \cref{fig:4_supp_hawkes_hitting,fig:4_supp_hawkes_ab} reveal  similar patterns and illustrate that our method is more efficient than the naive estimates averaged over a range of different model settings.

\begin{figure}
    \centering
    \includegraphics[width=0.8\textwidth]{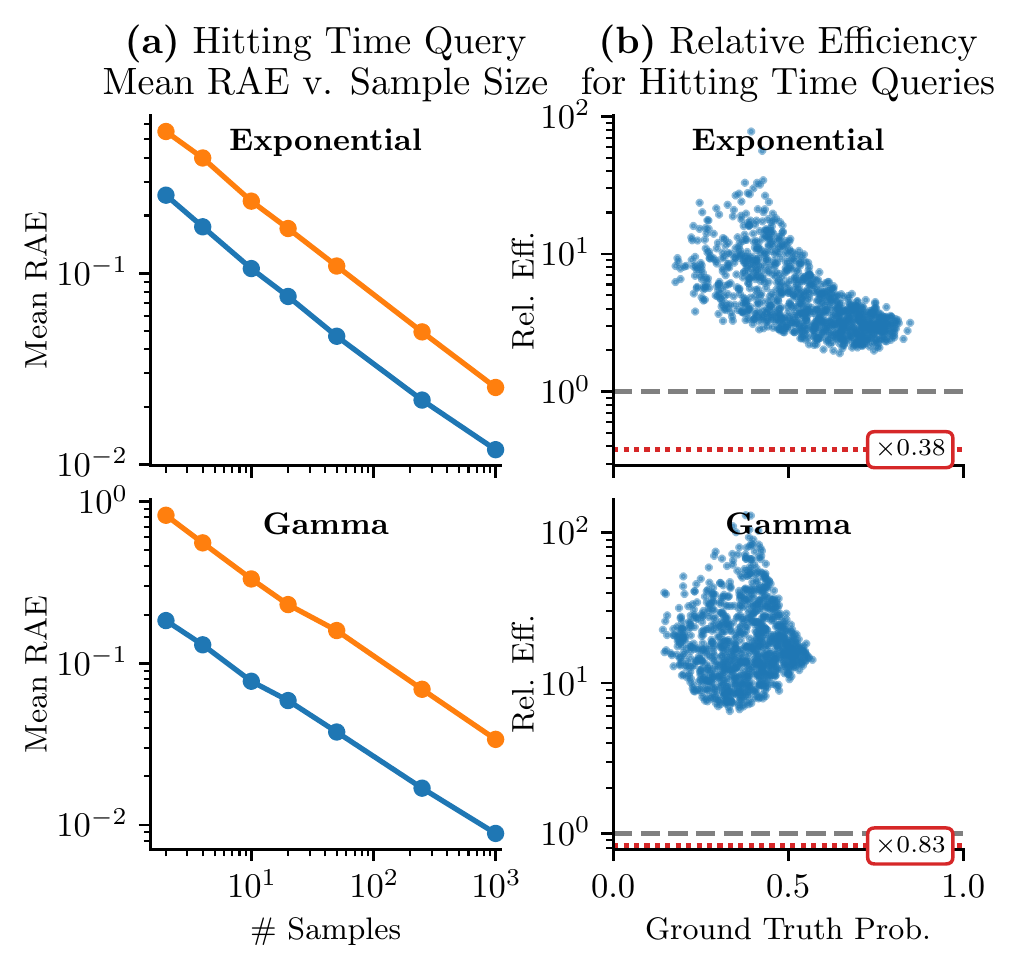}
    \caption{Synthetic experiments for hitting time queries evaluated on parametric self-exciting Hawkes processes with both exponential and Gamma kernels. (a) Average relative absolute error for naive and importance sampling shown in comparison to number of sampled sequences used. (b) Estimated relative efficiency values for importance sampling compared to naive sampling plotted against ground truth hitting time query values. Gray dashed lines indicate an efficiency of 1. Red lines with associated text box indicate the average multiplicative increase in computation time for importance sampling.
    }
    \label{fig:4_supp_hawkes_hitting}
\end{figure}

\begin{figure}
    \centering
    \includegraphics[width=0.8\textwidth]{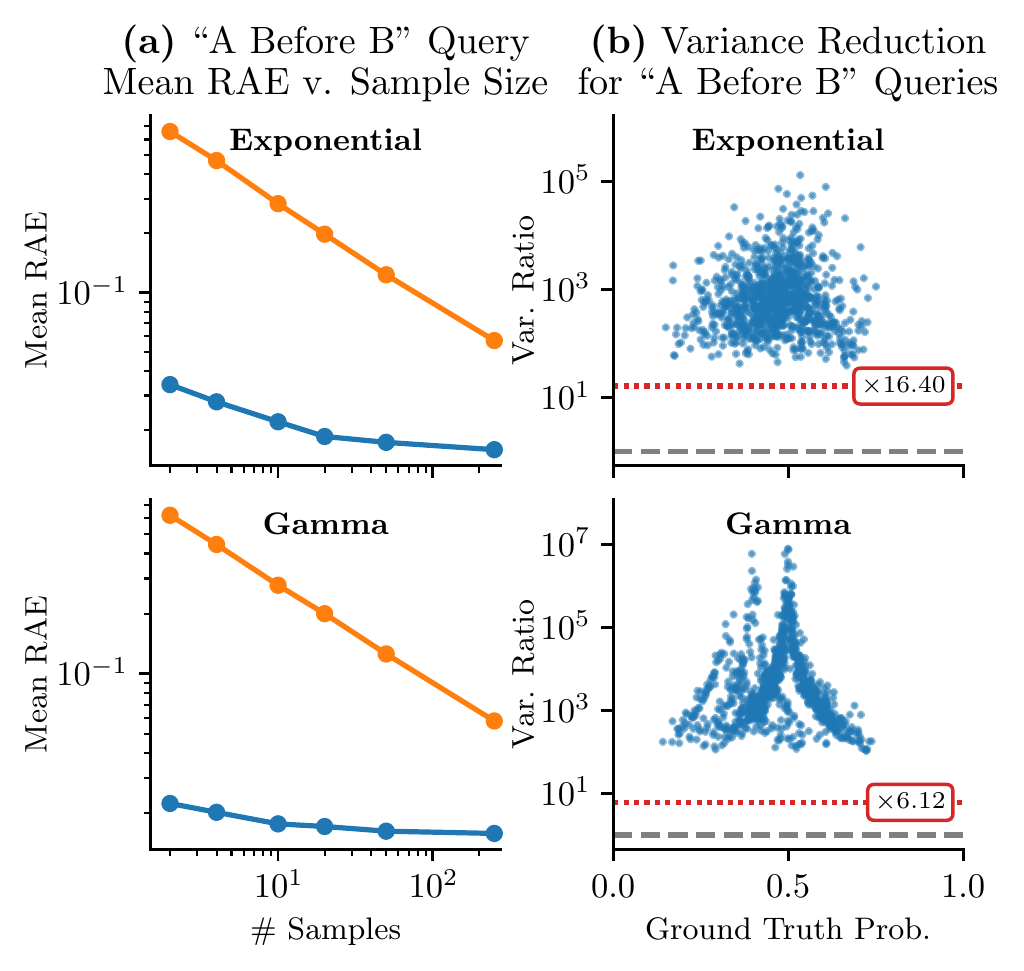}
    \caption{Synthetic experimental results evaluated on 1,000 different random models and ``A before B'' queries for parametric self-exciting Hawkes processes with both exponential and Gamma kernels. (a) Average relative absolute error for naive and importance sampling shown in comparison to number of sampled sequences used. (b) Estimated relative efficiency values for importance sampling compared to naive sampling plotted against ground truth ``A before B'' query values. Gray dashed lines indicate an efficiency of 1. Red lines with associated text box indicate the average multiplicative increase in computation time for importance sampling.}
    \label{fig:4_supp_hawkes_ab}
\end{figure}

\chapter{Supplemental Material for Chapter 5}

\addtocontents{toc}{\protect\setcounter{tocdepth}{0}}

\section{Further Experimental Details and Results}\label{sec:5_exp_appendix}

\subsection{Datasets}

The following are more in depth descriptions on the different real-world datasets used in experiments. All sequences used for both training and inference are preprocessed to only allow sequences with at least 5 events and at most 200. Summary statistics can be found in \cref{tab:data_statistics}.

\paragraph{Taobao}
The Taobao user behavior dataset \citep{zhu2018learning} was originally intended for recommendations during online shopping sessions, which includes four different behaviors: page viewing, purchasing, adding items to the chart, and adding items to a wishlist. We focus on modeling the page viewing of users as events, and let the item category be the associated event mark. Modeling this information has various marketing implications such as click through rate of recommending some types of items. Due to the large scale of the dataset, we use a subset of 2,000,000 events on 8 consecutive calendar days inclusive (November 25th, 2017 - December 2nd, 2017), as well as the most frequent 1,000 marks (item categories). All user sequences have the same time length of $T=192$ hours.

\paragraph{Reddit} The Reddit comments dataset \citep{baumgartner2020pushshift} contains records of comments made by different users on various posts listed in the social media site \url{reddit.com}. One month's worth of data (October 2018) was used to extract user sequences, and the mark vocabulary was defined as the top 1000 communities (subreddits) determined by marginal comment volume. The month was divided into multiple week-long sequences for each user, with event times in units of hours ($T=178$ hours).

\paragraph{MemeTracker} The MemeTracker dataset \citep{leskovec2009meme} tracks to common phrases (memes) as they appear on various websites. We compile these records into sequences, each pertaining to a single meme with events defined as the time of mention and the website they appeared on as the mark. Only the mentions in the top 5000 websites by marginal volume were considered. Sequences were defined as one-week-long chunks spanning August 2008 to April 2009, and event times were measured in hours ($T=178$ hours).   

\paragraph{Email} 
Lastly, the Email dataset \citep{paranjape2017motifs} contains the email records for a research organization over the course of 803 days. Sequences were defined as the collection of incoming emails for a given user where each mark was the address of the original sender. These sequences were defined over four week intervals and event times were measured in days ($T=28$ days). After preprocessing the sequences, we were left with 808 different unique addresses.

\begin{table}
    \caption{Summary Statistics for the Four Real-World Datasets}
    \label{tab:data_statistics}
    \centering
    \begin{tabular}{llrrrrr}
    \toprule
    & & & Mean & \multicolumn{3}{c}{\# Sequences}  \\
    Dataset & \mc{$T$} & \mc{$M$} & \mc{$|\mathcal{H}|$} &  Train & Valid & Test \\
    \midrule
    Taobao & 8 Days & 1000  & 62.6 & 13.3K & 1.8K  & 2.7K  \\
    Reddit & 1 Week  & 1000 & 65.2 & 343K & 15K & 34K \\
    MemeTracker & 1 Week  & 5000 & 23.4 & 271K & 9K  & 21K  \\
    Email & 28 Days   & 808   & 31.1 & 6.9K & 1.5K & 1.5K \\
    \bottomrule
    \end{tabular}
\end{table}

\subsection{Model \& Training Details}

For each of the real-world datasets, a neural Hawkes process model \citep{mei2017neural} was trained on fully observed sequences for a given dataset. 
Each model was trained using the Adam stochastic gradient optimization algorithm \citep{DBLP:journals/corr/KingmaB14} with default hyperparameters, a learning rate of 0.001, and a linear warm-up learning rate schedule over the first 1\% of training iterations. Each iteration optimized the parameters against the average log-likelihood for a batch of 128 training sequences. Gradients were clipped to have a maximum norm of $10^4$ for stability. All models were trained for a fixed amount of epochs; however, each one was confirmed to have converged based on average held-out validation log-likelihood.

Models possessed different hyperparameters depending on the dataset due to differences in the amount of data and total possible marks. Details can be found in \cref{tab:5_model_details}.

\begin{table}
    \centering
    \caption{Model Hyperparameters for Real-World Datasets}
    \begin{tabular}{lcccc}
    \toprule
    Hyperparameter & Taobao & Reddit & MemeTracker & Email \\
    \midrule
    \# Training Epochs & 300 & 50 & 50 & 300 \\
    Mark Embedding Size & 64 & 64 & 64 & 32 \\
    Recurrent Hidden State Size & 128 & 128 & 128 & 64 \\
    \bottomrule
    \end{tabular}
    \label{tab:5_model_details}
\end{table}

\subsection{Next Event Prediction}

Alongside likelihood, we are also interested in making predictions for next events in the presence of censored data. The following section details the prediction experiments conducted for both synthetic and real-world settings.

\paragraph{Setup}
We follow the same settings for the next event prediction as \citet{du2016recurrent, mei2017neural} on both event time and event mark. The predicted time is chosen to be the expected time of the next event occurrence, which is defined as 
\begin{align}
\hat{T}_i=\mathbb{E}^\prob\left[ T_i \sep \hist_{T_{i-1}}\right]=\int_{ T_{i-1}}^{\infty} t \lambda^*(t) \exp\left(-\int_{ T_{i-1}}^t \lambda^*(s) ds\right) d t.
\end{align}
We measure predictive performance for this with the mean absolute error between predicted and true next event time. Without the knowledge of the event time $ T_i$, the predicted next event type set as
\begin{align}
\hat{M}_i & := \argmax_{k\in\mathcal{M}} \prob(M_i = k \sep \hist_{T_{i-1}}) \\
\text{ for }  \prob(M_i = k \sep \hist_{T_{i-1}}) & = \int_{T_{i-1}}^\infty \lambda^*_k(t) \exp\left(-\int_{T_{i-1}}^t \lambda^*(s)ds\right)dt,
\end{align}
and is evaluated via top-10 accuracy (i.e., the proportion of predictions in which true mark $M_i$ appears in the set of top-10 highest probability predicted marks). Both predictions can be achieved by approximating integrals numerically, for both the censored and baseline methods.

Similar to the likelihood ratio experiments, we evaluated these methods on sequences that have been artificially censored. For the synthetic experiments, we evaluate 1000 sequences $\hist(T)$ sampled from their respective models and then randomly choose a subset of unique marks that appear in each sequence to be censored $\cen$, the proportion of which is determined for each value $\gamma \in \{0.2, 0.4, 0.6, 0.8\}$. For real-world experiments, the same is done except the sequences originate from held-out sets and $\gamma$ is also allowed to be $0.5$.

We condition each method on the occluded sequence $\hist^\obs_{T_{\lfloor\frac{n}{2}\rfloor}}$ where $|\hist(T)|=n$ and have each produce predictions for the next time $\hat{ T}_{\lfloor\frac{n}{2}\rfloor+1}$ and the next mark $\hat{ M}_{\lfloor\frac{n}{2}\rfloor+1}$.

\paragraph{Synthetic Results}
Figure \ref{fig:5_synth_next_event} reports the results evaluated on three parametric point process models with 20 distinct marks. When predicting next time to event, both versions of Hawkes processes achieve less error under our framework compared to the baseline. The performance gap between methods widen as more information is censored. However, the baseline outperforms our method for self-correcting models, which may be due to the fact that the occurrence of an event has an inhibiting effect on future events. This results in the baseline always overestimating the intensity as it lacks the censored events to correct it. For this model, this leads to always underestimating the next time to event which is favorable as this will be bounded between $ T_{\lfloor\frac{n}{2}\rfloor}$ and $ T_{\lfloor\frac{n}{2}\rfloor+1}$. This can be seen as a systematic bias inherent to the specific model parameterization.

As for the prediction of the next event type, both self-correcting processes and Hawkes processes with dense interaction between events have similar performances as random guesses that will have an accuracy of around 0.5 for top-10 accuracy. This is expected for both models, as there is not much imposed correlation between events of different types due to how the models were instantiated. 
However, the Hawkes processes with block-diagonal interactions better model the structure in sequential events, where the prediction accuracy is much higher than 0.5, which in general decreases as more marks are censored. It is clear that our method is less sensitive to the amount of censored information and significantly outperforms random guesses, as long as the model is able to capture the underlying structured dynamics of the event sequences.

\paragraph{Real-World Results}
Real-world datasets naturally have more meaningful structures and larger vocabulary sets compared to synthetic experiments. We evaluate the results on all four datasets that have different numbers of marks ranging from 808 to 5000. The prediction of the next event time of our method is on par with the baseline, while we see consistent improvements in the next event prediction evaluated by top-10 accuracy. Furthermore, the accuracy in general, regardless of method, tends to decrease with more information being censored which is expected.

\begin{figure}
    \centering
    \includegraphics[width=\linewidth]{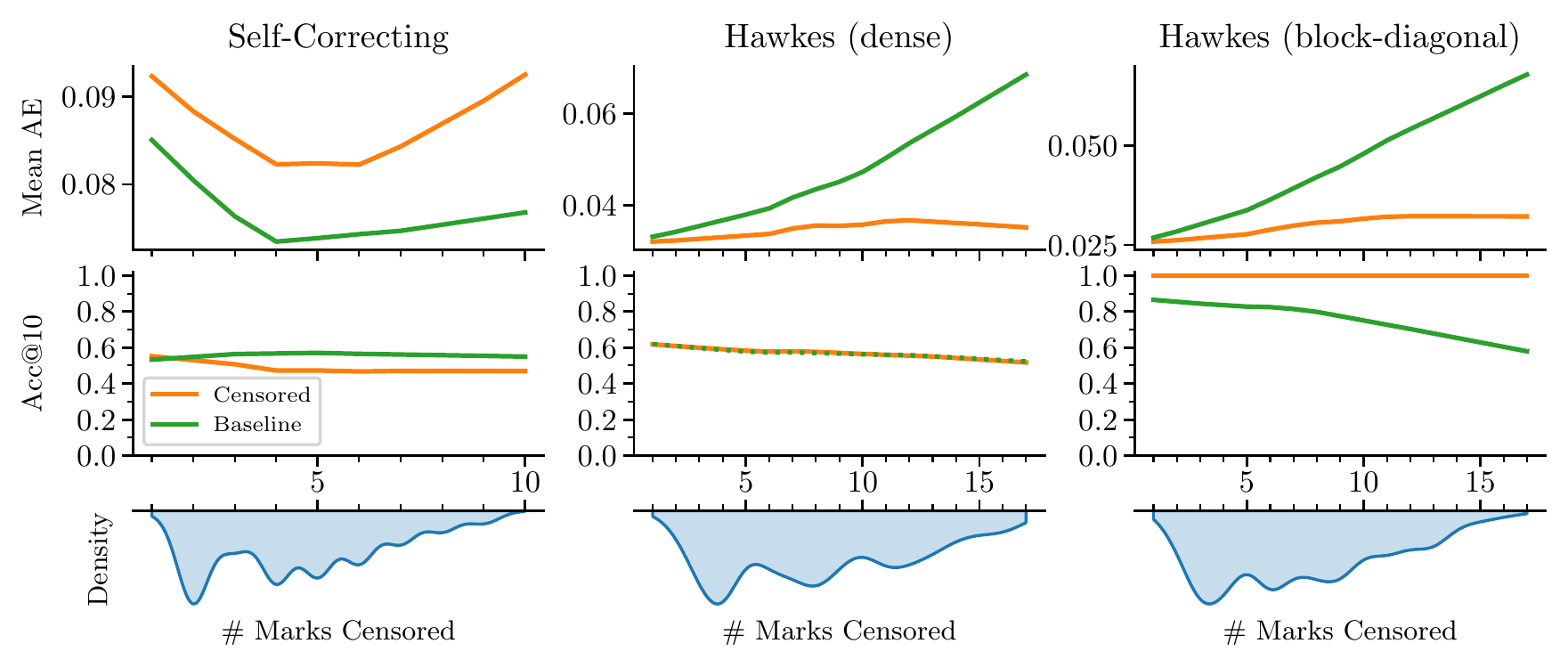}
    \caption{Next event prediction results for censored and baseline methods across the three different parametric MTPPs. Top plots indicate the mean absolute error in next time prediction, middle plots indicate top-10 accuracy in next mark prediction, and bottom plots show density of the number of marks censored across the sequences used for the experiments.}
    \label{fig:5_synth_next_event}
\end{figure}

\begin{figure}
    \centering
    \includegraphics[width=\linewidth]{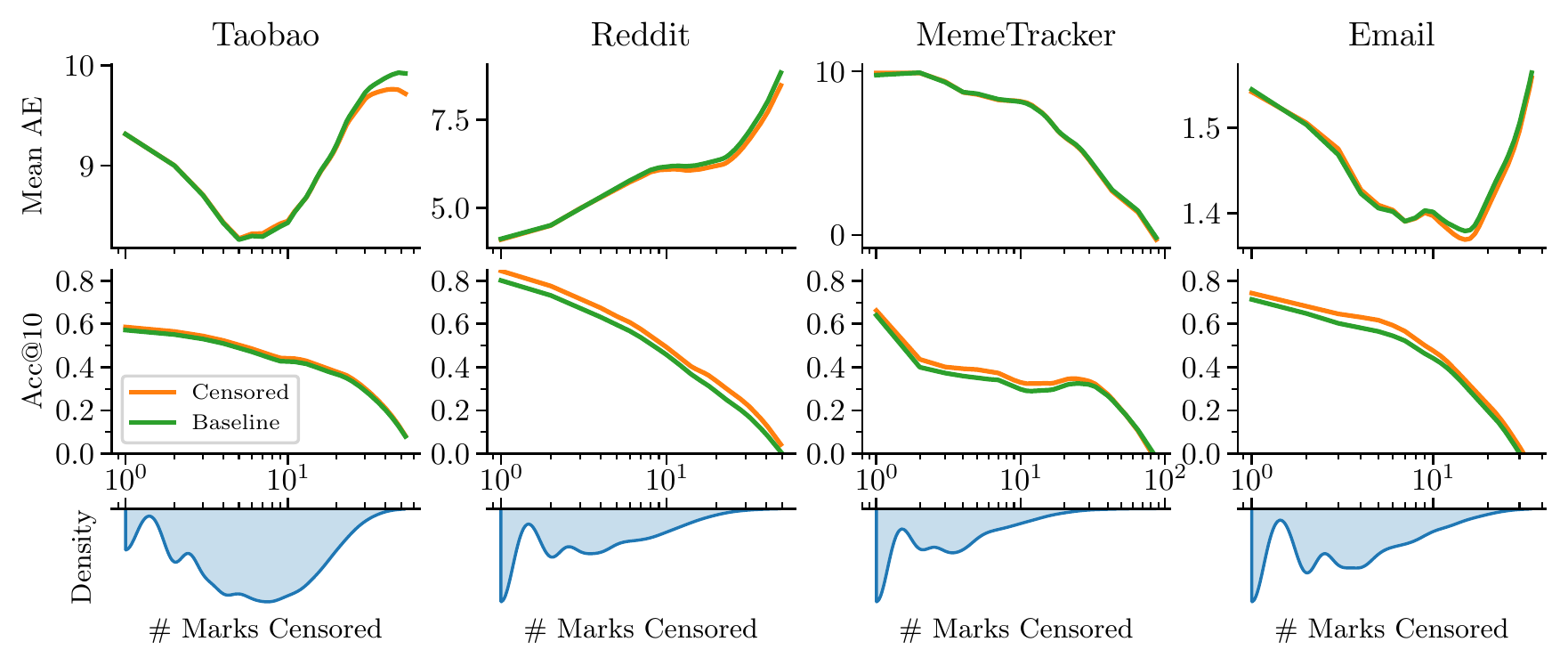}
    \caption{Same format as \cref{fig:5_synth_next_event} except using held-out sequences from real-world datasets with respectively trained neural-based models.}
    \label{fig:5_next_event}
\end{figure}

\subsection{Model Misspecification} \label{sec:5_appendix_misspecification}
Recall in the synthetic experiments that we evaluated the log-likelihood for the mark-censored model $\prob_\text{Cen}$ and the baseline method $\prob_\text{Base}$ on censored sequences that were originally sampled from the same model $\prob$ used in both methods. Under this setting, for a given mark-censoring scheme $\cen$ and $\obs$ and sampled sequences $\hist^\obs_t \sim \prob$, it is guaranteed that
\begin{align*}
\E^{\prob}_{\hist^\obs_t}\left[\mathcal{L}^{\prob_\text{Cen}}(\hist^\obs_t)\right] \geq \E^{\prob}_{\hist^\obs_t}\left[\mathcal{L}^{\prob_\text{Base}}(\hist^\obs_t)\right]
\end{align*}
with the inequality being strict so long as $\prob(\hist^\cen_t=\emptyset \sep \hist^\obs_t) > 0$. This is due to the fact that the mark-censored model is simply a marginalized version of the original model, thus resulting in no model misspecification for this setup.

That being said, we no long have this guarantee once we start considering sequences that are drawn from a different distribution from the model we are using. This is inherently the same scenario that was evaluated in the real-world data experiments, as all of the sequences used there came from some other source $\prob_\text{data}$ whereas the models $\prob$ were learned to best approximate this distribution. Naturally, the closer $\prob$ is to $\prob_\text{data}$ (i.e., the less model misspecification there is) the more we can start to trust that the censored method will produce superior results to the baseline.

\subsection{Sensitivity Analysis}
We perform an ablation study for synthetic experiments using different numbers of samples and integration points. The parameters of the Hawkes process are drawn from the same distributions as described in Section 4.1, where we used 128 MC samples and 1024 integration points. Figure \ref{fig:5_diff_seqs} shows the results of the same experiment but varies the number of Monte Carlo sampled sequences and keeps the number of integration points as 1024, while \ref{fig:5_diff_int} shows the same results but varies the number of integration points while keeping the number of Monte Carlo samples fixed to 128. Aside from slight deviations on the lower end of the values tested (e.g., number of sampled sequences $=2$ and number of integration points $=8$), the results across the board are roughly consistent. This indicates that our method is fairly robust and does not necessitate prohibitive amounts of computing resources to employ.

That being said, we do recommend evaluating this on a case-by-case basis as each model and dataset are different. In critical applications, this concern can be taken care of by iteratively sampling sequences and monitoring the convergence of the resulting censored intensity function. 

\begin{figure}
    \centering
    \includegraphics[width=\linewidth]{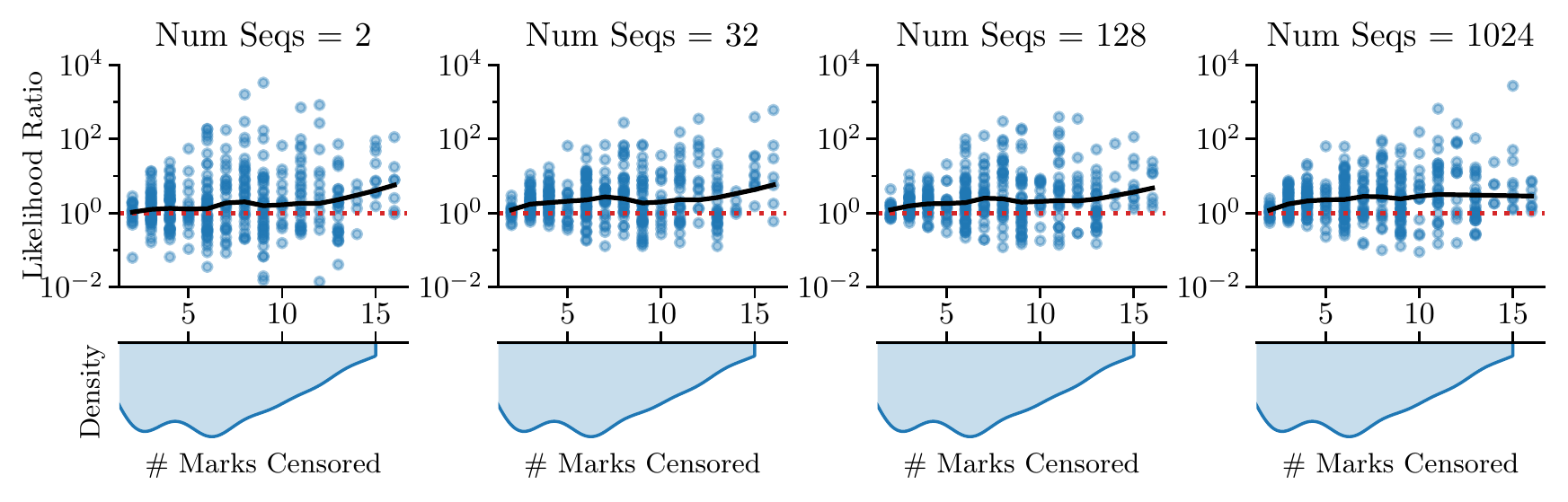}
    \caption{Distributions of likelihood ratios across number of marks censored for the duration of the sequences used for synthetic experiments. Integration points is fixed as 1024, with varying numbers of MC samples used for estimation. Values greater than 1 indicate higher likelihoods under the mark-censored model.}
    \label{fig:5_diff_seqs}
\end{figure}

\begin{figure}
    \centering
    \includegraphics[width=\linewidth]{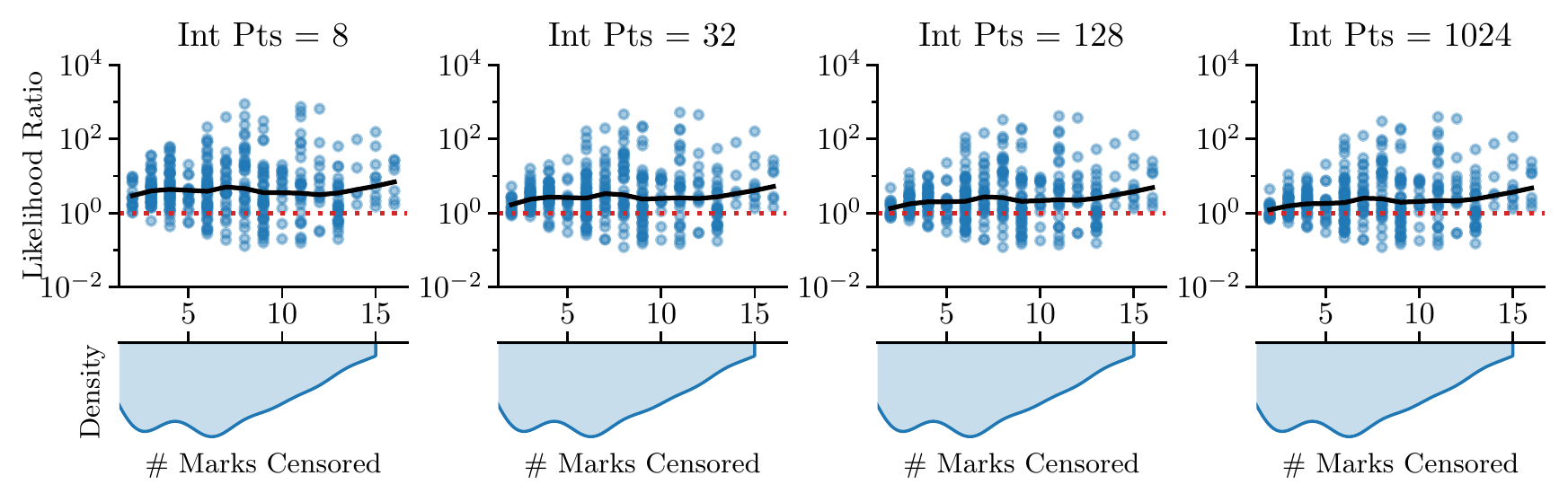}
    \caption{Same format as Fig. \ref{fig:5_diff_seqs} except using 128 MC samples and different numbers of integration points for estimation.}
    \label{fig:5_diff_int}
\end{figure}

\chapter{Supplemental Material for Chapter 6}

\addtocontents{toc}{\protect\setcounter{tocdepth}{0}}

\section{Estimators for Discrete-Time}\label{sec:6_discrete_appendix}
As outlined in \citet{bruti2007approximation}, any stochastic jump process defined in continuous-time,
\begin{align}
dX_t = \mu(t, X_{t-})dt + \sigma(t, X_{t-})dW_t + dJ_t
\end{align}
for some jump process $J:=(J_t)_{t\geq 0}$ with increments of either $0$ if no jump occurs or $M_{N_t}$ for the $N_t^\text{th}$ jump, can be approximated by a Euler discretization scheme via
\begin{align}
X_{t+\Delta}-X_{t} \approx \mu(t, X_{t})\Delta + \sigma(t, X_t)(W_{t+\Delta}-W_t) + (J_{t+\Delta}-J_t)
\end{align}
where 
\begin{align}
    W_{t+\Delta}-W_t \sep \filter_t & \overset{d}{=} W_{t+\Delta}-W_t \sim \mathcal{N}(0, \Delta)
\end{align}
and
\begin{align}
J_{t+\Delta}-J_t \sep \filter_t \sim \begin{cases}
    0 & \text{ with probability } 1 - \lambda_t\Delta \\ 
    p(M_{N_t+1}\sep \filter_t) & \text{ with probability } \lambda_t\Delta 
\end{cases}
\end{align}
for $t=0, \Delta, 2\Delta, \dots$ and $X_0$ being some constant. For $\Delta > 0$, it then follows that $\prob(X_{t+\Delta} \neq X_t) = 1$ regardless if a jump occurs or not (assuming that $\sigma > 0$). When working with the original process, using importance sampling we could only intervene directly on the potential jumps that would lead the process to landing in the hitting region $A_t$ for some generalized hitting time $T$; however, now it is as if the process jumps at every increment of time and thus we can prevent all actions that lead to reaching the hitting region.

We will briefly outline how the tower rule and importance sampling CDF estimators are defined for discretized processes. For notational convenience, we index by steps $i=1,2,\dots$ rather than equivalent time $t=\Delta, 2\Delta, \dots$ and use this for both processes $X_{1:i}$ and the filtrations $\filter_{i}$. While only the general CDF estimators are shown, it is straight forward to extend these derivations to the other estimators derived in \cref{sec:6_comp_ght} as well.

\subsection{Discrete-Time Tower Rule Estimator}
Let $X:=(X_i)_{i\in\mathbb{N}}$ be a process over discrete-time $i\in\mathbb{N}$ adapted to filtration $(\filter_i)_{i\in\mathbb{N}}$ with autoregressive conditional distributions $\prob(X_i \sep \filter_{i-1})$ for $i\in\mathbb{N}$. Given a generalized hitting time $T$ with hitting time process $A:=(A_i)_{i\in\mathbb{N}}$, then the associated tower rule CDF estimator is defined as follows:
\begin{align}
\prob(T \leq t) & = \sum_{i=1}^t \prob(T = i) \\
& = \sum_{i=1}^t \E^\prob\left[\ind(T=i)\right] \\
& = \sum_{i=1}^t \E^\prob\left[\ind(T=i)\ind(T > i-1)\right] \\
& = \sum_{i=1}^t \E^\prob\left[\prob(T=i\sep \filter_{i-1})\ind(T > i-1)\right] \text{ by Tower Rule }\\
& = \sum_{i=1}^t \E^\prob\left[\prob(X_i \in A_i \sep \filter_{i-1})\ind(T > i-1)\right]\\
& =  \E^\prob\left[\sum_{i=1}^t \prob(X_i \in A_i \sep \filter_{i-1})\ind(T > i-1)\right]\\
& =  \E^\prob\left[\sum_{i=1}^{t\wedge T} \prob(X_i \in A_i \sep \filter_{i-1})\right]\\
& =: \E^\prob\left[F_t^\text{TR}\right].
\end{align}
Compare this to the continuous-time version, $\int_0^{t\wedge T} \lambda^T_s ds$, we can see that the two are similar as both the integral and intensity are simply limiting cases of the summation and $\prob(X_i \in A_i \sep \filter_{i-1})$ as $\Delta \downarrow 0$.

\subsection{Discrete-Time Importance Sampling Estimator}
Just like in continuous-time, the tower rule estimator can be further developed into an importance sampling estimator. We present the discrete-time equivalent of \cref{eq:6_alt_is} due to it being more versatile for compositions of hitting times, however, the discrete-time variant of \cref{eq:6_is} is similarly straight forward to show.

First, we define discrete-time autoregressive proposal distribution $\q$:
\begin{align}
    \q(X_i \sep \filter_{i-1}) &:= \prob(X_i \sep \filter_{i-1}, X_i \notin A_i) \\
    & = \frac{\prob(X_i \sep \filter_{i-1}) \ind(X_i \notin A_i)}{\prob(X_i \notin A_i \sep \filter_{i-1})},
\end{align}
with likelihood ratio
\begin{align}
    \frac{d\prob}{d\q}(X_{1:t}) := \prod_{i=1}^t \prob(X_i \notin A_i \sep \filter_{i-1})
\end{align}
for $X \sim \q$.
Note that this is of a similar form to the proposal distribution in \cref{sec:3_proposal}. The importance sampling CDF estimator is then derived as such:
\begin{align}
\prob(T \leq t) & = \E^\prob\left[F_i^\text{TR}\right] \\
& = \sum_{i=1}^t \E^\prob\left[\prob(X_i \in A_i \sep \filter_{i-1})\ind(T > i-1)\right] \\
& = \sum_{i=1}^t \E^\q\left[\prob(X_i \in A_i \sep \filter_{i-1})\prod_{j=1}^i \prob(X_j \notin A_j \sep \filter_{j-1})\right] \\
& = \E^\q\left[\sum_{i=1}^t\prob(X_i \in A_i \sep \filter_{i-1})\prod_{j=1}^i \prob(X_j \notin A_j \sep \filter_{j-1})\right] \\
& =: \E^\q\left[F_t^\text{IS'}\right]. 
\end{align}
Unlike the tower rule estimator, the discrete-time importance sampling estimator does not appear as closely related to the continuous-time version, $\int_0^t \lambda^T_s \exp\left(-\int_0^s \lambda^T_v dv\right)ds$, at least at first glance. To see the connection more clearly, first note that for a general function $f(x)$ with the region $[a,b]$ discretized into evenly spaced points $\{x_0, x_1, \dots, x_N\}$ where $x_0=a$ and $x_N=b$, it follows that
\begin{align}
    \lim_{N\rightarrow\infty} \prod_{i=0}^N (1+f(x_i)(\Delta x)) := \prod_{a}^b (1+f(x)dx) := \exp\left(\int_a^b f(x)dx\right)
\end{align}
where $\Delta x$ is the discretization width \citep{slavik2007product}. With this, we can now see that
\begin{align}
F_t^\text{IS'} & = \sum_{i=1}^t\prob(X_i \in A_i \sep \filter_{i-1})\prod_{j=1}^i \prob(X_j \notin A_j \sep \filter_{j-1}) \\
& = \sum_{i=1}^t\prob(X_i \in A_i \sep \filter_{i-1})\prod_{j=1}^i \left(1 - \prob(X_j \in A_j \sep \filter_{j-1})\right) \\
& = \sum_{i=1}^t\prob(X_i \in A_i \sep \filter_{i-1})\exp\left(-\sum_{j=1}^i \prob(X_j \in A_j \sep \filter_{j-1})\right) \text{ for small } \Delta,
\end{align}
which can be seen as the continuous-time version estimator just with each component swapped out with a discretized version.

\section{Generalized Hitting Time as a Point Process}
Let $T$ be a generalized hitting time with hitting region process $A:=(A_t)_{t>0}$. In the general setting, it was noted that $T$ has a corresponding counting process $N_t^T(\omega):=\ind(T(\omega) \leq t)$ and the corresponding conditional intensity function $\lambda_t^T(\omega)$ was defined to be the general intensity function $\lambda_t$ integrated over all marks that would lead to the ground process $X_t$ landing in the hitting region $A_t$.
\\\\
This information is great if we would like to leverage these facts for other estimators; however, it is not suited for directly characterizing the distribution of $T$. While it is true that for a generic counting process $N_t$ with a \emph{Poisson} intensity $\lambda_t$, the CDF of the next event takes the following form,
\begin{align}
\prob(N_{t'} \geq N_{t}+1 \sep \filter_t) = 1 - \exp\left(-\int_t^{t'}\lambda_s ds\right),
\end{align}
this does not hold for general intensity processes that are conditional on $\filter_t$. It turns out that taking the expected value of the above equation with the appropriate measure yields our previously derived importance sample estimator; however, there exists an alternative direction for estimation\textemdash{}namely, estimate the intensity of $N^T_t$ with respect to the natural filtration $\filter^{N^T}_t:=\sigma(N^T_s \sep s \leq t) \subset \filter_t$. We will refer to this as a \emph{marginal intensity} and denote it with $\eta_t$. 
\\\\
Note that since the filtration $\filter^{N^T}_t$ is with respect to the counting process $N^T$ and we assume that the hitting time can only occur once, the filtration essentially holds information of either $T \notin [0,t]$ or the value of $T$ in the range $[0,t]$. We will now derive an estimator for this marginal intensity:
\begin{align}
\eta_t dt & := \prob(T \in [t, t+dt) \sep T \notin [0,t)) \\
& = \E_\prob\left[ \prob(T \in [t, t+dt) \sep \filter_t) \sep T \notin [0,t)\right] \\
& = \E_\q\left[\frac{d\prob(\cdot \sep T \notin [0, t))}{d\q} \lambda^T_t \right]dt \\
& = \frac{1}{\prob(T > t)}\E_\q\left[L_t \lambda^T_t \right]dt \\
& = \frac{\E_\q\left[L_t \lambda^T_t \right]dt}{\E_\q\left[L_t\right]} \\
\\
\implies \prob(T \in [t, t+dt)) & = \eta_t \exp\left(-\int_0^t \eta_s ds\right) \\
& = \frac{\E_\q\left[L_t \lambda^T_t \right]}{\E_\q\left[L_t\right]}\exp\left(-\int_0^t \frac{\E_\q\left[L_s \lambda^T_s \right]}{ \E_\q\left[L_s\right]} ds\right) \\
\text{and } \prob(T \leq t) & = 1 - \exp\left(-\int_0^t \eta_s ds \right) \\
& = 1 - \exp\left(-\int_0^t \frac{\E_\q\left[L_s \lambda^T_s \right]}{ \E_\q\left[L_s\right]} ds \right) \\
& \approx 1 - \exp\left(-\int_0^t \frac{\sum_{i=1}^n L_s^{(i)}\lambda^{T,(i)}_s}{\sum_{i=1}^n L_s^{(i)}} ds \right)
\end{align}
Due to the presence of a ratio estimator (and non-linear transformation of the integrand), this is a biased, yet consistent, estimator. 

\section{Variance Reduction of Estimator through Importance Sampling}
Let $f(\omega)$ be a random variable with finite variance under a probability space $(\Omega, \filter, \prob)$ with domain $[0,\infty)$. The null set under $\prob$ will be denoted as $\Theta\subset\Omega$ (i.e., for all $A\subset\Omega$ where $\prob(A)=0$  it holds that $A \subseteq \Theta$). Denote the set of outcomes where $f=0$ as $F_0$ where $F_0:=\{\omega \in \Omega\setminus\Theta\sep f(\omega)=0\}$. Elements in this set will be referred to as \emph{zero-events}. We will assume that this scenario has a non-zero probability of occurring, i.e., $\prob(F_0)>0$. 
\\\\
We are ultimately interested in the mean of $f$, which we will denote as $\pi$, and finding low variance estimators for it. This can be achieved by applying importance sampling with the change of measure $\q$:
\begin{align}
\pi & := \E^\prob\left[f\right] \\
& = \E^\q\left[Lf\right]
\end{align}
where $L(\omega):=\frac{d\prob(\omega)}{d\q(\omega)}$. The only requirement is that for all $A \subset \Omega$ where $\q(A) = 0$ then $\prob(A \setminus F_0)=0$. In words, this means that if an event is possible under $\prob$ then it either is possible under $\q$ as well or is a zero-event. Aside from this constraint, the exact measure $\q$ is left up to design, with the goal being to choose one that leads to low variance, $\var^\q(Lf)$, especially compared to $\var^\prob(f)$.
\\\\
Consider a subset of these zero-events, $F \subset F_0$, where $\prob(F_0) \geq \prob(F) > 0$. Let $d\q(\omega) = d\prob(\omega \sep \Omega\setminus F)$. At a high level, this choice of measure simply forbids all of the zero-events in $F$ and renormalizes the remaining distribution according to the original measure $\prob$. It can be shown that $d\q(\omega)\propto d\prob(\omega)\ind(\omega\notin F)$.
\\\\
Note that in general, $\var(Z) = \E\left[Z^2\right]-\E\left[Z\right]^2$. Because importance sampling produces an unbiased estimate of $\pi$, to compare the variance of the original and the new estimator we simply need to compare their second moments. From all of the prior assumptions, it follows then that:
\begin{align}
\E^\q\left[L^2f^2\right] & = \int_{\Omega\setminus F} L(\omega)^2f(\omega)^2d\q(\omega) \\
& = \int_{\Omega\setminus F} L(\omega)f(\omega)^2d\prob(\omega) \\
& = \int_{\Omega\setminus F} \frac{d\prob(\omega)}{d\q(\omega)}f(\omega)^2d\prob(\omega) \\
& = \int_{\Omega\setminus F} \frac{d\prob(\omega)\prob(\Omega\setminus F)}{d\prob(\omega)\ind(\omega\notin F)}f(\omega)^2d\prob(\omega) \\
& = \prob(\Omega\setminus F) \int_{\Omega\setminus F} f(\omega)^2 d\prob(\omega) \\
& = \prob(\Omega\setminus F) \int_{\Omega} f(\omega)^2 d\prob(\omega) \quad\text{ as } f(\omega)=0 \text{ for } \omega\in F\\
& = \prob(\Omega\setminus F) \E^\prob\left[f^2\right] \\
& < \E^\prob\left[f^2\right] \\ 
\\
\implies \var^\q(Lf) & < \var^\prob(f).
\end{align}
Furthermore, a similar line of reasoning exists that shows that if we even further restrict the zero events from happening, i.e., measure $\q'$ and likelihood ratio $L'$ that prevents $F'$ for $F \subset F' \subset F_0$ where $\prob(F_0) \geq \prob(F') > \prob(F)$, exhibits even less variance. More formally:
\begin{align}
\var^{\q'}(L'f) < \var^\q(Lf).
\end{align}
These results imply that the larger the probability of zero-events in general, $\prob(F_0)$, the larger the potential reduction in variance is to be had assuming that we can adequately forbid these events from occurring.

\paragraph{Extending to Cumulative Processes}
We will slightly extend the previous setting to stochastic processes, where $(f_t)_{t\geq 0}$ is the main process of interest under the filtered probability space $(\Omega, \mathbb{F}, (\filter_t)_{t\geq 0}, \prob)$ with expected marginal values $\pi_t:=\E^\prob\left[f_t\right]$. Consider the alternative measure $\q$ that satisfies the previous properties for $f_t$ for all values of $t \geq 0$ in terms of preventing (a subset of the) zero-events, and results in a valid density process $L_t:=\frac{d\prob}{d\q}\big |_{\filter_{t}}$. We will assume that $\prob(L_t \in (0, 1))=1$ for all $t>0$ (this means that similar valid draws of $f$ under $\q$ are always more likely than under $\prob$).
\\\\
Consider the value of interest:
\begin{align}
\Pi & := \int_0^T \pi_t dt \\
& = \int_0^T \E^\prob\left[f_t\right]dt = \E^\prob \left[\int_0^T f_t dt\right] \\
\text{ or } & = \int_0^T \E^\q\left[L_tf_t\right]dt = \E^\q\left[\int_0^T L_t f_t dt\right]
\end{align}
As before, since both expected values are unbiased with respect to $\Pi$, comparing their respective second moments is sufficient for comparing estimator variances.
\begin{align}
\E^\q&\left[ \left(\int_0^T L_t f_t dt\right)^2\right] = \E^\q\left[\int_0^T L_t f_t dt\int_0^T L_s f_s ds\right] \\
& = \E^\q\left[\int_0^T\int_0^T L_tL_s f_tf_s dtds\right] \\
& = \int_0^T\int_0^T \E^\q\left[L_tL_s f_tf_s \right]dtds \\
& = \int_0^T\int_0^T \E^\prob\left[L^{-1}L_tL_s f_tf_s \right]dtds \\
& = \int_0^T\int_0^T \E^\prob\left[L_{t\wedge s}^{-1}L_tL_s f_tf_s \right]dtds \text{ by Tower Rule and Martingale Property of } L\\
& = \int_0^T\int_0^T \E^\prob\left[L_{t\vee s} f_tf_s \right]dtds \\
& < \int_0^T\int_0^T \E^\prob\left[f_tf_s \right]dtds \\ 
& = \E^\prob\left[\left(\int_0^T f_t dt\right)^2\right]
\end{align}

\end{appendices}

\end{document}